%% file: paper.tex
\documentclass[]{bytedance_seed}

\usepackage[toc,page,header]{appendix}
\usepackage{subcaption}
\usepackage{wrapfig}

\usepackage{minitoc}
\usepackage{booktabs}
\usepackage{graphicx}
\usepackage[table]{xcolor}
\usepackage{array}
\usepackage{multirow}
\usepackage{multicol}
\usepackage{longtable}
\usepackage{amsmath}
\usepackage{amssymb}
\usepackage{amsthm}
\usepackage{enumitem}
\usepackage{mathtools}
\usepackage{dsfont}
\usepackage{microtype}

\theoremstyle{plain}
\newtheorem{theorem}{Theorem}[section]
\newtheorem{lemma}{Lemma}[section]
\newtheorem{proposition}{Proposition}[section]

\theoremstyle{definition}
\newtheorem{definition}{Definition}[section]
\newtheorem{condition}{Condition}[section]
\newtheorem{assumption}{Assumption}[section]

\theoremstyle{remark}

\newcommand{\E}{\mathbb{E}}
\newcommand{\Prob}{\mathbb{P}}

\newcommand{\1}{\mathds{1}}
\newcommand{\sigmoid}{\mathsf{sigmoid}}

\newenvironment{readingbox}[1]
{\par\medskip\noindent\begin{minipage}{\linewidth}\hrule\smallskip\noindent\textbf{#1.}\ }
{\par\smallskip\hrule\end{minipage}\medskip}

\newcommand{\benchmark}{EdgeBench}

\newcommand{\BenchTopRule}{\toprule[1.1pt]}
\newcommand{\BenchMidRule}{\midrule[0.65pt]}
\newcommand{\BenchBottomRule}{\bottomrule[1.1pt]}
\newcommand{\BenchCMidRule}[1]{\cmidrule[0.65pt](lr){#1}}

\title{\benchmark: Unveiling Scaling Laws of Learning from Real-World Environments}

\affiliation{ByteDance Seed}

\abstract{
Pretraining scaling laws reveal that model capability improves predictably with data and compute. But learning from real world environments after deployment remains far less understood. Analyzing roughly 38{,}000 hours of agent interaction with the environment across 134 real world tasks, we find, to the best of our knowledge, the first evidence that overall performance during environment learning follows a log-sigmoid scaling law with remarkably high precision, reaching $R^2 = 0.998$. Across model generations, we also find that agent learning speed roughly doubles every three months. This discovery stems from \benchmark{}, a suite of 134 real world tasks with ultra-long horizons, spanning scientific discovery, software engineering, combinatorial optimization, professional knowledge work, formal mathematics, and interactive games. Each task sustains at least 12 hours of continuous agent operation under rich, multilevel feedback, and is built through substantial expert effort. We publicly release 51 tasks and our full evaluation framework to accelerate the study of how agents learn from real world experience.
}

\titlefigure{%
\begin{center}
\centering
\includegraphics[width=\linewidth]{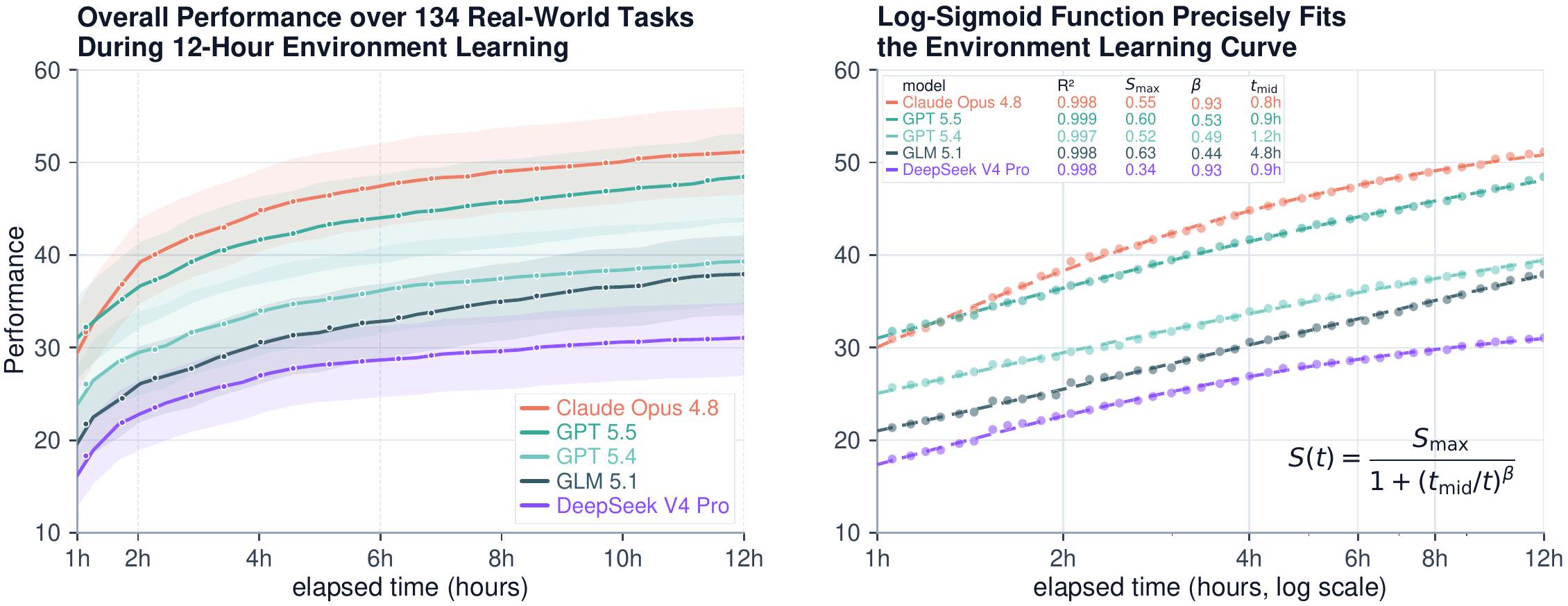}
\captionof{figure}{\textbf{Overall performance follows a precise log-sigmoid
relationship with environment interaction time} when averaged over all 134 tasks
(mean $R^2 = 0.998$).}
\label{fig:logtime-sigmoid-all-tasks}
\end{center}
}

\date{\today}
\correspondence{Shu Zhong at \email{zhongshu@bytedance.com}}

\checkdata[Project Page]{\url{https://edge-bench.org}}

\begin{document}
\maketitle

\input{sections/introduction}
\input{sections/approach}
\input{sections/emerging_scaling}
\input{sections/scaling}
\input{sections/experiments}
\input{sections/relatedwork}
\input{sections/conclusion}

\clearpage
\section*{Author Contributions}

\newcommand{\contribrole}[2]{\noindent\textbf{#1}\nopagebreak\par#2\par\medskip}

\begin{multicols}{2}
\raggedright
\setlength{\parskip}{1pt}
\contribrole{Core Contributors}{Deyao Zhu\textsuperscript{*}\\ Xin Zhou\textsuperscript{*}\\ Shengling Qin\textsuperscript{*}\\ Xuekai Zhu\textsuperscript{*}\\ Hangliang Ding\textsuperscript{*}\\ Shu Zhong\textsuperscript{*\textdagger}}
\contribrole{Theory}{Zixin Wen}
\contribrole{Data Collection and Curation}{Zhonglin Xie\\ Chenhui Gou\\ Linxuan Ren\\ Yueyang Wang\\ Junfeng Zhong\\ Rui Liu\\ Tian Gao\\ Yangguang Lin\\ Jingyuan Zhang\\ Maojia Song\\ Xuan Qi\textsuperscript{\ensuremath{\ddagger}}\\ Jinhong Wu\textsuperscript{\ensuremath{\ddagger}}\\ Chenyang Zhang\textsuperscript{\ensuremath{\ddagger}}\\ Yinzhu Piao\\ Ziru Niu\\ Hongbin Lin\\ Lingxiang Meng\\ Peng Tang\\ Chengyao Tang\\ Shanyu Wu\\ Huanyu Zheng\\ Yu Liu\\ Liya Zhu\\ He Wang\\ Ming Ding}
\contribrole{Data Infrastructure}{Ziyu Wan\\ Hao Liu\\ Sibo Wang\\ Haotian Zhu\\ Xintian Zhang\\ Nan Chai\\ Yipeng Liu\\ Panhao Lai}
\contribrole{Referrals}{Sihang Yuan\\ Zixin Su\\ Ge Zhang\\ Wangchunshu Zhou\\ Yantao Du}
\contribrole{Advisors}{Wenhao Huang\\ Guang Shi}
\end{multicols}

\medskip
\noindent\textsuperscript{*}Equal contribution\hspace{1em}\textsuperscript{\textdagger}Project Lead\hspace{1em}\textsuperscript{\ensuremath{\ddagger}}External contributor

\clearpage

\bibliographystyle{plainnat}
\bibliography{main}

\clearpage

\beginappendix

\section*{Appendix Contents}
\begingroup
\small
\setlength{\parskip}{0.15em}
\newcommand{\appentry}[2]{\noindent\hyperref[#1]{\makebox[1.7em][l]{\ref*{#1}}#2}\dotfill\pageref{#1}\par}
\newcommand{\appsubentry}[2]{\noindent\hspace*{1.5em}\hyperref[#1]{\makebox[2.5em][l]{\ref*{#1}}#2}\dotfill\pageref{#1}\par}

\appentry{sec:construction}{Evaluation Harness}
\appentry{sec:gpt54-serving-stability}{Serving and API Stability}
\appentry{sec:appendix-evaluation-hacking}{Evaluation Hacking}
\appentry{sec:theory}{A Comprehensive Derivation of the Log-Sigmoid Law}
\appsubentry{subsec:theory-prelim}{Preliminaries: Latent Capability Graph and the Attainable Support}
\appsubentry{subsec:single-task-frontier}{Environment Learning as a Frontier Expansion Process}
\appsubentry{subsec:aggregate-frontier-limit}{Many-task Aggregation Reveals the Smooth Log-Sigmoid Law}
\appsubentry{subsec:self-similar-log-time}{Graph Self-similarity Induces Log Scale for Time Axis}
\appsubentry{subsec:theory-discussion-limitations}{Discussion and Limitations}
\appentry{sec:scaling-law-discussion}{More Discussion on the Scaling Law Shapes}
\appentry{sec:additional-related-work}{Additional Related Work}
\appsubentry{sec:additional-related-work-not-self-evolution}{Benchmarks Not Suitable for Measuring Self-Evolution}
\appsubentry{sec:additional-related-work-learning}{Benchmarks Suitable for Measuring Learning or Self-Evolution}
\appsubentry{sec:additional-related-work-scaling-laws}{Scaling Laws for LLMs and Agents}
\appentry{sec:additional-benchmark-details}{Additional Benchmark and Experiment Details}
\appsubentry{sec:experience-gain-estimation}{Estimating the With- and Without-Experience Curves}
\appsubentry{sec:appendix-gw-case-study-details}{Gravitational-Wave Case Study Details}
\appsubentry{sec:ablation-experiment-setting}{Harness-Level Continuation Ablations}
\appsubentry{sec:task-by-task-design-notes}{Per-Task Design Notes}
\appsubentry{sec:all-curves}{Per-Task Learning Curves}
\appsubentry{sec:appendix-category-scores}{Per-Task Score Tables}
\appentry{sec:acknowledgements}{Acknowledgements}
\endgroup
\clearpage

\input{sections/construction}
\input{sections/serving_api_stability}
\input{sections/evaluation_hacking}

\input{sections/theory}
\input{sections/scaling_law_discussion}
\input{sections/additional_relatedwork}
\input{sections/appendix}

\clearpage
\input{sections/acknowledgements}

\end{document}

%% file: sections/introduction.tex
\section{Introduction}

\begin{figure}[!t]
\centering
\includegraphics[width=\linewidth]{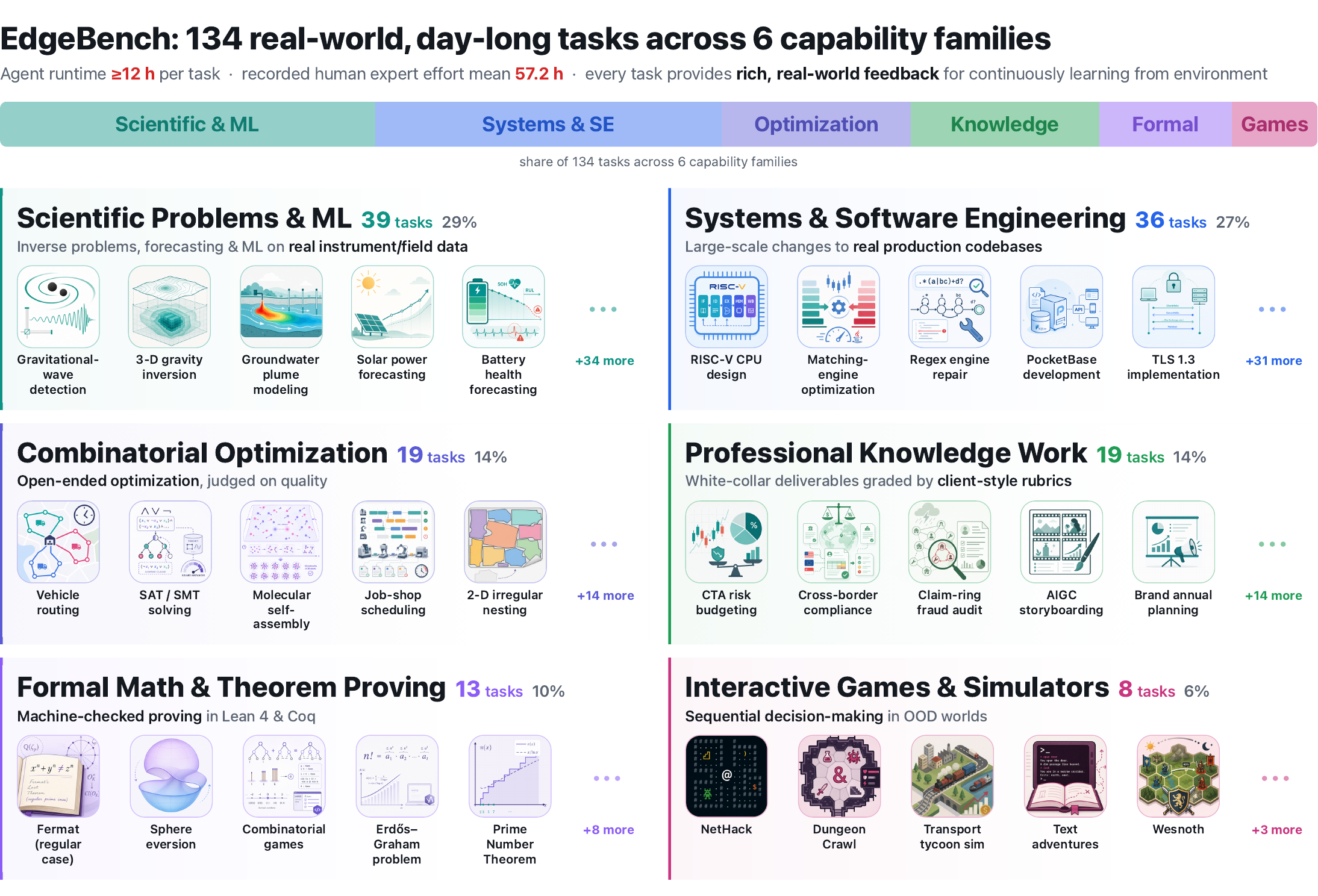}
\caption{\benchmark{} task taxonomy. 134 real world tasks across six capability families, with feedback channels designed to support within-run improvement. Recorded human expert effort estimates: mean 57.2 h.}
\label{fig:taxonomy}
\end{figure}

Scaling laws for pretraining~\citep{kaplan2020scaling,hoffmann2022training} revealed that model capability improves predictably with data and compute, an insight that guided much of the progress in the years that followed~\citep{openai2023gpt4,openai2025gpt45systemcard,openai2025gpt5systemcard,openai2025gpt52systemcard,openai2026gpt54thinkingsystemcard,openai2026gpt55systemcard,dubey2024llama3,anthropic2024claude35,anthropic2026claudeopus48systemcard,geminiteam2024gemini15,geminiteam2025gemini25}. Now, as large language models enter the agent era, they are being deployed into a growing range of real world environments where they can try to learn from interaction, yet whether learning from such environments obeys a clean scaling law remains unknown. Analyzing roughly 38{,}000 hours of agent environment interaction across 134 diverse real world tasks, we find, to the best of our knowledge, the first evidence that \textbf{when agents learn from real world environments, overall performance follows a log-sigmoid scaling law as a function of environment interaction time}, achieving remarkably high precision with $R^2 = 0.998$.

Why study agents' ability to learn from their environments? Real world use of AI depends on more than what a model learned during training. Some needed knowledge never appears in training data, such as private records and internal tools. Even when raw data exists, it omits the human process behind it: the trial-and-error, the interpretation of evidence, and the adaptation to feedback through which experts actually reach results. The real world also never stands still: human knowledge keeps advancing, and new tools, discoveries, and problems continually emerge that no fixed training corpus can anticipate. Therefore, an agent's ability to learn from its environment and improve task performance is central to deploying AI systems at scale in the real world.

Studying this ability requires task environments that resemble real use, provide informative feedback for learning, and allow agents enough time to learn through interaction. However, existing benchmarks often lack such feedback or limit agents to only minutes or a few hours, making them not well suited to this goal~\citep{jimenez2024swebench,chan2024mlebench,merrill2026terminalbench,sun2026agentslastexam,chu2026frontierswe,xu2026autolab}. To address this gap, we created \textbf{\benchmark{}}. Our benchmark contains \textbf{134 realistic and diverse tasks spanning six capability families}, from scientific research and software engineering to formal mathematics and interactive games, as shown in Figure~\ref{fig:taxonomy}. Each task runs in an executable workspace that combines fast local exploration with slower judge feedback on submitted artifacts, mirroring real-world workflows. Agents can work for at least \textbf{12 hours} on each task (by contrast, Agents' Last Exam~\citep{sun2026agentslastexam} averages roughly one hour per task), while we record their submissions and track how performance changes throughout the run. These tasks are substantial even for human experts: recorded expert effort averages \textbf{57.2 hours} per task and reaches up to \textbf{320 hours}. This makes \benchmark{} a natural testbed for studying how agents learn from their environments over long horizons.

Using \benchmark{}, we evaluate frontier agents over roughly 38{,}000 hours of
environment interaction. Our study makes four main observations:

\begin{itemize}
    \item \textbf{Environment learning exhibits a precise log-sigmoid scaling law.}
    Averaged performance follows the same functional form across the full
    benchmark, across task families, under longer interaction horizons up
    to 72 hours, and when forecasting later performance from early
    trajectories.

    \item \textbf{A theoretical derivation of the log-sigmoid law.}
    We propose a theory that models environment learning as a frontier expansion process on latent task graphs, which explains why benchmark-averaged progress takes the
    observed log-sigmoid form.

    \item \textbf{Agent learning speed doubles roughly every three months.}
    Studying frontier models released since September 2025, we find a rapid
    scaling trend in how quickly frontier agents learn from their environments.

    \item \textbf{Learning dynamics strongly shape long-horizon performance.}
    Long-horizon performance depends on how agents use accumulated experience,
    not only on how many attempts they make. Continuous experience outperforms
    independent restarts, longer context improves retention, and detailed
    case studies show that feedback can turn many failed probes into a few
    durable gains.
\end{itemize}

%% file: sections/approach.tex
\section{\benchmark{}}

\subsection{Design Goals}

\benchmark{} aims to measure whether an autonomous agent can learn from experience in an unfamiliar real world environment. This requires two properties from a benchmark that existing evaluations lack:

\begin{itemize}
    \item \textbf{Ultra-long-horizon, diverse tasks.} Learning behaviors such as exploration, strategy revision, and experience accumulation need time and complexity to emerge. Short tasks are usually solved from memory rather than learning, so measuring learning calls for long-horizon tasks. Because learning is a general capability, these tasks must also span diverse domains.
    \item \textbf{Realistic, multi-level feedback.} In practice, human experts learn from rich feedback: test failures, experimental results, unexpected phenomena, authoritative judgments, and more. A benchmark that cannot offer such rich feedback cannot measure learning, and leaves the agent guessing what the evaluation actually rewards. We need feedback that approximates the real world, so we can measure true, general-purpose learning.
\end{itemize}

The first requirement motivates our task taxonomy (\S\ref{sec:taxonomy}): 134 tasks across six capability families, each designed as a day-scale challenge that supports frontier models running for at least 12 hours. The second motivates our evaluation protocol (\S\ref{sec:evaluation-protocol}): each task individually simulates its own slice of the real world, providing isolated work and judge environments, local agent-driven feedback, submission-gated judge feedback, and host-side trajectory measurement.

\subsection{Task Taxonomy}
\label{sec:taxonomy}

We searched for real world tasks that satisfy two criteria: a performance ceiling high enough that no current agent can saturate it, and a workflow that supports continuous learning rather than one-shot completion. This search, conducted in collaboration with domain experts across fields, identified six capability families and yielded 134 curated tasks (Figure~\ref{fig:taxonomy}):

\begin{itemize}
    \item \textbf{Scientific Problems \& ML} (39 tasks). Each task uses real world research data and experimental settings sourced from working scientists. Domain expertise is essential: agents must formulate hypotheses, choose models, validate against noisy observations, and refine iteratively. Many problems are open-ended, with no known optimal solution.

    \item \textbf{Systems \& Software Engineering} (36 tasks). Agents work on production-grade codebases where a single task may require thousands of lines of change, with over 100{,}000 lines in the largest cases. Because the code spans interdependent modules, an agent must reason about cross-module coupling while meeting both correctness and performance targets.

    \item \textbf{Combinatorial Optimization} (19 tasks). These are open-ended, predominantly NP-hard problems where exact methods are intractable and progress depends on designing, tuning, and iterating on heuristic search strategies. Even strong solvers have room to improve with additional time and feedback.

    \item \textbf{Professional Knowledge Work} (19 tasks). These tasks reproduce real white-collar deliverables across finance, education, healthcare, and legal domains, matching work that would take a human professional with three or more years of experience roughly three full days to complete. Many tasks feature carefully designed rubrics and multi-round delivery feedback that approximate real client review cycles, so agents can learn from structured critique and revise iteratively.

    \item \textbf{Formal Math \& Theorem Proving} (13 tasks). These tasks sit at the frontier of mathematical difficulty and require building large-scale machine-checked proofs in Lean, coupling deep mathematical insight with substantial formal-verification engineering. Most are newly created for \benchmark{} and designed to support iterative progress: agents receive structured intermediate guidance and can extend partial proofs incrementally.

    \item \textbf{Interactive Games \& Simulators} (8 tasks). These are real games designed for human players, where proficient humans typically invest tens of hours to master the mechanics. The state spaces are enormous and each run is procedurally distinct, so agents face strong out-of-distribution pressure. Agents must develop and refine strategies through high-frequency interaction across many episodes.
\end{itemize}

Tasks whose primary difficulty lies in visual understanding, especially GUI operation, are excluded. When success depends on the vision backbone rather than iterative reasoning, learning ability and perceptual capability are hard to separate.

\subsection{Feedback Loop and Evaluation Protocol}
\label{sec:evaluation-protocol}
\label{sec:feedback-design}

\begin{figure}[!t]
\centering
\includegraphics[width=\linewidth]{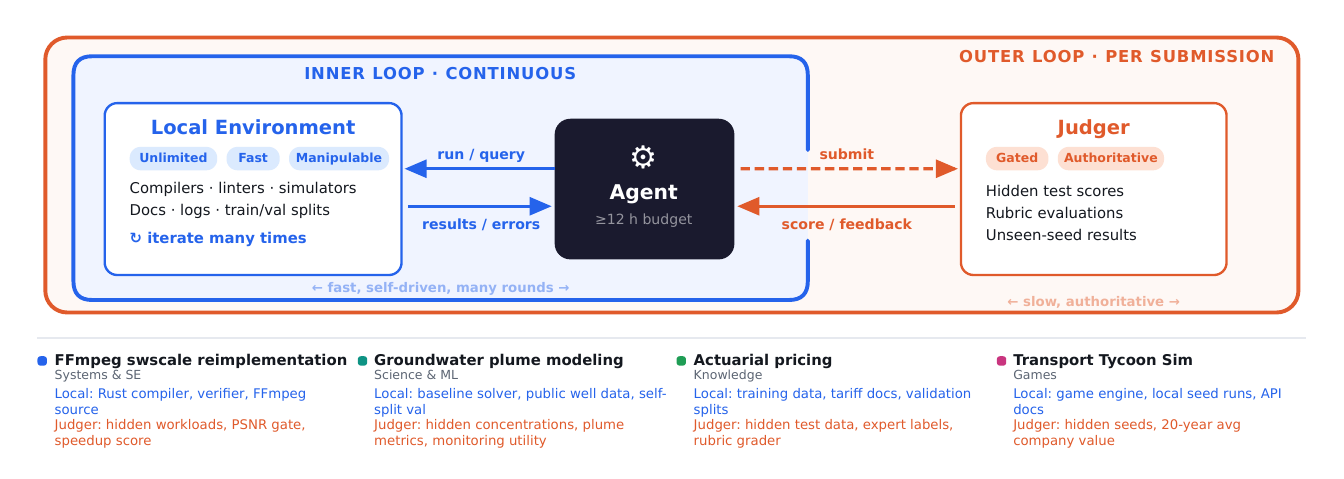}
\caption{The informative feedback loop in \benchmark{}. The inner loop (blue) lets agents iterate freely with local feedback; the outer loop (orange) gates authoritative judge feedback behind submissions. Bottom: representative tasks showing how both feedback channels are instantiated across capability families.}
\label{fig:dual-loop}
\end{figure}

\begin{figure}[p]
\centering
\includegraphics[width=0.31\linewidth]{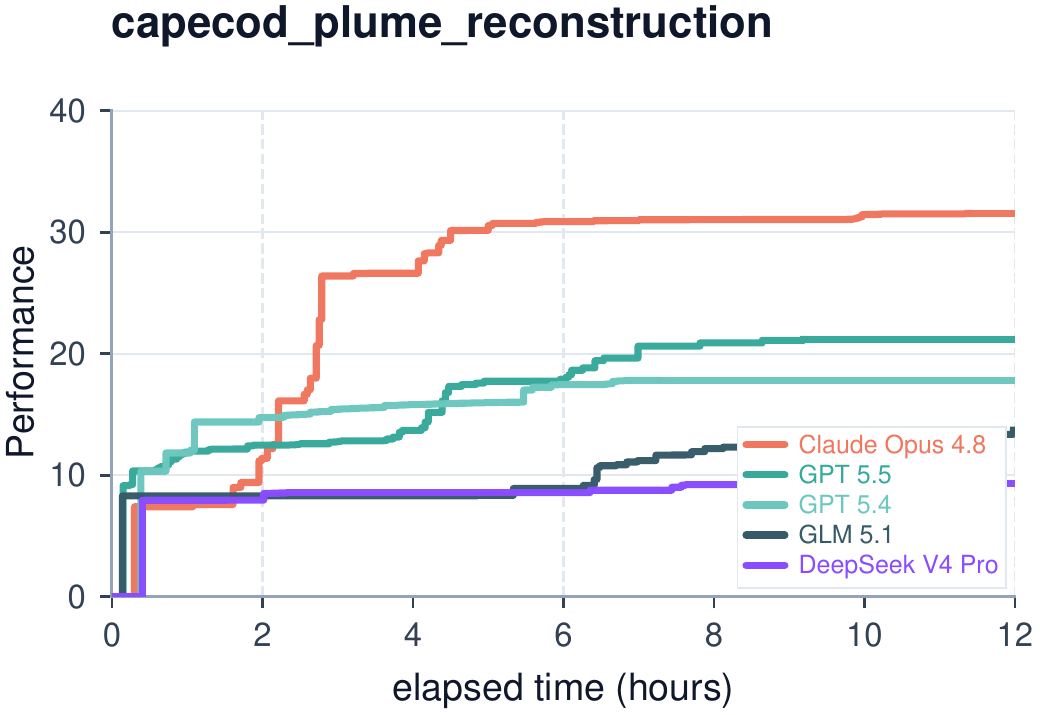}\hfill
\includegraphics[width=0.31\linewidth]{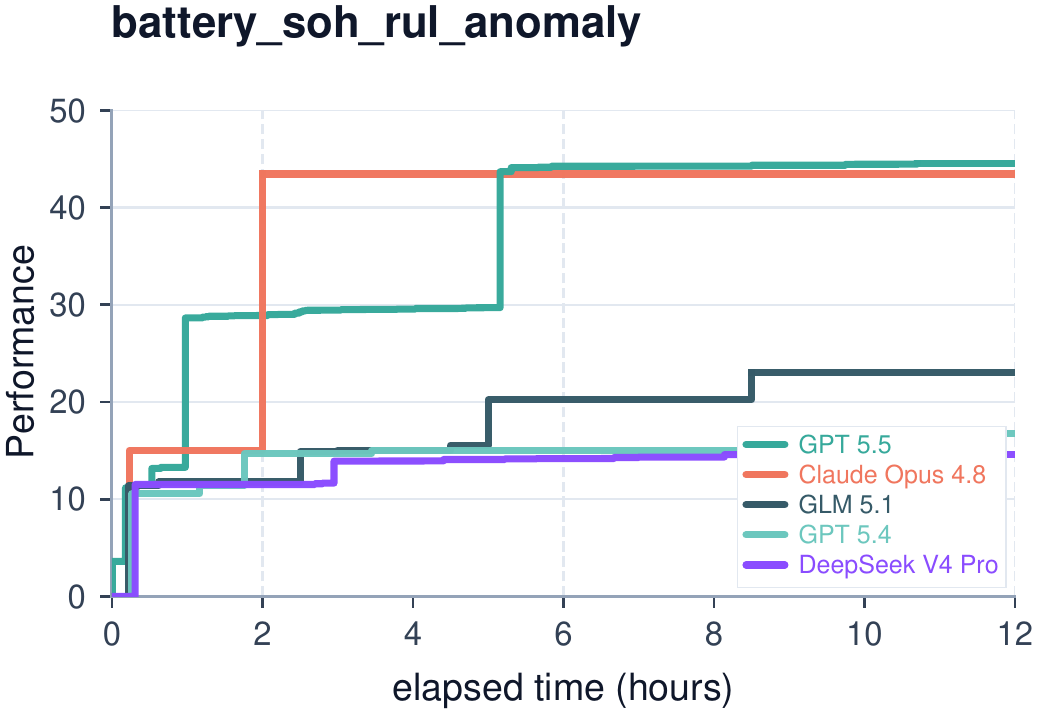}\hfill
\includegraphics[width=0.31\linewidth]{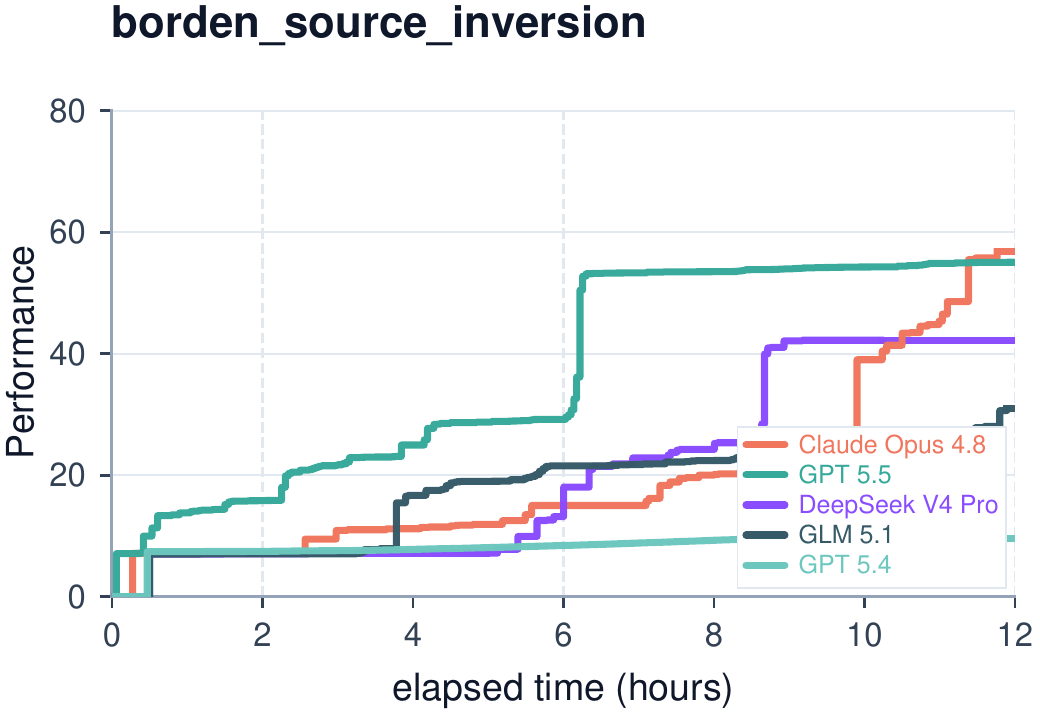}

\vspace{0.05em}
\includegraphics[width=0.31\linewidth]{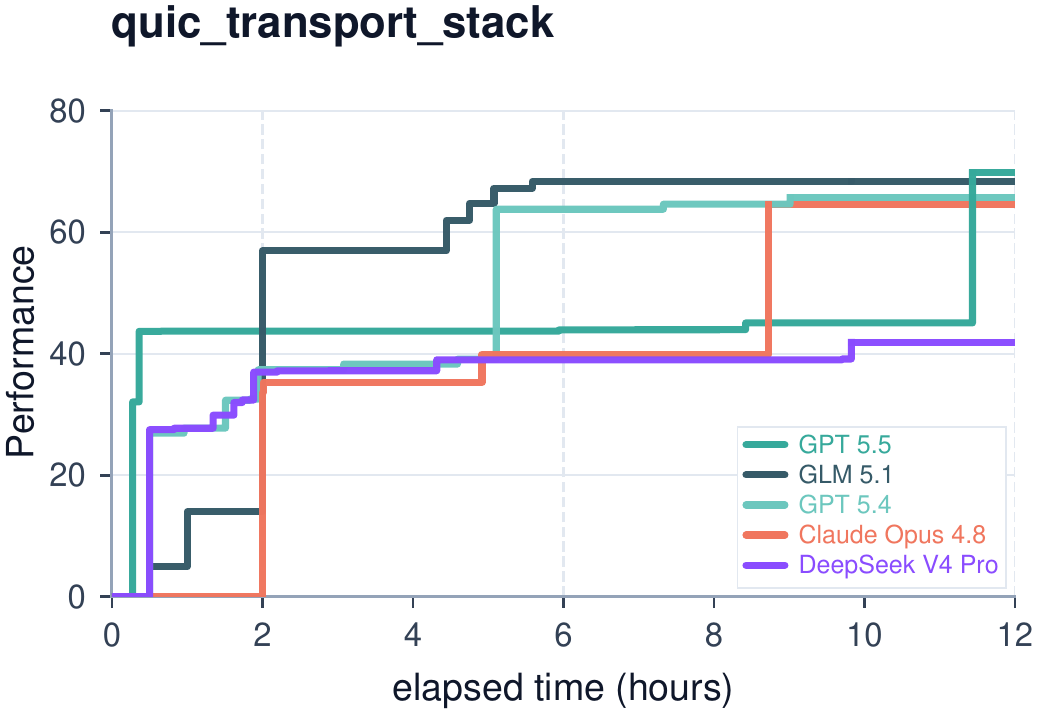}\hfill
\includegraphics[width=0.31\linewidth]{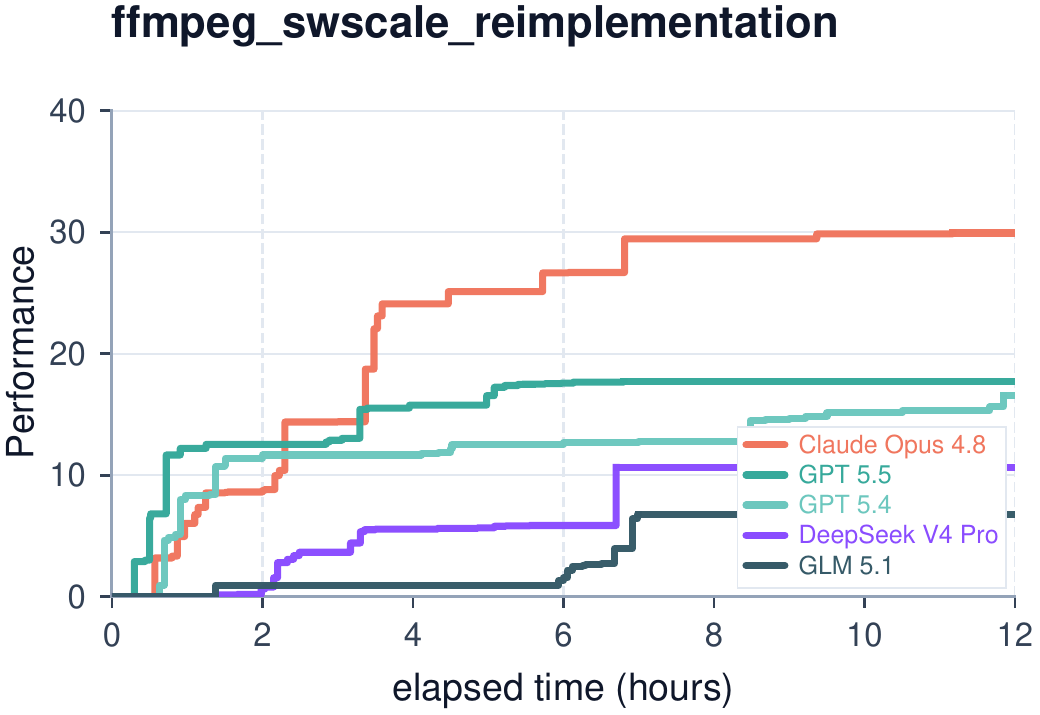}\hfill
\includegraphics[width=0.31\linewidth]{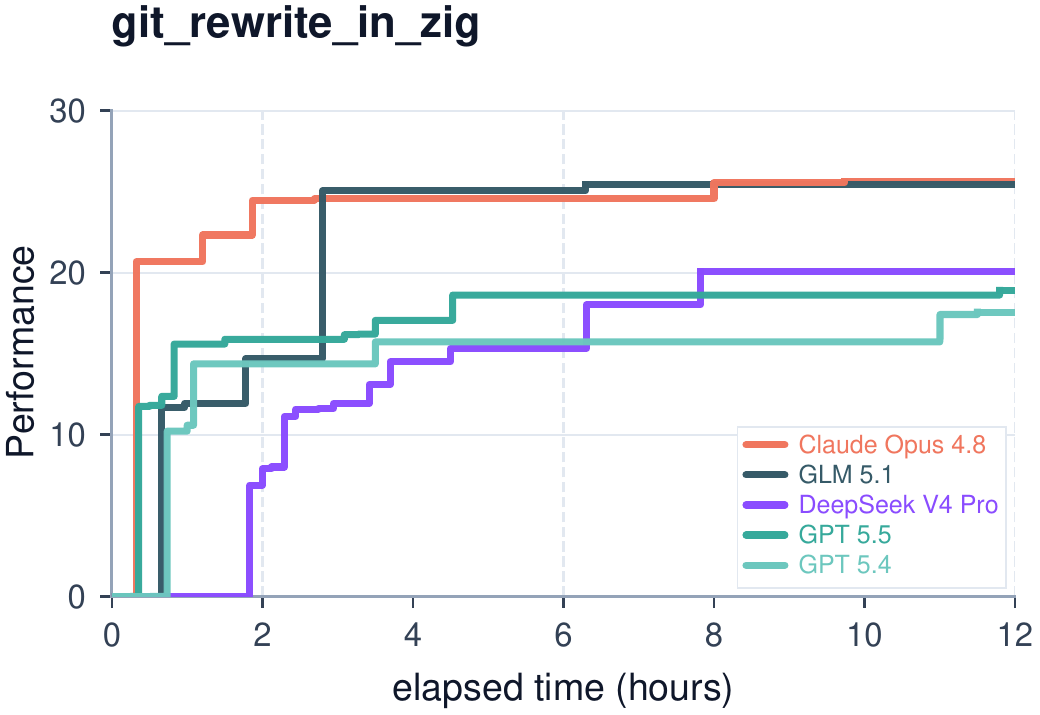}

\vspace{0.05em}
\includegraphics[width=0.31\linewidth]{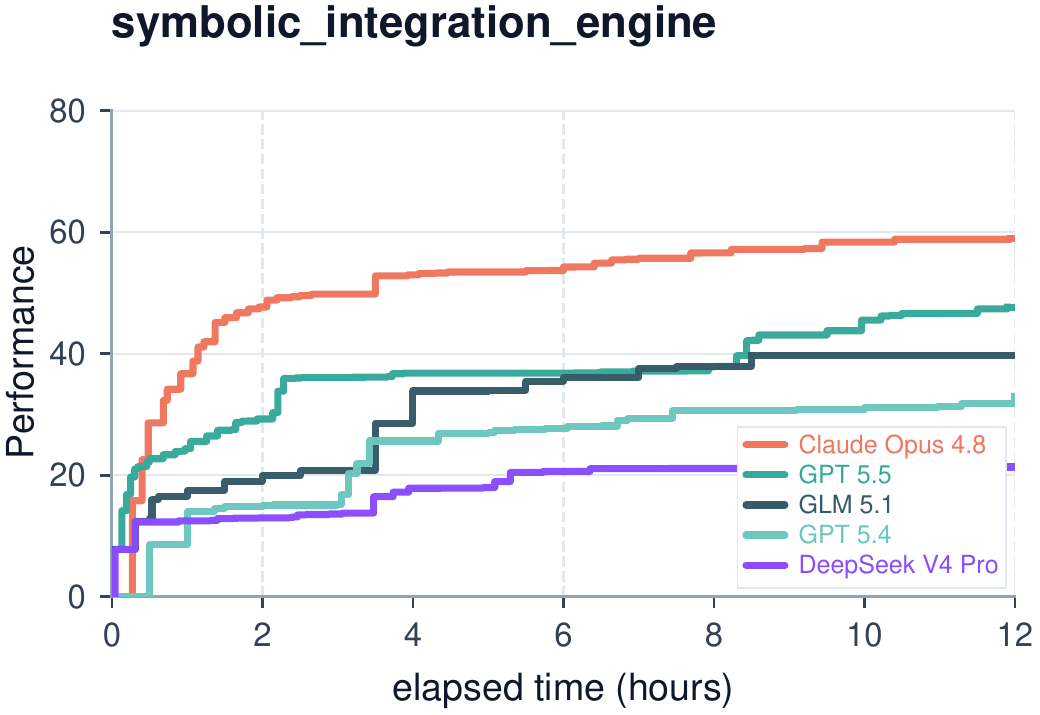}\hfill
\includegraphics[width=0.31\linewidth]{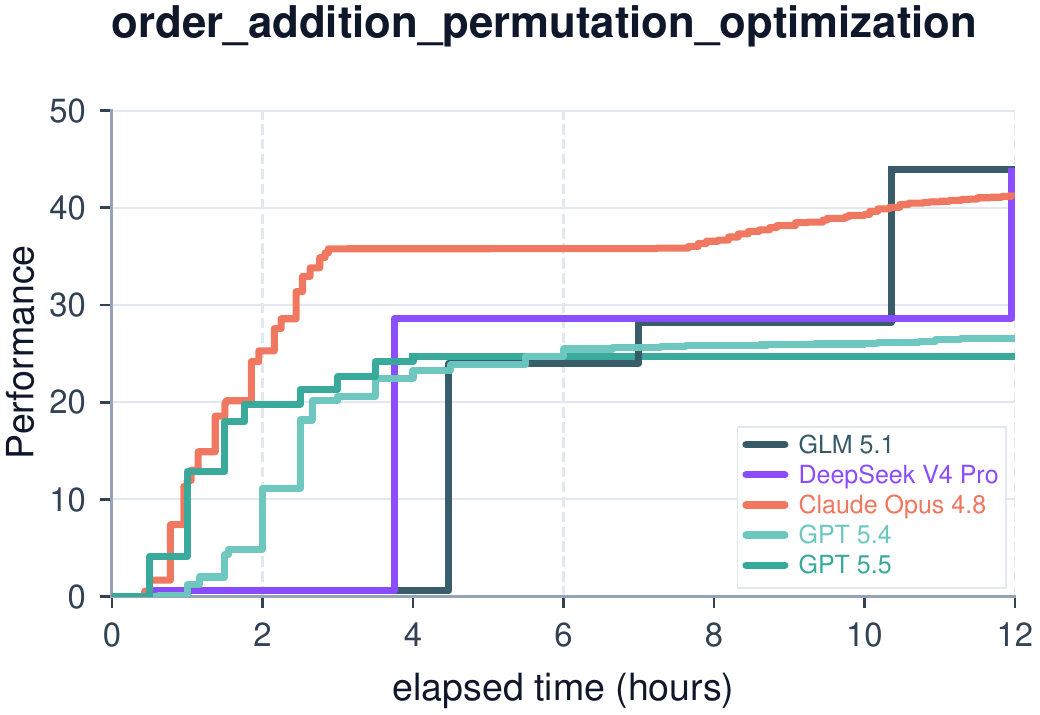}\hfill
\includegraphics[width=0.31\linewidth]{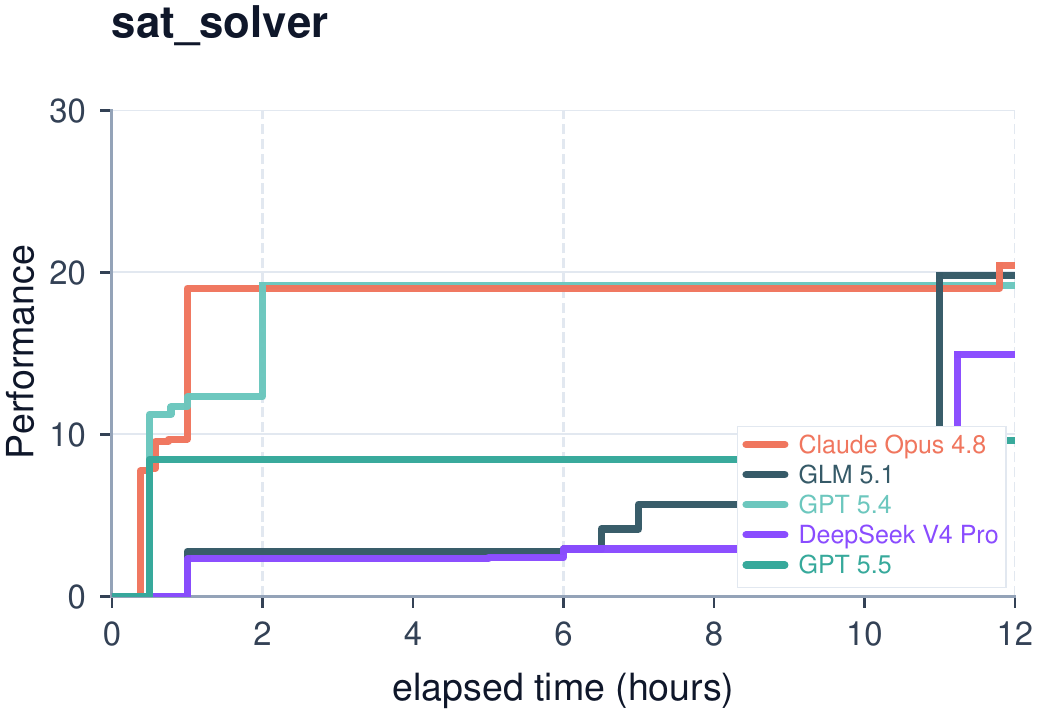}

\vspace{0.05em}
\includegraphics[width=0.31\linewidth]{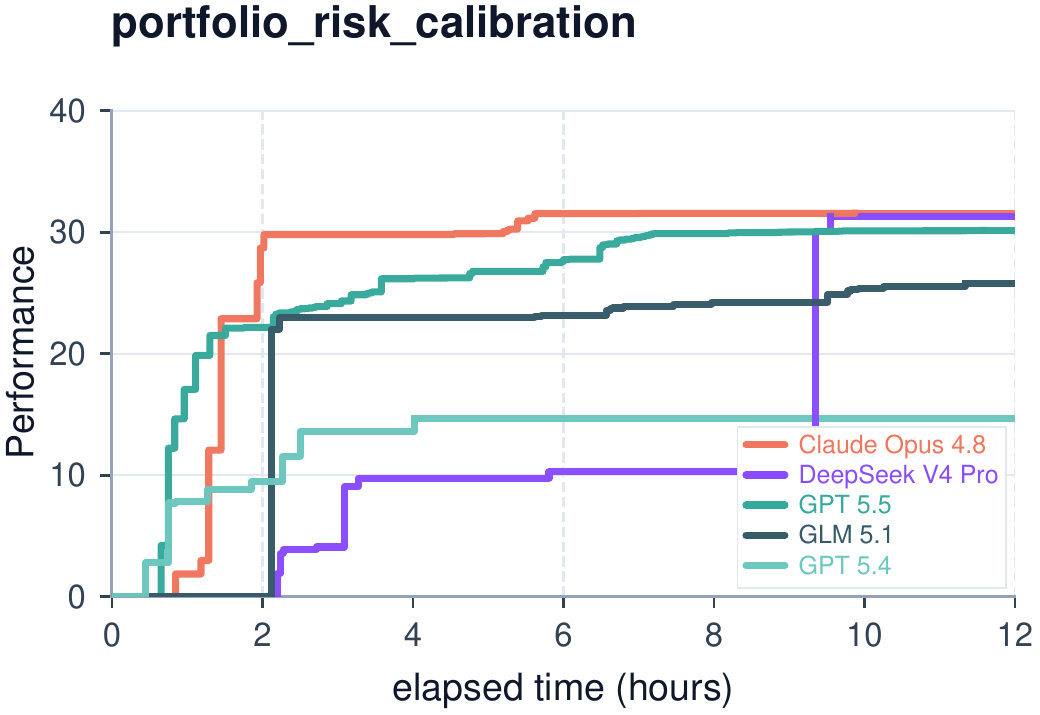}\hfill
\includegraphics[width=0.31\linewidth]{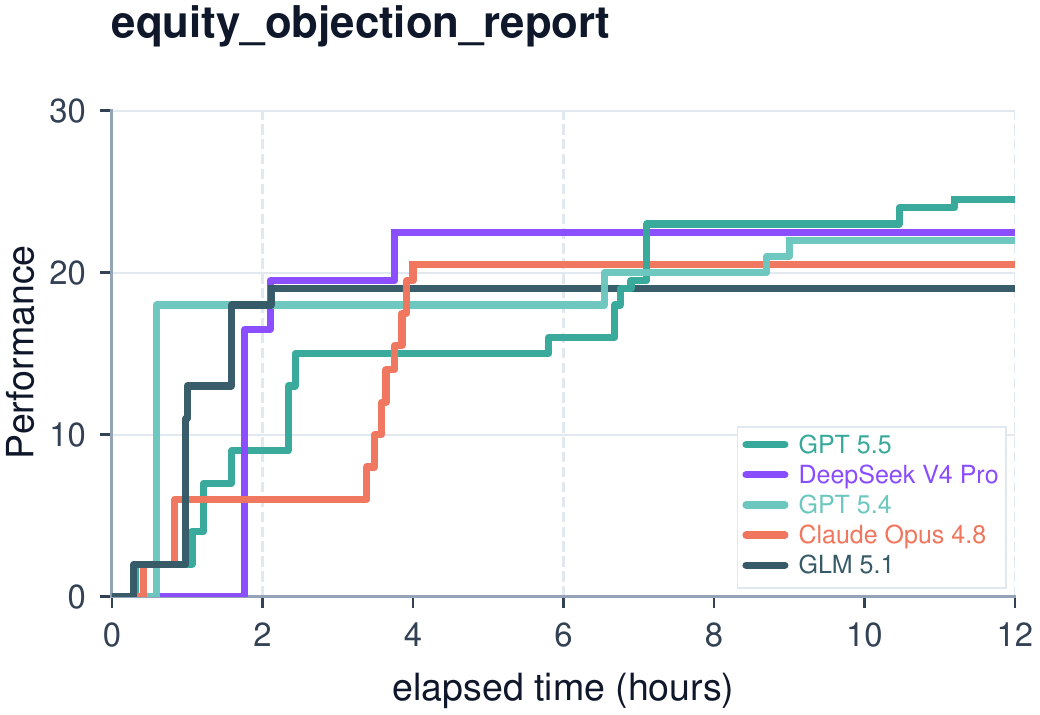}\hfill
\includegraphics[width=0.31\linewidth]{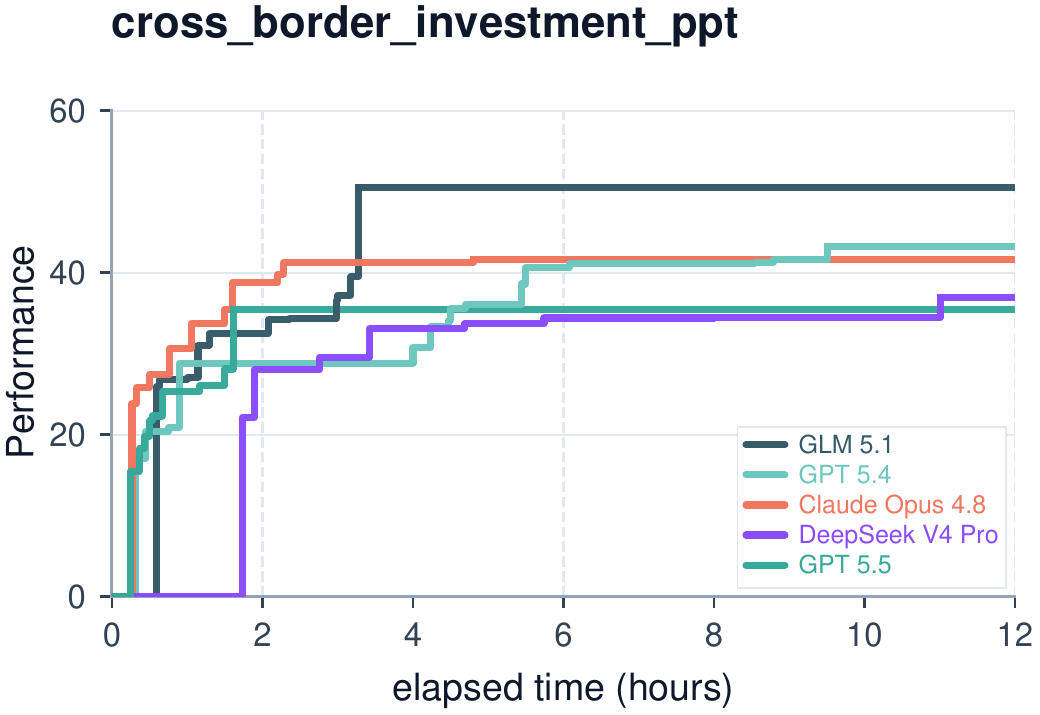}

\vspace{0.05em}
\includegraphics[width=0.31\linewidth]{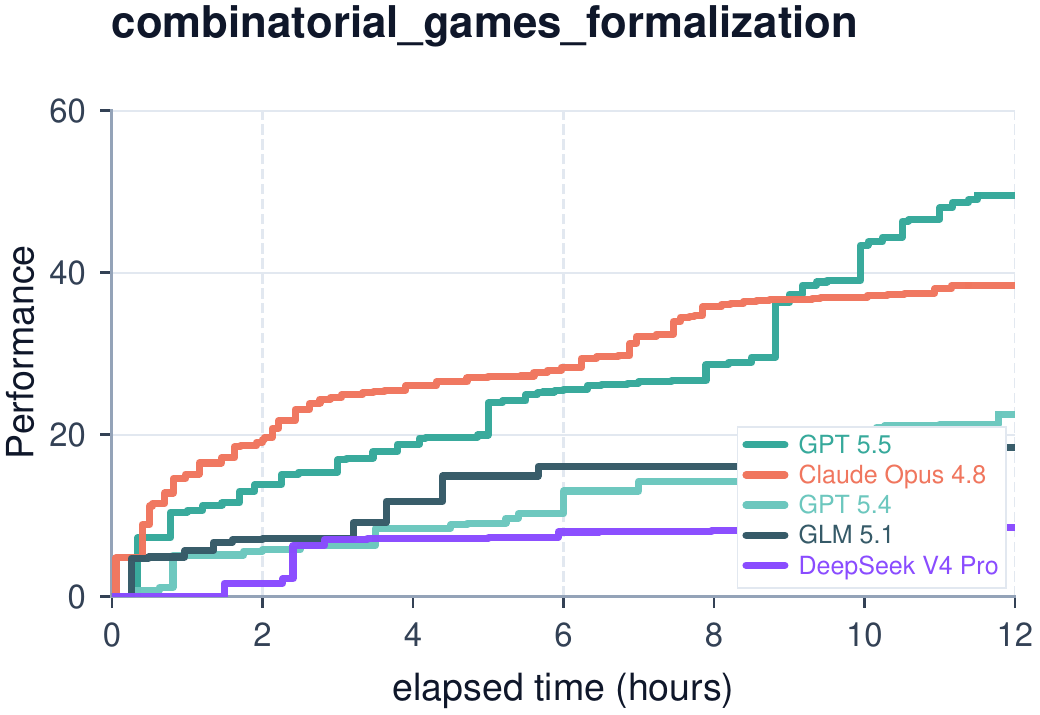}\hfill
\includegraphics[width=0.31\linewidth]{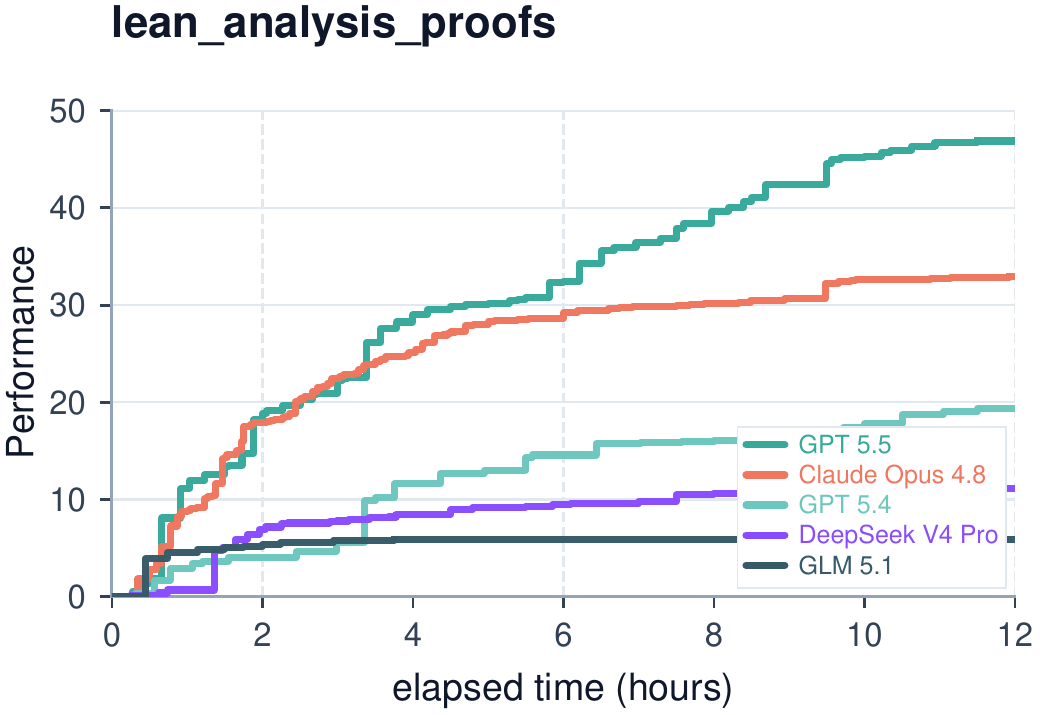}\hfill
\includegraphics[width=0.31\linewidth]{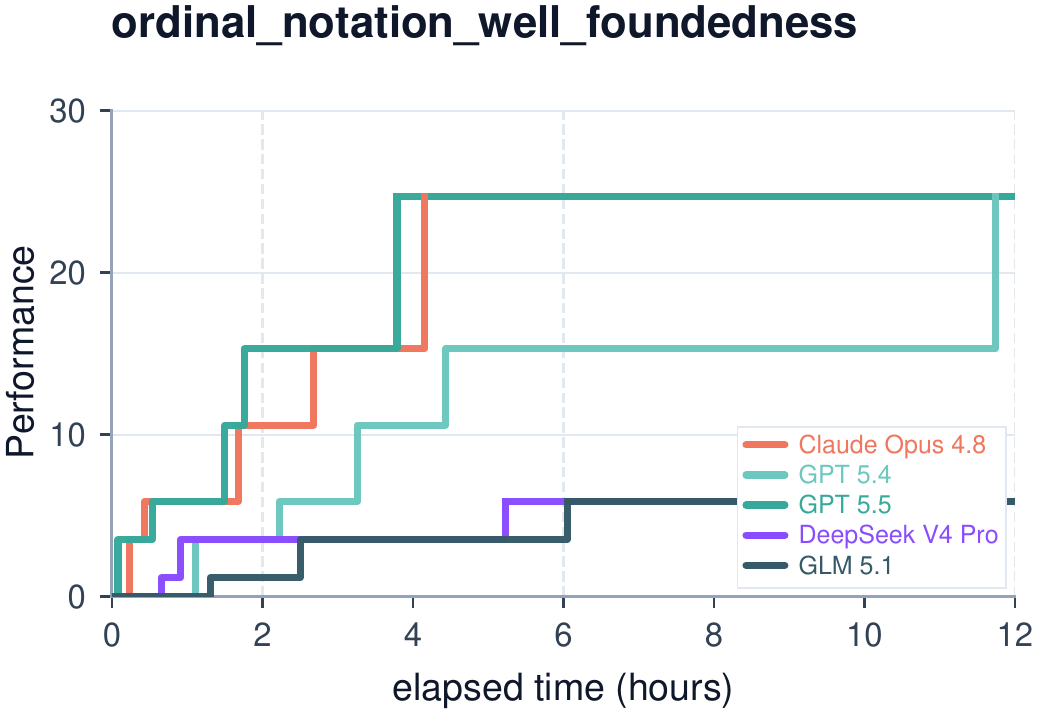}

\vspace{0.05em}
\includegraphics[width=0.31\linewidth]{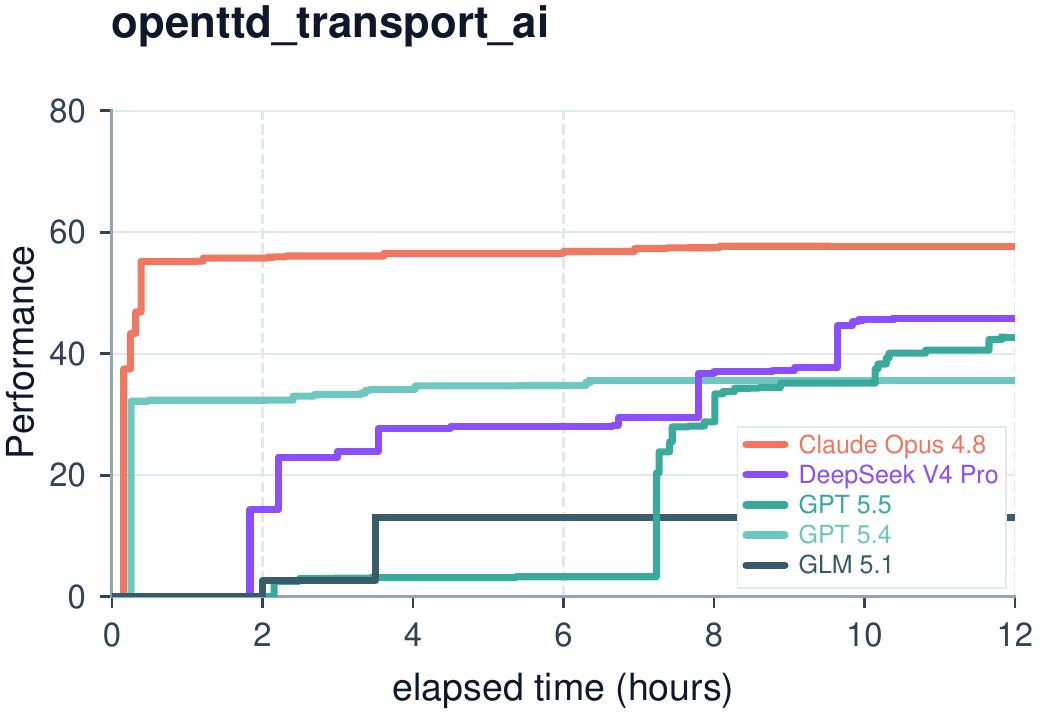}\hfill
\includegraphics[width=0.31\linewidth]{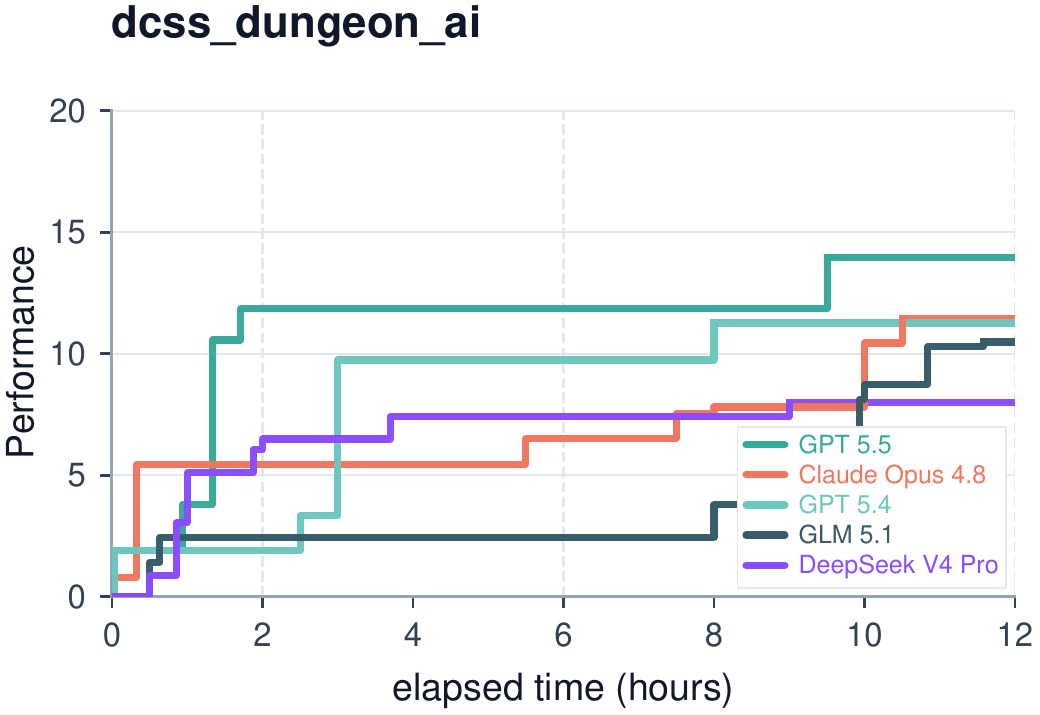}\hfill
\includegraphics[width=0.31\linewidth]{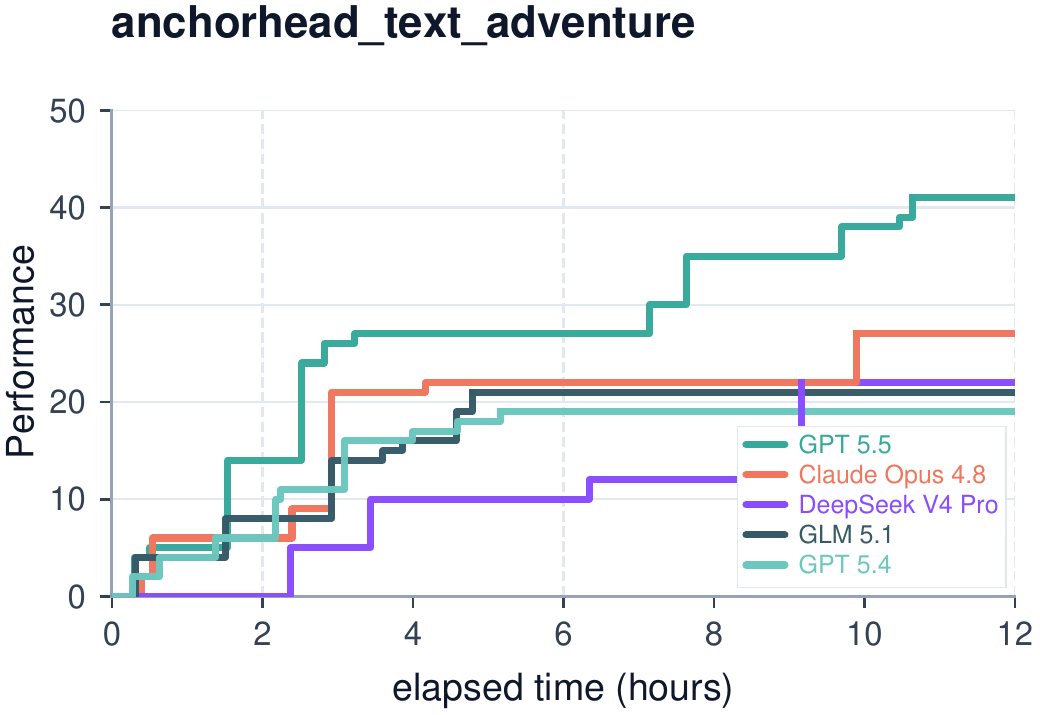}
\caption{Learning curves over 12 hours for 18 representative tasks across six capability families. The remaining task curves are provided in Figures~\ref{fig:curves-all-1}--\ref{fig:curves-all-21}.}
\label{fig:environment-data-curves-copy}
\end{figure}

Real world engineering and research workflows rarely provide a single final answer check. Instead, practitioners iterate through \textbf{two complementary feedback loops}: a fast local loop for exploration, debugging, and refinement, and a slower external loop that provides authoritative calibration through deployment, peer review, benchmark evaluation, or stakeholder feedback. The local loop enables rapid progress, while the external loop guards against overfitting to visible checks and exposes failures that are not captured by the developer's own tests.

\benchmark{} adopts this dual-loop structure to measure learning rather than endpoint success (Figure~\ref{fig:dual-loop}). The inner loop is local and agent-driven: agents can inspect a writable workspace, run tests or simulators, observe errors, and revise their artifacts. The outer loop is judge-mediated: submitted artifacts are evaluated against hidden cases or private grading criteria, returning calibrated scores, verdicts, or diagnostics. Across task families, this pattern is instantiated through different mechanisms: tests and profilers for software tasks, development splits and validators for scientific tasks, local testers and hidden seeds for optimization tasks, proof-checker states for theorem proving, episode scores for games, and rubrics for professional knowledge work.

The protocol is implemented with an isolated work--judge evaluation harness. During a run, the agent works inside a work container holding the task materials and local validation tools, but no hidden evaluation assets. The agent can actively submit its current artifacts to a separate judge container, which runs the hidden evaluation and returns the feedback specified by the task. A host-side judge server mediates this outer loop, including submission queues, cooldowns, authentication, and support for asynchronous grading on long-running evaluations, allowing agents to continue working while submitted jobs are being judged. Appendix~\ref{sec:appendix-evaluation-hacking} gives examples of task-design failure modes that motivated this isolation and submission-mediated design.

For trajectory measurement, the evaluation harness also performs host-side auto-evaluation at fixed intervals. These snapshots are scored through the hidden judge and recorded for analysis, but the results are not shown to the agent. This lets \benchmark{} measure improvement even between explicit submissions, while preserving the distinction between agent-visible feedback and evaluator-only measurement.

%% file: sections/emerging_scaling.tex
\section{Scaling Laws of Learning from Real World Environments}
\label{sec:emerging-scaling}

Pretraining scaling laws classically model language-model loss as a power-law
function of training scale, including the amount of pretraining
data~\citep{kaplan2020scaling,hoffmann2022training}. By contrast, benchmark
performance is a task-level readout of model capability, shaped by the
difficulty thresholds induced by tasks, examples, and scoring criteria. Prior
work has found that it is well described under pretraining scale-up by a
log-sigmoid curve
~\citep{dubey2024llama3,bhagia2024taskscaling,owen2024predictable,
ruan2024observational}.

Beyond learning from human-collected training data, modern agents such as
GPT-5.5 and Claude Opus 4.8 can continue to learn from their environments after
deployment. On a
task, they can acquire new information through interaction and improve their
performance over time. Yet it remains
unclear whether this form of learning obeys any similarly simple scaling law.
\benchmark{} provides 134 diverse real world tasks with executable
environments, informative feedback, and at least 12-hour interaction windows, allowing
us to study how agents improve through interaction with their environments.
Analyzing five frontier agents over roughly \textbf{38,000 hours of environment
interaction} across 134 diverse real world tasks, we find that \textbf{a
log-sigmoid curve fits environment learning performance precisely}. Thus,
learning from the environment and learning from pretraining data induce the
same mathematical scaling form.

\subsection{From Task Trajectories to Predictable Scaling Curves}
\label{sec:environment-learning-data}

\underline{\textit{Experimental setting.}} We evaluate 134 \benchmark{} tasks
with five frontier models: Claude Opus 4.8~\cite{anthropic2026claudeopus48systemcard}, GPT-5.5~\cite{openai2026gpt55systemcard}, GPT-5.4~\cite{openai2026gpt54thinkingsystemcard}, GLM-5.1~\cite{zai2026glm51,glm5team2026glm5report}, and
DeepSeek-V4-Pro (preview)~\cite{deepseekai2026deepseekv4}. For each task--model pair, we run three independent 12-hour trials and record the
full submission trajectory. GPT models are run with Codex using a 256k compact
window, while GLM-5.1 and DeepSeek-V4-Pro are run with Claude Code using 200k
compact windows. Claude Opus 4.8 is primarily run with a 1M Claude Code compact
window; we additionally include a 200k versus 1M Opus ablation in
Section~\ref{sec:context-length-analysis}.

\underline{\textit{Per-task trajectory diversity.}} Figure~\ref{fig:environment-data-curves-copy} shows per-task learning curves
for 18 representative tasks, illustrating how different models improve on the
same task over time. The full set of per-task curves is provided in
Appendix~\ref{sec:all-curves} for all 134 tasks. The tasks span diverse domains
and exhibit heterogeneous learning dynamics, with trajectories ranging from
smooth incremental gains to long plateaus, abrupt breakthroughs, and irregular
regressions.

\underline{\textit{Aggregate curves reveal a common structure.}} Although these individual curves are heterogeneous across both tasks and
models, their cross-task averages are unexpectedly smooth and share a common
structure. Motivated by prior log-sigmoid fits of benchmark performance under
pretraining scale-up, we fit the averaged environment learning curves with the
following three-parameter log-sigmoid model:
\begin{equation}
    S(t)
    =
    \frac{S_{\max}}{1 + (t_{\mathrm{mid}} / t)^\beta},
    \label{eq:log-sigmoid-learning}
\end{equation}
where $t$ is elapsed interaction time and $S(t)$ is best-so-far
performance. The fitted parameters serve as empirical descriptors of the
aggregate learning curve: $S_{\max}$ is the attainable score ceiling,
$t_{\mathrm{mid}}$ is the interaction time at which the curve reaches half of
that ceiling, and $\beta$ controls how sharply progress concentrates in log
time. Thus a smaller $t_{\mathrm{mid}}$ means the model reaches the bulk of its
attainable score sooner, while a larger $\beta$ corresponds to a steeper
learning transition. Section~\ref{sec:log-sigmoid-fit} shows that this simple
functional form fits the empirical learning curves surprisingly well.

\begin{figure}[!t]
\centering
\includegraphics[width=\linewidth]{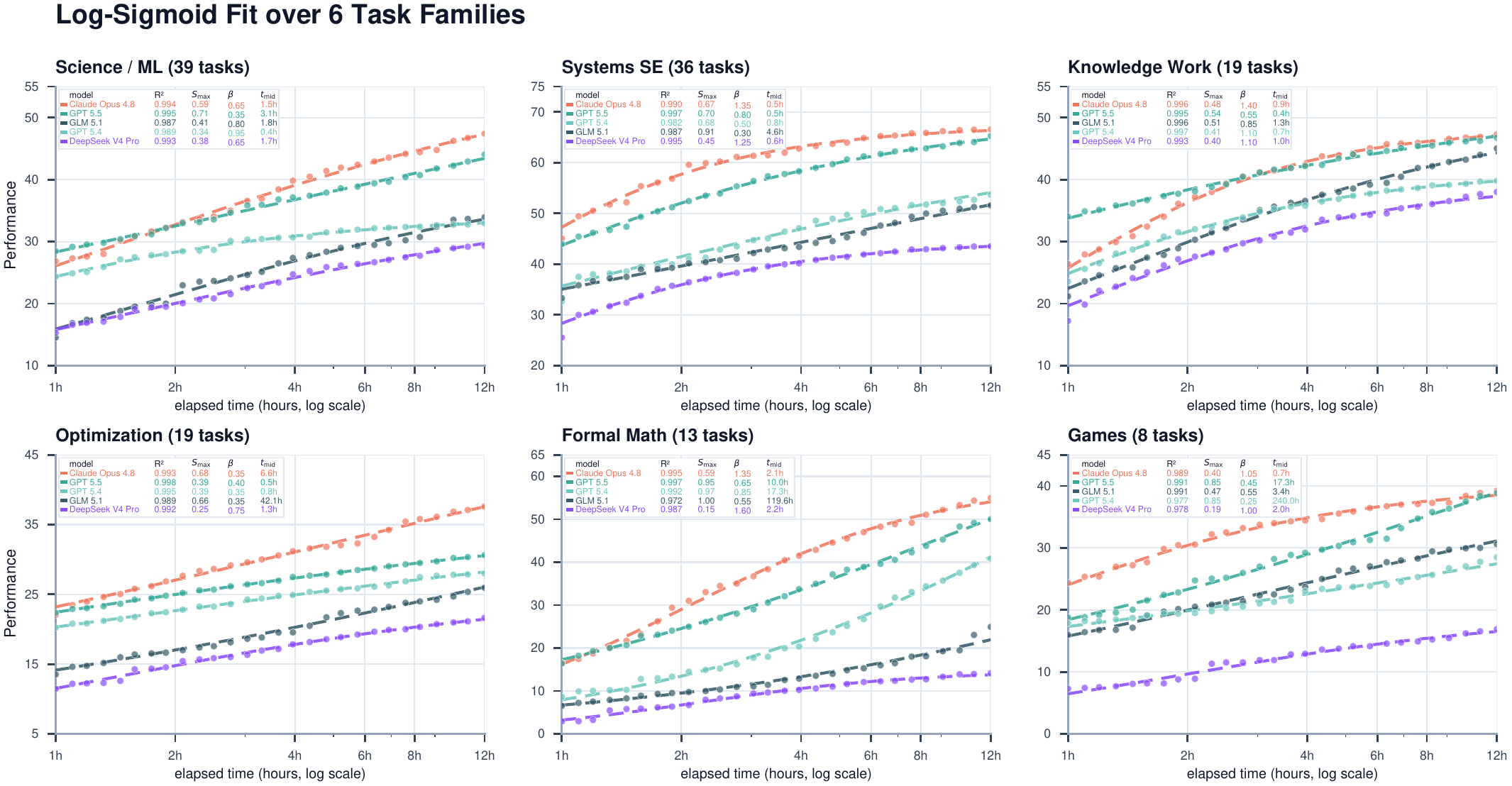}
\caption[Task-family-level log-sigmoid fits across the six \benchmark{} task families.]{
Task-family-level log-sigmoid fits across the six \benchmark{} task families.
Despite large differences in task type and scoring function, the same
log-time sigmoidal form fits the average trajectory in each task family well.\protect\footnotemark}
\label{fig:logtime-sigmoid-domains}
\end{figure}
\footnotetext{Fit precision improves as the number of fitted tasks increases; see
Figure~\ref{fig:log-sigmoid-rmse-task-count}.}

\begin{figure}[!t]
\centering
\begin{minipage}[t]{0.64\linewidth}
\centering
\includegraphics[width=\linewidth]{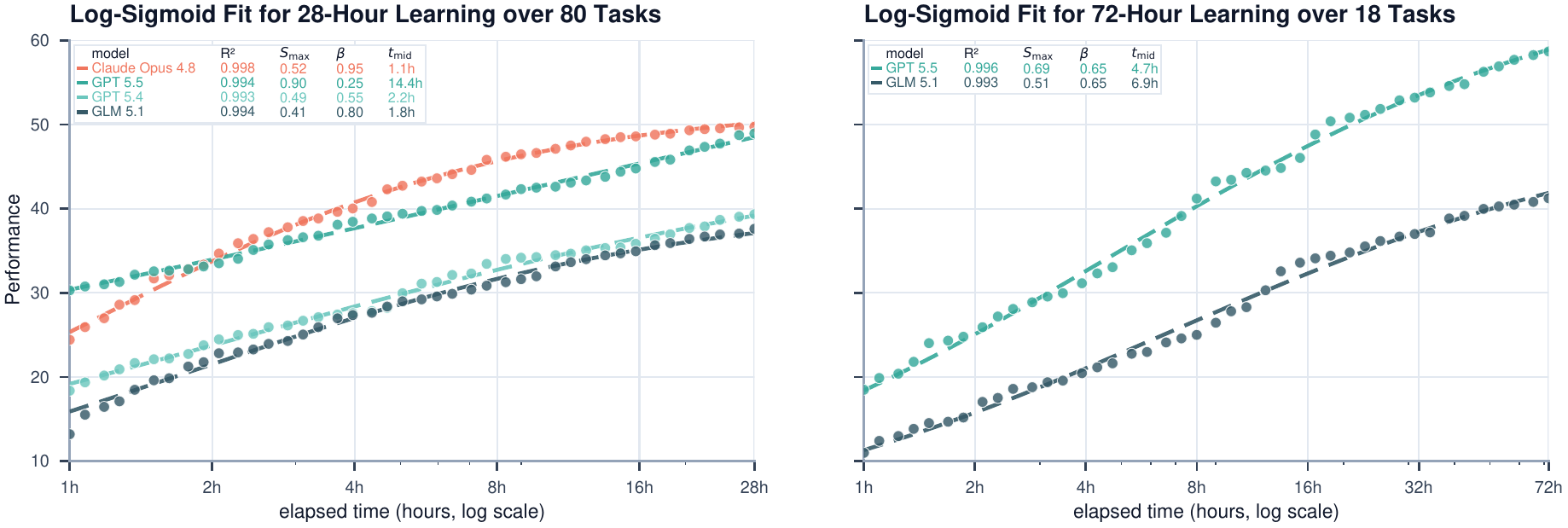}
\caption{Long-horizon log-sigmoid fits beyond the main 12-hour benchmark window.
The left panel fits 28-hour trajectories averaged over 80 tasks; the right panel fits
72-hour trajectories averaged over 18 tasks. All fitted curves remain highly precise,
with $R^2 \geq 0.993$.}
\label{fig:long-horizon-sigmoid-fits}
\end{minipage}
\hfill
\begin{minipage}[t]{0.32\linewidth}
\centering
\includegraphics[width=\linewidth]{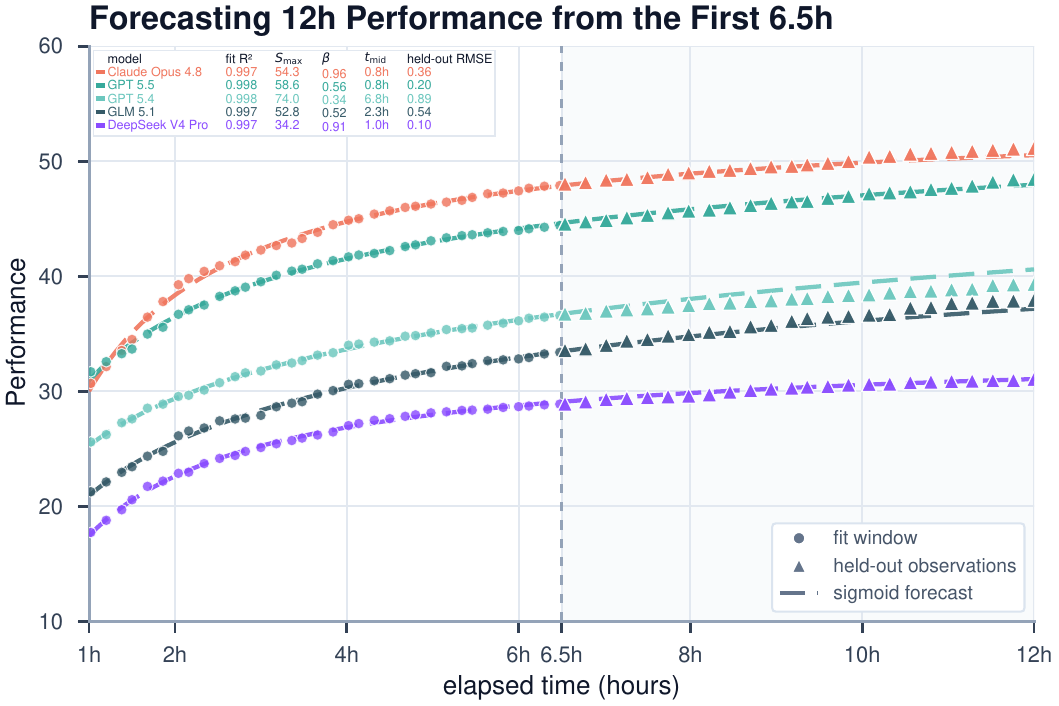}
\caption[Forecasting 12-hour performance from the first 6.5 hours.]{
Forecasting 12-hour performance from the first 6.5 hours. Log-sigmoid
fits on this early window accurately predict the held-out remainder for all five
models.\protect\footnotemark}
\label{fig:sigmoid-forecast-6p5h}
\end{minipage}
\end{figure}
\footnotetext{GPT-5.4 service availability dropped in the latter half of the
experiment, which may partly explain its forecast deviation; see
Appendix~\ref{sec:gpt54-serving-stability}.}

\subsection{Log-Sigmoid Curves Fit Environment Learning with Remarkable Precision}
\label{sec:log-sigmoid-fit}

We find that the log-sigmoid fit is precise and robust across every setting we tested:

\begin{itemize}
    \item \textbf{Log-sigmoid curves precisely fit the 134-task average for all
    five models.}
    As shown in Figure~\ref{fig:logtime-sigmoid-all-tasks}, after averaging
    over all 134 tasks, the fitted log-sigmoid curve closely tracks each
    model's 12-hour learning trajectory. The fit is uniformly tight across all
    five models, with $R^2 \geq 0.997$ in every case.

    \item \textbf{The same log-sigmoid form persists across heterogeneous task families.}
    Figure~\ref{fig:logtime-sigmoid-domains} repeats the fit separately across
    the six capability families in \benchmark{}. These families require different knowledge,
    exercise different capabilities, and produce visibly different aggregate
    learning curves. Yet each family is still well described by the same
    log-sigmoid form, including smaller families where fewer tasks make the
    averaged trajectories noisier.

    \item \textbf{The fit remains precise under substantially longer
    interaction horizons.}
    Figure~\ref{fig:long-horizon-sigmoid-fits} extends the analysis beyond the
    main 12-hour window to 28-hour and 72-hour interaction horizons. Due to
    resource limits, the 28-hour fit covers 80 tasks and four models, while
    the 72-hour fit covers 18 tasks and two models. The same log-sigmoid
    structure remains stable across both longer settings, with every fitted
    curve reaching $R^2 \geq 0.993$.

    \item \textbf{The log-sigmoid law exhibits predictive power.}
    Figure~\ref{fig:sigmoid-forecast-6p5h} tests this by fitting each
    12-hour aggregate curve using only the first 6.5 hours, then evaluates the forecast on
    held-out observations from 6.5 to 12 hours. The extrapolated curves remain
    close to the later observed trajectories across all five models, with
    $R^2 \geq 0.997$ and RMSE below 1.0 performance point in every case.

\end{itemize}

These precise fits raise two questions: is the log-sigmoid genuinely the right curve, or would any S-shape do, and where does so clean a law come from?

\underline{\textit{Log-sigmoid fits best among common S-curves.}}
Many saturating growth processes are S-shaped~\citep{wikipedia_scurve}. We therefore
compare the log-sigmoid against other common S-curves, each viewing an
S-shaped learning curve as a cumulative distribution over a time coordinate:
log-probit~\citep{bliss1934method}, log-Gompertz~\citep{gompertz1825nature}, and
a Weibull CDF~\citep{weibull1951statistical} on raw time, together with a
two-parameter log-linear baseline; for the log-time families the independent
variable is $x=\ln t$. We fit every family on the 12h, 28h, and 72h full windows
and pool the error. As Table~\ref{tab:scurve-family-comparison} shows, the log-sigmoid
family attains the lowest RMSE, while the log-linear baseline is substantially worse. The empirical
signal is thus robustly sigmoidal rather than tied to a single link function,
and is not explained by mere linear improvement in $\ln t$. Appendix~\ref{sec:scurve-mechanistic-reading}
discusses why we nevertheless prefer the log-sigmoid on mechanistic grounds.

\begin{figure}[!t]
\centering
\begin{minipage}[t]{0.57\textwidth}
\vspace{0pt}
\small
\renewcommand{\arraystretch}{1.42}
\setlength{\tabcolsep}{4pt}
\resizebox{\ifdim\width>\linewidth\linewidth\else\width\fi}{!}{%
\begin{tabular}{@{}>{\raggedright\arraybackslash}p{0.22\linewidth}
                >{\raggedright\arraybackslash}p{0.58\linewidth}
                r@{}}
\BenchTopRule
Family & Functional form & RMSE \\
\BenchMidRule
\textbf{Log-Sigmoid} &
$\displaystyle S(t)=\frac{S_{\max}}{1+(t_{\mathrm{mid}}/t)^{\beta}}$ &
\textbf{0.390} \\
\addlinespace[2pt]
Log-Probit &
$\displaystyle S(t)=S_{\max}\,\Phi\!\left(\frac{\ln t-\mu}{\sigma}\right)$ &
0.398 \\
\addlinespace[2pt]
Log-Gompertz &
$\displaystyle S(t)=S_{\max}\exp\!\left[-\exp\{-c(\ln t-x_0)\}\right]$ &
0.402 \\
\addlinespace[2pt]
Weibull CDF &
$\displaystyle S(t)=S_{\max}\left(1-\exp\{-(t/\lambda)^\beta\}\right)$ &
0.404 \\
\addlinespace[2pt]
Log-Linear &
$\displaystyle S(t)=a+b\ln t$ &
0.717 \\
\BenchBottomRule
\end{tabular}%
}
\captionof{table}{Full-window fit error for three-parameter S-curve families and a
two-parameter log-linear baseline. RMSE is measured in \underline{performance
points} on the 0--100 score scale; lower is better.}
\label{tab:scurve-family-comparison}
\end{minipage}
\hfill
\begin{minipage}[t]{0.39\textwidth}
\vspace{0pt}
\centering
\includegraphics[width=\linewidth]{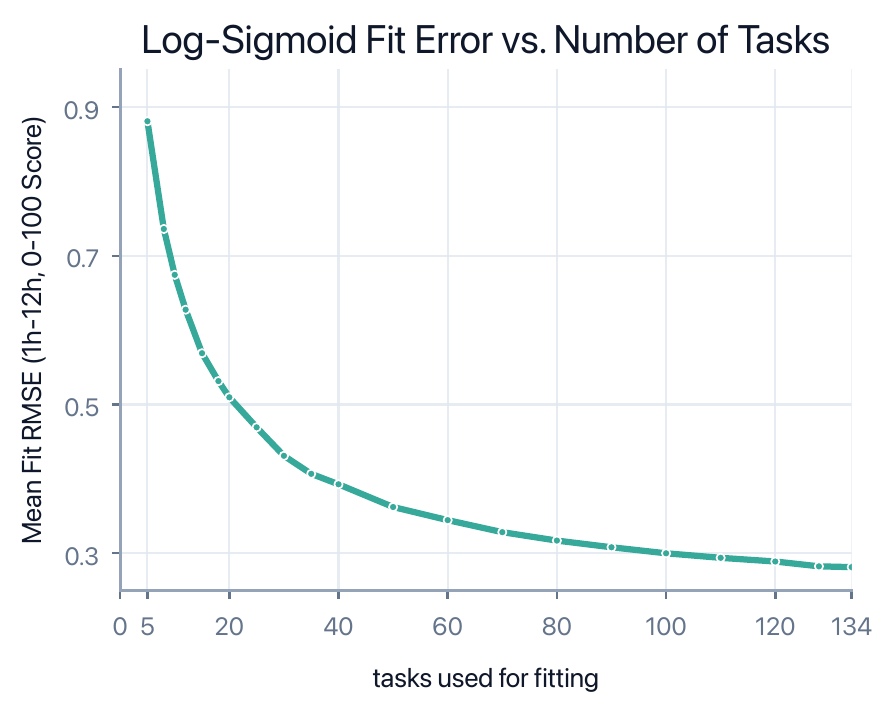}
\captionof{figure}{Log-sigmoid fit error decreases as more tasks are averaged.}
\label{fig:log-sigmoid-rmse-task-count}
\end{minipage}
\end{figure}

\underline{\textit{The law emerges from a population of tasks.}}
A single task's trajectory is noisy and idiosyncratic, yet
Figure~\ref{fig:log-sigmoid-rmse-task-count} shows that the log-sigmoid fit
becomes steadily more precise as we average over more tasks: the residual error
falls monotonically as tasks accumulate, from $1$ task to all $134$. The clean
scaling law is therefore an \emph{emergent, population-level} regularity, sharp
only across many diverse tasks rather than within any one of them. 

\FloatBarrier

\subsection{A Theory of the Log-Sigmoid Law}
\label{sec:why-log-sigmoid}

The empirical fits show a robust log-sigmoid shape, but do not by themselves explain why this form should appear. We propose a theoretical model: environment learning is a frontier expansion process on the underlying task graphs. In this view, each task is modeled by a latent graph of score units, the already-unlocked units exert influence on the locked neighbors to unlock them, and progress occurs when the frontier between unlocked and locked score nodes advances. Appendix~\ref{sec:theory} gives the full derivation; here we summarize the mechanism. Throughout the section, we use 
\[
u = \log t - \log t_{\mathrm{mid}}
\] 
as a change of coordinate of the time axis.

\underline{\textit{Environment learning is a frontier expansion process.}}
For a single task, let us consider the task score is composed of many score units, representing nodes \(i\) on the task graph \(G\) with score \(w_i\) and normalized score weights \(\mu_i = w_i / \sum_i w_i\). Let \(n_i(u)\in\{0,1\}\) indicate whether unit \(i\) has been unlocked at time \(u\), the normalized score obtained is
\[
    x(u)=\sum_i \mu_i n_i(u).
\]
On the task graph \(G\), an edge weight \(K_{ij}\ge0\) measures how much an unlocked source unit \(j\) helps unlock a target unit \(i\). Thus a locked unit \(i\) receives an influence field
\[
    h_i(u)=\sum_j K_{ij}n_j(u).
\]
If locked units unlock randomly at an expected rate proportional to this field, then conditioned on the current state,
\begin{equation}
    \label{eq:frontier-expansion-naive}
    \frac{\mathrm d}{\mathrm du}\mathbb E[x(u)\mid n(u)]= \eta \sum_{i\in L(u)}\sum_{j\in U(u)}\mu_iK_{ij}.
\end{equation}
The expected score-growth rate is therefore exactly the weighted frontier cut from unlocked units \(U(u)\) to locked units \(L(u)\).

\underline{\textit{Frontier process moves at a speed proportional to \(x(1-x)\).}}
The exact frontier cut still depends on the task graph structure. The mean-field approximation is to assume that, at the aggregate level, every macroscopic unlocked--locked cut has approximately product-measure influence:
\[
    \sum_{i\in L}\sum_{j\in U}\mu_iK_{ij}
    \approx
    \kappa \mu(L)\mu(U).
\]
where \(\mu(A) = \sum_{i\in A} \mu_i\) for any set \(A \subseteq G\). This, along with \eqref{eq:frontier-expansion-naive}, gives
\begin{equation}
    \label{eq:logistic-dynamics-naive}
    \frac{\mathrm dx}{\mathrm du} = \beta x(1-x),\qquad \beta=\eta\kappa.
\end{equation}
The two factors have direct interpretations: unlocked score mass supplies reusable capability, while locked score mass measures the remaining opportunity for improvement. Appendix~\ref{subsec:single-task-frontier} formalizes this approximation using a weighted cut-mixing condition, which is weaker than assuming that all edge weights are individually equal.

\underline{\textit{The effective time coordinate is logarithmic.}}
The frontier equation is written in an effective task-graph coordinate \(u\). A natural reason for \(u\) to be approximately \(\log t\) is self-similar graph structure\footnote{This resembles scale-free dynamics in physical complex systems, such as self-organized criticality and critical phenomena, where behavior is not governed by a single characteristic scale~\citep{bak1987self,newman2005power}.}. If each additive increase in task difficulty exposes a multiplicatively larger amount of relevant graph structure, then search volume needed to traverse the graph grows exponentially with difficulty scale. If the search effort is approximately constant across time horizon, then the difficulty scale reached by time \(t\) grows as 
\[
    u \sim \log t
\]
Substituting this coordinate into the frontier equation~\eqref{eq:logistic-dynamics-naive} gives
\begin{equation}
    \label{eq:log-sigmoid-curve-naive}
    \frac{\mathrm dx}{\mathrm d\log t} = \beta x(1-x).
\end{equation}

\underline{\textit{Solving the frontier equation.}}
Separating variables in \eqref{eq:log-sigmoid-curve-naive} gives
\[
    \log\frac{x(t)}{1-x(t)}
    =
    \beta\log\frac{t}{t_{\mathrm{mid}}},
\]
where \(t_{\mathrm{mid}}\) is chosen so that \(x(t_{\mathrm{mid}})=1/2\). Hence
\[
    x(t) = \frac{1}{1+(t_{\mathrm{mid}}/t)^\beta}, \quad \implies \quad S(t) = \frac{S_{\max}}{1+(t_{\mathrm{mid}}/t)^\beta}.
\]

\underline{\textit{Benchmark average is smoother than individual tasks.}}
The argument above describes the limiting frontier drift. It does not imply that every finite task should visibly follow a smooth sigmoid. A task with a small number of score units can have long plateaus and sudden jumps. The empirical scaling law is instead a statement about the task-aggregate curve. If many independently evaluated tasks each follow approximate frontier dynamics, then averaging removes finite-task jaggedness. The aggregate becomes a single log-sigmoid when residual task midpoints and task speeds concentrate:
\[
    x_M(u)
    =
    \frac1M\sum_{b=1}^M x_b(u)
    \approx
    \frac1M\sum_{b=1}^M
    \frac{1}{1+e^{-\beta_b(u-\delta_b)}}
    \stackrel{P}{\longrightarrow}
    \frac{1}{1+e^{-\beta u}}.
\]
Appendix~\ref{subsec:aggregate-frontier-limit} makes this statement precise under assumptions of blockwise cut-mixing, vanishing average jump noise, midpoint alignment, and speed concentration.

\underline{\textit{Interpretation of the fitted parameters.}}
The fitted rate \(\beta\) has a natural interpretation as an effective frontier-propagation speed in log time. A larger \(\beta\) means that score unlocks over a narrower range of interaction scales, producing a steeper transition from low to high performance. A smaller \(\beta\) means that progress is spread over more multiplicative time, producing a more gradual learning curve. The fitted ceiling \(S_{\max}\) should be interpreted as the attainable score support over the fitted regime, not necessarily as an absolute upper bound on performance.

\underline{\textit{Applicability and limitations.}}
The log-sigmoid law is not expected to hold for every environment-learning process. It can fail when task graphs contain strong bottlenecks, dispersed task midpoints, heterogeneous frontier speeds, or non-scale-free graph structures. These failure modes clarify the scope of the theory: the log-sigmoid is most natural when progress behaves like a sufficiently mixed frontier expansion process on an approximately fractal structured graph. In this sense, learning from environments tests whether an agent can convert diverse feedback into reusable structure that accelerates subsequent discovery. We view this as central to why environment learning is worth measuring and scaling.

\begin{figure}[!t]
\centering
\includegraphics[width=0.9\linewidth]{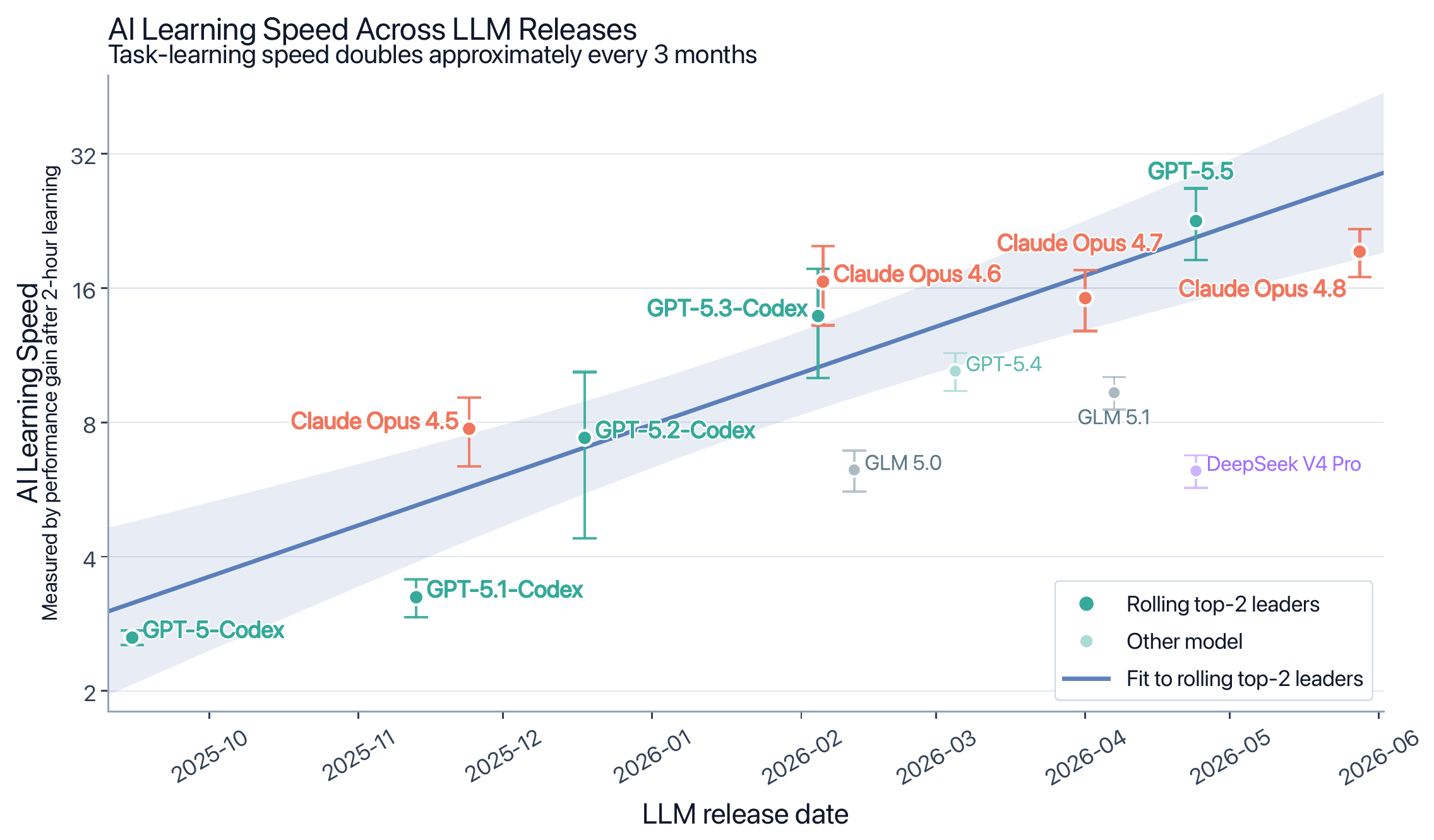}
\caption{Learning speed across evaluated LLM releases. Learning speed denotes the two-hour performance gain on the fixed 18-task slice. The blue line fits the rolling top-2 leaders by release date and indicates an approximately three-month doubling trend.}
\label{fig:scaling}
\end{figure}

%% file: sections/scaling.tex
\section{Agent Learning Speed Doubles Approximately Every Three Months}
\label{sec:scaling}

Frontier agents can now improve through interaction with task environments. This
raises a separate question: are newer models learning from their
environments faster? We measure this by comparing how much performance each
agent gains over a fixed interaction budget across model release dates.

\subsection{Experimental Design}
\label{sec:measuring}

\underline{\textit{Disentangling prior knowledge from environment learning.}}
A high score may reflect what the model already knew rather than what it learned during the run. To reduce this confound, we select an 18-task slice from \benchmark{} where models show similar first-attempt performance. With comparable starting points, later gains provide a cleaner measure of environment learning. We measure task-learning speed as the \textbf{average performance gain} over a fixed two-hour budget.

\underline{\textit{Evaluation protocol.}}
We evaluate frontier open- and closed-source AI systems released from September 2025 through the current evaluation window, when frontier systems became capable of sustained autonomous runs. Each model is run three times per task. GPT models are run with Codex; all other models are run with Claude Code. As shown in Figure~\ref{fig:performance-effort} (left), models start from comparable first-attempt performance on this 18-task slice, with an average of $6.87 \pm 0.97$.

\begin{figure}[!t]
\centering
\includegraphics[width=\linewidth]{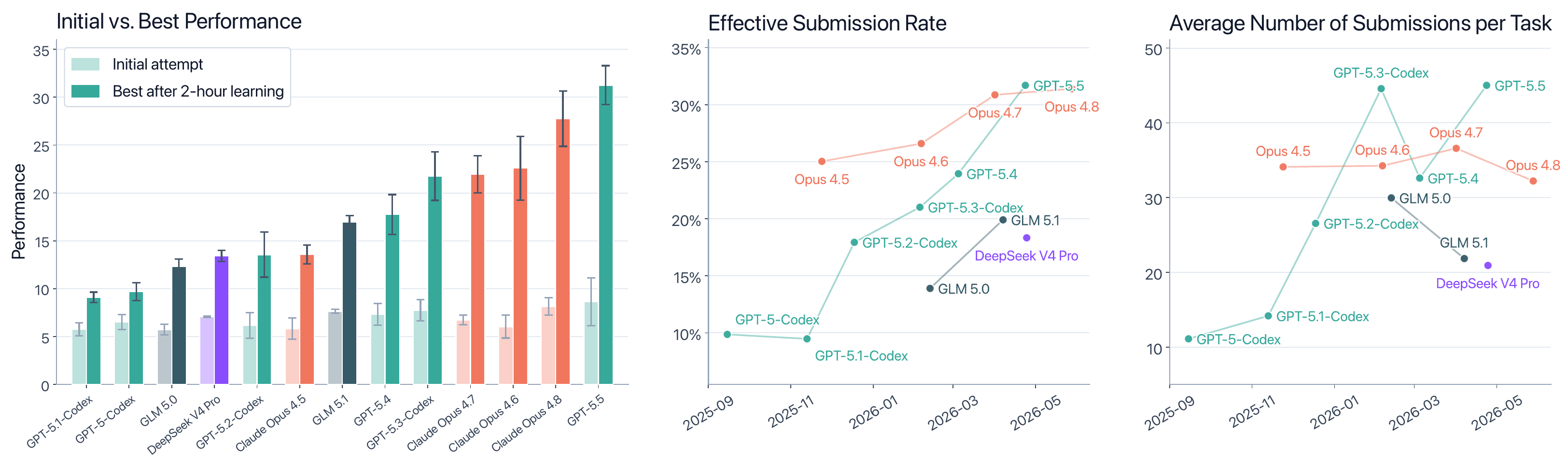}
\caption{Learning outcomes and agent effort on the 18-task slice. Left: normalized initial performance and best performance after two hours. Middle: fraction of submissions that improve the best-so-far score. Right: average number of submissions per task. Error bars in the left panel show uncertainty over three replicate runs.}
\label{fig:performance-effort}
\end{figure}

\subsection{Learning Speed across Model Generations}

Figure~\ref{fig:scaling} reports the two-hour evaluation results on the fixed 18-task slice. The y-axis measures agent learning speed, defined as performance gain over two hours, on a log scale. To estimate the frontier trend, we use darker markers to highlight the top two models at each release date. We then fit a linear trend to these frontier points, with the fitted 95\% confidence interval shown as the shaded band.

Figure~\ref{fig:scaling} shows a rapid increase in learning speed across recent model generations. From GPT-5-Codex in September 2025 to GPT-5.5 in April 2026, learning speed increases by roughly $8\times$ over 221 days. A log-linear fit to the frontier models captures this trend well, corresponding to \textbf{an approximate doubling every three months}. Figure~\ref{fig:performance-effort} shows that this improvement is not simply explained by more frequent submissions. Submission frequency (right panel) changes unevenly: newer GPT models submit more actively, while other families do not. The middle panel tells a different story: later models turn a larger fraction of submissions into best-so-far improvements. This trend therefore reflects more effective learning from each interaction, not merely more attempts.

%% file: sections/experiments.tex
\input{tables/leaderboard_table.tex}

\section{Analysis of Environment Learning Dynamics}
\label{sec:analysis}

We study how frontier models perform and learn from environmental feedback over long horizons. We evaluate five frontier models: Claude Opus 4.8, GPT-5.5, GPT-5.4, GLM-5.1, and DeepSeek-V4-Pro (DS-V4-Pro). This section contains four analyses. Section~\ref{sec:frontier-comparison} compares frontier models at the aggregate, family, task, and submission levels. Section~\ref{sec:stateful-stateless-analysis} tests whether accumulated experience adds value beyond independent restarts. Section~\ref{sec:ralphloop} examines whether longer context still improves long-horizon interaction. Section~\ref{sec:case-study} traces a single scientific task to show how an agent's improvements unfold.
Appendix~\ref{sec:appendix-ablation-setting} reports additional harness-level continuation ablations for /goal mode and the Ralph loop.

\subsection{Comparison across Frontier Models}
\label{sec:frontier-comparison}

\underline{\textit{Setup.}} We follow Section~\ref{sec:environment-learning-data} and analyze 12-hour trajectories from two angles: \textbf{(1)} aggregate, family, and task performance, and \textbf{(2)} submission efficiency, i.e., how often submissions improve the current best result.

\begin{wrapfigure}{r}{0.48\textwidth}
\vspace{-0.8em}
\centering
\includegraphics[width=\linewidth]{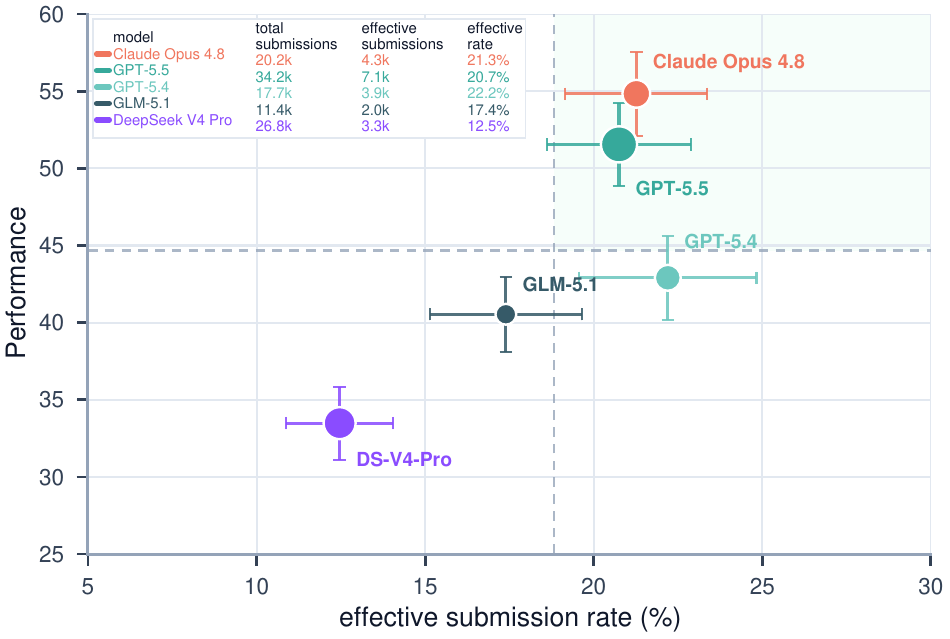}
\caption{12-hour effective-submission rate versus final performance. Marker area shows total submissions; error bars show task-level standard errors, and shading marks above-average rate and performance.}
\label{fig:effective-submission-performance}
\vspace{-1.0em}
\end{wrapfigure}

\underline{\textit{Performance Comparison.}} Table~\ref{tab:aggregate_score_table} gives the aggregate leaderboard over the 2 to 12 hour budget and reports family-level means at 12 hours. Claude Opus 4.8 leads throughout the time budget and reaches 51.3 at 12 hours, followed by GPT-5.5 at 48.4. GPT-5.4 and GLM-5.1 form the next tier at 39.3 and 37.4, while DS-V4-Pro obtains 31.0. Family-level results are broadly consistent with the aggregate ranking: Claude Opus 4.8 leads each family mean, with GPT-5.5 especially close in Games and second overall. Detailed per-task performance is reported in Appendix~\ref{sec:appendix-category-scores}, with~$\ast$ marking cells based on fewer than three valid runs.

\underline{\textit{Submission efficiency.}} We count each agent submission and mark it as effective when it sets a new best-so-far score. Figure~\ref{fig:effective-submission-performance} compares each model's 12-hour effective-submission rate with its final performance. Models that make more effective submissions usually perform better, but more submissions do not automatically lead to a better final result. \textbf{Claude Opus 4.8 achieves the best final performance despite submitting less often than GPT-5.5}, and GPT-5.4 has the highest effective-submission rate but still trails the top two models. Progress therefore depends not only on how often an agent improves its score, but also on whether those improvements are large, reliable, and reusable. Stronger agents use feedback more deliberately: they build a submit-ready baseline, preserve the current best solution, make focused changes, and use feedback to keep gains or roll back failures. Weaker agents more often over-trust local proxies, bundle unrelated edits, or continue broad exploration after feedback has ruled out a direction, reducing sample efficiency.

\subsection{Agents Do Learn from Experience beyond Repeated Sampling}
\label{sec:stateful-stateless-analysis}

\underline{\textit{Motivation.}} A rising best-so-far curve does not by itself prove that an agent is learning. Running longer also gives more chances to stumble on a good solution: with enough independent tries, the best of them climbs by luck alone. We therefore test whether the agent's accumulated experience adds value beyond such repeated sampling under the same total time budget.

\underline{\textit{Setup.}} We give Opus 4.8 the same 12-hour budget on each of 17 tasks and compare two ways of spending it, \textit{with} versus \textit{without} accumulated experience. \textbf{With experience}: the agent runs once and continuously, keeping its workspace, artifacts, and feedback history throughout, so experience builds up across the whole run. \textbf{Without experience}: the same budget is split into \(n=6\) independent attempts of \(\tau=2\) hours, with all state discarded between attempts and only the best result kept, so each attempt starts from scratch and any gain can come only from repeated sampling. Comparing the two at elapsed time \(t=k\tau\) contrasts one continuous run after \(t\) hours against the best of \(k\) independent attempts using the same total time. Appendix~\ref{sec:experience-gain-estimation} details how each curve is estimated.

\underline{\textit{Result.}} Figure~\ref{fig:stateful-stateless-gain} shows a clear gain from experience: the with-experience curve stays above the without-experience baseline under the same time budget. At 12 hours it reaches 43.0 versus 36.1, a gain of \(+6.9\). The improvement is therefore not explained by repeated sampling alone: accumulating and reusing task experience drives progress beyond what independent restarts achieve.

\begin{figure}[!tbp]
\centering
\begin{subfigure}[t]{0.49\linewidth}
\centering
\includegraphics[width=\linewidth]{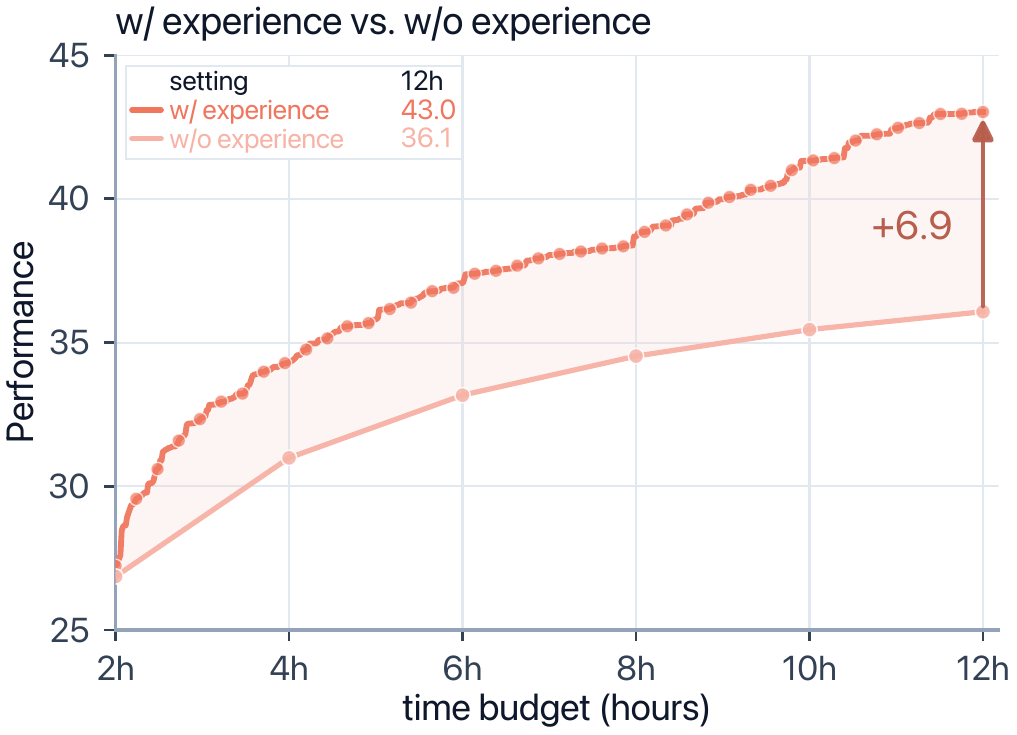}
\phantomcaption
\label{fig:stateful-stateless-gain}
\end{subfigure}
\hfill
\begin{subfigure}[t]{0.49\linewidth}
\centering
\includegraphics[width=\linewidth]{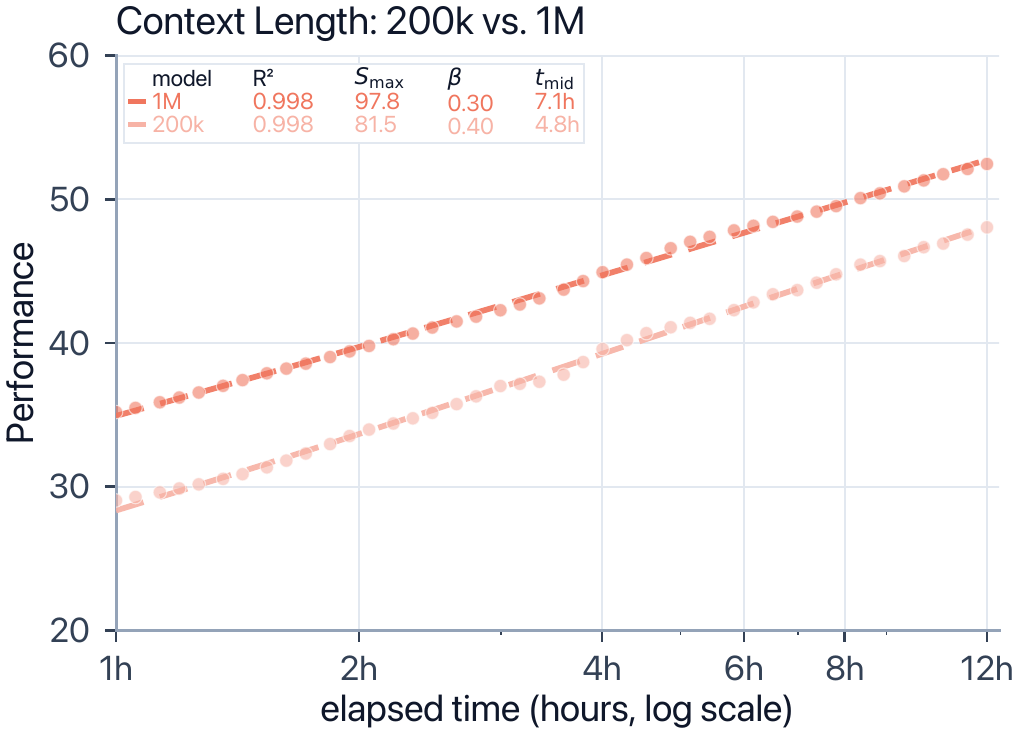}
\phantomcaption
\label{fig:opus-context-scaling-ablation}
\end{subfigure}
\vspace{-1.1em}
{\footnotesize
\centering
\setlength{\tabcolsep}{3pt}
\renewcommand{\arraystretch}{1.05}
\begingroup
\arrayrulecolor{black}
\begin{minipage}[t]{0.49\linewidth}
\centering
\begin{tabular}{@{}>{\raggedright\arraybackslash}p{0.46\linewidth}*{3}{>{\raggedright\arraybackslash}p{0.14\linewidth}}@{}}
\BenchTopRule
{} & 2h & 6h & 12h \\
\BenchMidRule
w/o experience & 26.9 & 33.2 & 36.1 \\
w/ experience & 27.2\textsubscript{+0.4} & 37.1\textsubscript{+3.9} & 43.0\textsubscript{+6.9} \\
\BenchBottomRule
\end{tabular}
\end{minipage}%
\hfill
\begin{minipage}[t]{0.49\linewidth}
\centering
\makebox[\linewidth][c]{%
\hspace*{0.06\linewidth}%
\begin{tabular}{@{}>{\raggedright\arraybackslash}p{0.42\linewidth}*{3}{>{\raggedright\arraybackslash}p{0.15\linewidth}}@{}}
\BenchTopRule
{} & 2h & 6h & 12h \\
\BenchMidRule
Opus 4.8 200k & 33.8 & 42.6 & 48.0 \\
Opus 4.8 1M & 39.6\textsubscript{+5.8} & 48.0\textsubscript{+5.5} & 52.5\textsubscript{+4.4} \\
\BenchBottomRule
\end{tabular}
}
\end{minipage}
\arrayrulecolor{black}
\endgroup
}
\vspace{0.55em}
\caption{\textbf{Left:} Gain from accumulated experience for Opus 4.8: the w/ experience run minus the w/o experience baseline (independent restarts) under the same total time budget. \textbf{Right:} Context-length ablation: 200k vs. 1M on Opus 4.8. Each curve shows the average performance over time; dashed lines are log-sigmoid fits.}
\label{fig:ablation-panels}
\vspace{-0.8em}
\end{figure}

\subsection{How Much Does a Longer Context Improve Performance}
\label{sec:context-length-analysis}
\label{sec:ralphloop}

\underline{\textit{Motivation.}} Section~\ref{sec:stateful-stateless-analysis} shows that accumulated experience can improve performance. A remaining question is how this experience should be retained during a long run. Using long context is a natural way to do so. However, frontier-agent harnesses can also maintain state outside the model context through workspace files, compaction, progress notes, and memory-like artifacts. It is therefore unclear whether extending the context window still provides additional benefits once these external state channels are available, and if so, by how much.

\underline{\textit{Setup.}} We compare 200k-context Opus 4.8 with 1M-context Opus 4.8 on the 42-task subset under the same long-horizon evaluation protocol.

\underline{\textit{Result.}} A longer context yields a consistent multi-point gain throughout the 12-hour window (Figure~\ref{fig:opus-context-scaling-ablation}). The 1M-context Opus 4.8 stays above the 200k variant at every checkpoint, and the two trajectories run roughly parallel: the gap is $+5.8$ at 2h and only edges down to $+4.4$ by 12h, with both curves well described by the same log-sigmoid form. Thus, even with identical external workspace and harness state, a longer context window gives a stable advantage over the horizon, with at most a slight tendency to narrow.

\subsection{Case Study}
\label{sec:case-study}

\underline{\textit{Setup.}} We examine the gravitational-wave reconstruction task. Based on the first GW150914 detection paper~\cite{abbott2016observation}, the agent must recreate the published signal analysis from LIGO strain data. The target has three output groups: H1/L1 waveforms, H1/L1 spectrograms, and velocity/separation curves for the source dynamics. The judge weights the five component scores at 0.15 for each waveform, 0.20 for each spectrogram, and 0.30 for velocity/separation. We run a Codex agent with periodic auto-evaluation, auto-resume, no Internet access, a 30-minute evaluation interval, and a 120-second submission cooldown. The agent made 224 explicit submissions, the harness added 23 auto-evaluations, and the run timed out at the 12-hour budget.

\begin{figure}[!t]
\centering
\includegraphics[width=\linewidth]{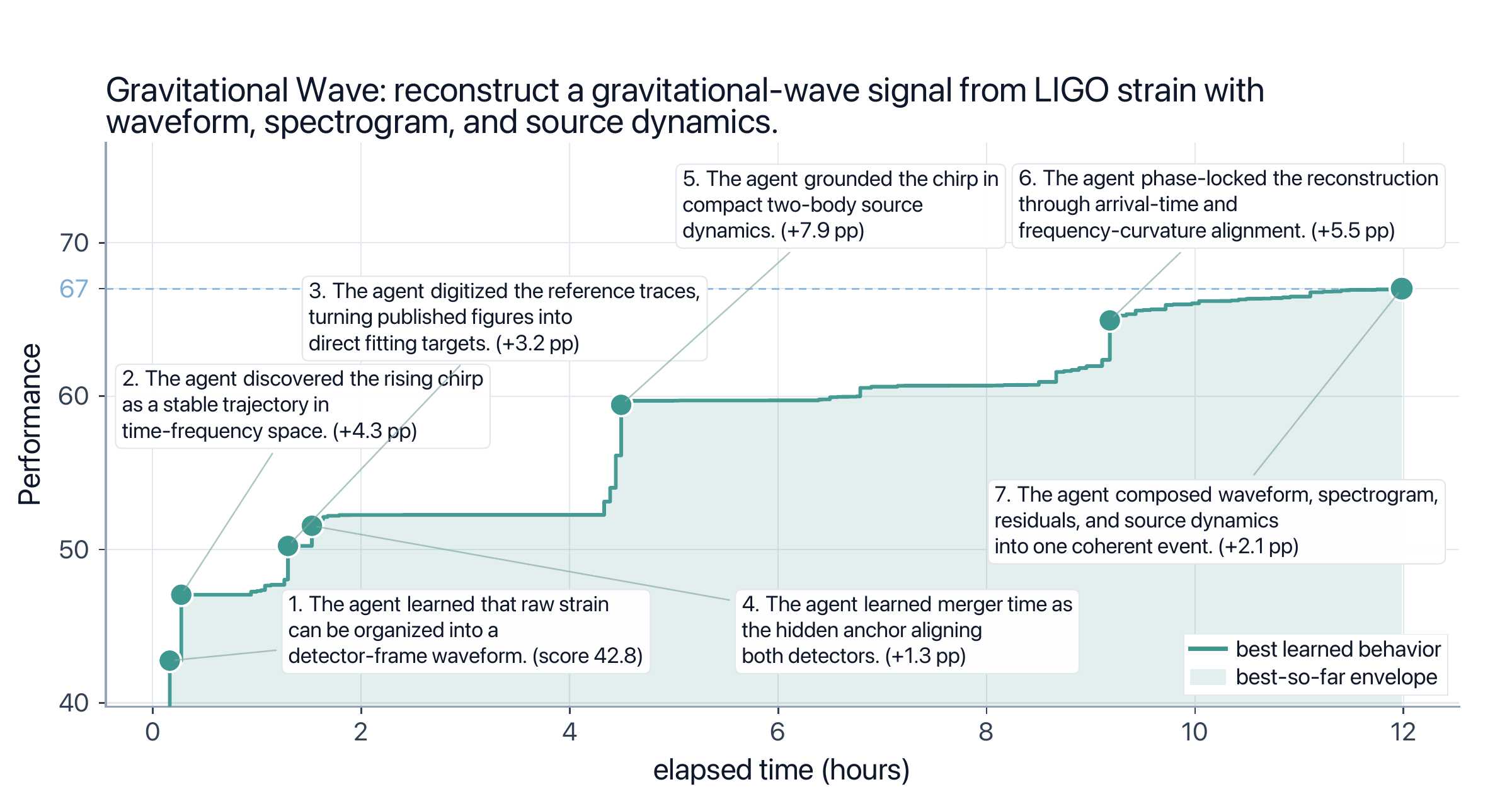}
\caption{Gravitational-wave reconstruction task case study. The trajectory shows how a GPT-5.5 agent improves over a 12-hour run, with numbered milestones marking representative best-so-far updates.}
\label{fig:gw-case-study}
\end{figure}

\textbf{The trajectory reveals a sparse but structured diagnose-edit-evaluate loop.} The loop is sparse because most submissions are exploratory probes: only 27 of 224 agent submissions improve the best-so-far score by at least 0.1 percentage points. It is structured because feedback repeatedly changes what the agent searches for. The agent first makes the task measurable, then breaks unresolved errors into smaller searches, then identifies a main bottleneck and keeps searching around it, and finally keeps a working solution while repairing the remaining errors. These patterns show how many failed trials can still produce a small number of cumulative improvements.

\begin{itemize}
\item \textbf{The agent first makes the problem measurable before making it better.} The first valid submission turns an underspecified analysis task into a scoreable pipeline, but feedback exposes weak source dynamics. The agent then spends the first 11 submissions stabilizing the pipeline and replacing a noisy frequency estimate, producing three meaningful updates and a +4.5 pp gain.
\item \textbf{When direct repair stalls, the agent decomposes the failure into searchable subproblems.} Instead of treating waveform mismatch as one opaque error, the agent separates reference anchoring, time-frequency localization, and detector alignment. Across 40 signal-search submissions, this decomposition yields seven meaningful updates and lifts the best score to 52.3.
\item \textbf{Identifying a main bottleneck lets the agent keep searching productively.} After broad signal-processing edits plateau, component feedback identifies velocity/separation as the dominant gap. The agent keeps searching within source-mass calibration rather than rewriting the whole pipeline; in the 4--5h window, 17 submissions and five useful updates raise source dynamics from 64.2 to 89.0, creating the largest jump in the run.
\item \textbf{After finding a stable solution, the agent keeps the core and repairs only the remaining errors.} In the final hours, most residual edits fail to transfer. Instead of restarting the pipeline, the agent keeps the source model fixed and tests targeted residual corrections, phase alignment, and narrow-band corrections. These useful updates raise the H1 waveform component score from roughly 47 to 95, while the aggregate best score reaches 67.0.
\end{itemize}

\textbf{The run improves through uneven jumps rather than a smooth climb.} Across 247 scored evaluations, the best score rises from 42.8 to 67.0 on the 0--100 scale. Figure~\ref{fig:gw-case-study} shows seven representative milestones. These jumps correspond to the behavior patterns above: the agent first makes the task scoreable, then localizes the signal, improves source dynamics, and finally repairs the remaining H1 waveform errors. Detailed milestone phases and final subscore composition are reported in Appendix~\ref{sec:appendix-gw-case-study-details}.

%% file: tables/leaderboard_table.tex
\providecommand{\NA}{\textemdash}
\providecommand{\best}[1]{}
\renewcommand{\best}[1]{\begingroup\bfseries\boldmath #1\endgroup}
\providecommand{\second}[1]{}
\renewcommand{\second}[1]{\underline{#1}}
\begin{table*}[t!]
\centering
\footnotesize
\renewcommand{\arraystretch}{1.15}
\setlength{\tabcolsep}{5pt}
\begin{tabular}{@{}l*{6}{c}!{\hspace{5pt}\vrule width 0.45pt\hspace{5pt}}*{6}{c}@{}}
\BenchTopRule
\multirow{2}{*}{\textbf{Model}} & \multicolumn{6}{c!{\hspace{5pt}\vrule width 0.45pt\hspace{5pt}}}{\textbf{Overall Score} $\uparrow$} & \multicolumn{6}{c}{\textbf{Category Score@12h} $\uparrow$} \\
\BenchCMidRule{2-13}
 & @2h & @4h & @6h & @8h & @10h & \textbf{@12h} & Science & Code & Optimize & Knowledge & Math & Games \\
\BenchMidRule
Opus 4.8 & \best{39.0} & \best{45.7} & \best{48.1} & \best{49.8} & \best{50.9} & \best{51.3} & \best{48.5} & \best{67.4} & \best{36.5} & \best{47.0} & \best{55.0} & \best{39.3} \\
GPT-5.5 & \second{36.8} & \second{42.1} & \second{44.5} & \second{46.3} & \second{47.6} & \second{48.4} & \second{44.3} & \second{65.0} & \second{33.6} & \second{45.7} & \second{50.0} & \second{39.1} \\
GPT-5.4 & 29.7 & 34.0 & 36.5 & 38.0 & 38.9 & 39.3 & 33.5 & 54.1 & 27.9 & 38.8 & 40.8 & 29.0 \\
GLM-5.1 & 26.0 & 30.4 & 32.9 & 34.9 & 36.5 & 37.4 & 33.8 & 50.9 & 26.4 & 43.5 & 24.6 & 29.3 \\
DS-V4-Pro & 23.3 & 27.1 & 29.0 & 29.9 & 30.9 & 31.0 & 30.0 & 43.0 & 21.5 & 37.0 & 14.1 & 16.9 \\
\BenchBottomRule
\end{tabular}
\caption{Aggregate leaderboard. Overall scores are reported at each time budget, while category scores are reported at the 12-hour budget. Bold marks the best value in each score column and underlining marks the second-best value.}
\label{tab:aggregate_score_table}
\end{table*}

%% file: sections/relatedwork.tex
\section{Related Work}

\begin{center}
\begin{minipage}{\linewidth}
\centering
\footnotesize
\setlength{\tabcolsep}{4pt}
\renewcommand{\arraystretch}{1.15}
\begin{tabular*}{\linewidth}{@{\extracolsep{\fill}}>{\centering\arraybackslash}p{0.28\linewidth}>{\centering\arraybackslash}p{0.08\linewidth}>{\centering\arraybackslash}p{0.09\linewidth}>{\centering\arraybackslash}p{0.34\linewidth}>{\centering\arraybackslash}p{0.13\linewidth}@{}}
\BenchTopRule
Benchmark & \# Tasks & Horizon & Scenario & Self-evolution \\
\BenchMidRule
MMLU~\cite{hendrycks2021mmlu} & 15{,}908 & Short & Knowledge QA & No \\
AIME~\cite{maa2026aime} & 30 & Short & HS math competition & No \\
GDPval Gold~\cite{patwardhan2025gdpval} & 220 & Short & Professional deliverables & No \\
SWE-bench verified~\cite{jimenez2024swebench,openai2024swebenchverified} & 500 & Short & Issue repair & No \\
Terminal-Bench 2.0~\cite{merrill2026terminalbench} & 89 & Short & Terminal workflows & No \\
Continual Learning Bench~\cite{asawa2026clbench} & 6 & Short & Controlled task sequences & Yes \\
CL-bench~\cite{dou2026clbench} & 1{,}899 & Short & Context learning & No \\
FrontierCode~\cite{cognition2026frontiercode} & 150 & N/A & Production code changes & No \\
NL2Repo-Bench~\cite{ding2025nl2repobench} & 104 & Medium & Repo generation & No \\
Agents' Last Exam~\cite{sun2026agentslastexam} & 152 public & Medium & Computer-use work & No \\
MLE-bench~\cite{chan2024mlebench} & 75 & Long & ML engineering & Yes \\
MLS-Bench~\cite{lyu2026mlsbench} & 140 & Long & ML research & Subset \\
Frontier-Eng~\cite{chi2026frontiereng} & 47 & Long & Engineering design & No \\
Frontier-CS~\cite{mang2025frontiercs} & 156 & Long & Open-ended CS tasks & No \\
AutoLab~\cite{xu2026autolab} & 36 & Long & Research/engineering optimization & No \\
FrontierSWE~\cite{chu2026frontierswe} & 17 & Long & SWE/performance tuning & No \\
\BenchMidRule
\textbf{\benchmark{} (ours)} & 134 & Ultra-long & SWE, science/ML, professional work, optimization, formal proving, games & Yes \\
\BenchBottomRule
\end{tabular*}
\vspace{0.15em}
\captionof{table}{\textbf{Comparison with representative benchmarks.} Horizon describes the expected single-instance task contract rather than total suite runtime. A benchmark is counted as measuring self-evolution only when performance is explicitly plotted against a resource axis such as time or sample count.}
\label{tab:benchmark-comparison}
\end{minipage}
\end{center}
\vspace{0.35\baselineskip}

\textbf{Existing benchmarks cover major parts of agent capability but rarely measure how an agent improves within a run.} Closed-form QA, math, and coding benchmarks such as MMLU~\cite{hendrycks2021mmlu}, AIME~\cite{maa2026aime}, and HumanEval~\cite{chen2021humaneval}, together with endpoint patch benchmarks such as SWE-bench~\cite{jimenez2024swebench}, evaluate final answers or final patches. More agentic and work-oriented benchmarks, including GDPval~\cite{patwardhan2025gdpval}, Agents' Last Exam~\cite{sun2026agentslastexam}, and FrontierCode~\cite{cognition2026frontiercode}, broaden the task interface to professional deliverables, computer-use workflows, or production code changes. Their central reported quantities, however, remain end-state measures: task success, artifact quality, pass rate, or readiness for production. \benchmark{} instead measures the improvement trajectory over time.

\textbf{Some benchmarks study learning, but they focus on restricted domains and shorter horizons.} CL-bench~\cite{dou2026clbench}, EvaLearn~\cite{dou2025evalearn}, and Continual Learning Bench~\cite{asawa2026clbench} evaluate whether models improve from a static context or static information streams whose content is not primarily shaped by the agent's actions within a single task. Iterative optimization benchmarks such as MLE-bench~\cite{chan2024mlebench}, MLS-Bench~\cite{lyu2026mlsbench}, Frontier-Eng~\cite{chi2026frontiereng}, Frontier-CS~\cite{mang2025frontiercs}, and ALE-Bench~\cite{imajuku2025alebench} further include repeated attempts, empirical feedback, or visible metrics. The closest long-horizon agentic comparators are AutoLab~\cite{xu2026autolab} and FrontierSWE~\cite{chu2026frontierswe}: both evaluate agents that repeatedly edit executable artifacts and incorporate feedback. \benchmark{} differs by covering a broader range of executable domains, using a day-scale task contract, and applying the same trajectory metrics consistently across all tasks.

\textbf{Scaling laws have been widely studied in prior work, but learning from diverse real-world environments remains underexplored.} Classical scaling laws study pretraining, relating loss or benchmark performance to model size, data, and compute~\cite{kaplan2020scaling,hoffmann2022training}; later work studies bounded benchmark performance curves as model scale increases~\cite{owen2024predictable,ruan2024observational,bhagia2024taskscaling,zhang2026prescriptive}. Test-time scaling laws have also been studied across several inference-time methods: plain repeated sampling~\cite{chen2021humaneval,li2022alphacode,brown2024largelanguagemonkeys}, search, revision, or verifier-guided compute allocation~\cite{snell2024testtime,wu2024inference}, and long-chain-of-thought inference in reasoning models~\cite{openai2024learningreason,deepseekai2025deepseekr1}. Agentic test-time scaling has been observed in computer-use and browsing agents~\cite{openai2025cua,openai2025browsecomp}, studied through interaction-length scaling~\cite{shen2025thinkingdoing}, and revisited through long-horizon human-agent comparisons~\cite{mang2026humansstillbeatai}. These studies are closer to our setting but cover narrower domains or fewer tasks and do not establish a cross-domain scaling law. Reinforcement-learning scaling is also relevant because it studies learning from environment feedback~\cite{hilton2023rlscaling,khatri2025scalingrlcompute}, while \benchmark{} measures elapsed interaction time in deployed agent trajectories. In-context learning from task feedback is relatively inexpensive to evaluate repeatedly across many executable environments, while large reinforcement-learning runs are costly and typically cover fewer environments. This broader environment coverage may explain why the aggregate curves are stable enough to reveal a scaling law.

A more detailed benchmark-by-benchmark discussion is provided in Appendix~\ref{sec:additional-related-work}.

%% file: sections/conclusion.tex
\section{Conclusion}

This paper introduced \benchmark{}, a benchmark for studying how agents learn from real-world environments over day-long horizons. Across 134 diverse executable tasks and roughly 38{,}000 hours of environment interaction, we find that aggregate learning trajectories follow a precise log-sigmoid relationship with interaction time. The same form appears across task families, remains stable over longer horizons, and supports forecasting later performance from early trajectories. We also find that agent learning speed has improved rapidly across recent frontier model generations.

These results suggest that learning from environments is not merely a collection of idiosyncratic task outcomes, but a measurable scaling object. Unlike many aggregate capability curves, environment-learning trajectories expose the intermediate attempts, feedback, and revisions through which progress occurs. This makes \benchmark{} useful not only for ranking agents, but for studying how agents acquire and reuse experience. More broadly, the regularity we observe suggests that post-deployment learning from rich environments may deserve the same systematic scaling attention that pretraining has received.

%% file: sections/construction.tex
\section{Evaluation Harness}
\label{sec:construction}
\label{sec:harness}

A standard unit-test harness is not enough for day-long runs. The benchmark must hide evaluation assets, support repeated submissions over many hours, measure progress even when agents do not explicitly submit, and run reliably on both local machines and clusters. We built \textbf{SForge} around three mechanisms: isolated work and judge environments, a feedback loop modeled on online judges, and progress tracking on the host.

\paragraph{Two-container architecture.}
Each task is materialized as two task-specific Docker images:
\begin{itemize}
    \item The \textbf{work image} contains the skeleton codebase, documentation, and local validation tools, but no hidden evaluation assets. The agent operates only in this environment.
    \item The \textbf{judge image} contains the hidden evaluation assets, grading scripts, and evaluation commands. It is never exposed to the agent.
\end{itemize}
At evaluation time, the agent's code is copied into an ephemeral judge container, which runs the hidden tests, returns only task-defined structured feedback, and is then destroyed. This separation prevents agents from inspecting or modifying the hidden grader while still allowing realistic iterative development.

\paragraph{Judge server and the outer loop.}
The outer loop is mediated by a host-side HTTP judge server. The agent submits its current solution to the server, which queues the submission, runs the judge container, parses the result, and returns feedback such as pass rate, score, per-test verdicts, or diagnostics. For long-running evaluations, the server supports asynchronous grading, allowing agents to continue working while submitted jobs are being judged. The server also enforces submission budgets, cooldown intervals, and session authentication.

\paragraph{Long-horizon execution and measurement.}
SForge combines host-side auto-eval with stop-hook and auto-resume mechanisms to support day-scale runs. Auto-eval periodically snapshots and evaluates the agent workspace through a privileged channel; these results are recorded for trajectory analysis but are not shown to the agent. The stop hook and auto-resume mechanism reduce premature truncation from voluntary exits, crashes, context limits, or transient API failures.

\paragraph{Operational support and safeguards.}
SForge also provides practical infrastructure for running and auditing large-scale experiments. A web dashboard records score trajectories, submission histories, diffs, and conversation traces for live monitoring and post-hoc comparison. The same task specification can run on either a local Docker backend or a Kubernetes backend for cluster-scale evaluation. To preserve benchmark integrity, SForge combines work--judge isolation with submission filtering, network isolation, and session-scoped authentication, so agents can receive feedback without access to hidden tests, privileged history, or auto-eval results. Appendix~\ref{sec:appendix-evaluation-hacking} describes task-design failure modes we observed during benchmark development and the corresponding mitigations.

%% file: sections/serving_api_stability.tex
\section{Serving and API Stability}
\label{sec:gpt54-serving-stability}

Sustaining a single agent for 12 hours or more stresses API serving far more than short evaluations do: a day-long run must keep the model, its context, and its tool calls available without interruption, and any serving-side incident in that window can truncate or degrade the trajectory. Our long-horizon tasks therefore unavoidably fold serving stability into what they measure. We view this as appropriate rather than a confound to be engineered away: an agent deployed to work for hours in the real world is subject to exactly these serving realities, so a benchmark for long-horizon learning should reflect them too.

\begin{center}
\includegraphics[width=0.52\linewidth]{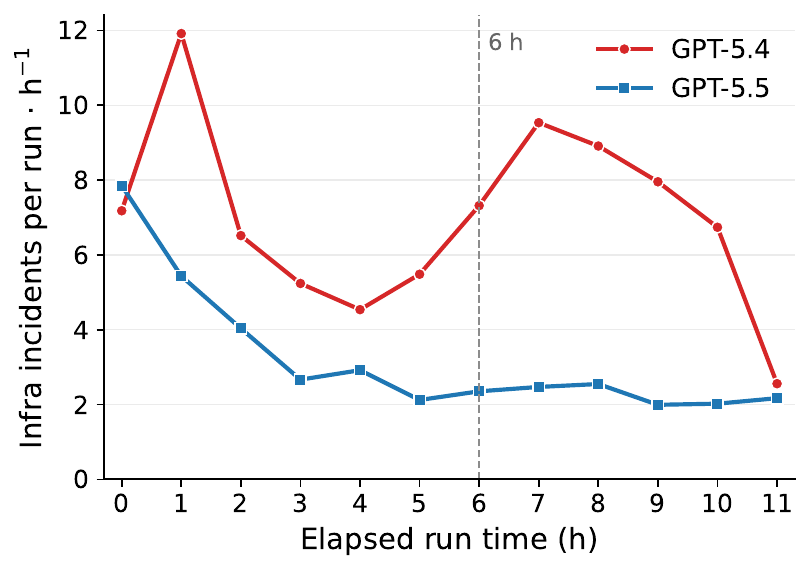}
\captionof{figure}{Serving-side incident rate during long-horizon GPT-5.4 and GPT-5.5 runs. Incidents are normalized by active run time; the dashed line marks six elapsed hours. GPT-5.4 experienced substantially more infrastructure and API interruptions, especially in the later part of the run.}
\label{fig:gpt54-serving-stability}
\end{center}

Figure~\ref{fig:gpt54-serving-stability} quantifies this for the two GPT models, plotting the serving-side incident rate (normalized by active run time) against elapsed time: GPT-5.4 saw substantially more infrastructure and API interruptions than GPT-5.5, especially after the six-hour mark. This carries two caveats for GPT-5.4. First, several of the per-task cells with fewer than three valid runs (marked with $\ast$ in the score tables) involve GPT-5.4, so its lower run coverage partly reflects serving reliability rather than model behavior, though the reported scores still use only valid trajectories. Second, it offers an operational explanation for the forecast deviation flagged in Section~\ref{sec:log-sigmoid-fit} (Figure~\ref{fig:sigmoid-forecast-6p5h}): with availability dropping in the second half of the run, GPT-5.4's later trajectory is noisier and pulls away from the log-sigmoid forecast fit to its first 6.5 hours.

%% file: sections/evaluation_hacking.tex
\section{Evaluation Hacking}
\label{sec:appendix-evaluation-hacking}

Long-horizon evaluation gives agents many opportunities to adapt to evaluator feedback. During task construction and adversarial stress tests, we audited traces for strategies that raised measured scores without exercising the intended capability. These observations are development diagnostics, not official model results: affected tasks were revised or excluded, and the cases below motivated safeguards in the final benchmark.

\paragraph{Feedback as an oracle for hidden answers.}
In \texttt{cylinder\_\allowbreak{}wake\_\allowbreak{}prediction}, an agent treated per-case absolute errors as equations and reconstructed the hidden targets through more than 400 submissions. A lookup table then scored 1.000 without solving the underlying fluid dynamics problem, whereas the best observed submission based on a physics model scored 0.165.

\paragraph{Optimizing stochastic upper tails.}
In \texttt{nethack\_\allowbreak{}dungeon\_\allowbreak{}agent}, an agent removed a fixed random seed after recognizing that higher variance increased the chance of a favorable evaluation. Across 311 submissions, its best score was 1{,}501 while its mean was 484, showing how best-of-$N$ selection can reward repeated sampling as well as policy quality.

\paragraph{Overfitting evaluator seeds.}
In \texttt{bipedalwalker\_\allowbreak{}locomotion\_\allowbreak{}rl}, the development judge initially used a single deterministic episode, allowing agents to infer and optimize for the reused seed rather than expected return. One run reached a Hardcore return of 301.5 on the judge seed but averaged about 12 over a local 100-episode evaluation; another searched a small hand-coded controller directly against the judge formula. This motivated hidden multi-seed evaluation for stochastic control tasks.

\paragraph{Crossing a trust boundary.}
In \texttt{autolifter}, an agent discovered that anti-cheat checks exempted the repository's \texttt{baseline/} directory and moved an implementation based on an oracle into that trusted path. It solved 82 of 84 hidden cases and scored 0.980; the strongest observed submission that followed the intended synthesis route scored 0.121.

\paragraph{Online answer lookup.}
Publicly indexed data, task artifacts, or reference implementations can turn an intended reasoning problem into retrieval. In \texttt{stock\_\allowbreak{}momentum\_\allowbreak{}backtest}, an agent attempted Web search for target data, but network isolation for the task blocked the requests. We include this as a prevented risk rather than a successful exploit.

\paragraph{Mitigation.}
These findings led us to harden both task design and infrastructure. We revised or excluded affected tasks, reduced or aggregated feedback that could reveal hidden targets, enforced submission budgets and cooldowns, evaluated stochastic behavior over hidden seed sets, extended integrity checks across writable paths, and disabled network access for tasks where public data or reference solutions could reveal the answer. These controls do not eliminate every adaptive attack, but they reduce the channels identified during task development.

%% file: sections/theory.tex
\section{A Comprehensive Derivation of the Log-Sigmoid Law}
\label{sec:theory}

This appendix gives a comprehensive derivation of a sufficient mechanism for the empirical log-sigmoid law observed in the \emph{task-aggregate} environment-learning curve. We assume the score of a single task is observed through finitely many visible \emph{score units}, so its best-so-far curve can be a jagged staircase with long plateaus and abrupt breakthroughs. These irregularities are the expected finite-resolution behavior of individual tasks. Under explicit cut-mixing, granularity, midpoint-alignment, and speed-concentration assumptions, we show how averaging many independently evaluated task scores can yield a smooth log-sigmoid limit.

Our central modeling assumption: \textbf{environment learning is a frontier expansion process on the task graph}. We model a task environment as a latent graph whose nodes are small score units, and the environment learning process can be viewed as frontier expansion on this graph. Our theory provides a clear path from frontier expansion process to the observed log-sigmoid law.

The exposition of the derivation follows five steps:
\begin{enumerate}[label=\textbf{\arabic*.},leftmargin=2.1em,itemsep=0.15em,topsep=0.25em]
    \item \textbf{Preliminaries.} Define the model-conditioned task graph, score units, score measure $\mu$, influence matrix $K$, its support graph $\mathcal E_K$, and the capability field.
    \item \textbf{Environment learning as a frontier expansion process.} The model improves its outputs based on feedback it has previously received. Within one task, the conditional expected score-growth rate is a weighted cut from unlocked score nodes to locked score nodes of the task graph.
    \item \textbf{Single-task curves range from jagged to a smooth limit.} Weighted cut mixing and small score units turn the jagged process into a logistic ordinary differential equation in the many-unit limit.
    \item \textbf{Many-task aggregate reveals the log-sigmoid law.} Even though the expansion process on tasks with finite score units remain jagged processes, the benchmark-aggregated curve converges to a smooth log-sigmoid law.
    \item \textbf{Graph self-similarity induces a log scale for time.} Finally, we show that when the task graph displays self-similar structure at different resolutions, frontier expansion naturally exhibits a log-time scale.
\end{enumerate}

Throughout the section, we denote raw interaction time by $t>0$, and the raw task/benchmark score by \(S(t)\). Let $t_{\mathrm{mid}} >0$ and \(S_{\max} > 0\) be fitted parameters. The corresponding log-time coordinate and normalized score are
\begin{equation*}
    u=\log t-\log t_{\mathrm{mid}},
    \qquad x(u) = \frac{S(t)}{S_{\max}}
\end{equation*}
We show how the following normalized log-sigmoid dynamics, with frontier speed $\beta$ (also a fitted parameter), can arise as a natural many-task limit of the averaged benchmark score:
\begin{equation*}
    \frac{\mathrm{d} x(u)}{\mathrm{d} u}= \beta x(u) (1 - x(u)),
    \qquad
    x\left(\log \frac{t}{t_{\mathrm{mid}}}\right)=\frac{1}{1+(t_{\mathrm{mid}}/t)^\beta}.
\end{equation*}
The value at raw time $t=0$ is understood as the boundary limit $u\to-\infty$.

\subsection{Preliminaries: Task Graph and the Attainable Support}
\label{subsec:theory-prelim}

We start with one task and one fixed model. The main assumption is that score units are the nodes of an \emph{agent-conditioned task graph}: unlocked nodes supply an influence field, locked nodes receive this field through directed influence edges, and a locked node can become unlocked at a rate proportional to the field strength.

\begin{definition}[Task graph]
\label{def:task-graph}
Let $E$ be the set of visible score-unit nodes for the task. The language model induces a nonnegative influence matrix
\begin{equation*}
    K=(K_{ij})_{i,j\in E},
    \qquad K_{ij}\ge 0.
\end{equation*}
Its directed support graph has edge set \(\mathcal E_K=\{j\to i:K_{ij}>0\}\) where $j\to i$ means that unlocked source node $j$ can help unlock target node $i$. We write \(G_K=(E,\mathcal E_K)\) for the resulting task graph.
\end{definition}

If one begins with a larger ambient graph, the restriction to the model-relevant part is made before writing $E$ and $K$; empirically, that restriction is absorbed into the fitted ceiling $S_{\max}$. Next we introduce the score units and the corresponding task influence field.

\begin{definition}[Score units and normalized score measure]
\label{def:score-units-measure}
On the latent graph $G_K=(E,\mathcal E_K)$, each node $i\in E$ represents a score unit with weight $\omega_i\ge0$. The normalized score measure is the probability measure
\begin{equation*}
    W=\sum_{i\in E}\omega_i>0,
    \qquad
    \mu_i=\frac{\omega_i}{W},
    \qquad
    \mu(A)=\sum_{i\in A}\mu_i, \quad \forall A\subseteq E.
\end{equation*}
\end{definition}

\begin{definition}[Unlock state, unlocked and locked sets]
\label{def:unlock-state-sets}
The unlock state of node $i$ at log-time $u$ is $n_i(u)\in\{0,1\}$, and once a node is unlocked, it cannot be locked again. The normalized score, unlocked set, and locked set are defined by
\begin{equation}\label{eq:frontier-unlocked-locked-sets}
    x(u)=\sum_{i\in E}\mu_i n_i(u),
    \qquad
    U(u)=\{j\in E:n_j(u)=1\},
    \qquad
    L(u)=\{i\in E:n_i(u)=0\}.
\end{equation}
For a static state $n=(n_i)_{i\in E}$, we also write
\begin{equation*}
    U(n)=\{j\in E:n_j=1\},
    \qquad
    L(n)=\{i\in E:n_i=0\},
    \qquad
    x(n)=\mu(U(n)).
\end{equation*}
\end{definition}

\begin{definition}[Task influence field]
\label{def:task-influence-field}
The entry $K_{ij}$ is the influence strength from source node $j$ to target node $i$. The capability field on target node $i$ is the incoming-edge sum
\begin{equation*}
    h_i(n)=\sum_{j:j\to i}K_{ij}n_j=\sum_{j\in E}K_{ij}n_j.
\end{equation*}
\end{definition}

When stating many-unit limits, we restore the index $N$: the objects above become $E_N$, $\mu^{(N)}$, $K^{(N)}$, $n^{(N)}(u)$, $x_N(u)$, $U_N(u)$, $L_N(u)$, and
\begin{equation*}
    \mu_N(A)=\sum_{i\in A}\mu_i^{(N)},
    \qquad
    x_N(u)=\sum_{i\in E_N}\mu_i^{(N)}n_i^{(N)}(u),
    \qquad
    h_i^{(N)}(u)=\sum_{j\in E_N}K_{ij}^{(N)}n_j^{(N)}(u).
\end{equation*}
For a static state $n\in\{0,1\}^{E_N}$, we write
\begin{equation*}
    U_N(n)=\{j\in E_N:n_j=1\},
    \qquad
    L_N(n)=\{i\in E_N:n_i=0\},
    \qquad
    x_N(n)=\mu_N(U_N(n)).
\end{equation*}

\subsection{Environment Learning as a Frontier Expansion Process}
\label{subsec:single-task-frontier}

With the latent graph and score measure fixed, we now turn the story into a process. The reason to use a frontier picture is that long-horizon environment learning is cumulative. In a software task, a runnable baseline makes later debugging meaningful; in a proof task, a lemma makes later proof obligations easier; in a scientific task, a calibrated model or validation loop makes later parameter search more useful. More generally, each partial success can become a tool for later progress.

The graph model encodes which already-solved score units can help unlock which
remaining ones. At log time \(u\), let
\[
    U(u)=\{j:n_j(u)=1\},\qquad L(u)=\{i:n_i(u)=0\}
\]
denote the unlocked and locked sets. A locked unit \(i\) can only become usable
through influence arriving from units in \(U(u)\). Thus the relevant quantity is
not the total amount of unlocked score, but the amount of influence crossing the
frontier from \(U(u)\) into \(L(u)\).

\underline{\textit{Probabilistic frontier expansion.}}
For a locked score unit \(i\), its task field is given by \(h_i(u)=\sum_{j\in E}K_{ij}n_j(u)\). We turn the influence of the field into a continuous-time stochastic process
by assuming that unit \(i\) unlocks with hazard proportional to its accumulated field:
\[
    \eta(1-n_i(u))h_i(u)
    =
    \eta(1-n_i(u))\sum_{j\in E}K_{ij}n_j(u),
\]
where \(\eta>0\) converts influence strength into progress per unit log time. Thus the field does not deterministically unlock \(i\); it determines the instantaneous probability that \(i\) unlocks. This is the sense in which learning expands the frontier: locked nodes become available at rates determined by the influence they receive from the currently unlocked side of the graph. We formalize this process in the definition below.

\begin{definition}[Single-task frontier process]
\label{def:single-task-frontier-process}
Fix a finite task graph \(G_K=(E,\mathcal E_K)\), score weights \(\mu_i\),
influence matrix \(K\), and conversion constant \(\eta>0\). The single-task
frontier process is the continuous-time process on \(\{0,1\}^E\) in which, for
each score unit \(i\), the instantaneous unlocking intensity is
\begin{align}
    \lambda_i(u) &:= \lim_{\Delta u\downarrow 0}
    \frac{1}{\Delta u}
    \Prob\left(n_i(u+\Delta u)-n_i(u)=1\mid n(u)\right) \nonumber
    \\
    & =
    \eta(1-n_i(u))\sum_{j\in E}K_{ij}n_j(u) .
    \label{eq:single-task-intensity-new}
\end{align}
Importantly, once a node unlocks, it remains unlocked.
\end{definition}

To express the resulting score growth, define for \(A,B\subseteq E\) the weighted
frontier cut
\[
    C(A,B)=\sum_{i\in A}\sum_{j\in B}\mu_iK_{ij}.
\]
This is the total influence from sources in \(B\) to targets in \(A\), with each
target weighted by the score gained if it unlocks. The active frontier at time
\(u\) is therefore \(C(L(u),U(u))\).

\begin{lemma}[Exact single-task frontier growth rate]
\label{lem:exact-single-task-frontier-growth-new}
For the single-task frontier process,
\[
    \frac{\mathrm{d}}{\mathrm{d}u}\E\bigl[x(u)\mid n(u)\bigr]
    =
    \eta C(L(u),U(u)).
\]
\end{lemma}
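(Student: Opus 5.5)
The plan is to interpret the left-hand side as the right-derivative of the conditional expectation at the current state,
\[
\lim_{\Delta u\downarrow 0}\frac{1}{\Delta u}\Bigl(\E\bigl[x(u+\Delta u)\mid n(u)\bigr]-x(u)\Bigr),
\]
which is the only meaning consistent with conditioning on \(n(u)\). The first step is to write \(x(u+\Delta u)-x(u)=\sum_{i\in E}\mu_i\bigl(n_i(u+\Delta u)-n_i(u)\bigr)\) and take conditional expectations term by term; the sum is finite, so this is legitimate. Monotonicity from Definition~\ref{def:single-task-frontier-process} does the bookkeeping. If \(n_i(u)=1\), the increment is identically zero. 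If \(n_i(u)=0\), the increment is the indicator that \(i\) has unlocked by \(u+\Delta u\). Hence
\[
\E\bigl[n_i(u+\Delta u)-n_i(u)\mid n(u)\bigr]=\Prob\bigl(n_i(u+\Delta u)-n_i(u)=1\mid n(u)\bigr).
\]

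The second step is to divide by \(\Delta u\), let \(\Delta u\downarrow 0\), and apply \eqref{eq:single-task-intensity-new} to each \(i\). Each limit equals \(\lambda_i(u)=\eta(1-n_i(u))\sum_j K_{ij}n_j(u)\). Because \(E\) is finite, the limit passes through the sum with no uniformity issues. This yields
\[
\frac{\mathrm d}{\mathrm du}\E[x(u)\mid n(u)]=\eta\sum_{i\in E}\sum_{j\in E}\mu_i(1-n_i(u))K_{ij}n_j(u).
\]

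The final step is algebraic. The factor \((1-n_i(u))n_j(u)\) is exactly \(\1\{i\in L(u)\}\1\{j\in U(u)\}\), so the double sum collapses to \(\sum_{i\in L(u)}\sum_{j\in U(u)}\mu_iK_{ij}=C(L(u),U(u))\).

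The only delicate point is the first step: being precise that the derivative means the conditional infinitesimal generator applied to the linear function \(x\), and not the derivative of the unconditional mean. If the latter reading were required, one would additionally use the tower property together with Kolmogorov's forward equation for the finite-state Markov chain, justified by bounded rates. That would give \(\frac{\mathrm d}{\mathrm du}\E[x(u)]=\eta\,\E[C(L(u),U(u))]\), and I would state this as a remark.
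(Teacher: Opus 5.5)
Your proposal is correct and follows the paper's proof: condition on $n(u)=n$, write the expected score increment over $[u,u+\Delta u]$ in terms of the per-node unlocking probabilities, divide by $\Delta u$ using the intensity definition, and collapse $(1-n_i)n_j$ into the weighted cut $C(L(u),U(u))$. The one small difference is that you get the expected increment exactly, by linearity of expectation over the monotone $0/1$ node increments, so you do not need the paper's remark that two or more unlocks occur with probability $o(\Delta u)$.
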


\begin{proof}
Fix \(u\) and condition on the state \(n(u)=n\). For \(i\in E\), let
\(A_i(\Delta u)\) denote the event that node \(i\) unlocks during
\([u,u+\Delta u]\). By the definition of the unlocking intensity,
\[
    \Pr(A_i(\Delta u)\mid n(u)=n)
    =
    \eta(1-n_i)\sum_{j\in E}K_{ij}n_j\,\Delta u
    +o(\Delta u).
\]
Since \(E\) is finite, the probability of two or more unlocking events in
\([u,u+\Delta u]\) is \(o(\Delta u)\). Therefore, to first order in
\(\Delta u\), the expected score increment is obtained by summing the score
gain from each possible single-node unlock:
\begin{align*}
    \E[x(u+\Delta u)-x(u)\mid n(u)=n]
    &=
    \sum_{i\in E}\mu_i\,
    \Pr(A_i(\Delta u)\mid n(u)=n)
    +o(\Delta u) \\
    &=
    \eta
    \sum_{i\in E}
    \mu_i(1-n_i)
    \sum_{j\in E}K_{ij}n_j\,\Delta u
    +o(\Delta u).
\end{align*}
Dividing by \(\Delta u\) and letting \(\Delta u\downarrow 0\) gives
\[
    \frac{\mathrm d}{\mathrm du}
    \E[x(u)\mid n(u)=n]
    =
    \eta
    \sum_{i\in E}
    \mu_i(1-n_i)
    \sum_{j\in E}K_{ij}n_j .
\]
Since \(1-n_i\) restricts the outer sum to \(i\in L(n)\), and \(n_j\) restricts
the inner sum to \(j\in U(n)\), this becomes
\[
    \frac{\mathrm d}{\mathrm du}
    \E[x(u)\mid n(u)=n]
    =
    \eta
    \sum_{i\in L(n)}
    \sum_{j\in U(n)}
    \mu_iK_{ij}
    =
    \eta C(L(n),U(n)).
\]
This proves the claim.
\end{proof}

Lemma~\ref{lem:exact-single-task-frontier-growth-new} is the key reduction. It says that, to obtain a logistic expected growth rate, we do not need every microscopic edge weight to be identical. Instead, we need the boundary influence to depend mainly on two coarse quantities: the unlocked mass $\mu(U)$ and the locked mass $\mu(L)$. Then the field is roughly the product of the two: available reusable capability times remaining score opportunity.

As the number of score units increases, the same fixed-resolution objects are now applied to the size-$N$ graph. In particular,
\begin{equation*}
    C_N(A,B)=\sum_{i\in A}\sum_{j\in B}\mu_i^{(N)}K_{ij}^{(N)},
    \qquad A,B\subseteq E_N.
\end{equation*}
\underline{\textit{From frontier cuts to product frontiers.}}
The remaining modeling question is when this boundary influence should look like a product of the two measures. The following quantity controls the deviation of the frontier process from the complete-mixing case.

\begin{definition}[Single-task weighted cut error]
\label{def:single-task-cut-error}
Given $(E_N,\mu^{(N)},K^{(N)})$ and an arbitrary $\kappa>0$, define
\begin{equation*}
    \varepsilon_N(\kappa)=
    \sup_{A,B\subseteq E_N}
    \left|
    \sum_{i\in A}\sum_{j\in B}
    \mu_i^{(N)}\bigl(K_{ij}^{(N)}-\kappa\mu_j^{(N)}\bigr)
    \right|.
\end{equation*}
\end{definition}

\begin{condition}[Single-task weighted cut mixing]
\label{cond:single-task-weighted-cut-mixing}
There exists a \(\kappa > 0\) such that, as $N\to\infty$ for the task graph, the weighted cut error in Definition~\ref{def:single-task-cut-error} satisfies
\begin{equation*}
    \varepsilon_N\to0.
\end{equation*}
\end{condition}

Weighted cut mixing is weaker than entrywise complete mixing. It does not say that each $K_{ij}^{(N)}$ is close to $\kappa\mu_j^{(N)}$. It says that, for every possible frontier, the aggregate capability crossing from unlocked graph nodes into locked graph nodes is close to the product-measure value.

\begin{lemma}[Cut mixing gives logistic frontier growth rate]
\label{lem:cut-mixing-logistic-growth}
Under Condition~\ref{cond:single-task-weighted-cut-mixing}, the following holds for every static state $n\in\{0,1\}^{E_N}$. Let $U_N=U_N(n)$, $L_N=L_N(n)$, and $x_N=x_N(n)$. Then
\begin{equation*}
    \left|C_N(L_N,U_N)-\kappa x_N(1-x_N)\right|\le\varepsilon_N.
\end{equation*}
Consequently,
\begin{equation*}
    b_N(n)=\eta\kappa x_N(1-x_N)+r_N(n),
    \qquad |r_N(n)|\le\eta\varepsilon_N.
\end{equation*}
\end{lemma}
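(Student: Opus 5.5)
The bound follows by evaluating the weighted cut error of Definition~\ref{def:single-task-cut-error} at the particular pair of sets given by the current frontier. Fix a static state $n\in\{0,1\}^{E_N}$ and take $A=L_N(n)$, $B=U_N(n)$ in the supremum defining $\varepsilon_N=\varepsilon_N(\kappa)$, with $\kappa$ the constant supplied by Condition~\ref{cond:single-task-weighted-cut-mixing}. Since the supremum dominates any single choice of $(A,B)$,
\begin{equation*}
    \left|\sum_{i\in L_N}\sum_{j\in U_N}\mu_i^{(N)}\bigl(K_{ij}^{(N)}-\kappa\mu_j^{(N)}\bigr)\right|\le\varepsilon_N .
\end{equation*}
The plan is to split this sum into its $K$ part and its product-measure part.

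The $K$ part is exactly $C_N(L_N,U_N)$ by definition. The product-measure part factorizes:
\begin{equation*}
    \kappa\sum_{i\in L_N}\sum_{j\in U_N}\mu_i^{(N)}\mu_j^{(N)}=\kappa\,\mu_N(L_N)\,\mu_N(U_N).
\end{equation*}
Because $\mu^{(N)}$ is a probability measure and $L_N,U_N$ partition $E_N$, we have $\mu_N(U_N)=x_N(n)$ and $\mu_N(L_N)=1-x_N(n)$. Substituting gives $|C_N(L_N,U_N)-\kappa x_N(1-x_N)|\le\varepsilon_N$, which is the first claim.

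For the consequence, I read $b_N(n)$ as the conditional drift $\frac{\mathrm d}{\mathrm du}\E[x_N(u)\mid n(u)=n]$. By Lemma~\ref{lem:exact-single-task-frontier-growth-new}, applied to the size-$N$ graph, this equals $\eta C_N(L_N,U_N)$. Define
\begin{equation*}
    r_N(n):=\eta\bigl(C_N(L_N,U_N)-\kappa x_N(1-x_N)\bigr).
\end{equation*}
Then $b_N(n)=\eta\kappa x_N(1-x_N)+r_N(n)$, and multiplying the first bound by $\eta>0$ gives $|r_N(n)|\le\eta\varepsilon_N$.

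There is no real obstacle. The only points needing care are that $b_N$ is identified with the drift from the earlier lemma, which the statement leaves implicit, and that the bound is uniform in $n$ because the supremum is taken over all pairs of subsets rather than over frontiers reachable by the dynamics.
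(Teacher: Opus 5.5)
Your proposal is correct and matches the paper's proof: you specialize the supremum in Definition~\ref{def:single-task-cut-error} to $A=L_N$, $B=U_N$, factor the product-measure term using $\mu_N(U_N)=x_N$ and $\mu_N(L_N)=1-x_N$, and then obtain the drift statement from Lemma~\ref{lem:exact-single-task-frontier-growth-new}. The statement never defines $b_N(n)$, and your reading of it as the conditional drift $\eta\,C_N(L_N,U_N)$ is what the paper implicitly relies on when it says the expected-growth-rate statement follows from that lemma.
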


\begin{proof}
Apply Definition~\ref{def:single-task-cut-error} to $A=L_N$ and $B=U_N$. Since $\mu_N(U_N)=x_N$ and $\mu_N(L_N)=1-x_N$,
\begin{align*}
    \left|C_N(L_N,U_N)-\kappa x_N(1-x_N)\right|
    &=
    \left|
    \sum_{i\in L_N}\sum_{j\in U_N}
    \mu_i^{(N)}\bigl(K_{ij}^{(N)}-\kappa\mu_j^{(N)}\bigr)
    \right| \\
    &\le \varepsilon_N.
\end{align*}
The expected-growth-rate statement follows from Lemma~\ref{lem:exact-single-task-frontier-growth-new}.
\end{proof}

Cut mixing controls the deterministic expected growth rate. A separate condition controls the visible jump size. This distinction is crucial for the interpretation of the paper: a finite task with only a few large score units can follow the right frontier mechanism and still look very jagged.

\begin{condition}[Small score units and controlled fields]
\label{cond:single-task-small-units}
Let
\begin{equation*}
    q_N=\sum_{i\in E_N}\bigl(\mu_i^{(N)}\bigr)^2.
\end{equation*}
Assume $q_N\to0$. Assume also that there is a deterministic $H_N$ such that
\begin{equation*}
    \sup_{i\in E_N,\,n\in\{0,1\}^{E_N}}\sum_{j\in E_N}K_{ij}^{(N)}n_j\le H_N,
    \qquad
    H_Nq_N\to0.
\end{equation*}
\end{condition}

For equal score units, $q_N=1/N$. Thus Condition~\ref{cond:single-task-small-units} captures the smoothening process from a task with few bulky score units to a task with many small score units. The theorem below should be read exactly in that way. It is not a claim that the realized best-so-far curve of a finite benchmark task must be visually smooth. Rather, it says that if the same task-level mechanism were observed at increasingly fine score resolution, the staircase process would converge to a logistic frontier.

\begin{theorem}[Single-task frontier, many-unit limit]
\label{thm:single-task-frontier-limit-new}
Consider the single-task frontier process of Definition~\ref{def:single-task-frontier-process}. Let $[u_0,u_1]$ be a compact log-time interval and let $x_0\in[0,1]$. Suppose Conditions~\ref{cond:single-task-weighted-cut-mixing} and~\ref{cond:single-task-small-units} hold, and suppose $x_N(u_0)\to x_0$. Then $x_N$ converges uniformly in probability on $[u_0,u_1]$ to the solution of
\begin{equation}
    \frac{dx}{du}=\eta\kappa x(1-x),
    \qquad x(u_0)=x_0.
    \label{eq:single-task-logistic-ode-new}
\end{equation}
If \(t_{\mathrm{mid}}\) is chosen so that the limiting midpoint satisfies $x(0)=1/2$, then, with $x(t):=x(\log(t/t_{\mathrm{mid}}))$, we obtain the log-sigmoid law:
\begin{equation}
    x_N \stackrel{P}{\longrightarrow}  
    x(t)=\frac{1}{1+(t_{\mathrm{mid}}/t)^\beta},
    \qquad
    \beta=\eta\kappa.
    \label{eq:single-task-logsigmoid-limit-new}
\end{equation}
\end{theorem}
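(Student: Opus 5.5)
Our plan is the standard fluid-limit argument for density-dependent Markov jump processes: a Doob--Meyer decomposition of \(x_N\) into drift plus martingale, followed by Gronwall.

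\textbf{Step 1 (semimartingale decomposition).} Write
\[
x_N(u)=x_N(u_0)+\int_{u_0}^{u} b_N(n^{(N)}(s))\,\mathrm ds+M_N(u),
\]
where \(b_N(n)=\eta C_N(L_N(n),U_N(n))\) is the generator applied to \(x_N\). This is exactly the rate computed in Lemma~\ref{lem:exact-single-task-frontier-growth-new}, and \(M_N\) is a martingale. By Lemma~\ref{lem:cut-mixing-logistic-growth},
\[
b_N(n)=\beta x_N(1-x_N)+r_N(n),\qquad |r_N|\le \eta\varepsilon_N,\qquad \beta=\eta\kappa .
\]

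\textbf{Step 2 (martingale control).} The predictable quadratic variation of \(M_N\) is
\[
\langle M_N\rangle_u=\int_{u_0}^u \sum_{i\in L_N(s)} \bigl(\mu_i^{(N)}\bigr)^2\,\eta\, h_i^{(N)}(s)\,\mathrm ds ,
\]
since each jump of node \(i\) has size \(\mu_i^{(N)}\) and occurs at rate \(\eta h_i^{(N)}\). Using the uniform field bound of Condition~\ref{cond:single-task-small-units}, this is at most \(\eta (u_1-u_0) H_N q_N\to0\). Doob's \(L^2\) maximal inequality then gives
\[
\E\Bigl[\sup_{[u_0,u_1]}|M_N|^2\Bigr]\le 4\eta(u_1-u_0)H_Nq_N\to0,
\]
so \(\sup_{[u_0,u_1]}|M_N|\to0\) in probability. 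I expect this step to be the main technical point. One must check that the process is non-explosive and that \(M_N\) is a genuine square-integrable martingale, not merely a local one. Both follow because \(E_N\) is finite, so there are at most \(|E_N|\) jumps and bounded rates. With that verified, the estimate is routine.

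\textbf{Step 3 (Gronwall).} Let \(x\) solve \eqref{eq:single-task-logistic-ode-new}. The map \(F(y)=\beta y(1-y)\) is \(\beta\)-Lipschitz on \([0,1]\), and both \(x_N\) and \(x\) stay in \([0,1]\). Hence
\[
|x_N(u)-x(u)|\le |x_N(u_0)-x_0|+(u_1-u_0)\eta\varepsilon_N+\sup|M_N|+\beta\int_{u_0}^u|x_N-x|\,\mathrm ds .
\]
Gronwall's inequality bounds \(\sup_{[u_0,u_1]}|x_N-x|\) by \(e^{\beta(u_1-u_0)}\) times the three error terms. These vanish in probability by the assumption \(x_N(u_0)\to x_0\), by Condition~\ref{cond:single-task-weighted-cut-mixing}, and by Step 2. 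This gives uniform convergence in probability on \([u_0,u_1]\).

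\textbf{Step 4 (closed form).} For \(x_0\in(0,1)\), separating variables in the logistic ODE gives
\[
\log\frac{x}{1-x}=\beta u+c .
\]
Normalizing \(x(0)=1/2\) forces \(c=0\), so \(x(u)=1/(1+e^{-\beta u})\). Substituting \(u=\log(t/t_{\mathrm{mid}})\) yields \eqref{eq:single-task-logsigmoid-limit-new}.

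The degenerate initial values \(x_0\in\{0,1\}\) give constant solutions, for which no midpoint exists, so the midpoint clause applies only to \(x_0\in(0,1)\). Convergence in \eqref{eq:single-task-logsigmoid-limit-new} is meant uniformly on the compact \(t\)-interval corresponding to \([u_0,u_1]\).
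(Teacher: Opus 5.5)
Your proposal is correct and follows the paper's proof essentially step for step: a Doob--Meyer decomposition with the drift controlled by Lemma~\ref{lem:cut-mixing-logistic-growth}, the bound $\langle M_N\rangle\le\eta(u_1-u_0)H_Nq_N$ combined with Doob's $L^2$ inequality, Gronwall with Lipschitz constant $\eta\kappa$, and then solving the logistic ODE normalized by $x(0)=1/2$. Two of your remarks are correct refinements that the paper leaves implicit: $M_N$ is a genuine square-integrable martingale because $E_N$ is finite, and the midpoint normalization only makes sense for $x_0\in(0,1)$.
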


\begin{proof}
Let $M_N(u)$ denote the martingale increment in the Doob-Meyer decomposition, normalized so that $M_N(u_0)=0$. The Doob-Meyer decomposition and Lemma~\ref{lem:cut-mixing-logistic-growth} give, for $u\in[u_0,u_1]$,
\begin{equation}
    x_N(u)=x_N(u_0)+M_N(u)+\eta\kappa\int_{u_0}^{u}x_N(s)(1-x_N(s))\,ds+R_N(u),
    \label{eq:single-task-doob-meyer-new}
\end{equation}
where
\begin{equation*}
    \sup_{u\in[u_0,u_1]}|R_N(u)|\le \eta(u_1-u_0)\varepsilon_N.
\end{equation*}

A jump of unit $i$ changes $x_N$ by $\mu_i^{(N)}$. The predictable quadratic variation accumulated over the interval $[u_0,u_1]$ is therefore bounded by
\begin{equation*}
    \langle M_N\rangle_{[u_0,u_1]}
    =\int_{u_0}^{u_1}\sum_{i\in E_N}\bigl(\mu_i^{(N)}\bigr)^2
      \eta\bigl(1-n_i^{(N)}(s)\bigr)\sum_{j\in E_N}K_{ij}^{(N)}n_j^{(N)}(s)\,ds
    \le
    \eta(u_1-u_0)H_Nq_N.
\end{equation*}
Doob's $L^2$ inequality therefore yields
\begin{equation*}
    \E\left[\sup_{u\in[u_0,u_1]}|M_N(u)|^2\right]
    \le 4\eta(u_1-u_0)H_Nq_N\to0.
\end{equation*}
Thus the martingale term vanishes uniformly in probability.

Let $f(x)=\eta\kappa x(1-x)$. On $[0,1]$, $f$ is Lipschitz with constant at most $\eta\kappa$. Comparing \eqref{eq:single-task-doob-meyer-new} with the integral equation for \eqref{eq:single-task-logistic-ode-new} gives
\begin{align*}
    \sup_{u_0\le r\le u}|x_N(r)-x(r)|
    &\le |x_N(u_0)-x_0|+
      \sup_{u_0\le r\le u}|M_N(r)|
      +\eta(u_1-u_0)\varepsilon_N  \\
    &\quad+\eta\kappa\int_{u_0}^{u}\sup_{u_0\le r\le s}|x_N(r)-x(r)|\,ds.
\end{align*}
Gronwall's inequality proves uniform convergence in probability on $[u_0,u_1]$. Solving \eqref{eq:single-task-logistic-ode-new} and setting $x(0)=1/2$ gives \eqref{eq:single-task-logsigmoid-limit-new}.
\end{proof}

\begin{readingbox}{Interpreting the single-task curve shapes}
For finite $N$, the task score is still a jagged jump process. The theorem says that a smooth logistic curve appears only after two effects become negligible: mixing cut error $\varepsilon_N \to 0$ and jump noise from large score units \(H_Nq_N \to 0\). Thus a jagged finite-task curve is compatible with the logistic frontier mechanism. Moreover, the growth-rate $x(1-x)$ has a direct interpretation: unlocked score measure $x$ supplies capability to help unlock new score, while locked score measure $1-x$ delimits the remaining opportunities for score improvement. In the next subsection, we explain why averaging many such finite-task curves is the natural place to expect a clean scaling law.
\end{readingbox}

\subsection{Many-task Aggregation Reveals the Smooth Log-Sigmoid Law}
\label{subsec:aggregate-frontier-limit}

We now return to the quantity fit in the benchmark: the average over many independently evaluated tasks. As we have observed empirically, the per-task curves are heterogeneous and often visibly jagged, while the cross-task averages are much smoother and become better fit by the log-sigmoid as more tasks are included. In our language, each task is its own finite frontier expansion process on a task-specific task graph. The aggregate theorem therefore answers the central question: 
\begin{center}
    \emph{How can many jagged task curves produce a clean log-sigmoid scaling law after task averaging?}    
\end{center}

The answer has two layers. First, because different task environments do not interact with each other, each task can have its own approximate logistic frontier. Second, the benchmark average removes finite-task roughness and becomes a single log-sigmoid when the task midpoints and task speeds are sufficiently concentrated in the many-task limit.

Moving from a single task to a full benchmark introduces three additional sources of variation.

\begin{itemize}
    \item \textbf{Task-level logistic frontier dynamics.} Each task has its own task graph, score atoms, influence matrix, cut error, and finite-jump noise. These quantities determine how closely that task's frontier expansion follows its own logistic limit.

    \item \textbf{Time-axis alignment.} Each task may have its own model-dependent midpoint $t_{\mathrm{mid},b}$. A benchmark-level fit, however, has only one common midpoint $t_{\mathrm{mid}}$. Therefore the task-specific midpoint shifts must concentrate across tasks.

    \item \textbf{Learning-speed homogeneity.} Each task may also have its own frontier speed $\beta_b$. A benchmark-level fit uses one common speed $\beta$, so the model-dependent task learning speeds must concentrate around a shared value.
\end{itemize}

The aggregate theorem says that the log-sigmoid law appears as a population-level limit: finite-task roughness is washed out by averaging, and the remaining task-level logistic frontiers combine into a single curve when their midpoints and speeds are sufficiently aligned.

\begin{definition}[Benchmark task graph and state]
\label{def:benchmark-task-graph-state}
For a benchmark with $M$ tasks, the tasks are indexed by $b=1,\ldots,M$. Task $b$ has its own task graph
\begin{equation*}
    G_b=(E_b,\mathcal E_b),
    \qquad
    \mathcal E_b=\{j\to i:K_{ij}^b>0\},
\end{equation*}
where $E_b$ is the task's visible score-unit node set for the model being evaluated. It has normalized score measurees $\mu_i^b$ for $i\in E_b$, influence matrix $K^b=(K_{ij}^b)_{i,j\in E_b}$, and field-to-progress conversion constant $\eta_b>0$.

The state vector of task $b$ is
\begin{equation*}
    n^b(u)=(n_i^b(u))_{i\in E_b},
\end{equation*}
and the capability field on node $i$ is
\begin{equation*}
    h_i^b(u)=\sum_{j\in E_b}K_{ij}^b n_j^b(u).
\end{equation*}
\end{definition}

\begin{definition}[Task score and benchmark average]
\label{def:benchmark-task-score-average}
The normalized score of task $b$ at log-time $u$ is
\begin{equation*}
    x_b(u)=\sum_{i\in E_b}\mu_i^b n_i^b(u).
\end{equation*}
And the benchmark-average normalized score is
\begin{equation*}
    x_B(u)=\frac1M\sum_{b=1}^{M}x_b(u).
\end{equation*}
Here \(u = \log t - \log t_{\mathrm{mid}}\) is the log-time coordinate. For raw time, we denote the normalized score and benchmark score by 
\[ 
x_B^\dagger(t)=x_B(\log(t/t_{\mathrm{mid}})),\qquad S_B(t)=S_{\max} \cdot x_B^\dagger(t),
\] 
where \(t_{\mathrm{mid}}\) is the fitted aggregate midpoint and \(S_{\max}\) denotes the fitted aggregate ceiling. Individual tasks may have residual midpoint shifts relative to this common coordinate, introduced below as \(\delta_b\) in \eqref{eq:task-specific-logistic}.
\end{definition}

\begin{definition}[Task-specific logistic frontier]
\label{def:task-specific-logistic-frontier}
In the benchmark-level log-time coordinate \(u\), task \(b\) is assigned a task-specific frontier speed \(\beta_b\ge0\) and a residual midpoint shift \(\delta_b\in\mathbb R\). The corresponding task-specific logistic curve is
\begin{equation}
    \ell_b(u)=
    \frac{1}{1+e^{-\beta_b(u-\delta_b)}}.
    \label{eq:task-specific-logistic}
\end{equation}
Here \(\beta_b\) controls the speed of task-level frontier propagation, while \(\delta_b\) measures the task midpoint relative to the benchmark-level midpoint \(t_{\mathrm{mid}}\). Thus \(\delta_b=0\) means that task \(b\)'s midpoint is aligned with the aggregate midpoint.
\end{definition}

The first condition applies the single-task cut-mixing argument inside each task graph. It is written for the influence matrix $\eta_bK^b$, so the product-frontier coefficient is directly $\beta_b$.

\begin{definition}[Blockwise weighted cut error and field bound]
\label{def:blockwise-error-granularity}
For each task $b$, define the task-conditioned weighted cut error by
\begin{equation}
    \varepsilon_b=
    \sup_{A,B\subseteq E_b}
    \left|
    \sum_{i\in A}\sum_{j\in B}
    \mu_i^b\bigl(\eta_bK_{ij}^b-\beta_b\mu_j^b\bigr)
    \right|.
    \label{eq:blockwise-epsilon-new}
\end{equation}
Define the block granularity and score-growth field bound by
\begin{equation*}
    q_b=\sum_{i\in E_b}(\mu_i^b)^2,
    \qquad
    H_b=\sup_{i\in E_b,\,n\in\{0,1\}^{E_b}}
    \eta_b\sum_{j\in E_b}K_{ij}^b n_j.
\end{equation*}
\end{definition}

\begin{condition}[Blockwise weighted cut mixing and vanishing score units]
\label{cond:blockwise-cut-mixing}
Assume that task speeds are uniformly bounded,
\begin{equation*}
    0\le\beta_b\le\beta_+<\infty,
\end{equation*}
and that the average task cut error and granularity error vanish:
\begin{equation}
    \Delta_M=\frac1M\sum_{b=1}^{M}\varepsilon_b\to0,
    \qquad
    A_M=\frac1M\sum_{b=1}^{M}\sqrt{H_b q_b}\to0
    \label{eq:blockwise-errors-new}
\end{equation}
in probability.
\end{condition}

The next condition asks the initial value variation is negligible across different tasks in the benchmark.

\begin{condition}[Blockwise initial alignment]
    \label{cond:blockwise-initial-alignment}
    We assume the average initial value of the dynamics is aligned,
    \[
        \frac{1}{M}\sum_{b=1}^M |x_b(u_0)-\ell_b(u_0)| \stackrel{P}{\longrightarrow} 0.
    \]
\end{condition}

Condition~\ref{cond:blockwise-cut-mixing} has a direct consequence: the observed benchmark trajectory is close to an average of task-specific logistic frontiers. Notice the averaging in the condition. We do not require every task to look smooth on its own, but the \emph{average} contribution of cut errors, score units, and jumps to vanish. This is the formal version of the empirical claim that a population of jagged frontier expansion processes can reveal a smooth law. For the compact interval \(I \subseteq \mathbb{R}\), define the following blockwise trajectory error supremum:
\begin{equation}
    R_M(I):=\frac1M\sum_{b=1}^{M}\sup_{u\in I}|x_b(u)-\ell_b(u)|.
    \label{eq:R-M-new}
\end{equation}

Then we have the following lemma that proves its vanishing property.

\begin{lemma}[Average error supremum, many-task limit]
\label{lem:average-task-logistic-approx}
Under Conditions~\ref{cond:blockwise-cut-mixing} and~\ref{cond:blockwise-initial-alignment}, $R_M(I)\stackrel{P}{\to} 0$ as \(M \to \infty\) for any compact interval $I \subseteq \mathbb{R}$. 
\end{lemma}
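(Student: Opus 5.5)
The plan is to repeat the proof of Theorem~\ref{thm:single-task-frontier-limit-new} inside each task separately, keeping the constants explicit in $\varepsilon_b$, $H_b$, $q_b$. Then I average the resulting per-task Gronwall bounds over $b$ and let Conditions~\ref{cond:blockwise-cut-mixing} and~\ref{cond:blockwise-initial-alignment} drive the average to zero. Averaging a linear bound costs nothing, so no independence across tasks should be needed.

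\emph{Setup.} It suffices to treat $I=[u_0,u_1]$, where $u_0$ is the alignment time of Condition~\ref{cond:blockwise-initial-alignment}. A compact $I$ with $\inf I\ge u_0$ is contained in such an interval, and $R_M$ is monotone in $I$. (If $I$ reaches before $u_0$, I would simply take $u_0\le\inf I$ in the alignment condition.) Write $|I|=u_1-u_0$.

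\emph{Step 1: per-task Doob--Meyer decomposition.} By Lemma~\ref{lem:exact-single-task-frontier-growth-new} applied to the influence matrix $\eta_bK^b$, and by the argument of Lemma~\ref{lem:cut-mixing-logistic-growth} with \eqref{eq:blockwise-epsilon-new} in place of Definition~\ref{def:single-task-cut-error}, we get
\[
x_b(u)=x_b(u_0)+M_b(u)+\beta_b\int_{u_0}^u x_b(1-x_b)\,ds+R_b(u),\qquad \sup_{I}|R_b|\le |I|\varepsilon_b .
\]
Meanwhile $\ell_b$ solves $\ell_b'=\beta_b\ell_b(1-\ell_b)$ exactly. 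This is checked by differentiating \eqref{eq:task-specific-logistic}.

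\emph{Step 2: per-task Gronwall bound.} The map $x\mapsto x(1-x)$ is $1$-Lipschitz on $[0,1]$ and $\beta_b\le\beta_+$. Set $D_b(u)=\sup_{u_0\le r\le u}|x_b(r)-\ell_b(r)|$. Subtracting the integral equation for $\ell_b$ from Step 1 gives
\[
D_b(u)\le |x_b(u_0)-\ell_b(u_0)|+\sup_I|M_b|+|I|\varepsilon_b+\beta_+\int_{u_0}^u D_b(s)\,ds .
\]
Gronwall's inequality then yields
\[
\sup_I|x_b-\ell_b|\le e^{\beta_+|I|}\Bigl(|x_b(u_0)-\ell_b(u_0)|+\sup_I|M_b|+|I|\varepsilon_b\Bigr).
\]

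\emph{Step 3: averaging.} Averaging the Step 2 bound over $b$ gives
\[
R_M(I)\le e^{\beta_+|I|}\Bigl(\tfrac1M\textstyle\sum_b|x_b(u_0)-\ell_b(u_0)|+\tfrac1M\sum_b\sup_I|M_b|+|I|\Delta_M\Bigr).
\]
The first term tends to zero in probability by Condition~\ref{cond:blockwise-initial-alignment}. The last term tends to zero in probability by \eqref{eq:blockwise-errors-new}.

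\emph{Main obstacle: the martingale average.} As in Theorem~\ref{thm:single-task-frontier-limit-new}, the predictable quadratic variation satisfies $\langle M_b\rangle_I\le |I|H_bq_b$. Doob's $L^2$ inequality together with Cauchy--Schwarz then gives
\[
\E\bigl[\sup_I|M_b|\bigm|\mathcal G\bigr]\le 2\sqrt{|I|H_bq_b},
\]
where $\mathcal G$ is the $\sigma$-field generated by the task parameters $(\mu^b,K^b,\eta_b,\beta_b,\delta_b)$. These parameters may be random, since Condition~\ref{cond:blockwise-cut-mixing} is stated in probability. Averaging over $b$, the conditional expectation of the martingale term is at most $2\sqrt{|I|}\,A_M$. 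For a fixed $\epsilon>0$, the conditional Markov inequality gives
\[
\Prob\Bigl(\tfrac1M\textstyle\sum_b\sup_I|M_b|>\epsilon\Bigm|\mathcal G\Bigr)\le \min\bigl(1,2\sqrt{|I|}A_M/\epsilon\bigr).
\]
This bound is a bounded random variable tending to $0$ in probability, so by bounded convergence its expectation tends to $0$. Hence the martingale average is $o_P(1)$.

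The delicate point is this conditioning step. One must check that, given $\mathcal G$, each task is a frontier process with the stated intensities, so that Doob's inequality applies blockwise. One must also check that the square-root form of $A_M$ is exactly what the $L^1$ bound on $\sup|M_b|$ produces. Combining the three terms gives $R_M(I)\stackrel{P}{\to}0$.
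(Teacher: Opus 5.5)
Your proposal is correct and follows the paper's proof essentially step for step: the per-task Doob--Meyer decomposition with drift error bounded by $\varepsilon_b$, the blockwise Gronwall bound with constant $e^{\beta_+|I|}$, averaging over tasks, and the conditional Doob $L^2$ bound $\E[\sup_I|M_b|\mid\mathcal G]\le 2\sqrt{|I|H_bq_b}$, which produces exactly $A_M$. The only difference is cosmetic and sits in the last step: you use a conditional Markov inequality plus bounded convergence, while the paper truncates on $\{A_M\le a\}$ and sends $M\to\infty$ and then $a\downarrow 0$; the two arguments are equivalent.
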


\begin{proof}
For task $b$, the same Doob-Meyer decomposition as in the single-task proof gives
\begin{equation}
    x_b(u)=x_b(u_0)+M_b(u)+\int_{u_0}^{u}\left\{\beta_b x_b(s)(1-x_b(s))+\rho_b(s)\right\}\,ds,
    \label{eq:blockwise-doob-meyer-new}
\end{equation}
where $M_b$ is a martingale and, by \eqref{eq:blockwise-epsilon-new}, $|\rho_b(s)|\le\varepsilon_b$ for all states. The task-level curve \eqref{eq:task-specific-logistic} solves
\begin{equation*}
    \frac{\mathrm{d}\ell_b}{\mathrm{d}u}=\beta_b\ell_b(1-\ell_b).
\end{equation*}
Since $z\mapsto \beta_b z(1-z)$ is Lipschitz on $[0,1]$ with constant at most $\beta_+$, Gronwall's inequality applied to \eqref{eq:blockwise-doob-meyer-new} yields
\begin{equation}
    \sup_{u\in I}|x_b(u)-\ell_b(u)|
    \le
    e^{\beta_+|I|}
    \left(
      |x_b(u_0)-\ell_b(u_0)|+
      \sup_{u\in I}|M_b(u)|+
      |I|\varepsilon_b
    \right),
    \label{eq:blockwise-gronwall-new}
\end{equation}
where $|I|=u_1-u_0$. 

The first term converges to zero by Condition~\ref{cond:blockwise-initial-alignment}. The third term as well by Condition~\ref{cond:blockwise-cut-mixing}. It remains to average the martingale terms. A jump of unit $i$ in task $b$ changes $x_b$ by $\mu_i^b$, and the total score-growth field is bounded by $H_b$. Hence the predictable quadratic variation of \(M_b\) can be bounded by
\begin{equation*}
    \langle M_b\rangle_{I}\le |I|H_b q_b.
\end{equation*}
Doob's $L^2$ inequality gives the conditional bound
\begin{equation}
    \E\left[\sup_{u\in I}|M_b(u)|\,\middle|\,\mathcal G_M\right]
    \le 2\sqrt{|I|H_b q_b},
    \label{eq:blockwise-martingale-conditional-new}
\end{equation}
where $\mathcal G_M$ denotes the task-level weights and influence matrices. This conditional form is useful because $A_M$ may itself be random. Let
\begin{equation*}
    Z_M=\frac1M\sum_{b=1}^{M}\sup_{u\in I}|M_b(u)|.
\end{equation*}
For any $\varepsilon>0$ and $a>0$, \eqref{eq:blockwise-martingale-conditional-new} implies
\begin{align*}
    \Prob(Z_M>\varepsilon)
    &\le \Prob(A_M>a)+\Prob(Z_M>\varepsilon,\,A_M\le a) \\
    &\le \Prob(A_M>a)+\varepsilon^{-1}\E[Z_M\1\{A_M\le a\}] \\
    &\le \Prob(A_M>a)+\frac{2\sqrt{|I|}\,a}{\varepsilon}.
\end{align*}
Since $A_M\to0$ in probability, first let $M\to\infty$ and then let $a\downarrow0$ to obtain $Z_M\to0$ in probability. Averaging \eqref{eq:blockwise-gronwall-new} over $b$ and using \eqref{eq:blockwise-errors-new} proves \eqref{eq:R-M-new}.
\end{proof}

The next two task-distributional assumptions serve to align the scaling curves within the benchmark. They are not consequences of blockwise cut mixing. They are what turns an average of task-level sigmoids into a single benchmark sigmoid.

\begin{assumption}[Uniform log-time bias]
\label{assum:uniform-log-time-bias}
There is a single benchmark-level midpoint \(t_{\mathrm{mid}}\) such that the residual task shifts in \eqref{eq:task-specific-logistic} satisfy
\begin{equation*}
    D_M=\frac1M\sum_{b=1}^{M}|\delta_b|\to0.
\end{equation*}
The strict uniform-bias case is $\delta_b\equiv0$.
\end{assumption}

\begin{assumption}[Concentrated environment learning speed]
\label{assum:concentrated-speed}
There is a scalar $\beta>0$ such that
\begin{equation*}
    B_M=\frac1M\sum_{b=1}^{M}|\beta_b-\beta|\to0.
\end{equation*}
In particular, $M^{-1}\sum_{b=1}^{M}\beta_b\to\beta$.
\end{assumption}

The reason these assumptions are necessary can be seen from the aggregated frontier. Averaging removes jaggedness, but it does not by itself align task difficulty or environment learning speed. When blockwise cut mixing error tends to zero, the leading expected growth rate is
\begin{equation*}
    \frac1M\sum_{b=1}^{M}\beta_b x_b(u)(1-x_b(u)).
\end{equation*}
This becomes $\beta x_B(1-x_B)$ only if tasks are aligned enough that the task scores and speeds behave like one population. Even when all speeds are equal, persistent task dispersion subtracts from the scalar frontier through
\begin{equation*}
    \frac1M\sum_{b=1}^{M}x_b(1-x_b)
    =x_B(1-x_B)-\frac1M\sum_{b=1}^{M}(x_b-x_B)^2.
\end{equation*}
The theorem below proves the core convergence once the blockwise frontier condition, midpoint alignment, and speed concentration are in place.

\begin{theorem}[Benchmark aggregate produces the log-sigmoid law in the many-task limit]
\label{thm:aggregate-logsigmoid-frontier-limit-new}
Fix a compact log-time interval \(I \subseteq \mathbb{R}\). Suppose Conditions~\ref{cond:blockwise-cut-mixing}-\ref{cond:blockwise-initial-alignment}, Assumptions~\ref{assum:uniform-log-time-bias}-\ref{assum:concentrated-speed} hold for some \(t_{\mathrm{mid}} > 0\) and \(\beta > 0\). Then 
\[
    x_B(u) \stackrel{P}{\to} \ell_\beta(u), \quad \forall u \in I.
\]
Equivalently, for raw times \(t\) whose log coordinate \(u=\log(t/t_{\mathrm{mid}})\) lies in \(I\), using the notation from Definition~\ref{def:benchmark-task-score-average}, we have
\begin{equation}
    x_B^\dagger(t) = \frac{S_B(t)}{S_{\max}}\stackrel{P}{\longrightarrow}
    \frac{1}{1+(t_{\mathrm{mid}}/t)^\beta}.
    \label{eq:aggregate-raw-time-law-new}
\end{equation}
\end{theorem}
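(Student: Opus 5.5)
The plan is a triangle inequality with three error terms, each controlled by an earlier result or assumption. Here $\ell_\beta(u)=1/(1+e^{-\beta u})$ is the common logistic with zero shift. For fixed $u\in I$ I will write
\begin{equation*}
    |x_B(u)-\ell_\beta(u)|
    \le
    \frac1M\sum_{b=1}^{M}|x_b(u)-\ell_b(u)|
    +\frac1M\sum_{b=1}^{M}|\ell_b(u)-\ell_\beta(u)|.
\end{equation*}
The first sum is bounded by $R_M(I)$ from \eqref{eq:R-M-new}. By Lemma~\ref{lem:average-task-logistic-approx}, which uses Conditions~\ref{cond:blockwise-cut-mixing} and~\ref{cond:blockwise-initial-alignment}, it tends to zero in probability. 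So only the second, deterministic (or $\mathcal G_M$-measurable) term needs work. Its vanishing is purely a statement about the task parameters $(\beta_b,\delta_b)$.

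For the second term I will use the Lipschitz properties of the map $(\beta',\delta)\mapsto \sigma(\beta'(u-\delta))$, where $\sigma(z)=1/(1+e^{-z})$ and $0\le \sigma'\le 1/4$. Splitting through the intermediate curve $\sigma(\beta_b u)$ gives
\begin{equation*}
    |\ell_b(u)-\ell_\beta(u)|
    \le \tfrac14\beta_b|\delta_b| + \tfrac14|u|\,|\beta_b-\beta|
    \le \tfrac14\beta_+|\delta_b| + \tfrac14 \sup_{v\in I}|v|\,|\beta_b-\beta|.
\end{equation*}
Here the bound $\beta_b\le\beta_+$ from Condition~\ref{cond:blockwise-cut-mixing} handles the shift term. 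Compactness of $I$ handles the speed term. Averaging over $b$, the second term is at most $\tfrac14\beta_+D_M+\tfrac14\sup_I|v|\,B_M$. This tends to zero by Assumptions~\ref{assum:uniform-log-time-bias} and~\ref{assum:concentrated-speed}, in probability if those quantities are random.

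Combining the two bounds gives $\sup_{u\in I}|x_B(u)-\ell_\beta(u)|\le R_M(I)+\tfrac14\beta_+D_M+\tfrac14\sup_I|v|B_M\to0$ in probability. This is in fact uniform on $I$, which is stronger than the pointwise claim. The raw-time statement \eqref{eq:aggregate-raw-time-law-new} then follows from the substitution $u=\log(t/t_{\mathrm{mid}})$. Since $e^{-\beta u}=(t_{\mathrm{mid}}/t)^\beta$, we get $\ell_\beta(u)=1/(1+(t_{\mathrm{mid}}/t)^\beta)$, and Definition~\ref{def:benchmark-task-score-average} gives $x_B^\dagger(t)=S_B(t)/S_{\max}$.

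The main obstacle is not the triangle inequality but the correct invocation of Lemma~\ref{lem:average-task-logistic-approx}. That lemma is stated on a compact interval $I=[u_0,u_1]$, with initial alignment at the left endpoint $u_0$. I will take $u_0$ to be the left endpoint of the given $I$. If the interval used in Condition~\ref{cond:blockwise-initial-alignment} differs, I will enlarge $I$ to a compact interval containing both, which only changes constants like $e^{\beta_+|I|}$.

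A second subtlety is mixing deterministic and random convergence: $D_M$, $B_M$ and $A_M$ may depend on $\mathcal G_M$. Every bound above is pathwise, so convergence in probability of each piece, together with the fact that a sum of terms each converging in probability to zero also converges to zero, closes the argument.
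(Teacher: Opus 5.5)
Your proposal is correct and follows the paper's proof essentially step for step. Lemma~\ref{lem:average-task-logistic-approx} controls the first term via $R_M(I)$. The $1/4$-Lipschitz bound on the logistic link, together with $\beta_b\le\beta_+$ and compactness of $I$, bounds the second term by $\tfrac14(\sup_I|u|\,B_M+\beta_+D_M)$, and the triangle inequality then gives uniform convergence on $I$ before substituting $u=\log(t/t_{\mathrm{mid}})$. Your remarks on matching $u_0$ to the interval and on possibly random $D_M,B_M$ are careful additions that the paper leaves implicit. They do not change the argument.
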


\begin{proof}
We prove the statement for the normalized score \(x_B(u)\). Multiplication by a fixed fitted ceiling $S_{\max}$ gives the raw-score statement.

\emph{Step 1: replace observed task trajectories by task-level frontier limits.} By Lemma~\ref{lem:average-task-logistic-approx},
\begin{equation}
    \sup_{u\in I}\left|x_B(u)-\frac1M\sum_{b=1}^{M}\ell_b(u)\right|
    \le
    \frac1M\sum_{b=1}^{M}\sup_{u\in I}|x_b(u)-\ell_b(u)|
    =R_M(I)\to0
    \label{eq:aggregate-proof-step-one-new}
\end{equation}
in probability.

\emph{Step 2: align the task-level frontiers to the benchmark frontier.} Since the logistic link satisfies $|\sigmoid'(z)|\le1/4$ for all $z$, writing \(\ell_\beta(u) = \sigmoid(\beta u)\), we have
\begin{align}
    \sup_{u\in I}|\ell_b(u)-\ell_\beta(u)|
    &\le
    \frac14\sup_{u\in I}
    \left|\beta_b(u-\delta_b)-\beta u\right| \notag\\
    &\le
    \frac14\left(\sup_{u\in I}|u|\,|\beta_b-\beta|+\beta_b|\delta_b|\right).
    \label{eq:aggregate-lipschitz-new}
\end{align}
The uniform speed bound in Condition~\ref{cond:blockwise-cut-mixing} gives $\beta_b|\delta_b|\le \beta_+|\delta_b|$. Averaging \eqref{eq:aggregate-lipschitz-new} over tasks and using Assumptions~\ref{assum:uniform-log-time-bias} and~\ref{assum:concentrated-speed},
\begin{equation}
    \frac1M\sum_{b=1}^{M}\sup_{u\in I}|\ell_b(u)-\ell_\beta(u)|
    \le
    \frac14\left(\sup_{u\in I}|u|\,B_M+\beta_+D_M\right)
    \to0.
    \label{eq:aggregate-proof-step-two-new}
\end{equation}

\emph{Step 3: combine the two approximations.} The triangle inequality gives
\begin{align*}
    \sup_{u\in I}|x_B(u)-\ell_\beta(u)|
    &\le
    \sup_{u\in I}\left|x_B(u)-\frac1M\sum_{b=1}^{M}\ell_b(u)\right| \\
    &\quad +
    \frac1M\sum_{b=1}^{M}\sup_{u\in I}|\ell_b(u)-\ell_\beta(u)|.
\end{align*}
The first term converges to zero by \eqref{eq:aggregate-proof-step-one-new}; the second converges to zero by \eqref{eq:aggregate-proof-step-two-new}. This proves uniform convergence in probability of $x_B$ to $\ell_\beta$ on $I$. Substituting $u=\log(t/t_{\mathrm{mid}})$ gives \eqref{eq:aggregate-raw-time-law-new}.
\end{proof}

\begin{readingbox}{How does the benchmark aggregate produce the empirical scaling law}
This is the theorem that corresponds to the empirical scaling law. The aggregate averages many frontier expansion processes on task-specific graphs to produce an emergent smooth curve in the many-task limit. For the convergence to hold, we need blockwise cut mixing that ensures task-level product frontier growth rates, the small-score-unit condition that makes jump noise vanish in the aggregate, and the midpoint and speed assumptions that prevent a mixture of incompatible sigmoids. Under these conditions, the benchmark average can finally converges to a single log-sigmoid law in \eqref{eq:aggregate-raw-time-law-new}, instead of being merely S-shaped.
\end{readingbox}

\subsection{Graph Self-similarity Induces Log Scale for Time Axis}
\label{subsec:self-similar-log-time}

The previous subsections explain why frontier expansion gives logistic dynamics once progress is measured in an effective learning coordinate. We now explain why this coordinate should often be logarithmic in raw interaction time.

Recall that the frontier process already takes place on the model-conditioned task graph \(G_K=(E,\mathcal E_K)\). An edge \(j\to i\) exists means that an unlocked source node \(j\) can help unlock a target node \(i\), with field strength (marginal improvement in scores) \(K_{ij}\). The weighted cut \(C(L,U)\) measures the total influence crossing from the unlocked set \(U\) to the locked set \(L\).

The remaining question is how raw interaction time moves the agent through the latent task graph. The answer we propose is graph-dependent and geometric: harder regions of the task graph require more search effort to expose. If this search geometry is approximately self-similar across difficulty scales, then equal additive increases in difficulty require multiplicative increases in raw time. This is why the natural frontier coordinate becomes \(\log t\).

\begin{definition}[Difficulty scale on task-graph edges]
\label{def:edge-difficulty-scale}
For each influence edge \(j\to i\in\mathcal E_K\), let
\[
    \rho_{ij}\ge0
\]
be the difficulty scale of that edge. Smaller \(\rho_{ij}\) means that the influence from \(j\) to \(i\) becomes usable at lower search effort; larger \(\rho_{ij}\) means that the influence requires deeper search or longer interaction to become usable.

For \(r\ge0\), define the exposed influence matrix
\[
    K^{(\le r)}_{ij}
    =
    \begin{cases}
        K_{ij}, & \text{if } j\to i\in\mathcal E_K \text{ and } \rho_{ij}\le r,\\
        0, & \text{otherwise.}
    \end{cases}
\]
The corresponding exposed field is
\[
    h_i^{(\le r)}(n)
    =
    \sum_{j\in E}K^{(\le r)}_{ij}n_j .
\]
\end{definition}

Increasing \(r\) reveals more of the same task graph. At small \(r\), only low-difficulty edges contribute to the field. At larger \(r\), higher-difficulty edges also contribute to the frontier cut. Thus \(r\) is an effective coordinate for how much of the task graph has become usable.

\begin{definition}[Search volume at difficulty scale]
\label{def:search-volume-scale}
Let \(k\in \mathbb{N}^+\) index difficulty levels, with level \(k\) corresponding to the scale band \([k\Delta r,(k+1)\Delta r)\) where \(\Delta r>0\). Define the set of task-graph edges at level \(k\) by
\[
    \mathcal E_k
    =
    \{j\to i\in\mathcal E_K:
    k\Delta r\le \rho_{ij} < (k+1)\Delta r\}.
\]
Let \(N_k=|\mathcal E_k|\) be the number of edges whose difficulty lies in this scale band. We interpret \(N_k\) as a discrete proxy of search-volume: it counts how much edge structure must be exposed at difficulty level \(k\).
\end{definition}

Search volume is different from score measure. score measure \(\mu\) measures how much benchmark value is obtained once nodes are unlocked. Search volume measures how much task-graph structure must become usable before the corresponding frontier influence can appear.

\begin{assumption}[Self-similar task-graph edge growth]
\label{ass:self-similar-edge-growth}
There exist constants \(b>1\) and \(c_-,c_+>0\) such that, throughout the scale range of interest,
\[
    c_- b^k
    \le
    N_k
    \le
    c_+ b^k .
\]
\end{assumption}

This assumption says that each additive increase in difficulty level multiplies the amount of relevant task-graph edge structure by a factor comparable to \(b\). Since level \(k\) corresponds to scale \(r_k=k\Delta r\), this means
\[
    N_k
    \asymp
    b^{r_k/\Delta r}
    =
    \exp\left(\frac{\log b}{\Delta r}r_k\right).
\]
We write
\[
    h=\frac{\log b}{\Delta r}
\]
for the corresponding search-entropy parameter. Informally, \(h\) measures how quickly the amount of task-graph structure grows as difficulty scale increases.

The discrete assumption motivates the continuum approximation
\[
    V(r)=V_0e^{hr},
    \qquad V_0>0,
\]
where \(V(r)\,\mathrm dr\) is the search effort needed to expose the task-graph edge structure in the difficulty band \([r,r+\mathrm dr]\). Hence the cumulative search volume up to difficulty scale \(r\) is
\[
    \mathcal V(r)
    =
    \int_0^r V(s)\,\mathrm ds
    =
    \frac{V_0}{h}(e^{hr}-1).
\]

\begin{assumption}[Linear search-effort supply]
\label{ass:linear-search-effort-supply}
Let \(A(t)\) be the cumulative search effort supplied by raw interaction time \(t\). Assume
\[
    A(t)=\nu t
\]
for some \(\nu>0\).
\end{assumption}

This assumption says that raw interaction time is proportional to available search effort. The search may be adaptive, but one unit of raw time does not by itself expose exponentially many edges of the task graph.

\begin{proposition}[Self-similar task geometry gives logarithmic time]
\label{prop:self-similar-log-time}
Under Assumptions~\ref{ass:self-similar-edge-growth} and~\ref{ass:linear-search-effort-supply}, the difficulty scale exposed by raw time \(t\) satisfies
\[
    r(t)
    =
    \frac1h
    \log\left(1+\frac{h\nu}{V_0}t\right).
\]
In particular, in the scale-free regime \(t\gg V_0/(h\nu)\),
\[
    r(t)=\frac1h\log t+O(1).
\]
\end{proposition}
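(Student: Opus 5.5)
The plan is to define the exposed difficulty scale $r(t)$ implicitly as the scale at which the cumulative search volume equals the supplied effort, and then invert. Concretely, the difficulty scale reached by raw time $t$ is the largest $r$ such that all edge structure with difficulty at most $r$ has been exposed. Under the continuum approximation $V(r)=V_0e^{hr}$ motivated by Assumption~\ref{ass:self-similar-edge-growth}, the cost of exposing everything up to scale $r$ is $\mathcal V(r)=\frac{V_0}{h}(e^{hr}-1)$. The defining relation is therefore
\[
    \mathcal V(r(t)) = A(t) = \nu t,
\]
using Assumption~\ref{ass:linear-search-effort-supply}. Since $\mathcal V$ is continuous and strictly increasing from $\mathcal V(0)=0$ to $\infty$, this equation has a unique solution $r(t)\ge0$ for every $t\ge0$, so $r(t)$ is well defined.

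Next I solve explicitly. From $\frac{V_0}{h}(e^{hr}-1)=\nu t$ we get $e^{hr}=1+\frac{h\nu}{V_0}t$, and hence
\[
    r(t)=\frac1h\log\left(1+\frac{h\nu}{V_0}t\right).
\]
For the scale-free regime I write $1+\frac{h\nu}{V_0}t=\frac{h\nu}{V_0}t\,(1+\frac{V_0}{h\nu t})$. This gives
\[
    r(t)=\frac1h\log t+\frac1h\log\frac{h\nu}{V_0}+\frac1h\log\Bigl(1+\frac{V_0}{h\nu t}\Bigr).
\]
The last term is at most $\frac1h\log 2$ once $t\ge V_0/(h\nu)$, and it tends to $0$ as $t$ grows. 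Together with the constant middle term, this yields $r(t)=\frac1h\log t+O(1)$.

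The main obstacle is not the algebra but justifying the continuum model from the discrete bounds. If I work directly with Assumption~\ref{ass:self-similar-edge-growth} rather than with the exact exponential density, I would proceed as follows.
\begin{itemize}
    \item Bound the cumulative count $\sum_{k\le K}N_k$ between geometric sums of order $\frac{c_\pm}{b-1}(b^{K+1}-1)$.
    \item Take $r(t)$ to be the level $K\Delta r$ at which this cumulative count first exceeds $\nu t$ (assuming unit effort per edge).
    \item Observe that the two-sided geometric bounds pin down $K$ to within $\log_b(c_+/c_-)+O(1)$ levels.
\end{itemize}
This gives $r(t)=\frac{\Delta r}{\log b}\log t+O(1)=\frac1h\log t+O(1)$, where the constants are now absorbed into the $O(1)$. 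In this version the exact formula holds only in the continuum idealization, while the asymptotic statement is robust to the discrete assumption. I would state the exact formula as the continuum result and give the discrete sandwich argument as the justification of the $O(1)$ claim.
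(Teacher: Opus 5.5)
Your proposal is correct and follows the paper's argument: define $r(t)$ by $\mathcal V(r(t))=A(t)=\nu t$ with the continuum volume $\mathcal V(r)=\frac{V_0}{h}(e^{hr}-1)$, then invert to get both the exact formula and the $\frac1h\log t+O(1)$ asymptotic. Your explicit remainder bound and the discrete geometric-sum sandwich go beyond the paper's proof. The paper tacitly treats the exact density $V(r)=V_0e^{hr}$ as if it followed from the bounds $c_-b^k\le N_k\le c_+b^k$. You correctly note that only the $O(1)$ statement, with $\log_b(c_+/c_-)$ absorbed into the constant, is robust under those bounds alone.
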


The proposition is just the inversion of cumulative search volume. The scale \(r(t)\) is defined by \(\mathcal V(r(t))=A(t)\). Since \(\mathcal V(r)\) grows exponentially in \(r\) while \(A(t)\) grows linearly in \(t\), the exposed difficulty scale grows logarithmically in raw time.

\underline{\textit{Connection to the frontier law.}}
The earlier frontier law describes score growth once progress is measured in the coordinate that makes task-graph influence usable. Here that coordinate is \(r\). As \(r\) increases, more entries of \(K\) are exposed through \(K^{(\le r)}\), the field \(h_i^{(\le r)}\) grows, and the frontier cut from unlocked nodes to locked nodes expands through the same mechanism as before.

If the exposed task graph is approximately weighted cut-mixed at each scale, with a scale-stationary effective frontier coefficient, then the coarse frontier dynamics in the scale coordinate take the product form
\[
    \frac{\mathrm dx}{\mathrm dr}
    =
    \gamma x(1-x),
\]
where \(\gamma>0\) is the frontier speed per unit difficulty scale. This is the earlier weighted-cut frontier mechanism written in the coordinate \(r\).

Since \(r(t)=h^{-1}\log t+O(1)\) in the scale-free regime, a unit increase in \(\log t\) corresponds to approximately \(1/h\) units of difficulty-scale progress. Therefore
\[
    \frac{\mathrm dx}{\mathrm d\log t}
    =
    \frac{\gamma}{h}x(1-x).
\]
Thus the fitted log-time frontier speed is
\[
    \beta=\frac{\gamma}{h}.
\]
Solving the log-time logistic equation gives
\[
    x(t)
    =
    \frac{1}{1+(t_{\mathrm{mid}}/t)^\beta}.
\]

\begin{readingbox}{Why self-similar task geometry gives log time}
The weighted-cut argument explains the frontier factor \(x(1-x)\), as previously shown. Self-similar task-graph geometry explains why the time coordinate is measured by log scale. If each additive increase in difficulty scale multiplies the number of relevant task-graph edges by a factor \(b\), then search volume grows like \(e^{hr}\), where \(h=(\log b)/\Delta r\). Since raw interaction time supplies only \(O(t)\) search effort, the exposed difficulty scale satisfies \(r(t)=h^{-1}\log t+O(1)\). Therefore a logistic frontier law in task-graph scale becomes a logistic law in \(\log t\).
\end{readingbox}

\subsection{Discussion and Limitations}
\label{subsec:theory-discussion-limitations}

The derivation above gives a sufficient mechanism for the empirical log-sigmoid law. Its assumptions are useful precisely because they identify concrete ways in which the log-sigmoid limit can fail. We therefore treat it as a mechanistic account of the observed regime rather than a claim that all environment-learning curves must be logistic.

\paragraph{Finite score granularity.}
The single-task theory allows finite tasks to remain visibly jagged. Real tasks may contain a few large hidden tests, decisive proof obligations, or high-weight rubric cells. When such score units remain macroscopic, the martingale error term need not vanish for that task, and the realized best-so-far curve can exhibit long plateaus followed by sudden jumps. The aggregate theorem only requires that such coarse units do not dominate the benchmark average. If a non-negligible fraction of benchmark score measure is carried by coarse tasks, the aggregate curve may retain visible jumps or large run-to-run variance even when the average drift is approximately logistic.

\paragraph{Weighted cut mixing.}
Weighted cut mixing is the core condition of the scaling law. It says that every macroscopic unlocked--locked frontier cut sees approximately product-measure influence. If the task graph has persistent bottlenecks, modules, prerequisite chains, or separated high-transfer and low-transfer regions, then the frontier remembers where it is in the graph, not only how much score measure has been unlocked. In that case, the natural limit is no longer a one-dimensional logistic equation. One should instead expect multi-type dynamics, delayed takeoff, multiple inflection regions, long plateaus, or a sum of sigmoids corresponding to different task modules.

\paragraph{Attainable support and the fitted ceiling.}
The analysis treats \(S_{\max}\) as the stable score measure of an effective reachable support. This is appropriate when the set of practically attainable score units is fixed over the fitted time window. However, if longer interaction changes what is effectively reachable---for example, if weak transfer routes become usable only at much longer horizons---then the denominator of the normalized score is itself moving. A short-window fit may still provide a useful effective ceiling, but that ceiling should not be interpreted as an indefinite upper bound.

\paragraph{Task midpoint alignment.}
The aggregate theorem assumes that a single benchmark-level midpoint removes most task-level log-time bias. If residual midpoint shifts \(\delta_b\) remain widely dispersed, the benchmark average becomes a convolution of shifted task frontiers. Such a curve may still be smooth and S-shaped, but it need not satisfy the scalar logistic ODE. In this regime, the fitted midpoint and slope are window-dependent summaries of the task-midpoint distribution rather than intrinsic benchmark constants.

\paragraph{Learning-speed concentration.}
The aggregate theorem also assumes that task frontier speeds concentrate around a common value. If some task families have persistently steep frontiers while others have persistently shallow frontiers, then the average of task-level sigmoids is generally not itself a sigmoid. Early progress may be dominated by fast tasks, while later progress may be governed by slower tasks. A single fitted \(\beta\) may therefore reflect task-family composition rather than a true scalar environment-learning speed.

\paragraph{Choice of time coordinate.}
The log-time coordinate is justified by the graph self-similarity hypothesis in which unit increments of difficulty require multiplicative increases in search effort. Some environments, however, have characteristic raw-time cycles: fixed evaluation delays, daily data refreshes, hard deadlines, staged curricula, or batch feedback. In such cases, another time coordinate, or a piecewise time model, may be more appropriate than a single log-time transformation.

Together, these limitations clarify the scope of the proof. The appendix gives one route from frontier expansion to the observed population-level log-sigmoid law. A fuller theory would classify the non-logistic limits produced by coarse score units, non-mixing graphs, moving attainable supports, dispersed midpoints, heterogeneous learning speeds, and non-scale-free feedback schedules.

%% file: sections/scaling_law_discussion.tex
\section{More Discussion on the Scaling Law Shapes}
\label{sec:scaling-law-discussion}
\label{sec:scurve-mechanistic-reading}

\underline{\textit{A mechanistic reading selects the log-sigmoid.}}
Because the candidate S-curves fit almost equally well
(Table~\ref{tab:scurve-family-comparison}; the log-sigmoid/log-probit degeneracy
has been known since Berkson~\citep{berkson1944application}), the choice cannot
rest on fit; we make it on mechanism. Written as a growth law for the normalized
score $y=S/S_{\max}$, each candidate's rate makes a different statement about
what drives progress and what limits it.
\begin{itemize}
  \item \textbf{Log-sigmoid:} $\frac{dy}{d\tau}=\beta\,y(1-y)$ in log time
  $\tau=\ln t$. The detailed derivation is given in
  Section~\ref{sec:why-log-sigmoid} and Appendix~\ref{sec:theory}; here the key
  mechanistic reading is that \(y\) is the unlocked attainable score mass and
  \(1-y\) is the remaining locked score mass. If the aggregate influence crossing
  the unlocked--locked frontier is approximately proportional to the product of
  these masses, then progress follows \(y(1-y)\). The log-time coordinate comes
  from the self-similar search geometry of the task graph. This reading is
  consistent with two empirical checks: Section~\ref{sec:stateful-stateless-analysis}
  shows that continuous stateful runs outperform equal-budget repeated sampling,
  so progress is not explained by independent attempts alone; and
  Section~\ref{sec:case-study} shows a finite task advancing through sparse but
  cumulative breakthroughs, where an early working pipeline makes later repairs
  searchable. Inflection occurs at \(y=0.5\) (symmetric).

  \item \textbf{Log-Gompertz:} $\frac{dy}{d\tau}=c\,y\ln(1/y)$. The
  $\ln(1/y)$ term is the signature of an engine that winds down
  multiplicatively as the system matures (e.g., tumour growth, where the
  proliferating fraction shrinks). It is front-loaded, with inflection at
  $y=1/e\approx0.37$: the relative growth rate diverges as $y\to0$, so a tiny
  system grows explosively because the limitation it will eventually hit does
  not yet exist. Experience acquisition is the opposite---its early phase is
  \emph{slow} because a foothold must first be bootstrapped, not fast for lack
  of a brake.

  \item \textbf{Log-Probit:} $\frac{dy}{d\tau}=\frac{1}{\sigma}\,
  \varphi\!\big(\Phi^{-1}(y)\big)$, a sweep across a \emph{log-normal}
  distribution of difficulties---difficulties formed as products of many
  \emph{independent} factors (a multiplicative central-limit argument).
  Experience acquisition is path-dependent rather than a product of independent
  factors, so this microfoundation does not apply, even though probit and the
  logistic are empirically near-indistinguishable.

  \item \textbf{Weibull CDF:} $y(t)=1-\exp\{-(t/\lambda)^\beta\}$, equivalently
  $\frac{dy}{dt}=h(t)(1-y)$ with
  $h(t)=\frac{\beta}{\lambda}(t/\lambda)^{\beta-1}$. Its hazard is a function
  of raw elapsed time, while accumulated progress enters only through the
  survival term $(1-y)$. This makes it the natural first-passage or repeated
  sampling baseline. For a single 0--1 reward task with independent attempts of
  success probability $p$,
  $P_{\rm pass}(k)=1-(1-p)^k=1-\exp[-k(-\ln(1-p))]$, which is an exponential
  CDF, the $\beta=1$ Weibull case (approximately $1-\exp(-pk)$ for small
  $p$). Such a curve improves because repeated independent attempts shrink the
  remaining failure mass, not because state carried across attempts makes later
  attempts more effective. Section~\ref{sec:stateful-stateless-analysis} shows
  that stateful learning beats this repeated-sampling mechanism under the same
  total budget, and the Weibull CDF also fits worse than the log-sigmoid in
  Table~\ref{tab:scurve-family-comparison}. Individual improvements may have a
  first-passage flavor, but the macroscopic best-so-far trajectory accumulates
  through dependent, stateful steps rather than through a raw-time hazard with no
  accumulated-progress factor.

  \item \textbf{Log-linear:} $\frac{dy}{d\tau}=b$ (constant). With no
  saturating term it grows without bound and cannot level off, contradicting the
  many tasks that reach a ceiling within the budget; it attains by far the worst
  fit.
\end{itemize}
We therefore read the log-sigmoid not merely as the best-fitting S-curve but as
the curve whose \(y(1-y)\) rate law matches the frontier-expansion interpretation:
unlocked score mass supplies reusable capability, while locked score mass bounds
the remaining opportunity for improvement.
This is a mechanistic preference, not an empirical exclusion---the data cannot
separate the symmetric families. It is falsifiable through the inflection: a
symmetric peak near $y=0.5$ is consistent with the logistic (and probit), a
front-loaded peak near $0.37$ would favor Gompertz, and a back-loaded peak near
$0.63$ would favor Weibull.

%% file: sections/additional_relatedwork.tex
\section{Additional Related Work}
\label{sec:additional-related-work}

\subsection{Benchmarks Not Suitable for Measuring Self-Evolution}
\label{sec:additional-related-work-not-self-evolution}

Many benchmarks evaluate whether models or agents can answer questions, complete tasks, or produce correct artifacts, without making self-evolution itself the object of measurement. We use this label in a narrow evaluation sense: these benchmarks may still involve reasoning, tools, iteration, or environment interaction, but their primary reported quantity is not related to learning. It is usually final-answer accuracy, task success, pass rate, artifact quality, reproduction fidelity, or human-time horizon. In such settings, interaction is typically a means to produce a final answer or artifact, rather than the quantity being measured.

At the shortest and least interactive end, many classic capability benchmarks, such as MMLU~\cite{hendrycks2021mmlu}, GPQA~\cite{rein2023gpqa}, AIME~\cite{maa2026aime}, and closed-form coding or math evaluations~\cite{chen2021humaneval,hendrycks2021math}, ask models to solve static problems and report final accuracy. These benchmarks can be difficult and useful, but the model's actions do not create a changing environment, and feedback from the benchmark is not exposed as experience for later adaptation. They therefore measure static knowledge and reasoning accuracy rather than self-evolution.

Agentic software benchmarks increase realism by placing agents in codebases, development workflows, or larger construction tasks, but many still reduce evaluation to the final result. SWE-bench~\cite{jimenez2024swebench} evaluates issue repair in existing repositories; development and evolution benchmarks such as RoadmapBench~\cite{xu2026roadmapbench} and SWE-EVO~\cite{le2025sweevo} evaluate incremental changes over existing projects or release histories; construction or reconstruction benchmarks such as NL2Repo-Bench~\cite{ding2025nl2repobench} and ProgramBench~\cite{yang2026programbench} evaluate complete repository generation or recovery of program behavior; and SWE-AGI~\cite{zhang2026sweagi} evaluates specification-driven system construction under a fixed MoonBit API scaffold. FrontierCode~\cite{cognition2026frontiercode} raises the bar on readiness for production by evaluating whether coding agents produce maintainable, mergeable changes under repository-specific quality rubrics. Their task formats differ, but the shared evaluation target is final patch correctness, repository quality, program behavior, maintainability, or test pass rate rather than the trajectory by which an agent improves from feedback.

Some evaluations broaden realism without becoming agentic learning benchmarks. GDPval~\cite{patwardhan2025gdpval}, for example, measures model performance on economically valuable professional deliverables across occupations. Agents' Last Exam~\cite{sun2026agentslastexam} is a close comparator in realism and autonomy: it evaluates generalist computer-use agents on economically valuable professional workflows in real Windows or Linux sandboxes, with CLI and GUI access, professional software, hidden references, and verifiable final artifacts. The distinction is the evaluation target rather than agenticity. Agents' Last Exam is oriented toward completion: agents work toward a final deliverable, and the benchmark reports final scores, pass rates, cost, or time after the artifact is graded. Its trajectories are valuable for replay, audit, and failure analysis, but learning from the benchmark's feedback over a run is not the main quantity being measured.

Other benchmarks use realistic or long-horizon workflows that could support learning, but their published protocols mainly measure final outcomes. HCAST~\cite{rein2025hcast} and METR time-horizon evaluations~\cite{kwa2025timehorizon} calibrate model or agent success by human task duration. Terminal-Bench~\cite{merrill2026terminalbench}, RE-Bench~\cite{wijk2024rebench}, PaperBench~\cite{starace2025paperbench}, CORE-Bench~\cite{siegel2024corebench}, PRBench~\cite{qiu2026prbench}, ReplicationBench~\cite{ye2025replicationbench}, Collider-Bench~\cite{faroughy2026colliderbench}, and ScienceAgentBench~\cite{chen2024scienceagentbench} involve executable software tasks, terminal workflows, or scientific reproduction. These settings are more compatible with learning than short closed-form tasks, but their central questions are typically whether a task is completed, an artifact works, a paper is reproduced, or a final score is high enough under a budget. PaperBench is a useful example of the distinction: it evaluates agentic replication of 20 ICML papers from scratch using detailed rubrics and many gradable subtasks, but the reported score is still replication quality rather than improvement per unit of feedback.

\subsection{Benchmarks Suitable for Measuring Learning or Self-Evolution}
\label{sec:additional-related-work-learning}

Another line of work is more suitable for measuring learning or self-evolution because it exposes new context, serialized streams, iterative feedback, or long executable workspaces. These benchmarks are the closest conceptual comparisons to \benchmark{}, but they differ in what creates the experience stream and what quantity is ultimately scored. We group them by evaluation interface: learning from supplied context, learning over serialized streams or task sequences, and iterative optimization. This grouping is descriptive rather than an ordering by interaction strength. Sequential benchmarks can be highly interactive, and optimization benchmarks can expose diagnostic feedback; the key question is whether the agent's own behavior shapes the future experience it receives within a long executable task.

Context-learning benchmarks study the least agentic form of adaptation. CL-bench~\cite{dou2026clbench} and CL-bench Life~\cite{dou2026clbenchlife} test whether models can use newly provided professional or personal context to answer downstream questions or perform tasks grounded in that context. This counts as learning from external information, but the stream of episodes is usually not shaped by the agent's own actions, and there is no changing environment. The model is asked to use context, not to discover and exploit feedback through sustained interaction.

Sequential learning benchmarks introduce a stream-like structure. EvaLearn~\cite{dou2025evalearn} groups 648 problems into 182 sequences and evaluates learning capability and efficiency as models solve related problems in order. Continual Learning Bench~\cite{asawa2026clbench} explicitly reports improvement over sequential experience and introduces a stateful-versus-stateless comparison similar to Section~\ref{sec:stateful-stateless-analysis}. However, its synthetic task sequences have explicit subtask boundaries, whereas \benchmark{} uses single continuous problems. We therefore partition by time rather than by subtask, which changes how the stateful and stateless settings are defined. StreamBench~\cite{wu2024streambench} also reports improvement over feedback streams. LLF-Bench~\cite{cheng2023llfbench}, LifelongAgentBench~\cite{zheng2025lifelongagentbench}, and Evo-Memory~\cite{wei2025evomemory} study related questions through language feedback, interdependent tasks, experience replay, or evolving memory. These works establish that static capability and learning ability are distinct axes. The difference from \benchmark{} is not that their interaction is necessarily weaker; rather, the sequence is usually organized by the benchmark as a series of instances, tasks, feedback events, or memory updates. In \benchmark{}, the sequence is endogenous to one long task: the agent plans, edits artifacts, submits attempts, receives diagnostics, and thereby affects what it learns next.

Iterative optimization benchmarks are closer to \benchmark{} because they make repeated attempts and empirical feedback central to performance. Frontier-Eng~\cite{chi2026frontiereng} studies improvement over generative optimization cycles, while MLS-Bench~\cite{lyu2026mlsbench} and Frontier-CS~\cite{mang2025frontiercs} evaluate test-time discovery or open-ended CS optimization with visible metrics, submissions, or trajectory-level reporting. MLE-bench~\cite{chan2024mlebench} is also naturally viewed in this family: agents work in a free Kaggle-style workspace, the paper includes studies of how performance scales with resources, and its 100-hour analysis grades snapshots of the agent's best attempt over elapsed time. ALE-Bench~\cite{imajuku2025alebench} is another adjacent long-horizon setting driven by fixed objectives, even though its original protocol primarily reports outcomes on algorithm engineering rather than learning metrics.

FrontierSWE~\cite{chu2026frontierswe} and AutoLab~\cite{xu2026autolab} are the closest prior work. Both evaluate long-horizon agentic improvement over executable artifacts through repeated edits, experiments, and empirical feedback. FrontierSWE focuses on software engineering and performance-tuning tasks, with reported average agent runtime of roughly 3--4 hours across models and task categories. AutoLab gives agents working but deliberately suboptimal programs across systems, CUDA, model development, and puzzle-style optimization tasks; it directly measures iterative improvement and should not be treated as a non-learning benchmark, although performance curves are not reported as the main result and most current tasks run for 2--4 hours, with only a small minority exceeding 6 hours. Under the shared premise of long-horizon agentic tasks, the main distinction is domain coverage: FrontierSWE is concentrated on software engineering, AutoLab emphasizes research and engineering tasks centered on optimization, and \benchmark{} spans a broader set of executable domains. \benchmark{} also uses a day-scale task contract and makes the time-aligned trajectory itself the primary object of measurement, with metrics for improvement area, regression, and active learning span.

Taken together, these benchmarks cover important pieces of the problem: realistic executable work, learning over organized streams, and iterative optimization under feedback. \benchmark{} targets their intersection. It evaluates within-run self-evolution in long-horizon executable environments where the experience stream arises from the task itself, the agent can influence what it observes next, and evaluation uses a general-purpose agent harness rather than a benchmark-specific learning scaffold. Its trajectory-level metrics report improvement, regression, active learning span, and final performance over the same continuous run.

\subsection{Scaling Laws for LLMs and Agents}
\label{sec:additional-related-work-scaling-laws}

Classical neural and language-model scaling laws mainly study pretraining, modeling reducible loss as a function of model size, data, and training compute. Kaplan et al.~\cite{kaplan2020scaling} showed that language-modeling loss follows power-law relationships over several axes of scale, and Chinchilla-style work refined the compute-optimal allocation between model parameters and training tokens~\cite{hoffmann2022training}. More recent work studies how benchmark performance, rather than loss, changes with scale; because accuracy-like metrics are bounded, these curves are often better described by sigmoidal or other saturating forms~\cite{owen2024predictable,ruan2024observational,bhagia2024taskscaling,zhang2026prescriptive}. Our log-sigmoid environment learning curves in Section~\ref{sec:emerging-scaling} are closest in mathematical form to this bounded-performance line of work, but the independent variable is elapsed environment interaction within a task rather than pretraining compute or model scale.

Test-time and inference-time scaling provide a second point of comparison: different test-time methods exhibit their own empirical scaling behavior. One line scales plain repeated sampling: Evaluating Large Language Models Trained on Code introduced pass@\(k\) as a repeated-sampling evaluation for code generation~\cite{chen2021humaneval}, AlphaCode used large-scale sampling, filtering, and selection to improve competitive-programming solve rates~\cite{li2022alphacode}, and Large Language Monkeys showed that repeated sampling can scale coverage across orders of magnitude when outputs are automatically verifiable~\cite{brown2024largelanguagemonkeys}. A second line studies how test-time compute should be allocated across search, revision, voting, and verifier- or reward-guided strategies~\cite{snell2024testtime,wu2024inference}. A third line scales long-chain-of-thought inference in reasoning models: OpenAI's o1 report made train-time reinforcement learning compute and test-time thinking compute explicit scaling axes~\cite{openai2024learningreason}, and DeepSeek-R1 studies how reinforcement learning can elicit reasoning behaviors in LLMs~\cite{deepseekai2025deepseekr1}. These studies mostly scale the compute spent producing or selecting answers on traditional math, code, proof, or game-style tasks, and the reported functional forms are often local log-linear trends, exponentiated power laws, or frontiers for allocating compute. \benchmark{} instead scales \emph{interaction time}: it measures how an agent's best-so-far performance changes as elapsed time in an executable environment increases and as the agent repeatedly reads, acts, receives feedback, and revises. This gives broader task coverage and a different scaling object than sampling from a static prompt or selecting among answers.

Recent agentic test-time scaling work moves closer to our setting by allowing agents to acquire information from an environment over time. OpenAI's CUA report observed that computer-use performance improves when more steps are allowed~\cite{openai2025cua}, and BrowseComp reports smoother gains as browsing effort increases~\cite{openai2025browsecomp}. Thinking vs.\ Doing explicitly frames interaction length as a test-time scaling axis for web agents~\cite{shen2025thinkingdoing}. Closest in spirit, Mang et al.~\cite{mang2026humansstillbeatai} compare agents and humans on a long-horizon coding contest setting where agents can try, observe, and revise over time. They report an informative human reference curve and find that current agents plateau while strong humans continue improving. However, the measurement is concentrated in one contest-style domain and a limited task set. The human improvement curve is also difficult to extrapolate: the observed segment appears closer to linear improvement over time, but it may cover only the early portion of a longer sigmoid-like trajectory.

Scaling has also been studied in reinforcement learning. Hilton et al.~\cite{hilton2023rlscaling} introduced intrinsic performance to obtain smooth power-law relations between environment interactions, model size, and RL performance. Recent LLM RL scaling work fits sigmoidal compute-performance curves for post-training and uses smaller runs to predict larger RL runs~\cite{khatri2025scalingrlcompute}. Both RL and our evaluation measure learning from environment feedback rather than from human-labeled data. The practical difference is resolution and coverage. RL learning runs are expensive, so scaling evidence is usually gathered on narrower task distributions, fewer distinct environments, and fewer long trajectories. In \benchmark{}, the learning algorithm is in-context learning (ICL): the agent absorbs observations, diagnostics, and prior attempts into its context and uses them to improve subsequent actions. Because ICL is cheaper to repeat than RL training, \benchmark{} can measure environment learning across 134 executable task environments, multiple domains, full time-aligned trajectories, and repeated trials, yielding substantially broader environment coverage than is typical in RL scaling studies. The resulting fits go beyond a bounded-performance analogy to RL scaling: they provide a higher-resolution measurement of how agents learn from interaction.

%% file: sections/appendix.tex
\section{Additional Benchmark and Experiment Details}
\label{sec:additional-benchmark-details}

\subsection{Estimating the With- and Without-Experience Curves}
\label{sec:experience-gain-estimation}

This describes how the two curves in Section~\ref{sec:stateful-stateless-analysis} are estimated. The with-experience curve averages three 12-hour best-so-far curves per task, then averages across tasks. For the without-experience curve, let $u_1,\ldots,u_n$ be the scores of the $n$ independent attempts on a task; at elapsed time $t=k\tau$ we estimate the expected best of $k$ attempts as
\begin{equation}
\hat{u}_{k\tau} = \mathbb{E}_{S}\big[\max_{i\in S} u_i\big],
\end{equation}
where $S$ is sampled uniformly from the $\binom{n}{k}$ size-$k$ subsets of the $n$ attempts. This is the score-valued, without-replacement extension of the pass@$k$ estimator~\cite{chen2021humaneval}; when each $u_i$ is binary, it reduces to the usual pass@$k$ estimate.

\input{sections/gw_case_study_details}

\clearpage

\input{sections/harness_level_continuation_ablations}

\clearpage

\input{sections/task_by_task_specifications}

\subsection{Per-Task Learning Curves}
\label{sec:all-curves}

Figures~\ref{fig:curves-all-1}--\ref{fig:curves-all-21} show learning curves for all tasks in the benchmark, grouped by capability family. Each plot shows raw task score versus elapsed time over the 12-hour budget; solid lines are the best-so-far envelope, faint dashed dots are individual submissions, and pale segments after the marked best indicate later submissions without improvement. On a few tasks the y-axis is zoomed to the main score band, with extreme outlier submissions off-axis (noted under each plot). Five agents are compared: Claude Opus 4.8, GPT-5.5, GPT-5.4, GLM-5.1, and DS-V4-Pro. The 18 representative tasks in the main text (Figure~\ref{fig:environment-data-curves-copy}) are not repeated here.

\begin{figure}[p]
\centering
\begin{subfigure}[b]{0.48\linewidth}
\includegraphics[width=\linewidth]{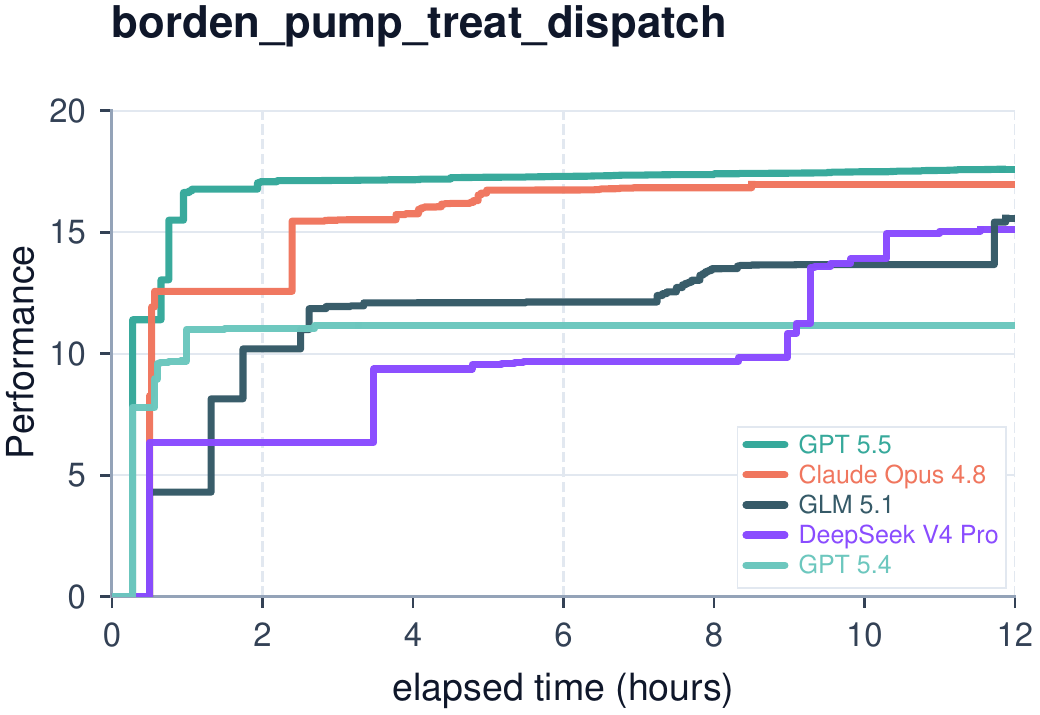}
\end{subfigure}
\hfill
\begin{subfigure}[b]{0.48\linewidth}
\includegraphics[width=\linewidth]{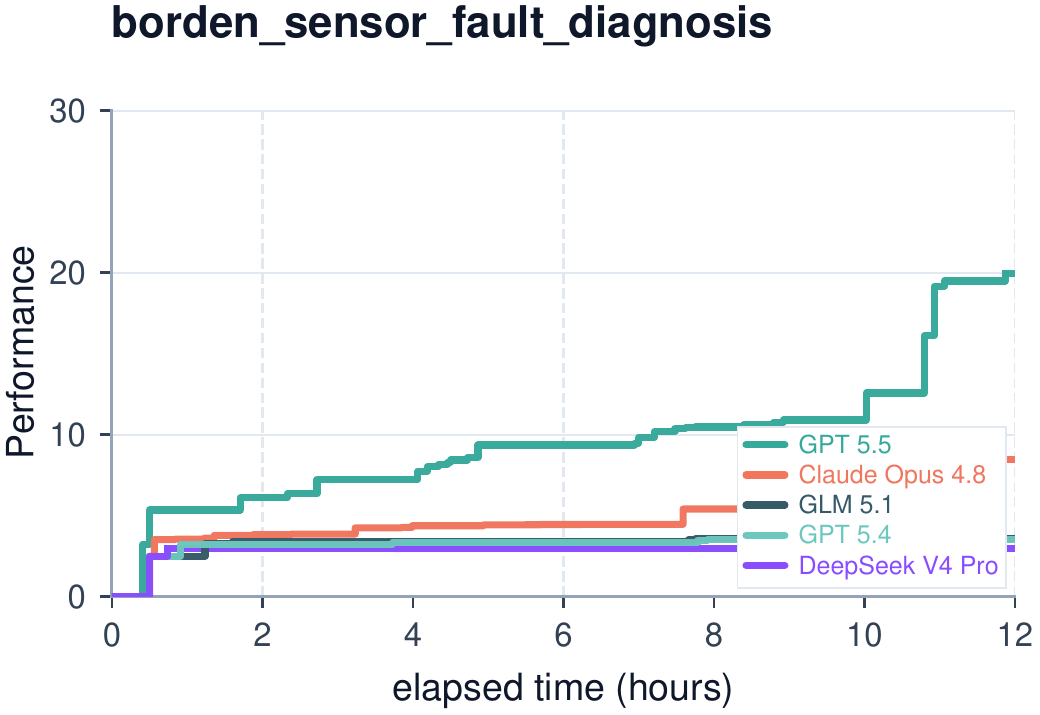}
\end{subfigure}
\vspace{0.5em}
\begin{subfigure}[b]{0.48\linewidth}
\includegraphics[width=\linewidth]{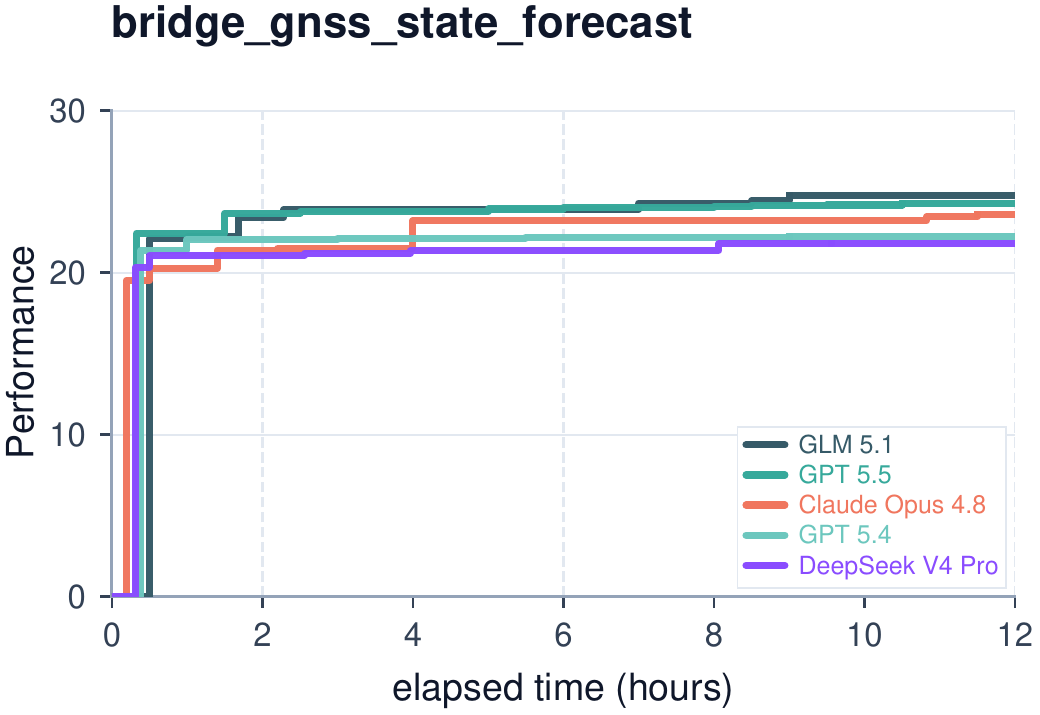}
\end{subfigure}
\hfill
\begin{subfigure}[b]{0.48\linewidth}
\includegraphics[width=\linewidth]{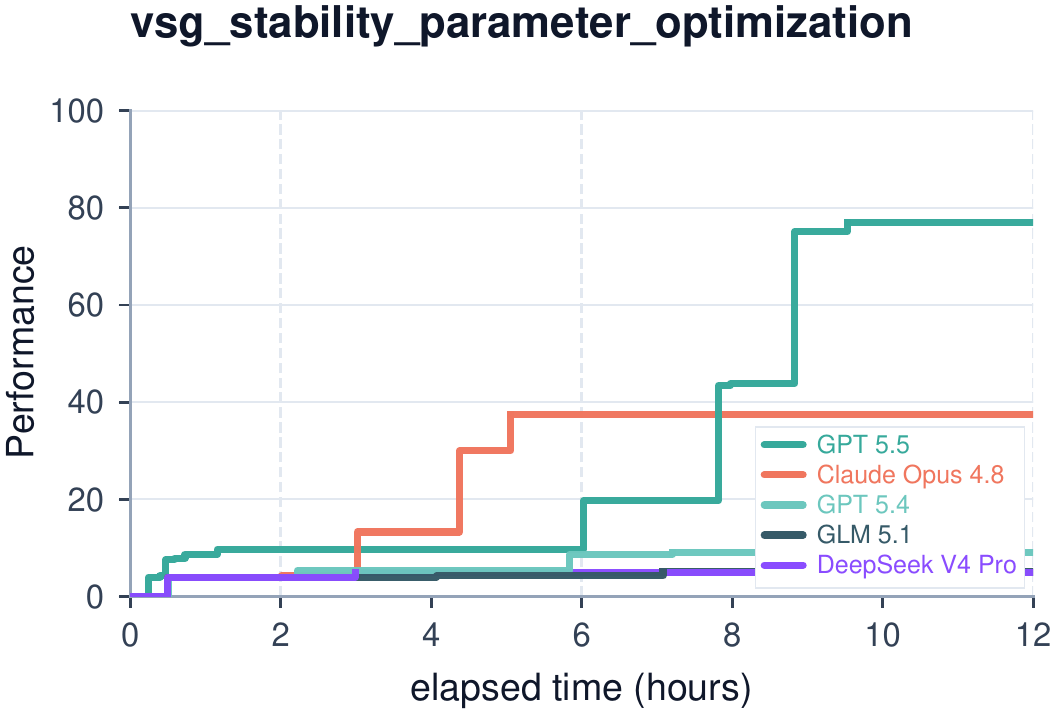}
\end{subfigure}
\vspace{0.5em}
\begin{subfigure}[b]{0.48\linewidth}
\includegraphics[width=\linewidth]{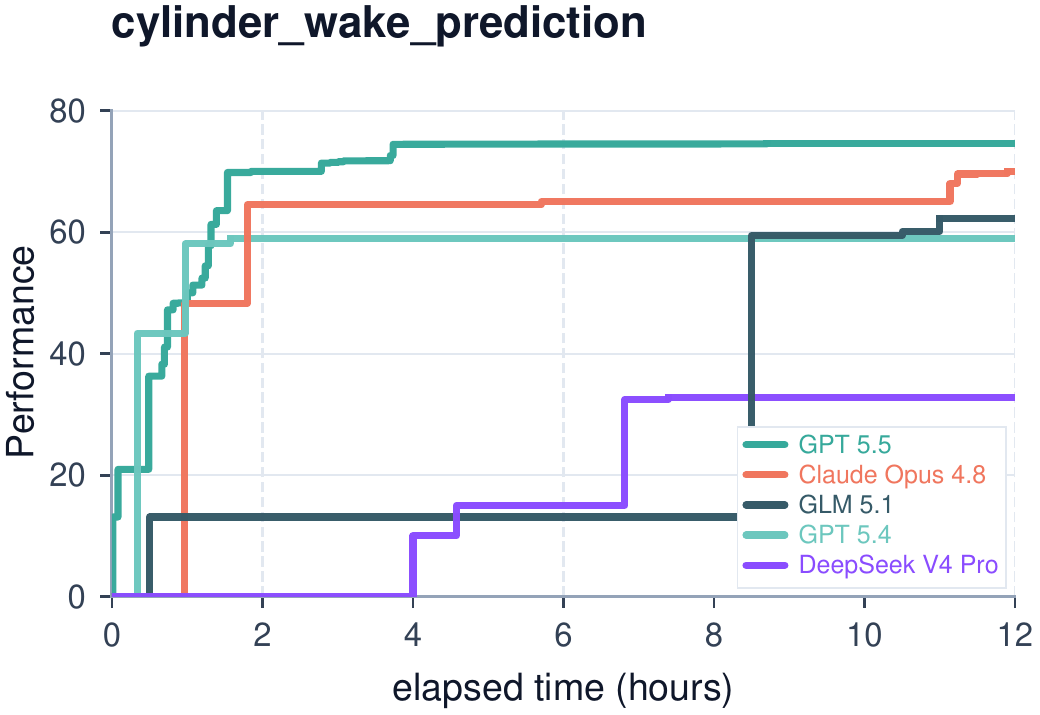}
\end{subfigure}
\hfill
\begin{subfigure}[b]{0.48\linewidth}
\includegraphics[width=\linewidth]{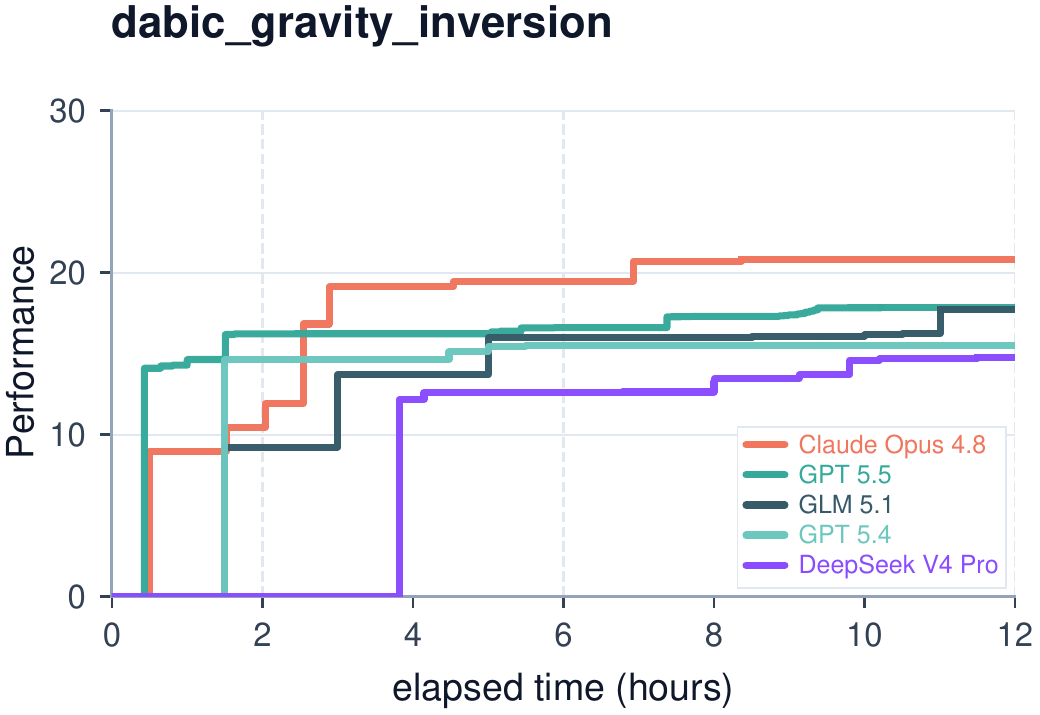}
\end{subfigure}
\caption{Per-task learning curves: Scientific Computing \& ML (1/21).}
\label{fig:curves-all-1}
\end{figure}

\begin{figure}[p]
\centering
\begin{subfigure}[b]{0.48\linewidth}
\includegraphics[width=\linewidth]{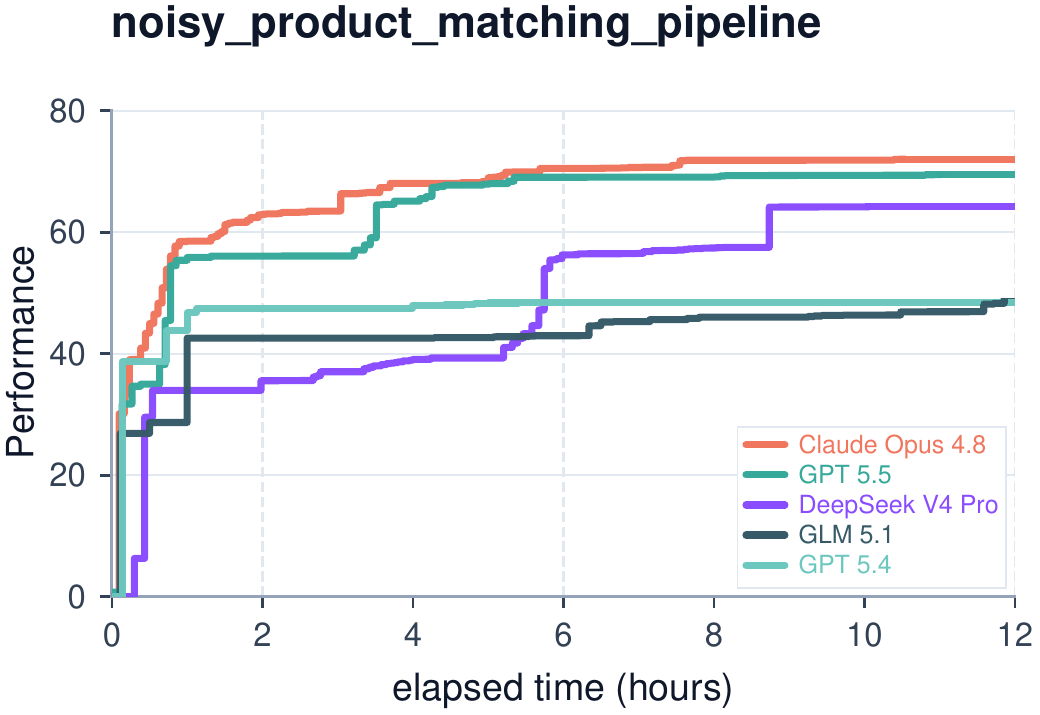}
\end{subfigure}
\hfill
\begin{subfigure}[b]{0.48\linewidth}
\includegraphics[width=\linewidth]{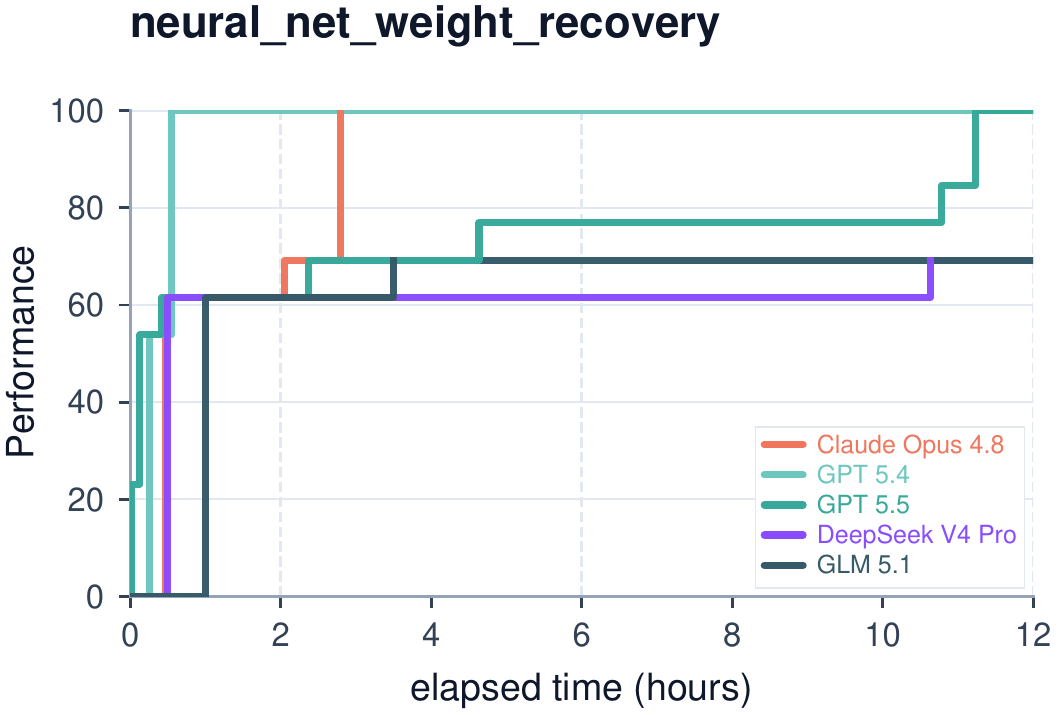}
\end{subfigure}
\vspace{0.5em}
\begin{subfigure}[b]{0.48\linewidth}
\includegraphics[width=\linewidth]{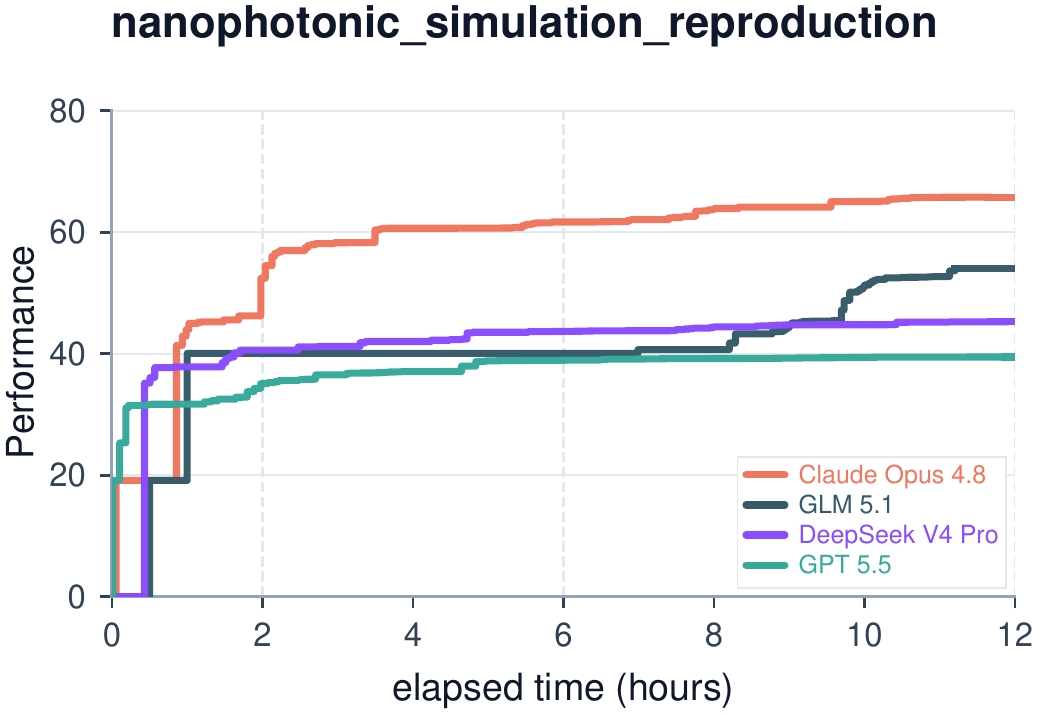}
\end{subfigure}
\hfill
\begin{subfigure}[b]{0.48\linewidth}
\includegraphics[width=\linewidth]{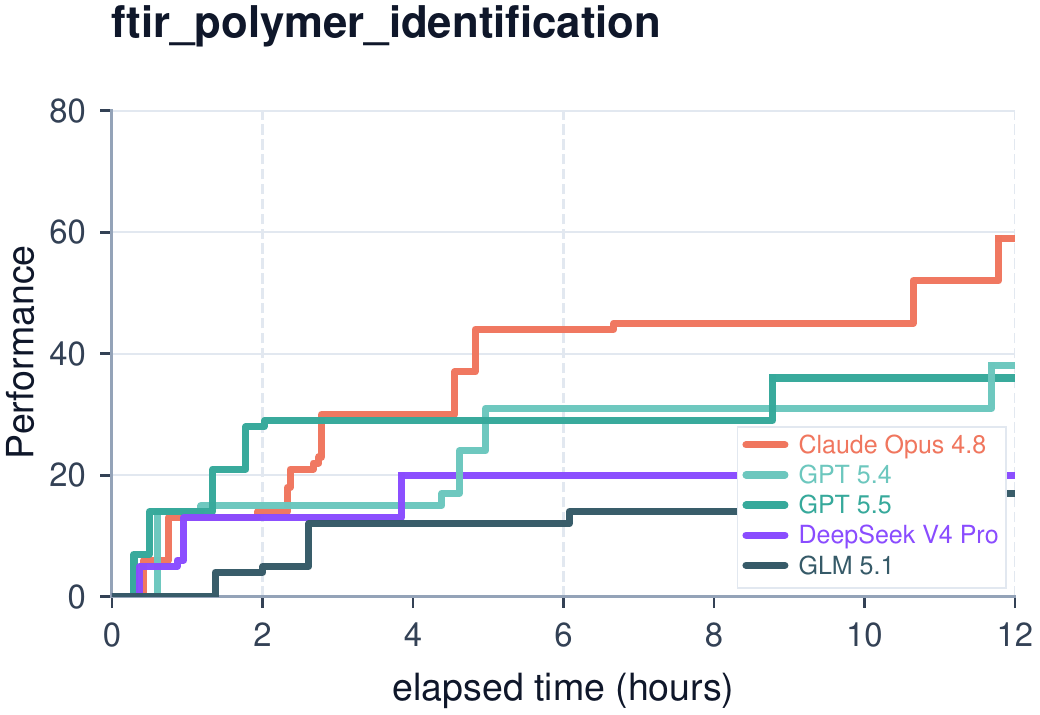}
\end{subfigure}
\vspace{0.5em}
\begin{subfigure}[b]{0.48\linewidth}
\includegraphics[width=\linewidth]{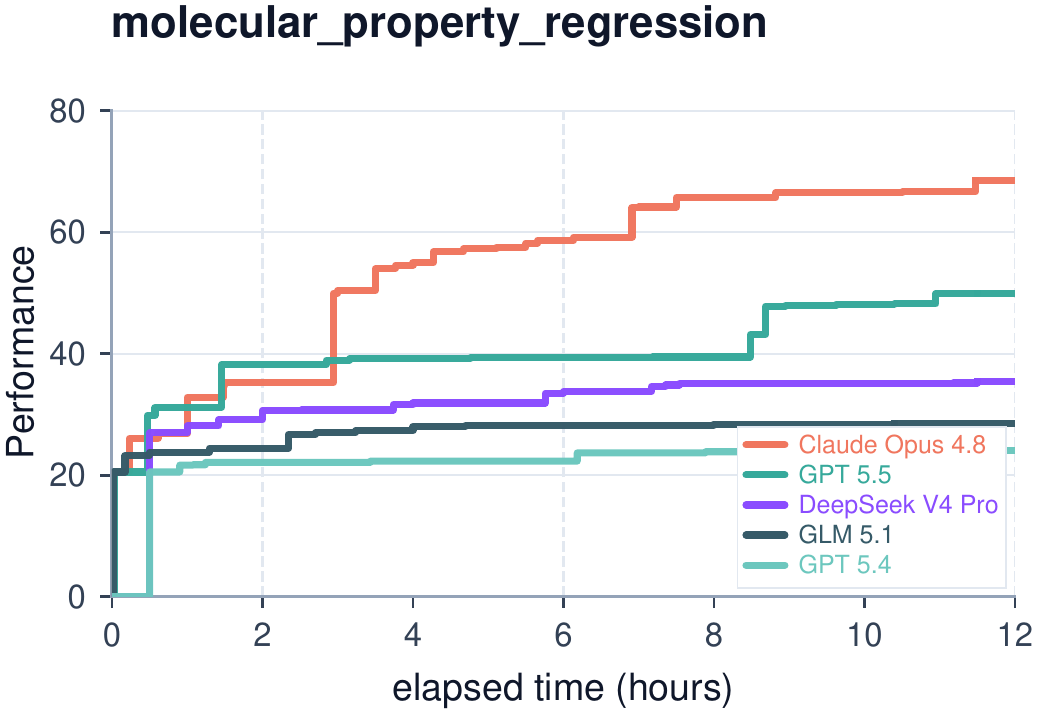}
\end{subfigure}
\hfill
\begin{subfigure}[b]{0.48\linewidth}
\includegraphics[width=\linewidth]{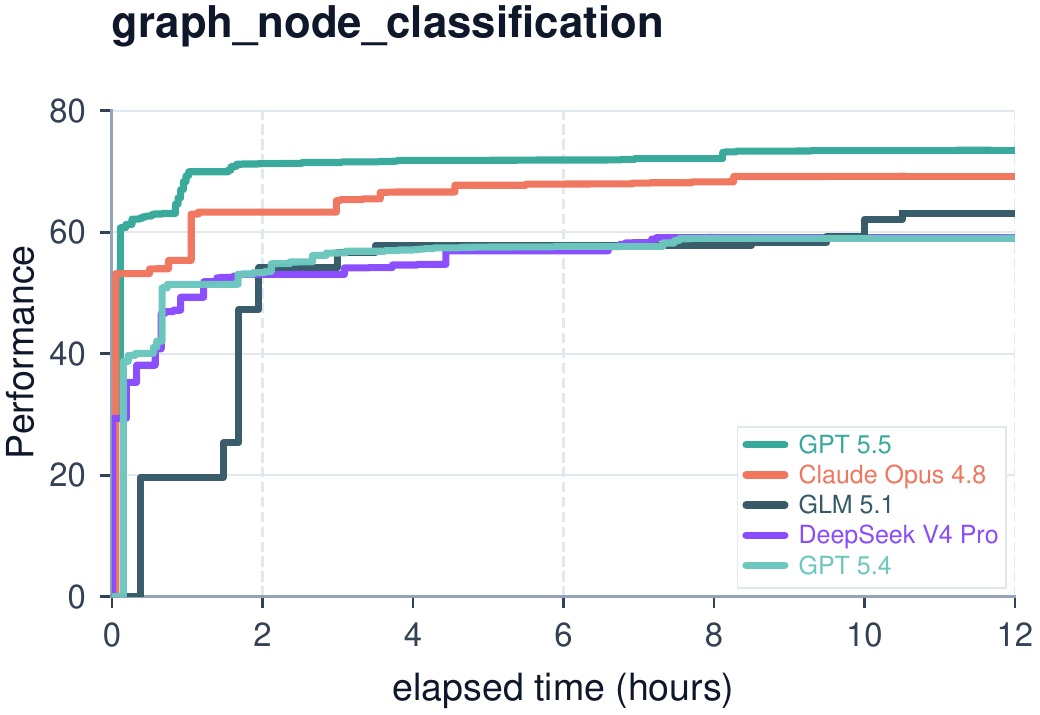}
\end{subfigure}
\caption{Per-task learning curves: Scientific Computing \& ML cont. (2/21).}
\label{fig:curves-all-2}
\end{figure}

\begin{figure}[p]
\centering
\begin{subfigure}[b]{0.48\linewidth}
\includegraphics[width=\linewidth]{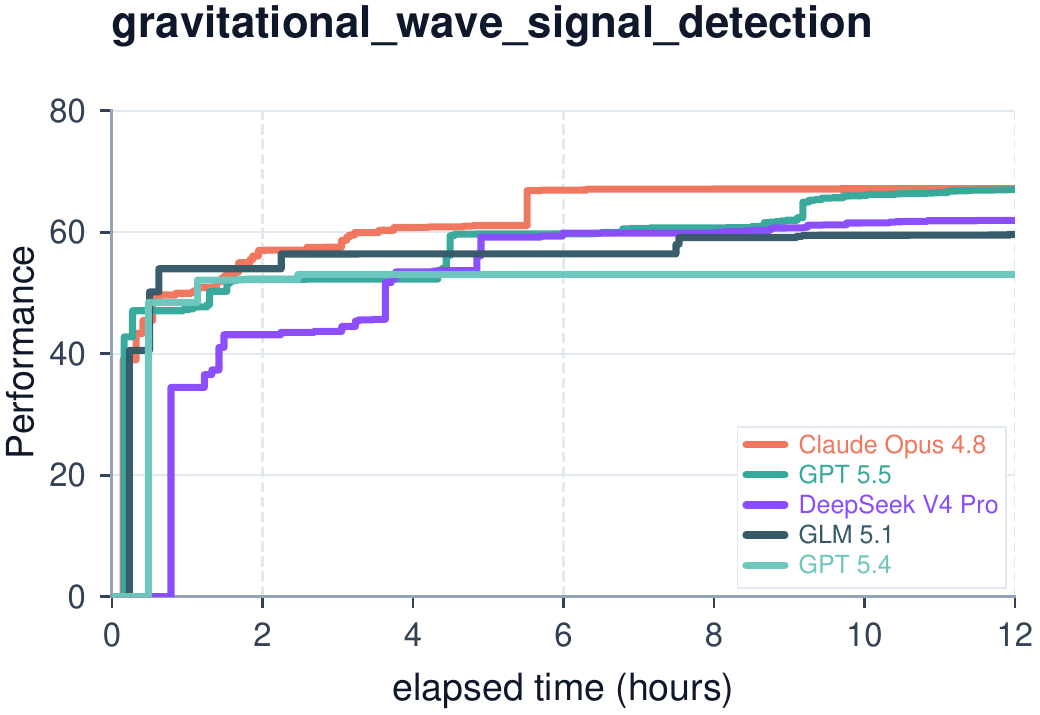}
\end{subfigure}
\hfill
\begin{subfigure}[b]{0.48\linewidth}
\includegraphics[width=\linewidth]{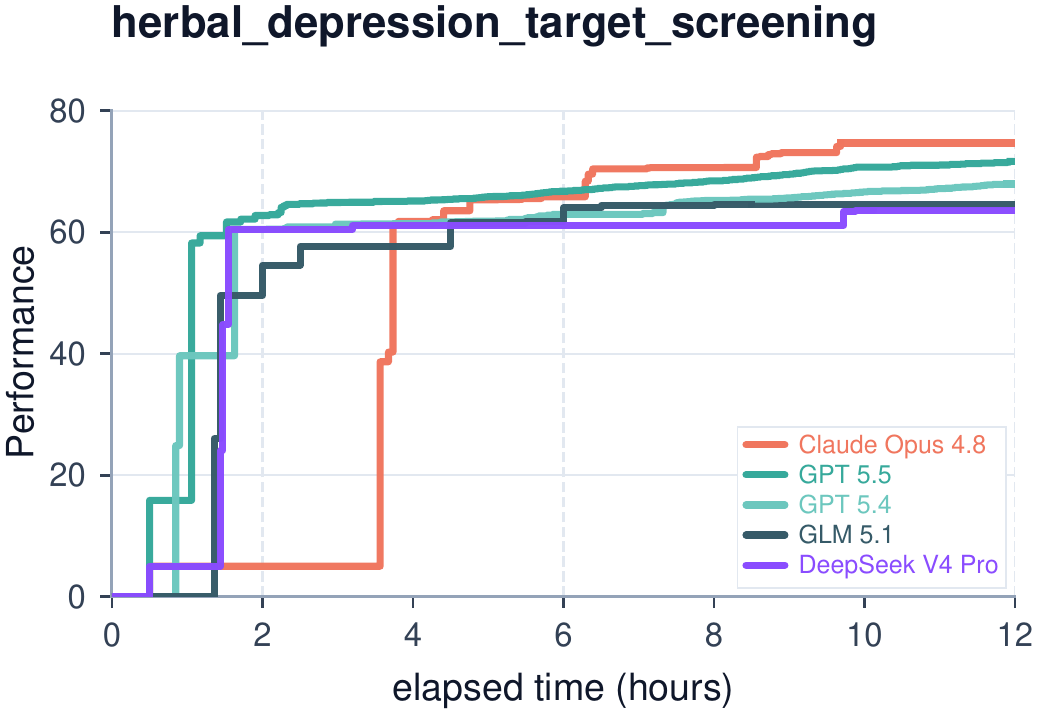}
\end{subfigure}
\vspace{0.5em}
\begin{subfigure}[b]{0.48\linewidth}
\includegraphics[width=\linewidth]{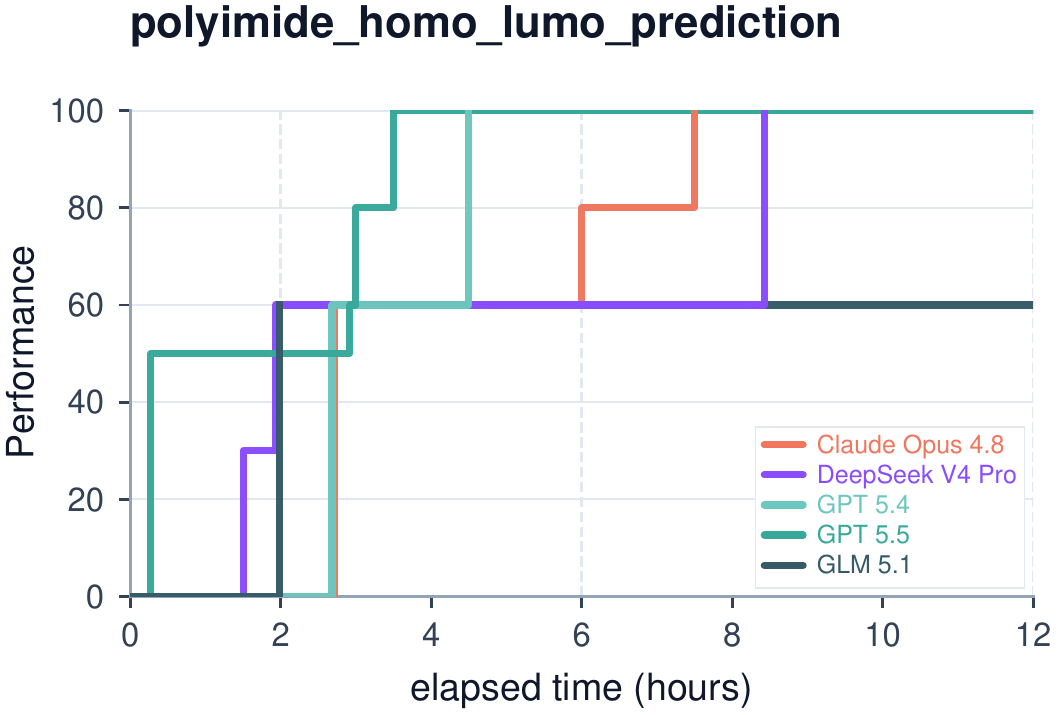}
\end{subfigure}
\hfill
\begin{subfigure}[b]{0.48\linewidth}
\includegraphics[width=\linewidth]{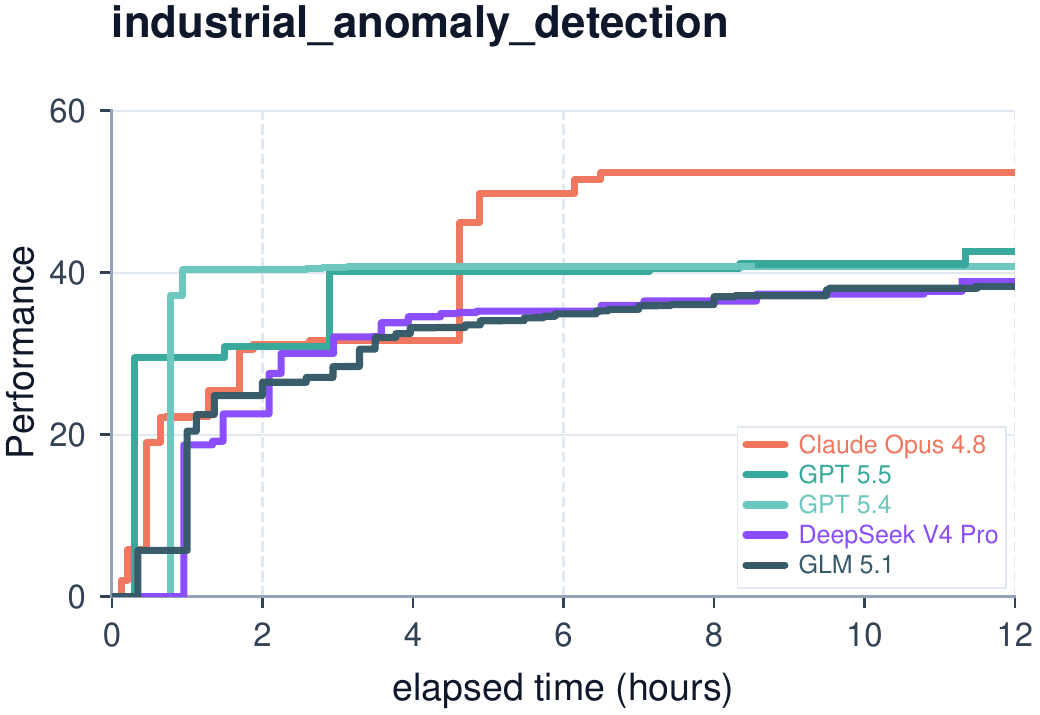}
\end{subfigure}
\vspace{0.5em}
\begin{subfigure}[b]{0.48\linewidth}
\includegraphics[width=\linewidth]{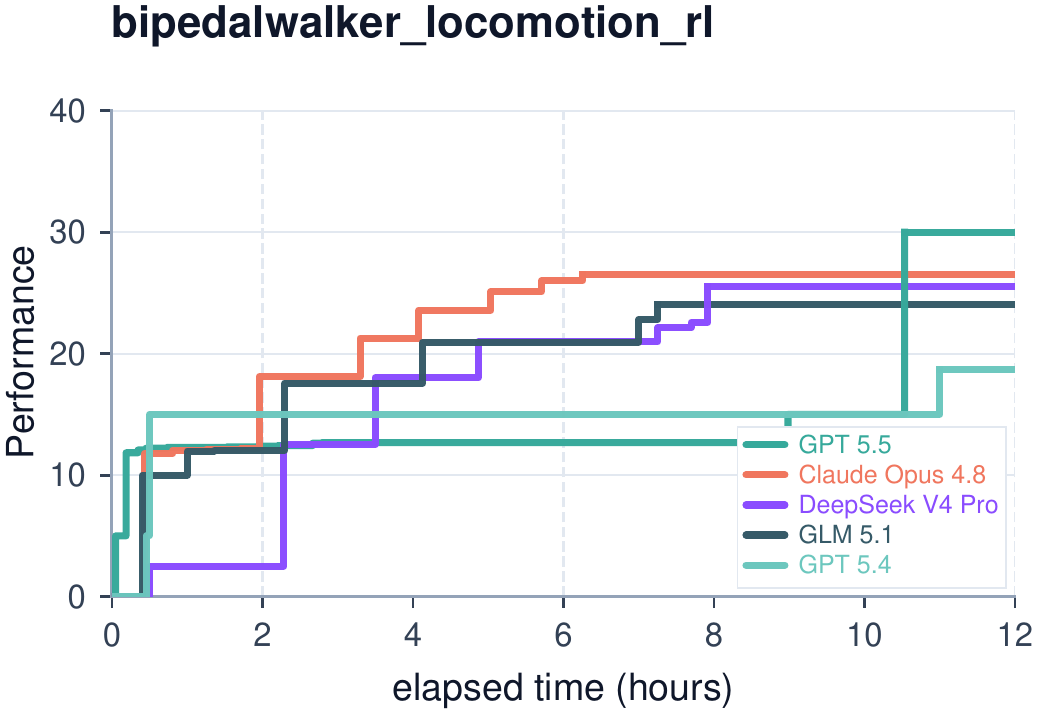}
\end{subfigure}
\hfill
\begin{subfigure}[b]{0.48\linewidth}
\includegraphics[width=\linewidth]{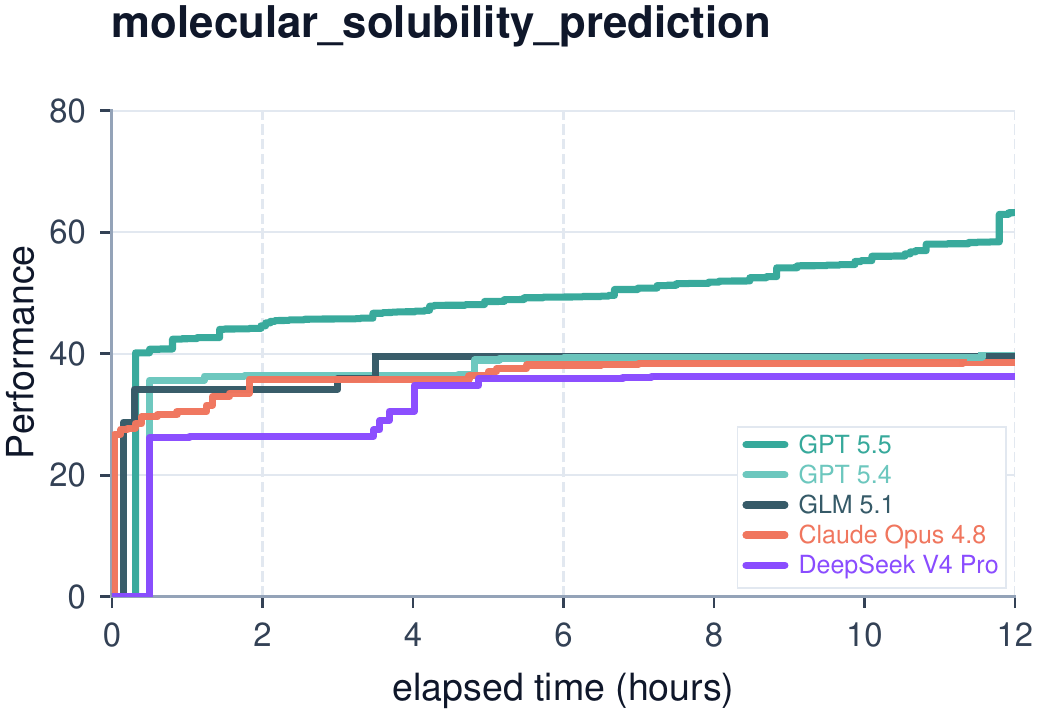}
\end{subfigure}
\caption{Per-task learning curves: Scientific Computing \& ML cont. (3/21).}
\label{fig:curves-all-3}
\end{figure}

\begin{figure}[p]
\centering
\begin{subfigure}[b]{0.48\linewidth}
\includegraphics[width=\linewidth]{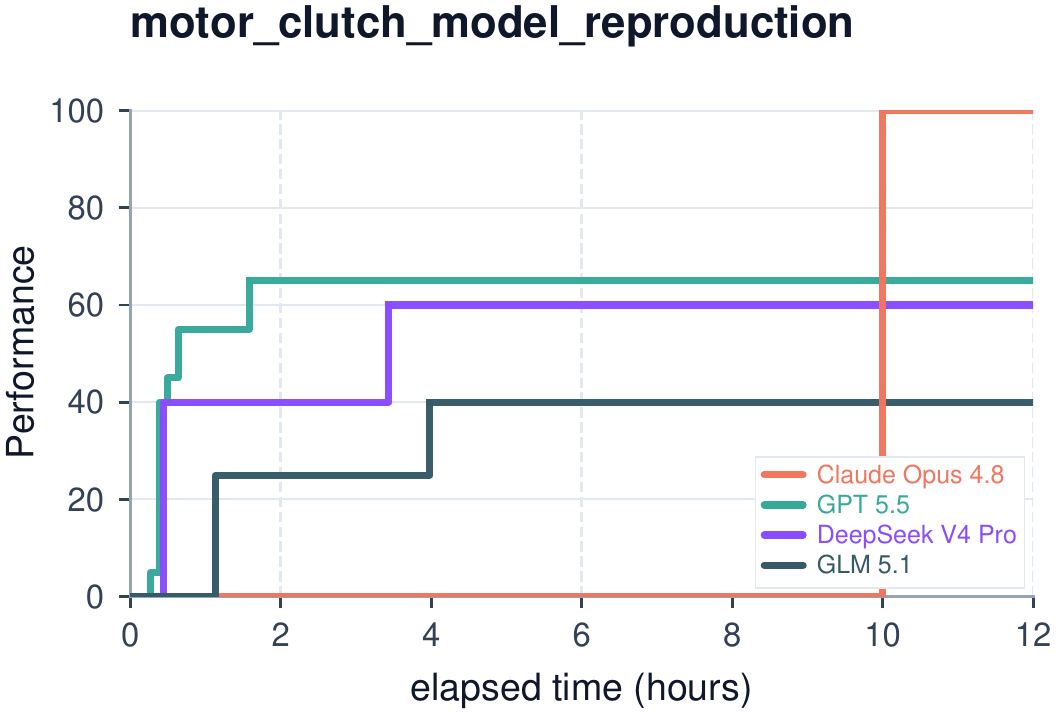}
\end{subfigure}
\hfill
\begin{subfigure}[b]{0.48\linewidth}
\includegraphics[width=\linewidth]{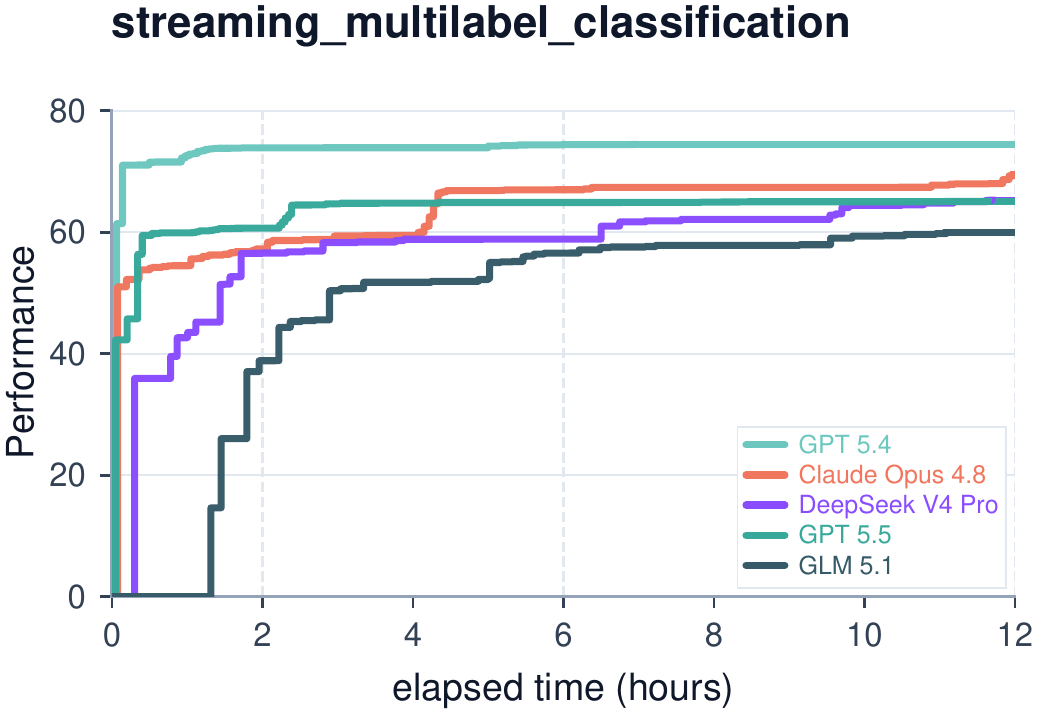}
\end{subfigure}
\vspace{0.5em}
\begin{subfigure}[b]{0.48\linewidth}
\includegraphics[width=\linewidth]{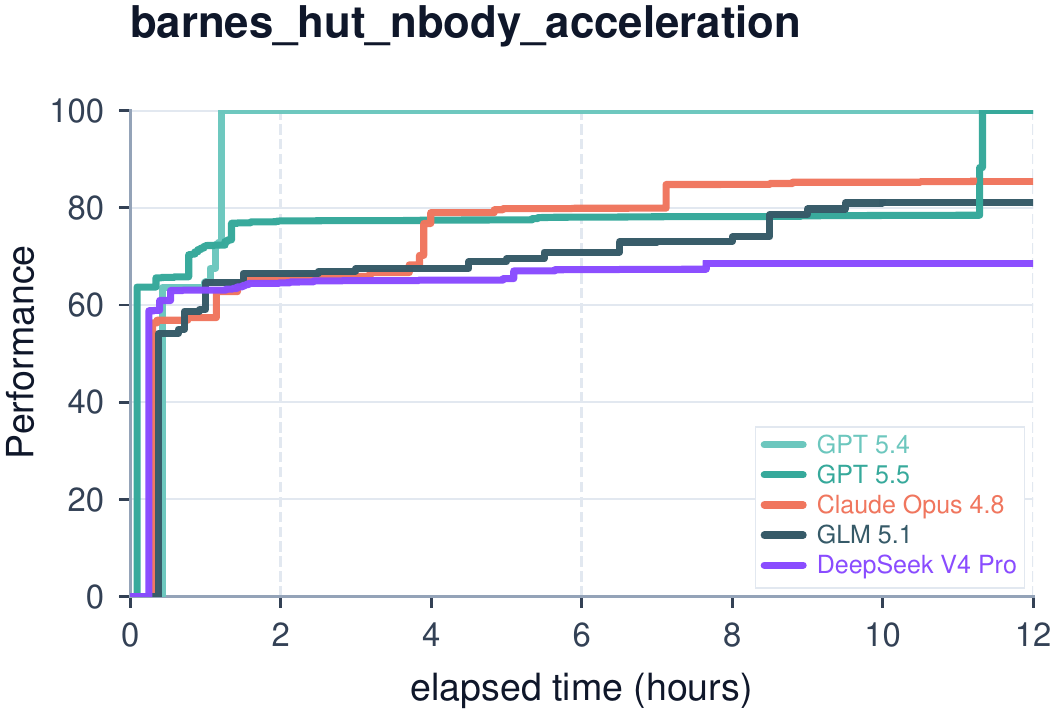}
\end{subfigure}
\hfill
\begin{subfigure}[b]{0.48\linewidth}
\includegraphics[width=\linewidth]{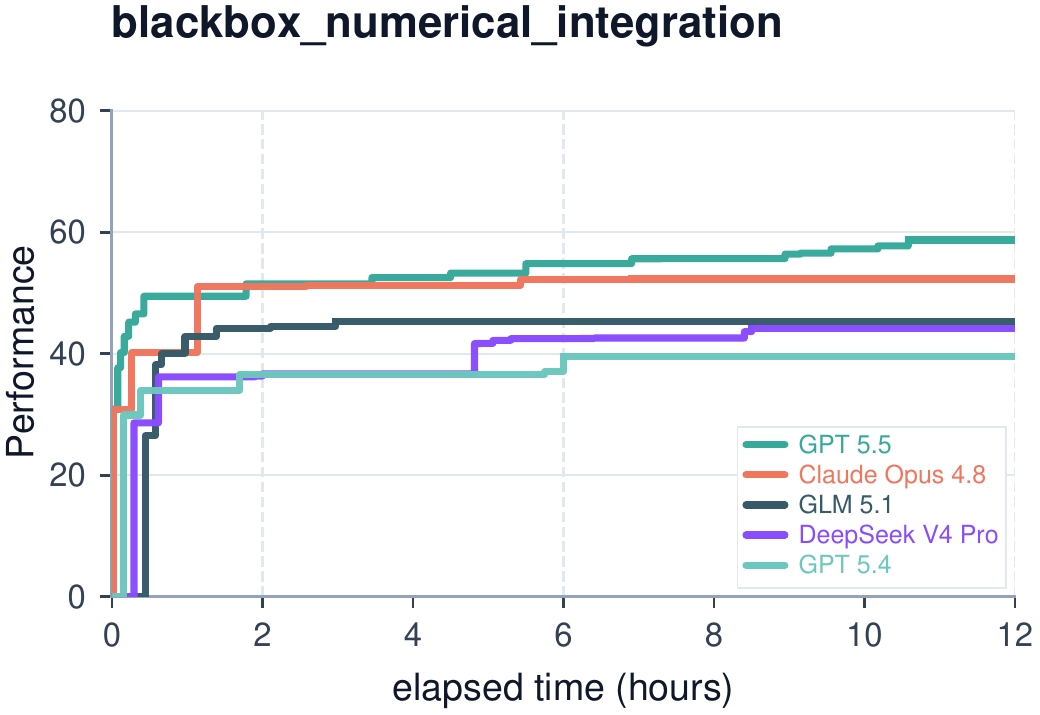}
\end{subfigure}
\vspace{0.5em}
\begin{subfigure}[b]{0.48\linewidth}
\includegraphics[width=\linewidth]{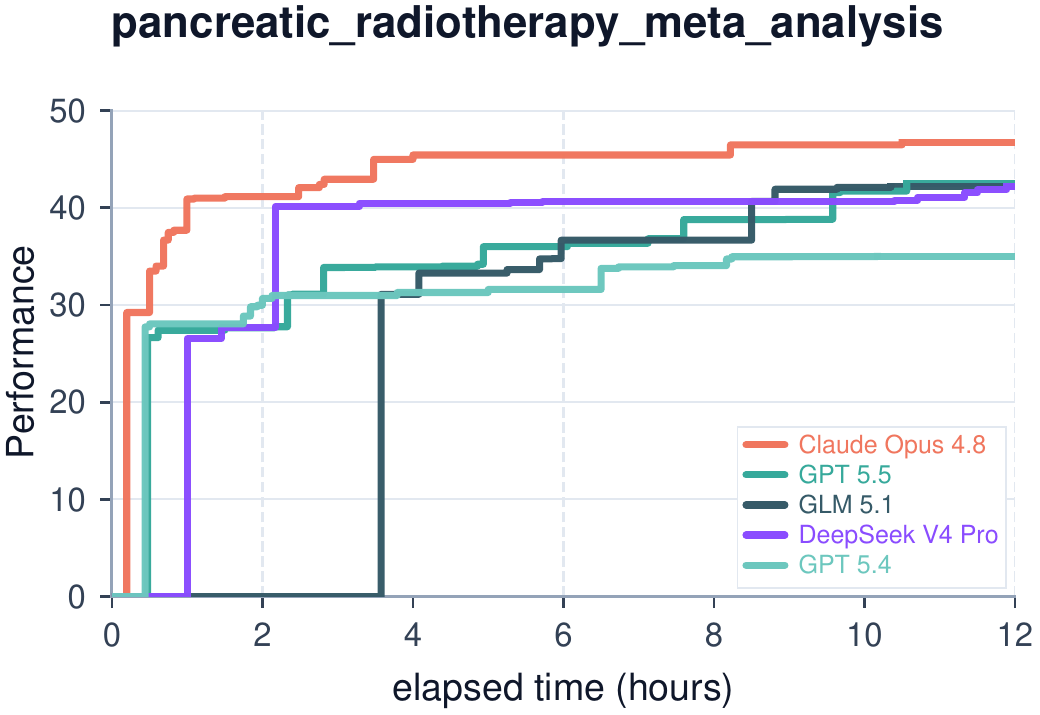}
\end{subfigure}
\hfill
\begin{subfigure}[b]{0.48\linewidth}
\includegraphics[width=\linewidth]{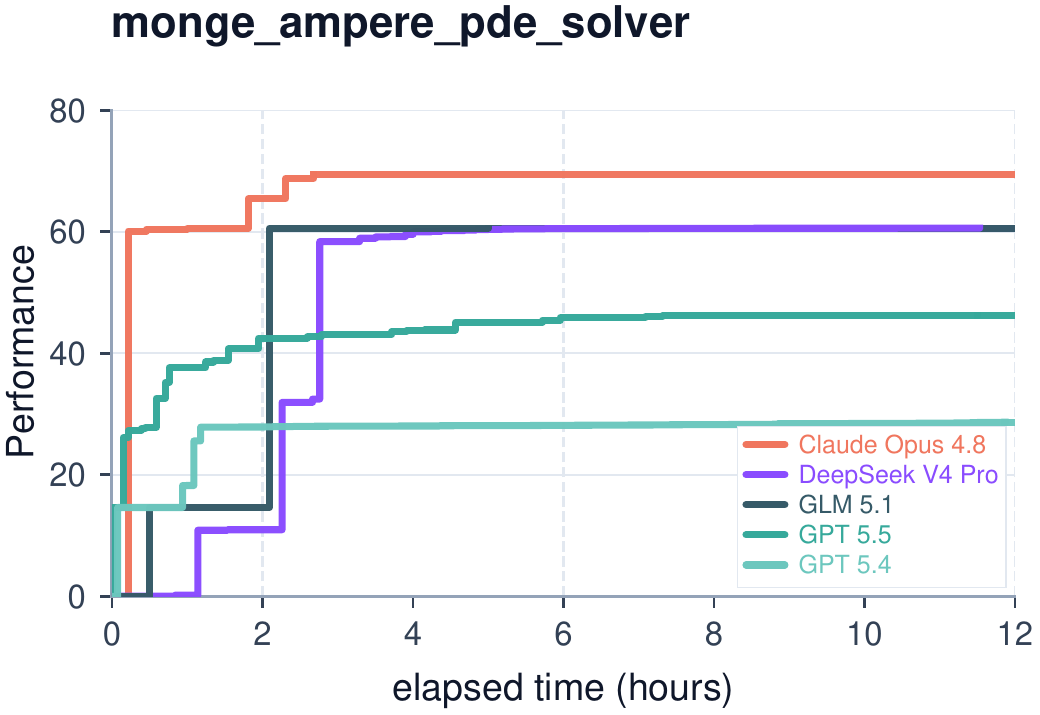}
\end{subfigure}
\caption{Per-task learning curves: Scientific Computing \& ML cont. (4/21).}
\label{fig:curves-all-4}
\end{figure}

\begin{figure}[p]
\centering
\begin{subfigure}[b]{0.48\linewidth}
\includegraphics[width=\linewidth]{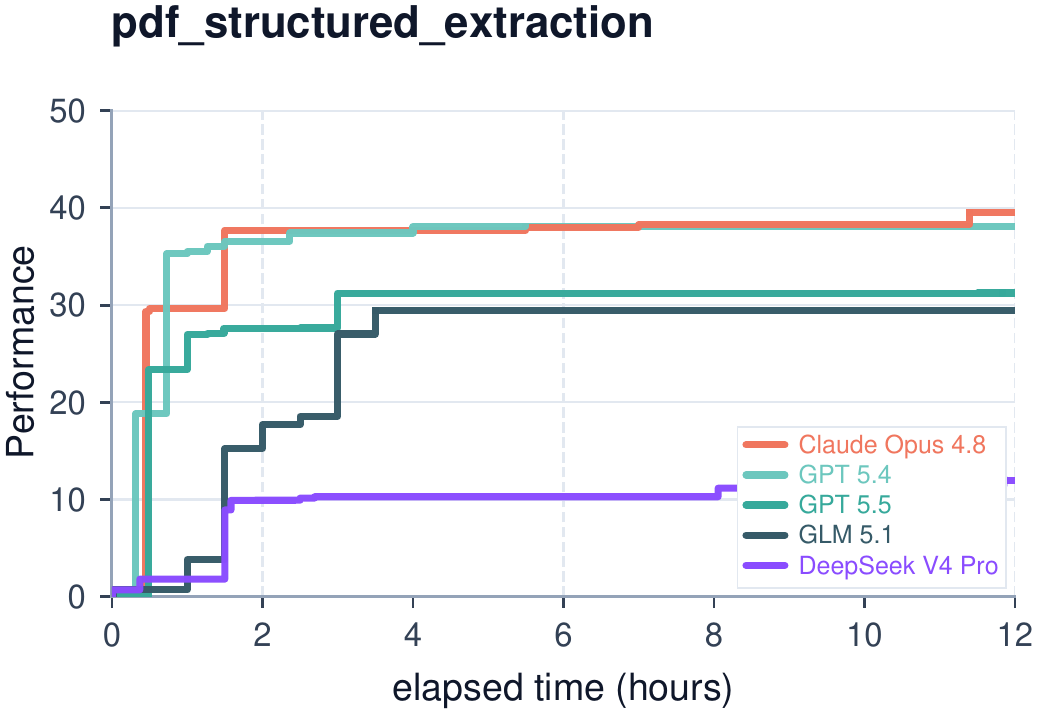}
\end{subfigure}
\hfill
\begin{subfigure}[b]{0.48\linewidth}
\includegraphics[width=\linewidth]{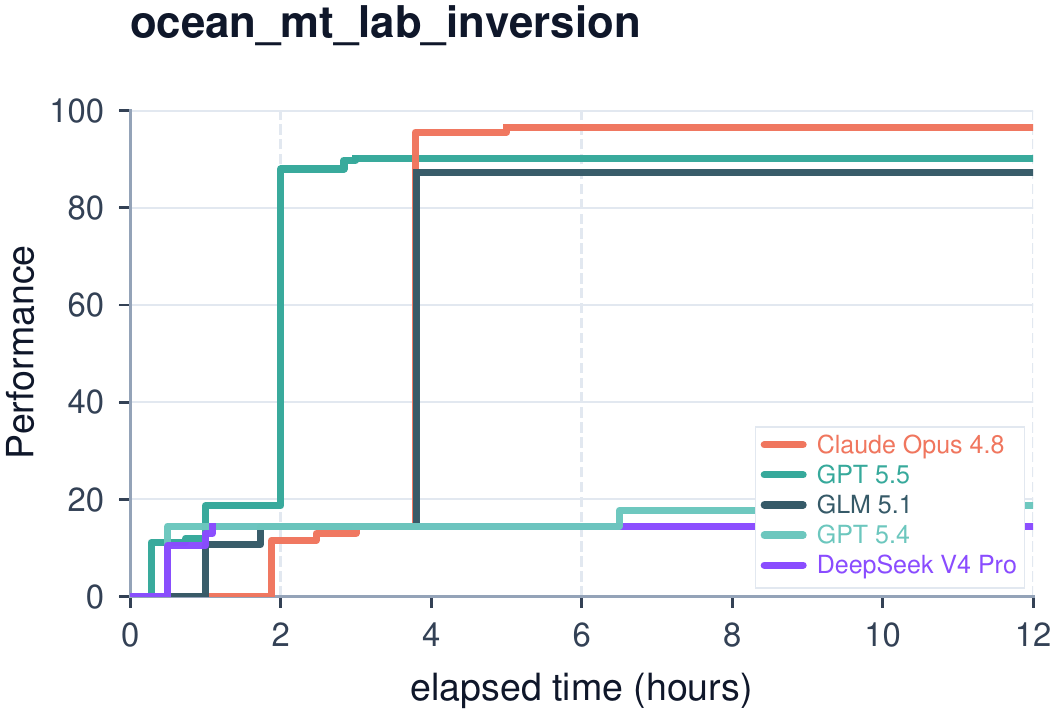}
\end{subfigure}
\vspace{0.5em}
\begin{subfigure}[b]{0.48\linewidth}
\includegraphics[width=\linewidth]{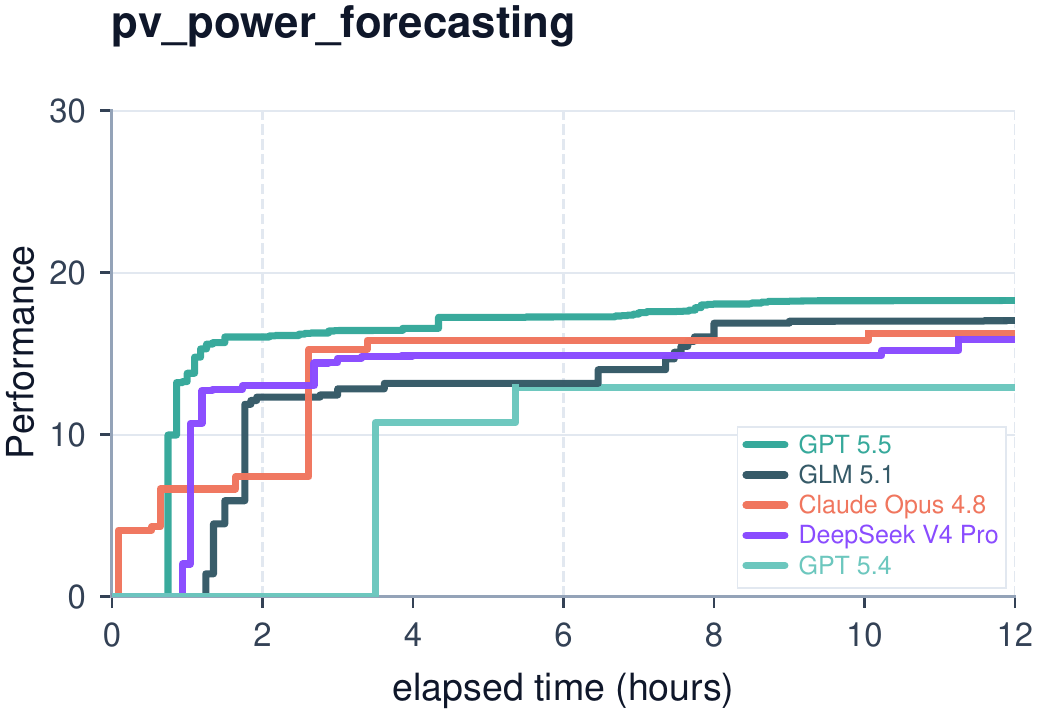}
\end{subfigure}
\hfill
\begin{subfigure}[b]{0.48\linewidth}
\includegraphics[width=\linewidth]{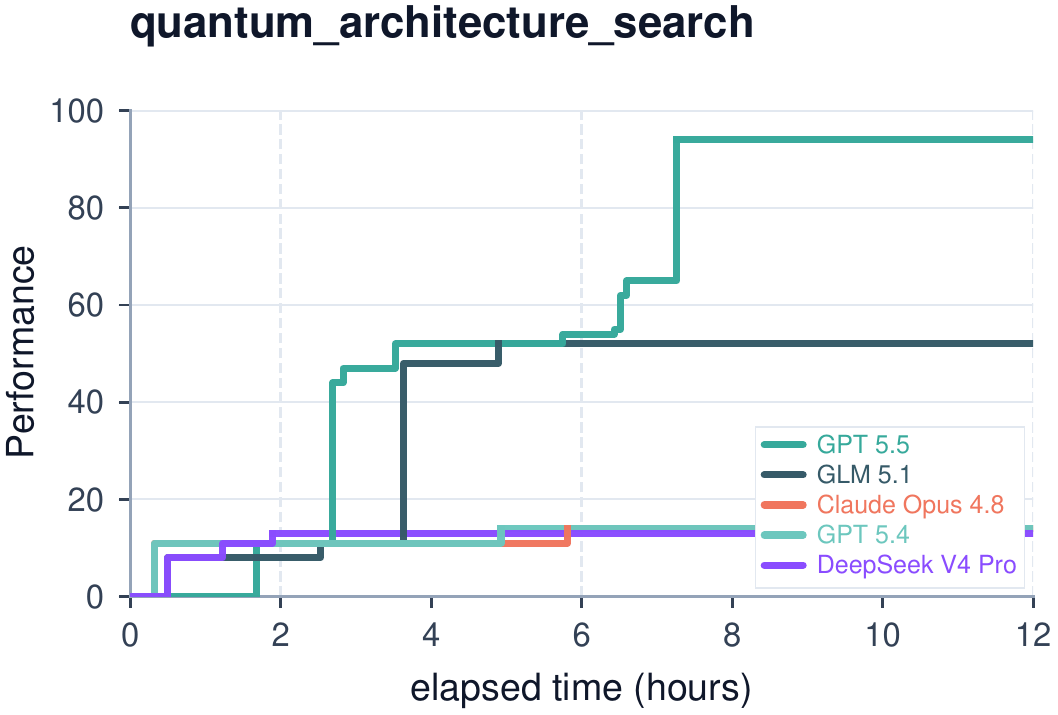}
\end{subfigure}
\vspace{0.5em}
\begin{subfigure}[b]{0.48\linewidth}
\includegraphics[width=\linewidth]{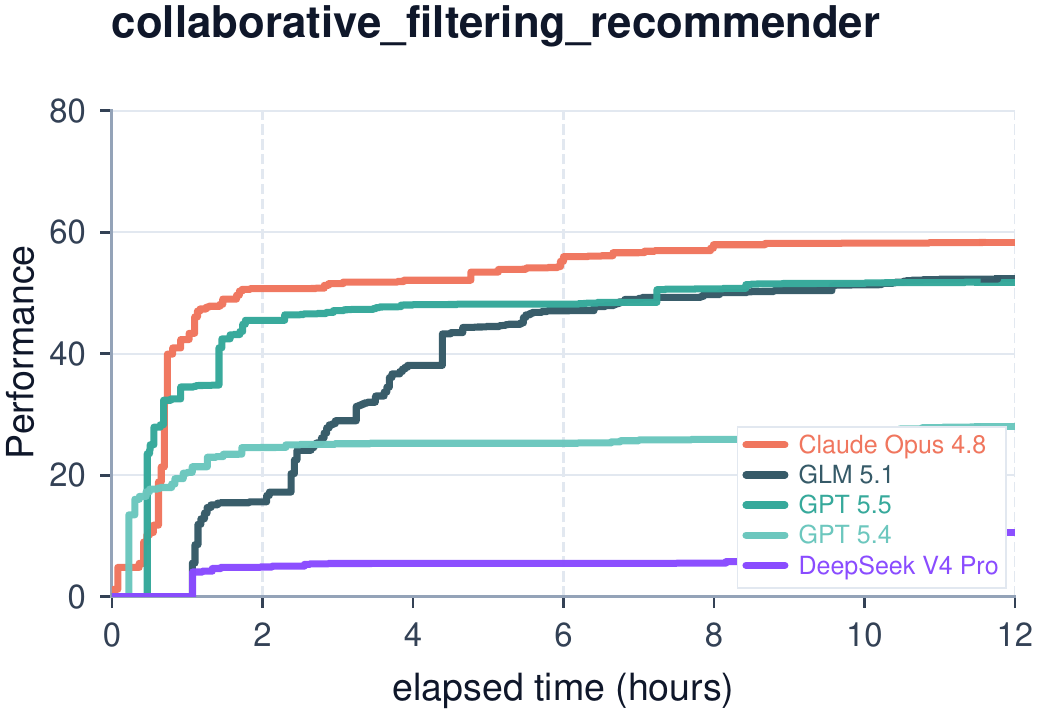}
\end{subfigure}
\hfill
\begin{subfigure}[b]{0.48\linewidth}
\includegraphics[width=\linewidth]{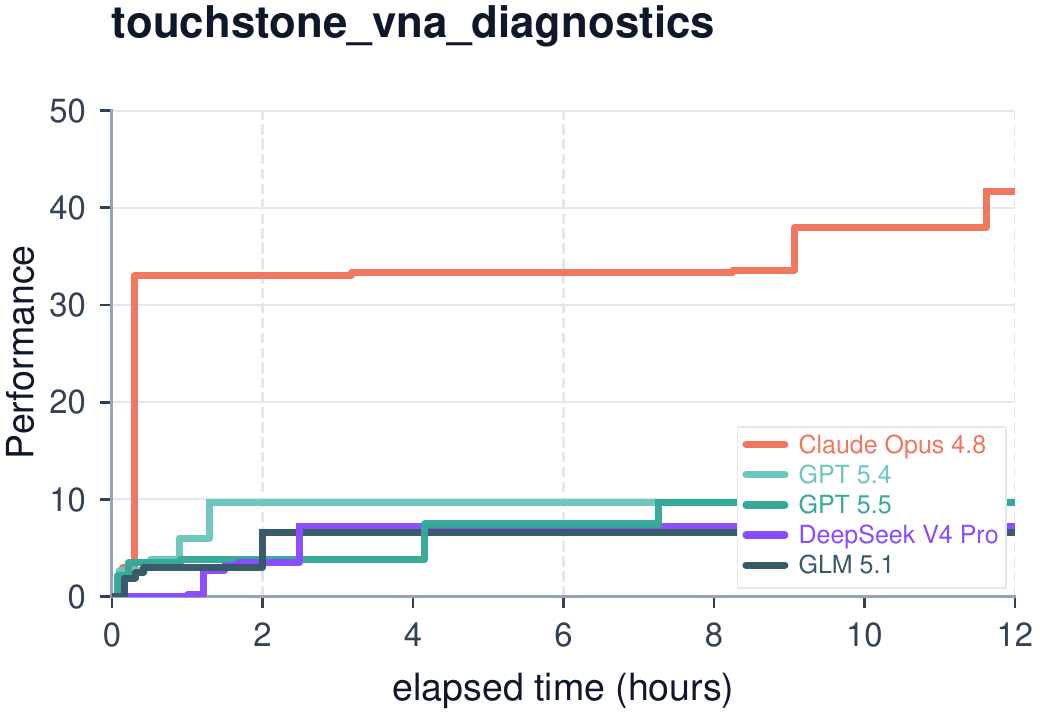}
\end{subfigure}
\caption{Per-task learning curves: Scientific Computing \& ML cont. (5/21).}
\label{fig:curves-all-5}
\end{figure}

\begin{figure}[p]
\centering
\begin{subfigure}[b]{0.48\linewidth}
\includegraphics[width=\linewidth]{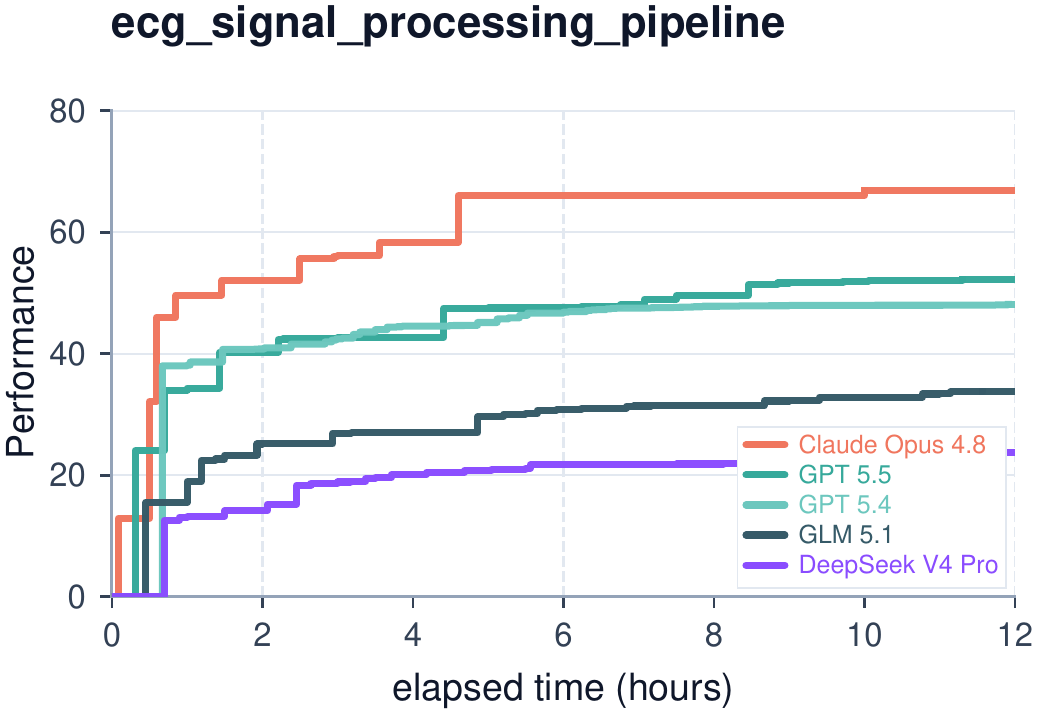}
\end{subfigure}
\hfill
\begin{subfigure}[b]{0.48\linewidth}
\includegraphics[width=\linewidth]{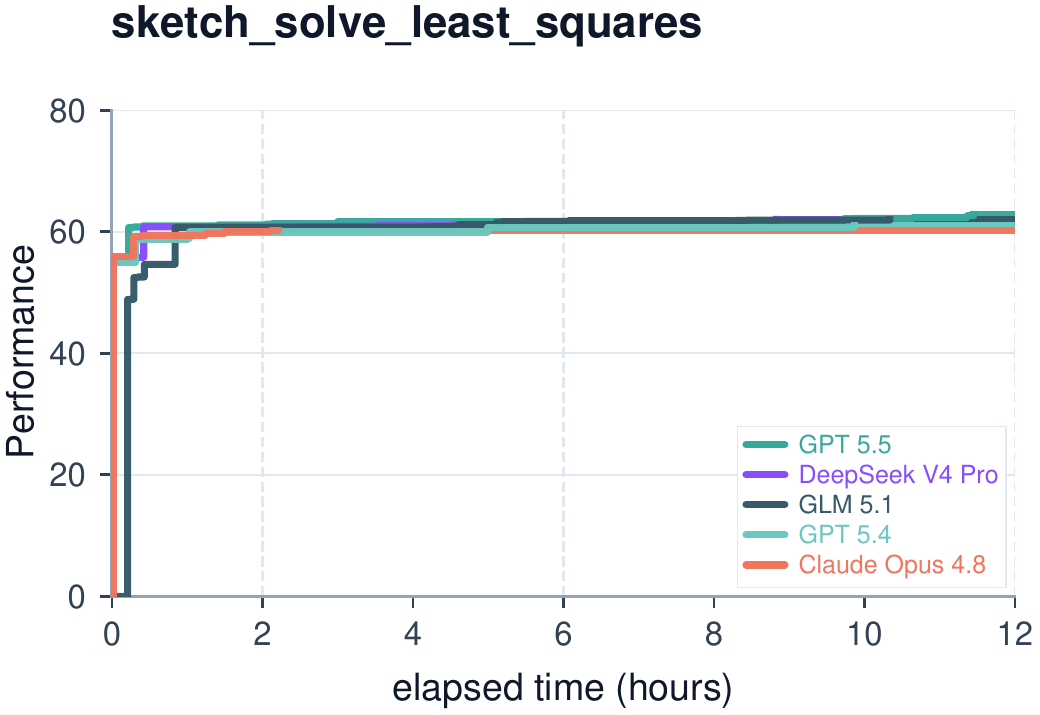}
\end{subfigure}
\vspace{0.5em}
\begin{subfigure}[b]{0.48\linewidth}
\includegraphics[width=\linewidth]{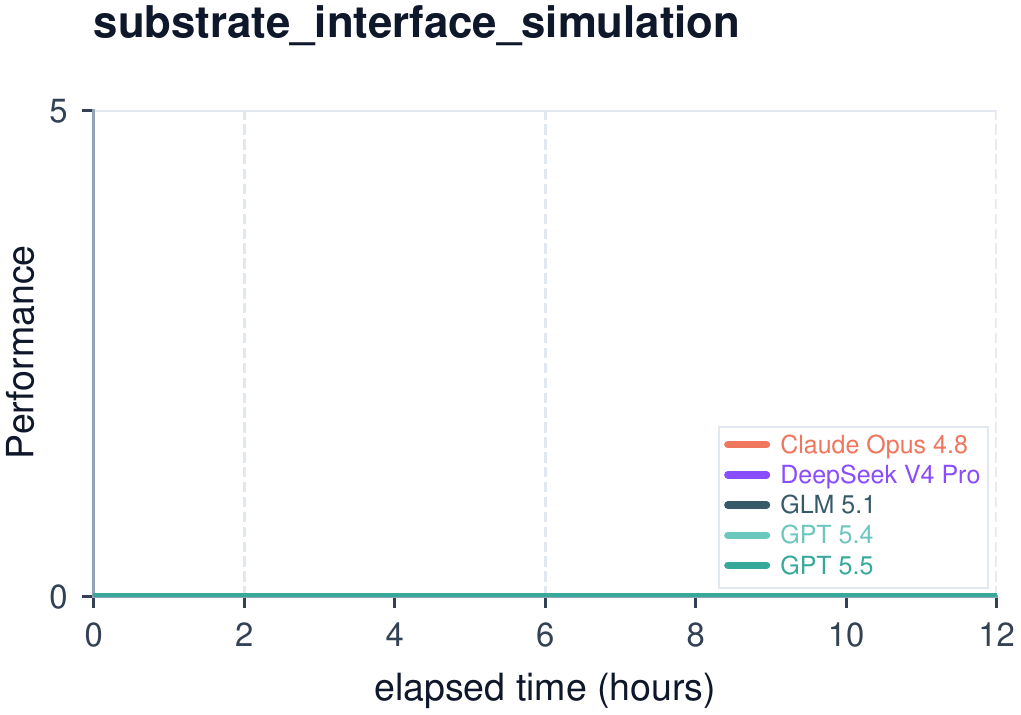}
\end{subfigure}
\hfill
\begin{subfigure}[b]{0.48\linewidth}
\includegraphics[width=\linewidth]{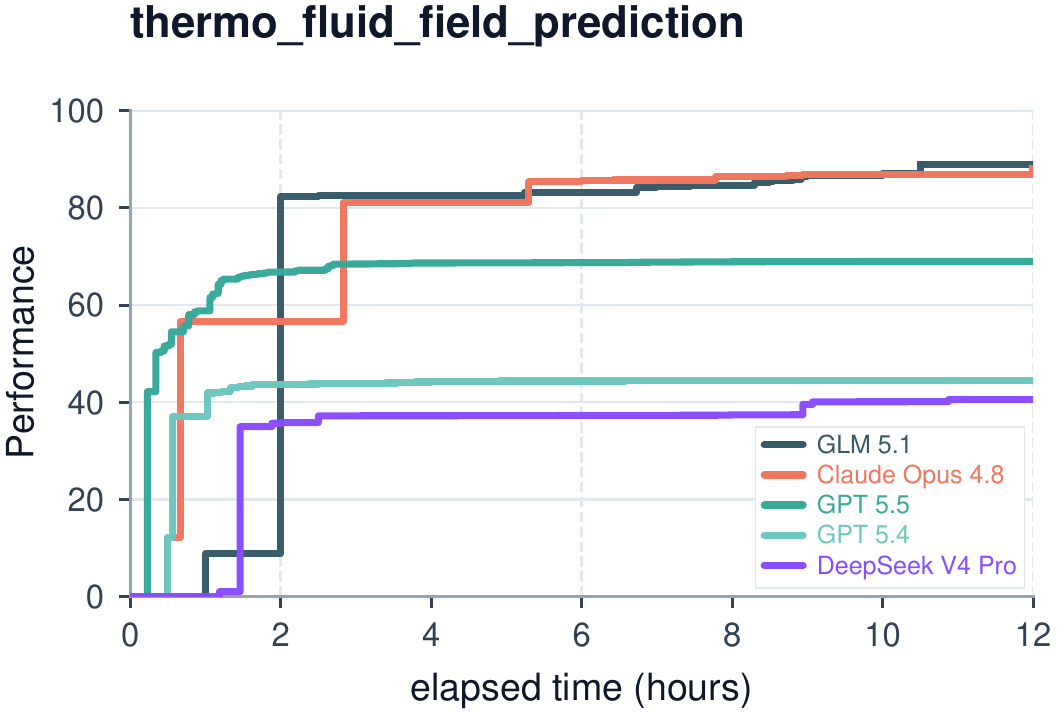}
\end{subfigure}
\vspace{0.5em}
\begin{subfigure}[b]{0.48\linewidth}
\includegraphics[width=\linewidth]{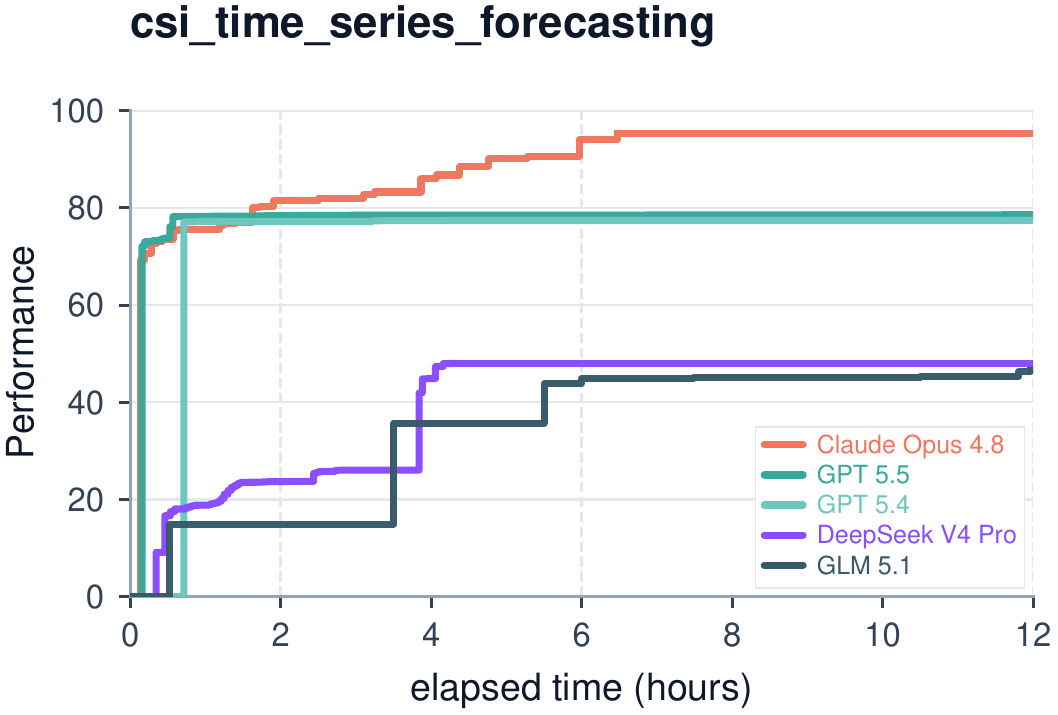}
\end{subfigure}
\hfill
\begin{subfigure}[b]{0.48\linewidth}
\includegraphics[width=\linewidth]{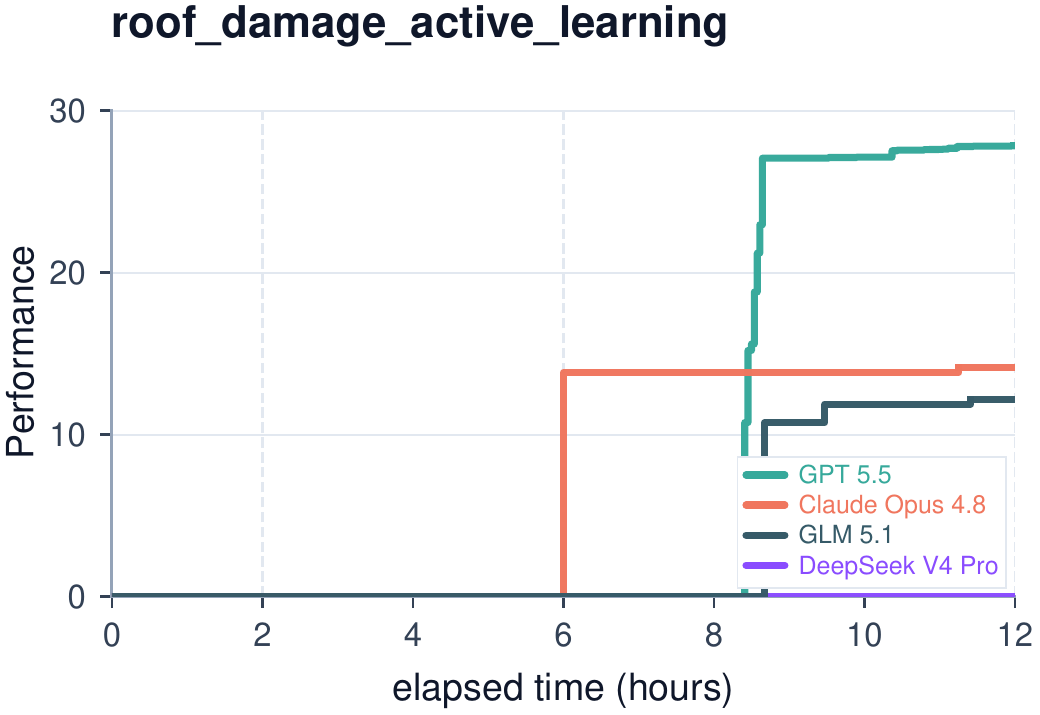}
\end{subfigure}
\caption{Per-task learning curves: Scientific Computing \& ML cont. (6/21).}
\label{fig:curves-all-6}
\end{figure}

\begin{figure}[p]
\centering
\begin{subfigure}[b]{0.48\linewidth}
\includegraphics[width=\linewidth]{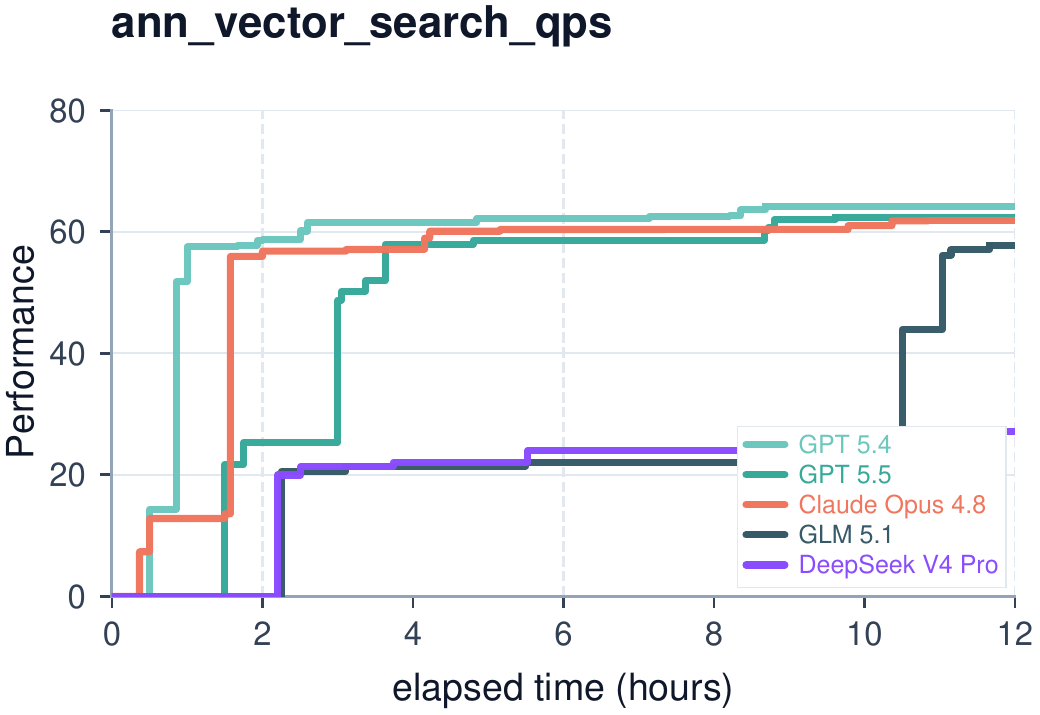}
\end{subfigure}
\hfill
\begin{subfigure}[b]{0.48\linewidth}
\includegraphics[width=\linewidth]{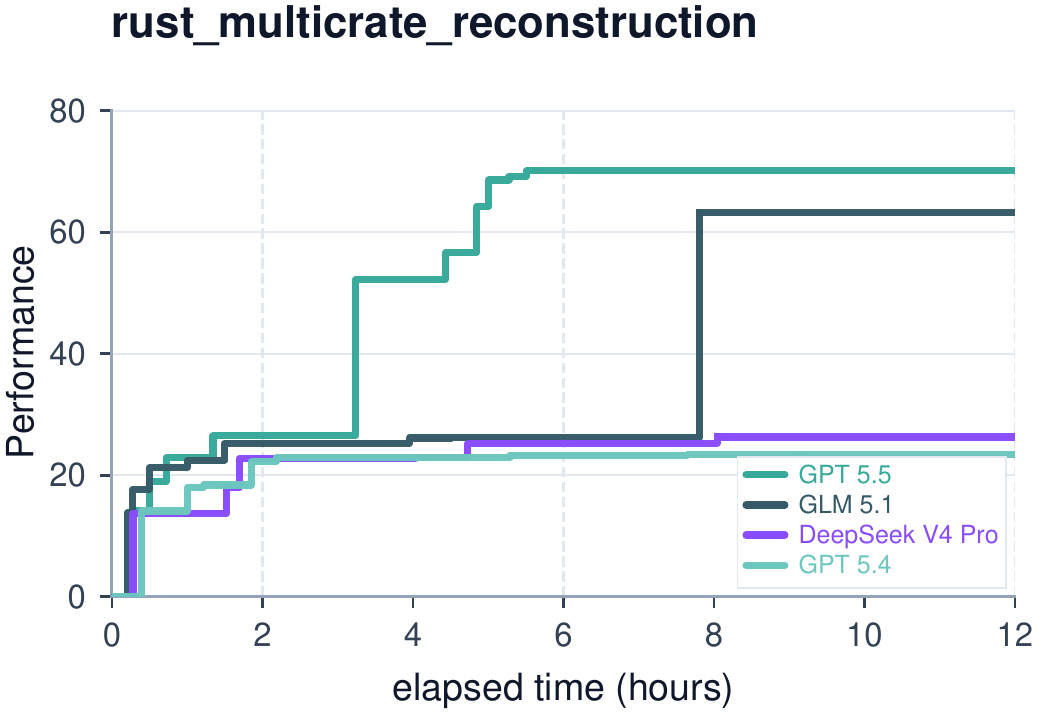}
\end{subfigure}
\vspace{0.5em}
\begin{subfigure}[b]{0.48\linewidth}
\includegraphics[width=\linewidth]{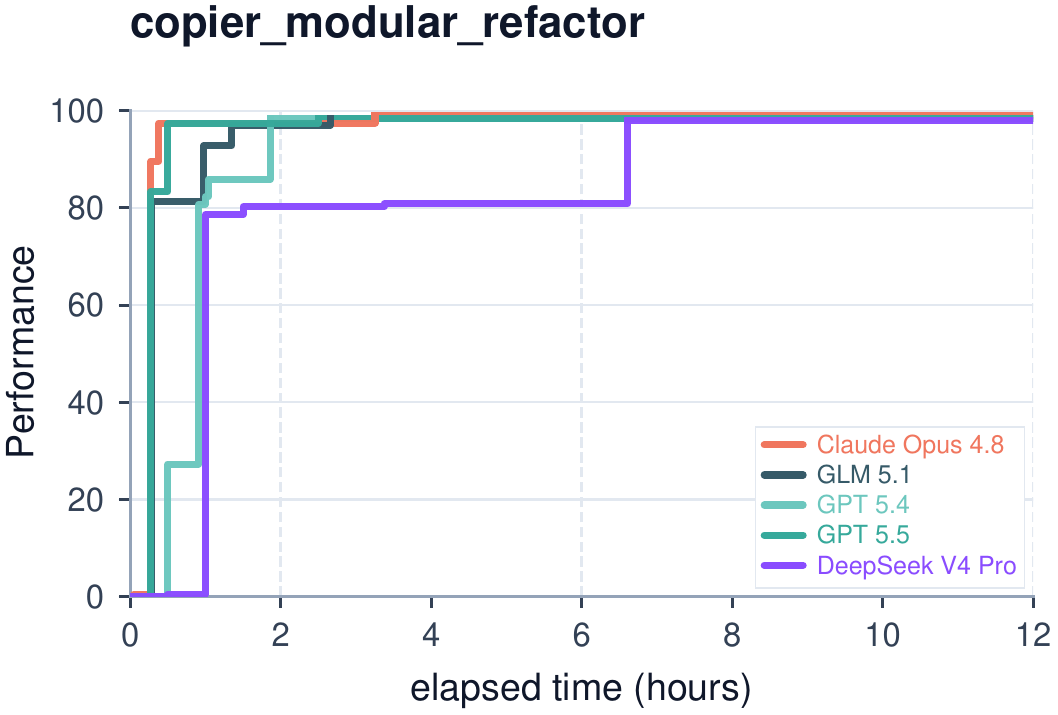}
\end{subfigure}
\hfill
\begin{subfigure}[b]{0.48\linewidth}
\includegraphics[width=\linewidth]{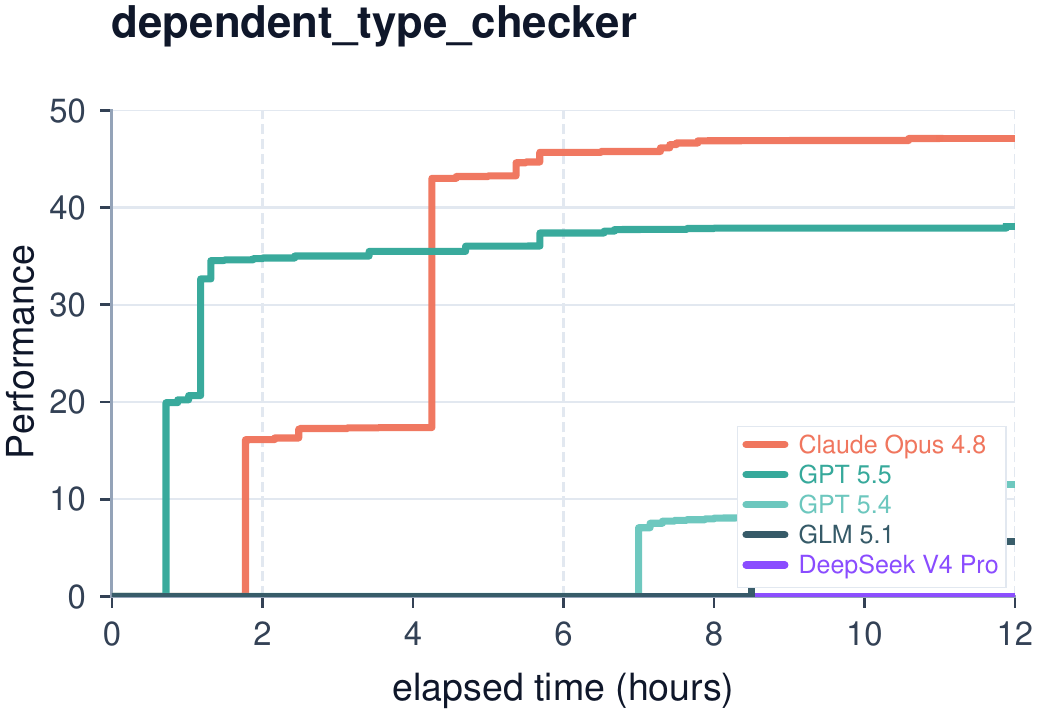}
\end{subfigure}
\vspace{0.5em}
\begin{subfigure}[b]{0.48\linewidth}
\includegraphics[width=\linewidth]{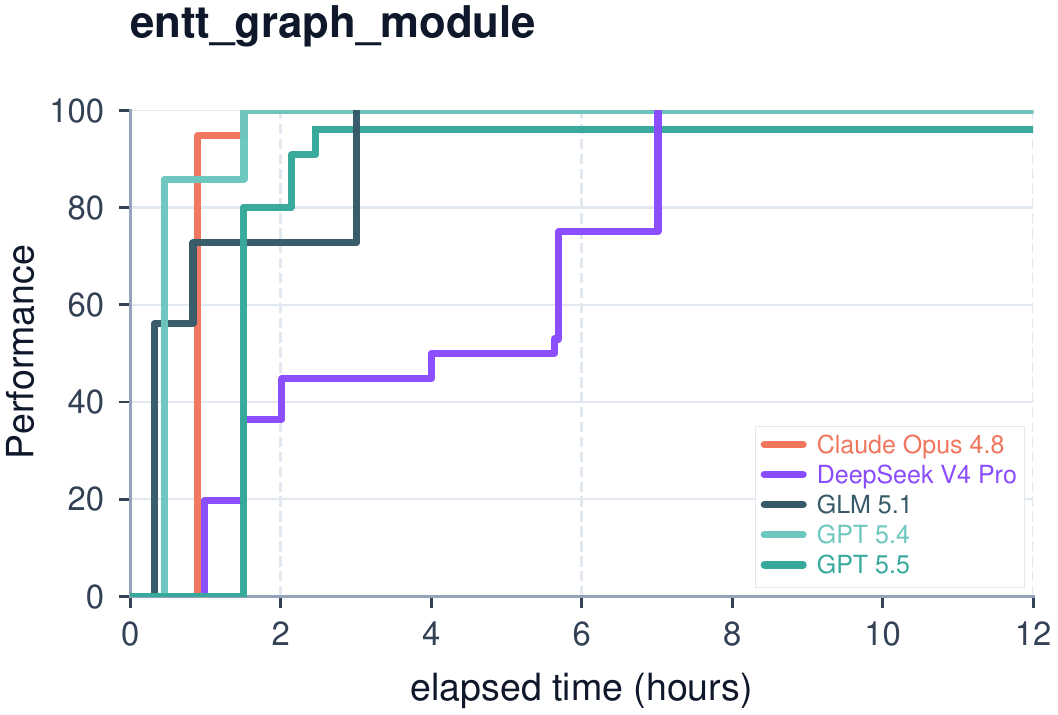}
\end{subfigure}
\hfill
\begin{subfigure}[b]{0.48\linewidth}
\includegraphics[width=\linewidth]{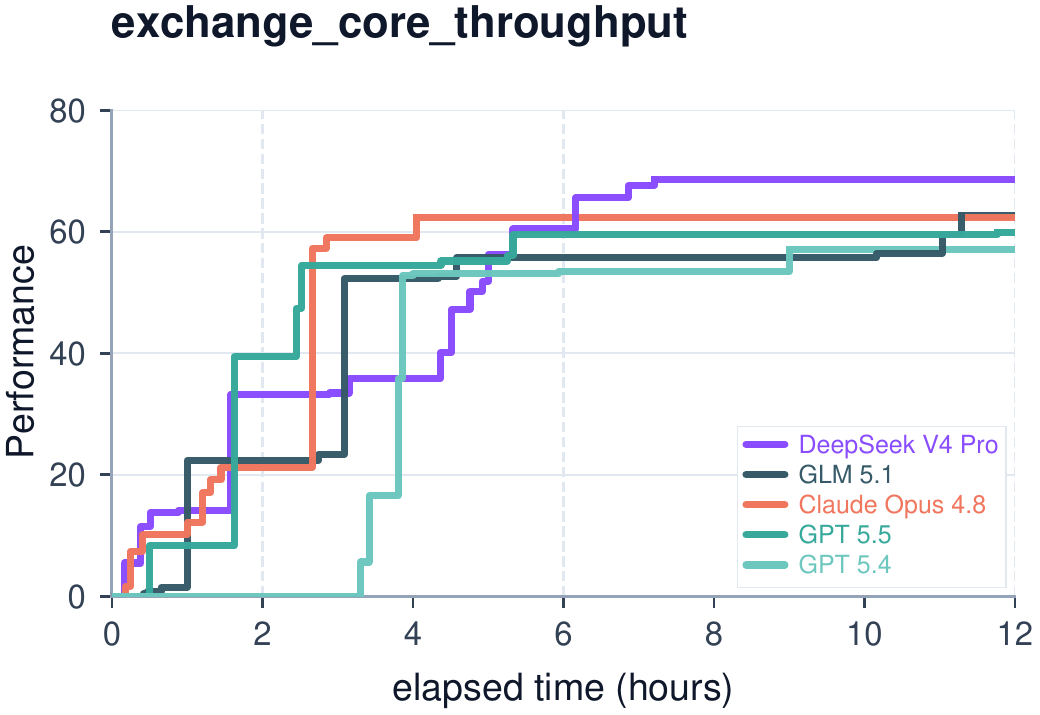}
\end{subfigure}
\caption{Per-task learning curves: Systems \& Software Engineering (7/21).}
\label{fig:curves-all-7}
\end{figure}

\begin{figure}[p]
\centering
\begin{subfigure}[b]{0.48\linewidth}
\includegraphics[width=\linewidth]{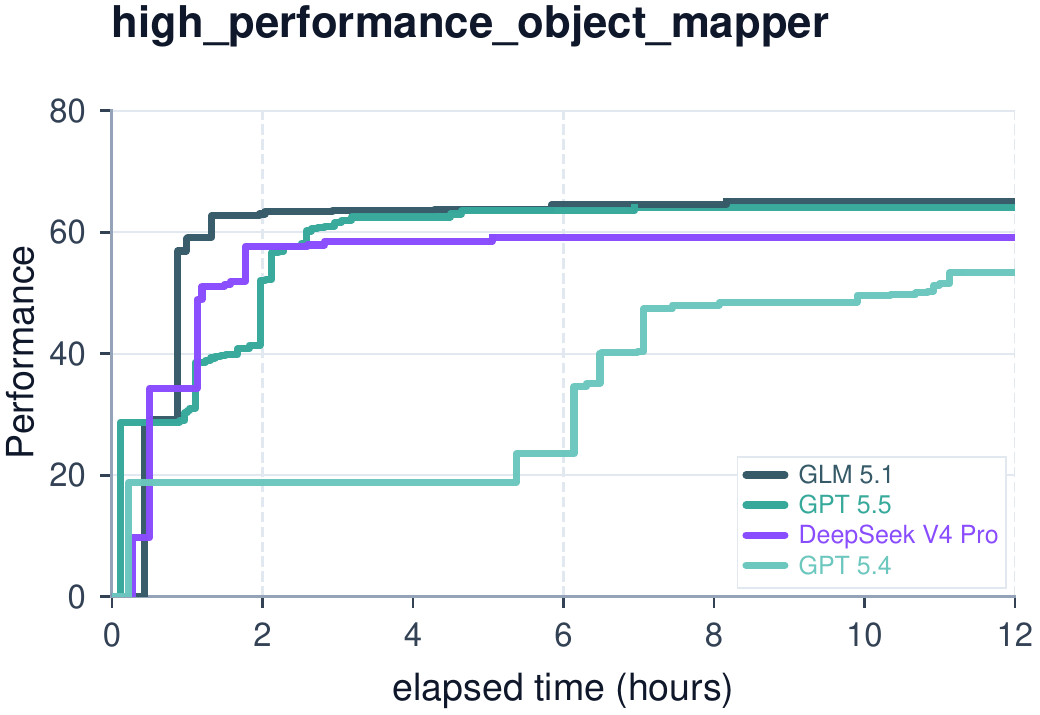}
\end{subfigure}
\hfill
\begin{subfigure}[b]{0.48\linewidth}
\includegraphics[width=\linewidth]{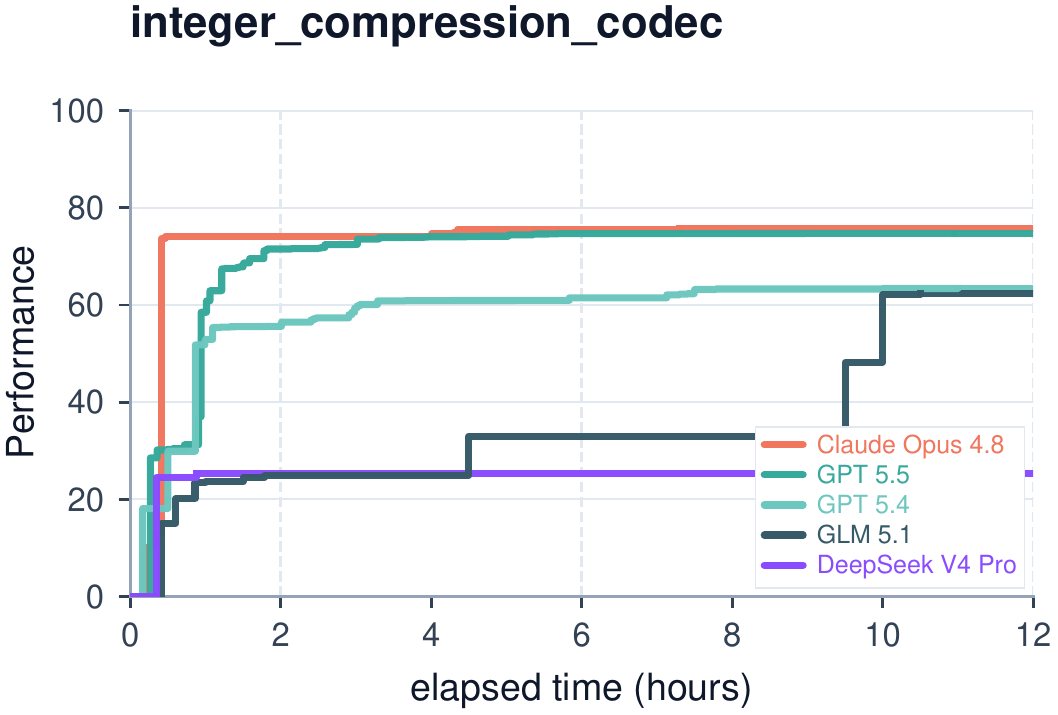}
\end{subfigure}
\vspace{0.5em}
\begin{subfigure}[b]{0.48\linewidth}
\includegraphics[width=\linewidth]{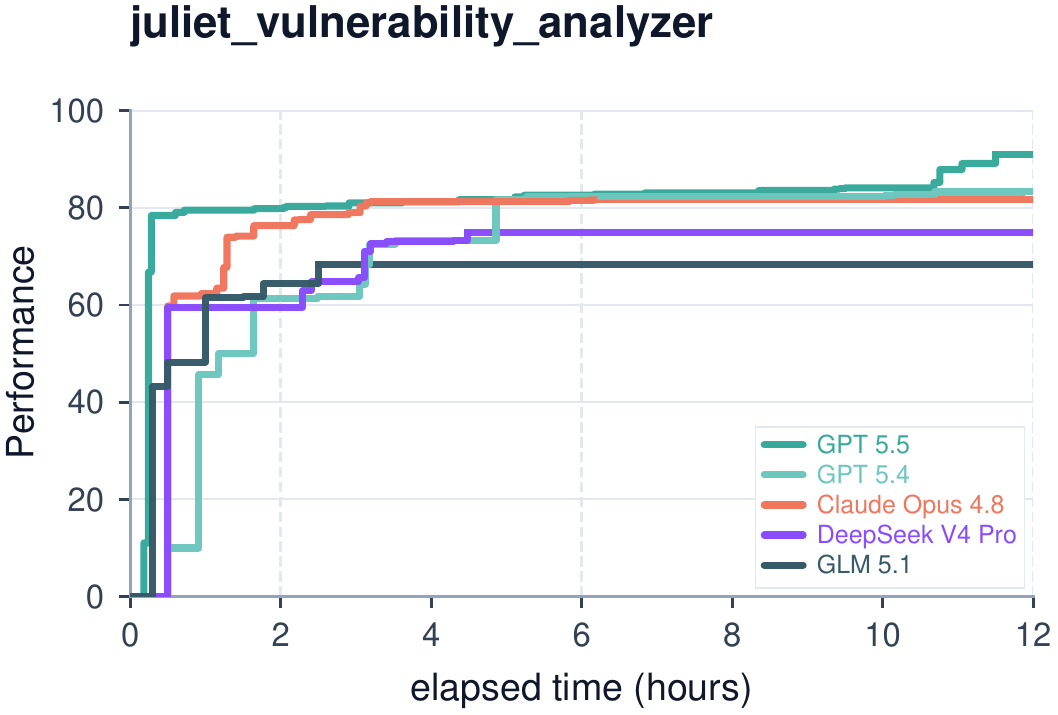}
\end{subfigure}
\hfill
\begin{subfigure}[b]{0.48\linewidth}
\includegraphics[width=\linewidth]{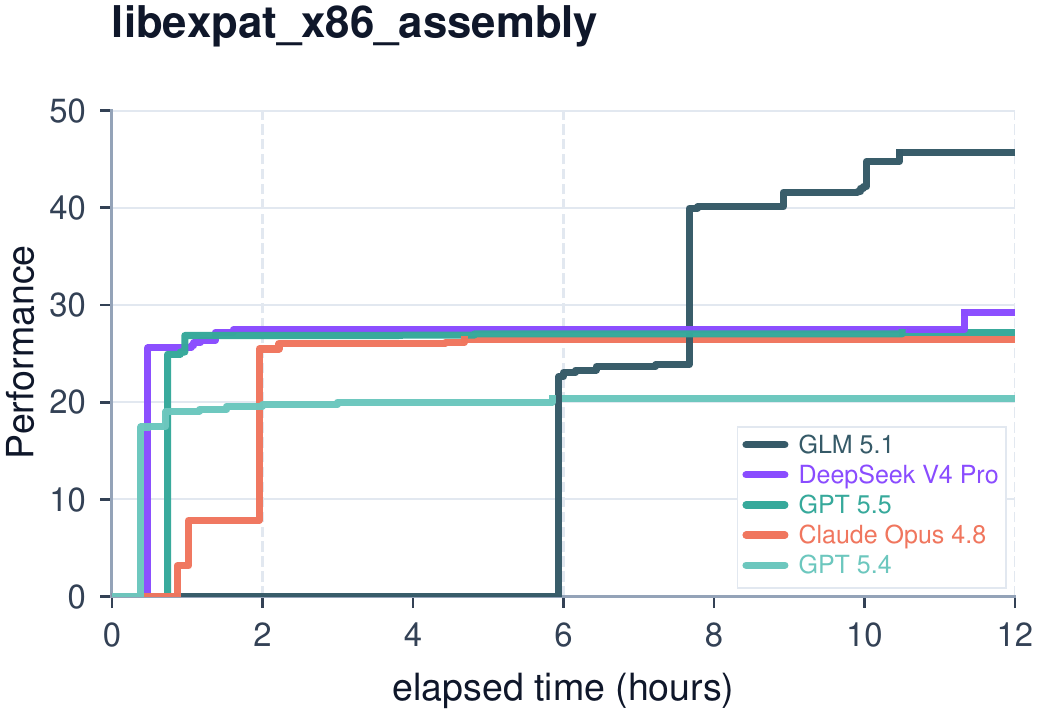}
\end{subfigure}
\vspace{0.5em}
\begin{subfigure}[b]{0.48\linewidth}
\includegraphics[width=\linewidth]{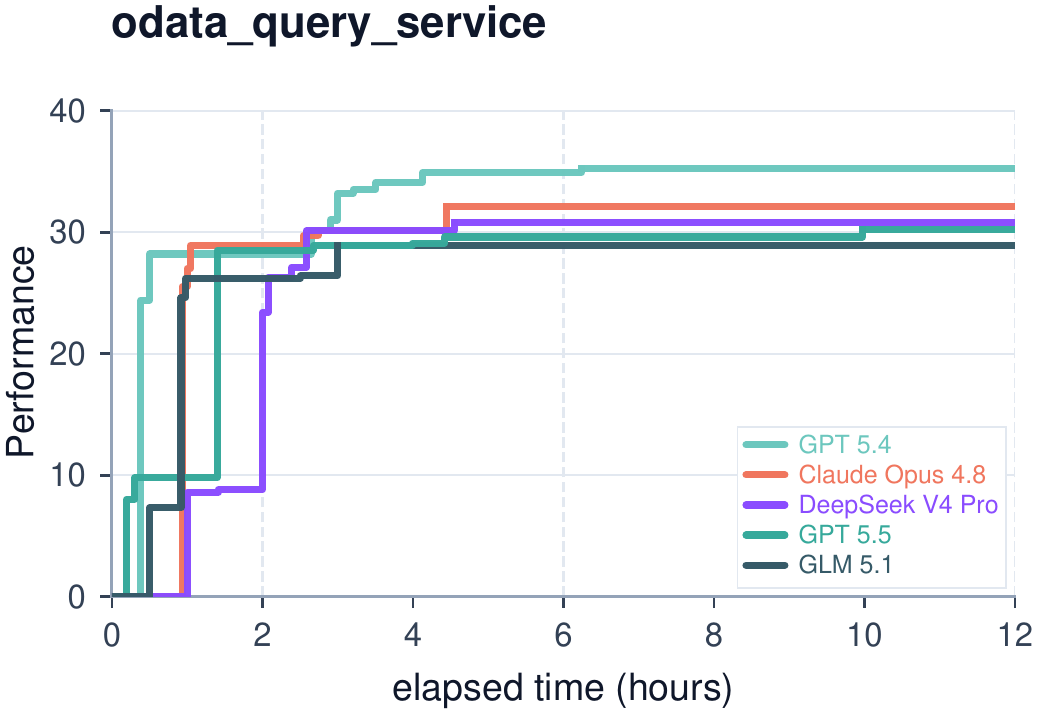}
\end{subfigure}
\hfill
\begin{subfigure}[b]{0.48\linewidth}
\includegraphics[width=\linewidth]{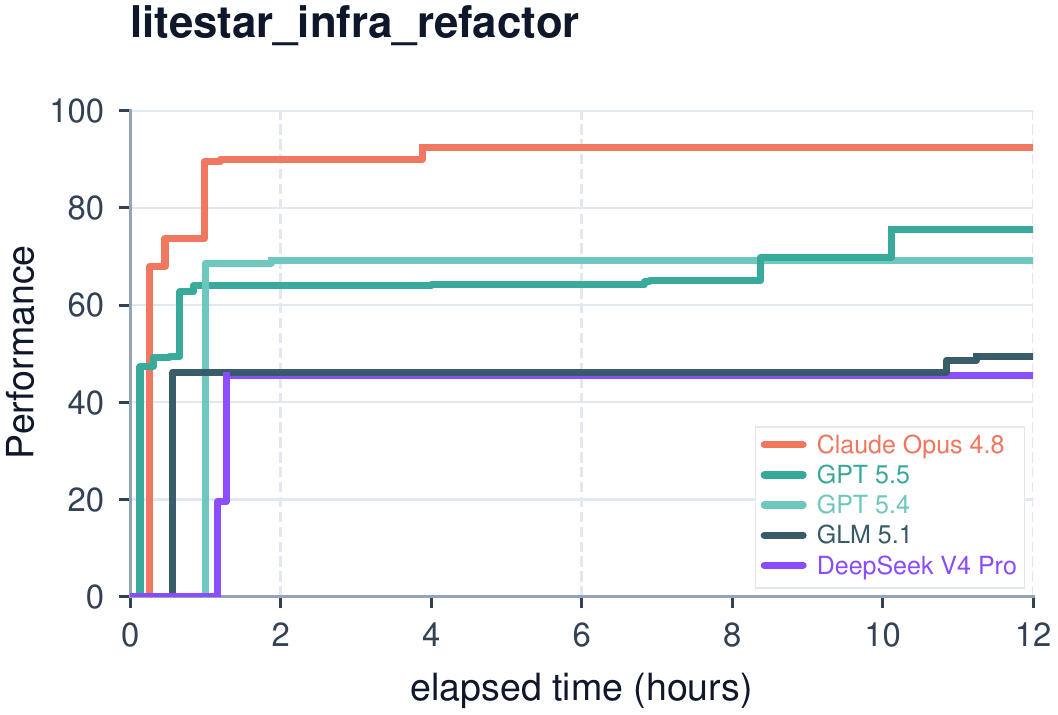}
\end{subfigure}
\caption{Per-task learning curves: Systems \& Software Engineering cont. (8/21).}
\label{fig:curves-all-8}
\end{figure}

\begin{figure}[p]
\centering
\begin{subfigure}[b]{0.48\linewidth}
\includegraphics[width=\linewidth]{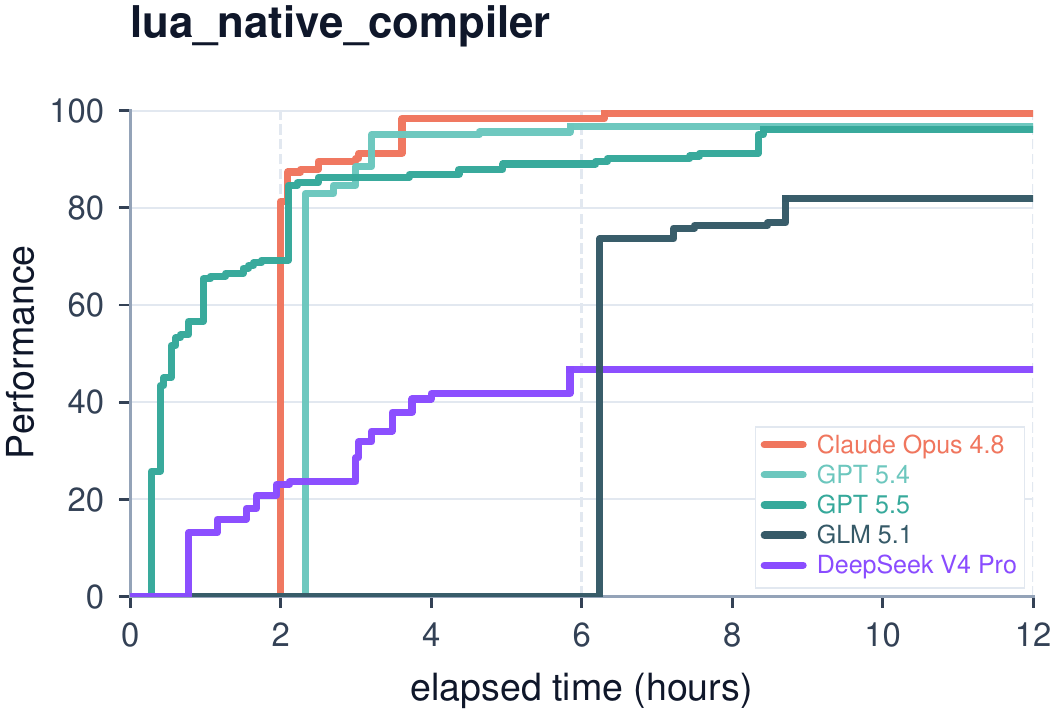}
\end{subfigure}
\hfill
\begin{subfigure}[b]{0.48\linewidth}
\includegraphics[width=\linewidth]{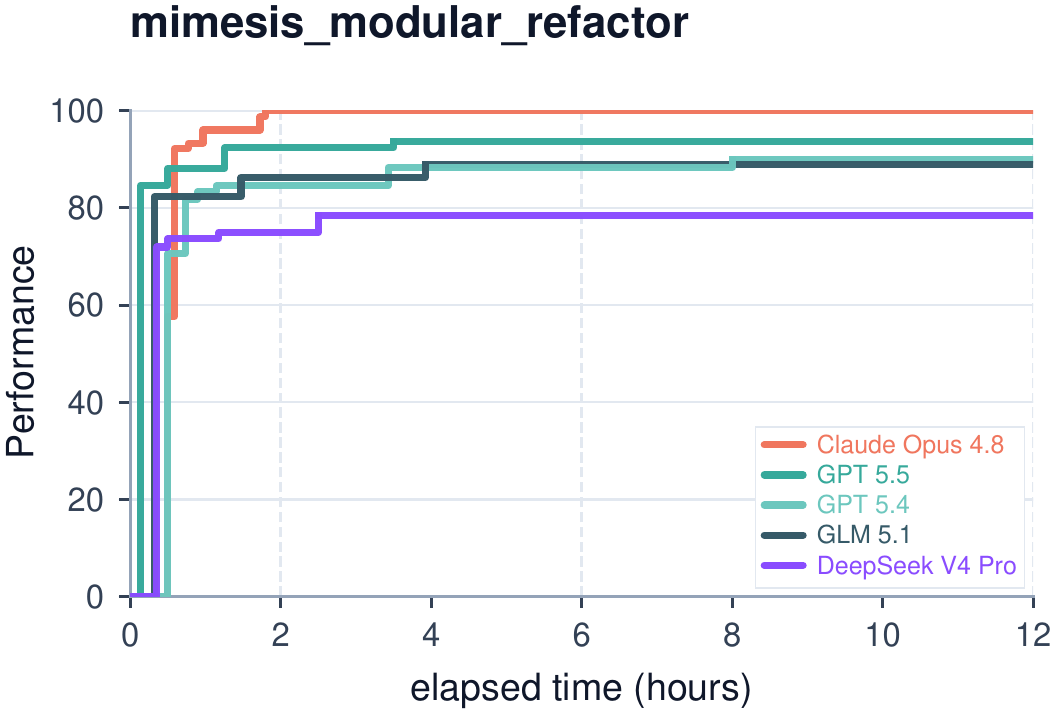}
\end{subfigure}
\vspace{0.5em}
\begin{subfigure}[b]{0.48\linewidth}
\includegraphics[width=\linewidth]{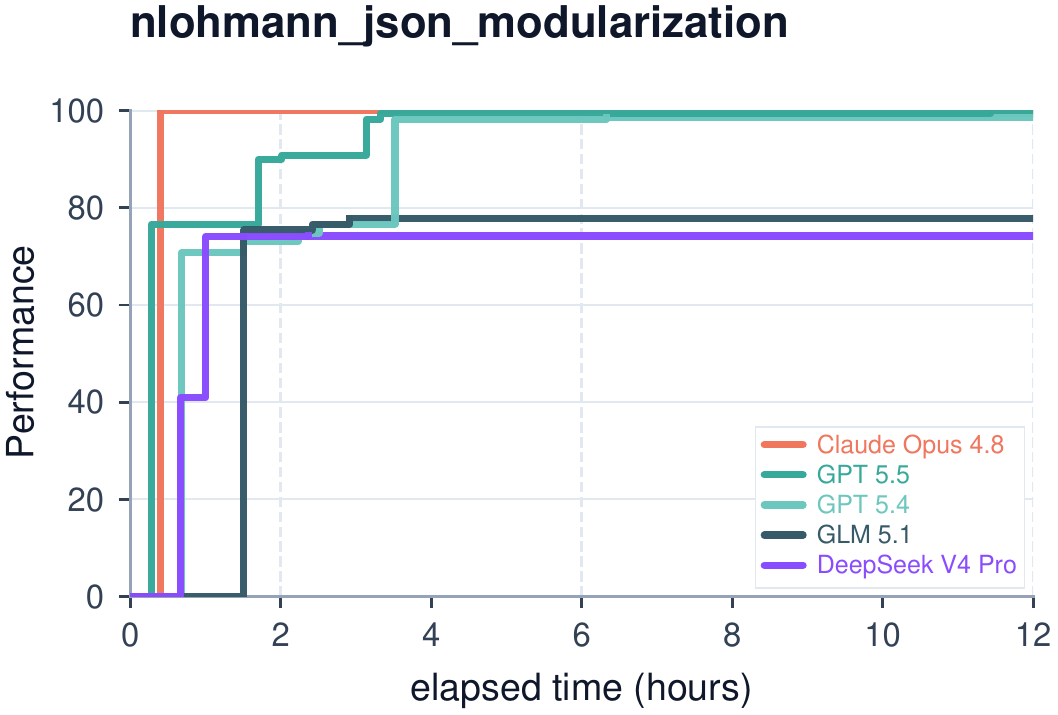}
\end{subfigure}
\hfill
\begin{subfigure}[b]{0.48\linewidth}
\includegraphics[width=\linewidth]{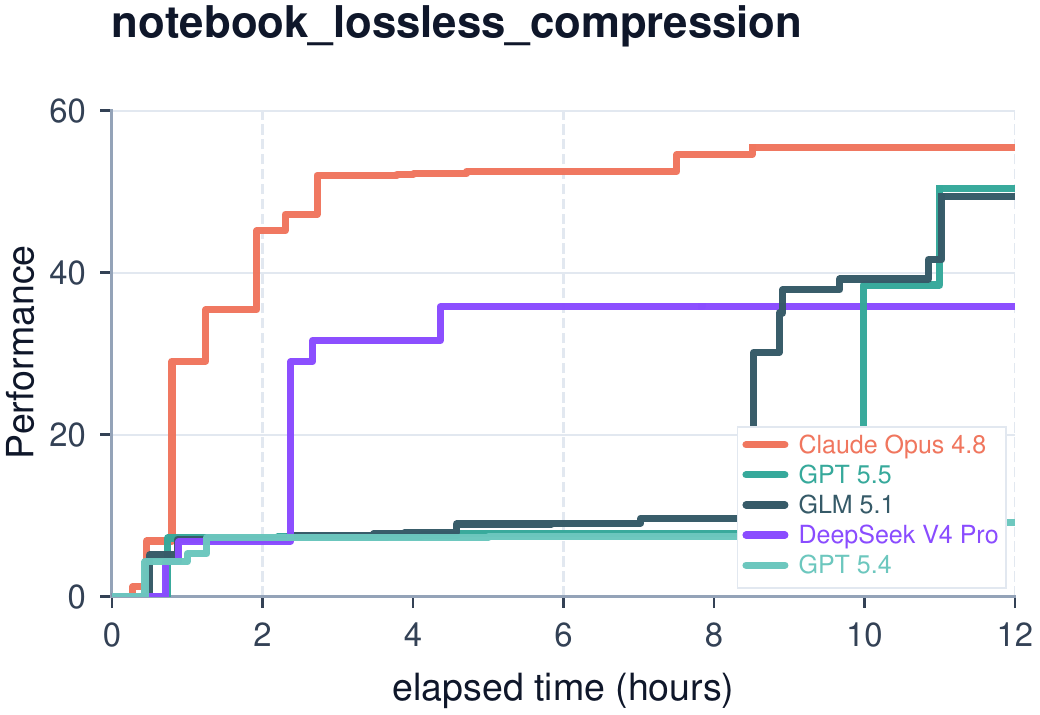}
\end{subfigure}
\vspace{0.5em}
\begin{subfigure}[b]{0.48\linewidth}
\includegraphics[width=\linewidth]{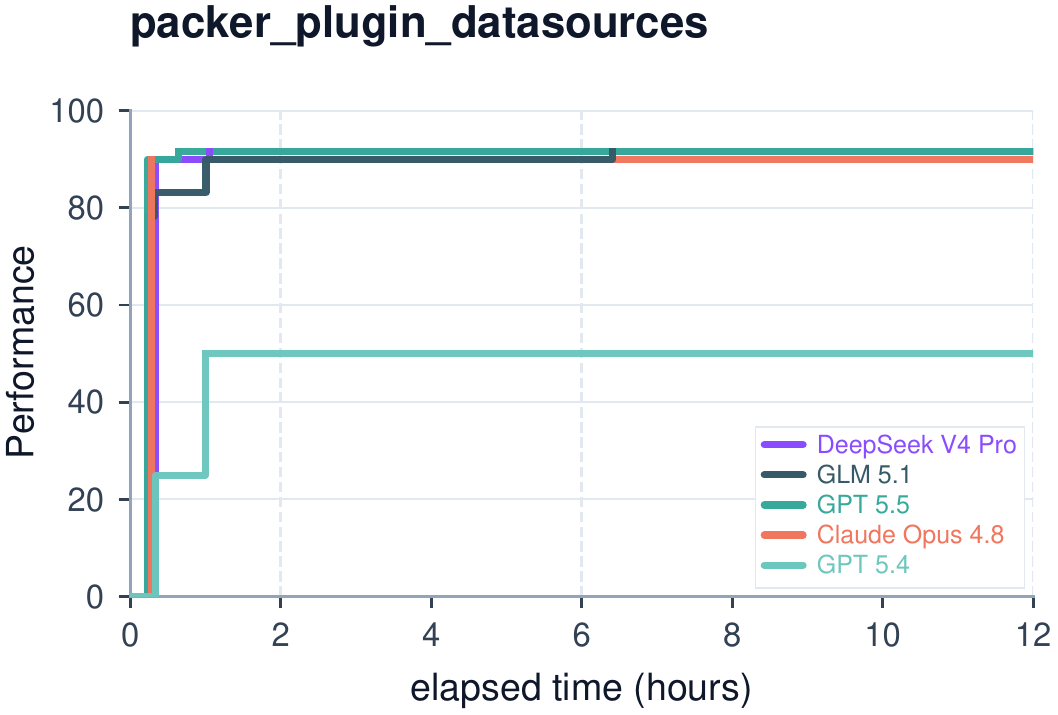}
\end{subfigure}
\hfill
\begin{subfigure}[b]{0.48\linewidth}
\includegraphics[width=\linewidth]{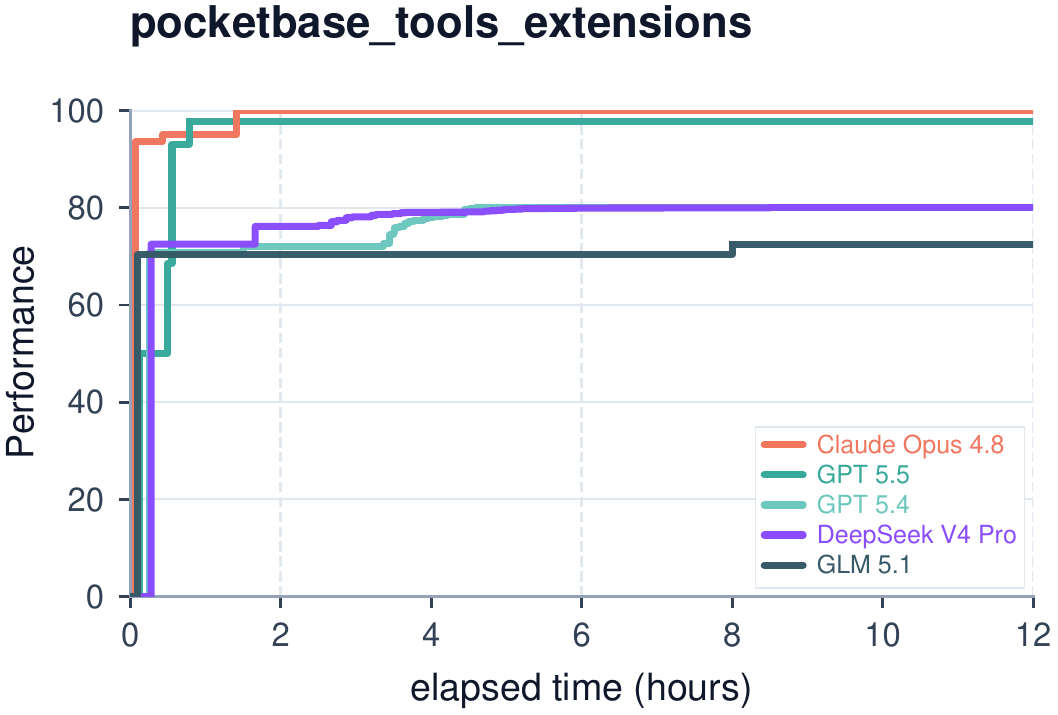}
\end{subfigure}
\caption{Per-task learning curves: Systems \& Software Engineering cont. (9/21).}
\label{fig:curves-all-9}
\end{figure}

\begin{figure}[p]
\centering
\begin{subfigure}[b]{0.48\linewidth}
\includegraphics[width=\linewidth]{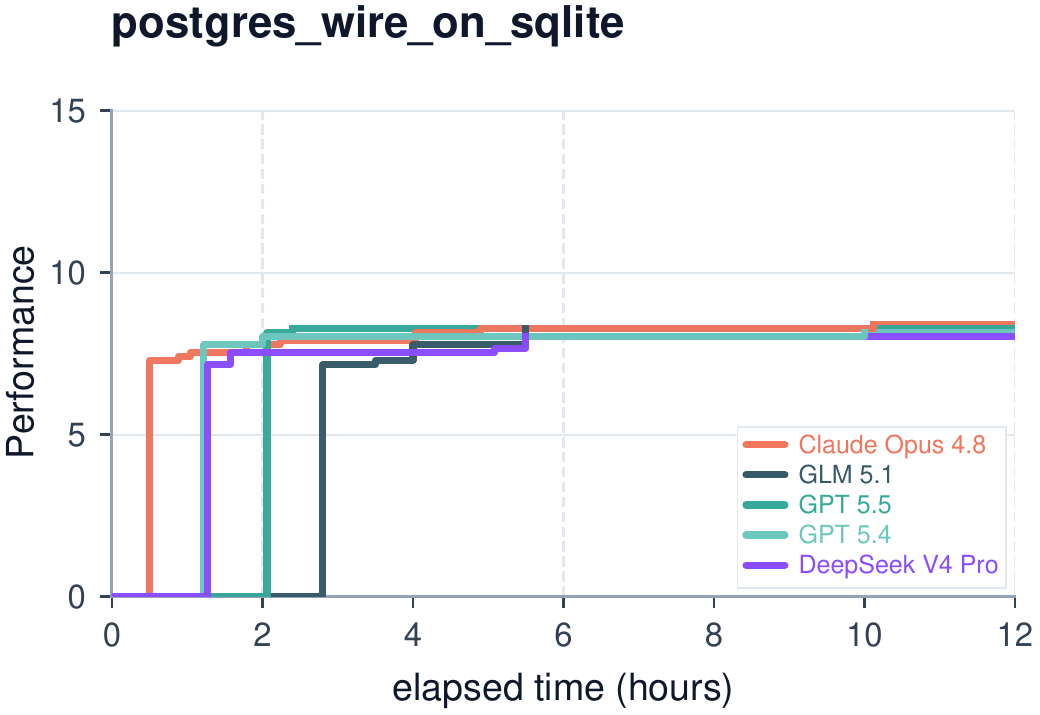}
\end{subfigure}
\hfill
\begin{subfigure}[b]{0.48\linewidth}
\includegraphics[width=\linewidth]{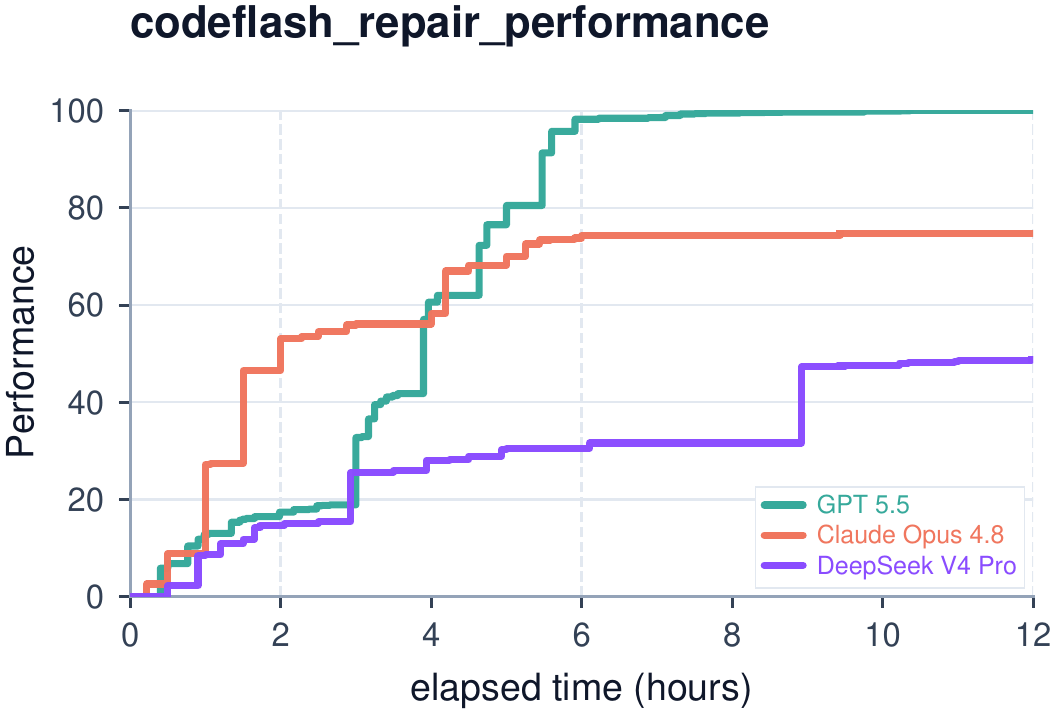}
\end{subfigure}
\vspace{0.5em}
\begin{subfigure}[b]{0.48\linewidth}
\includegraphics[width=\linewidth]{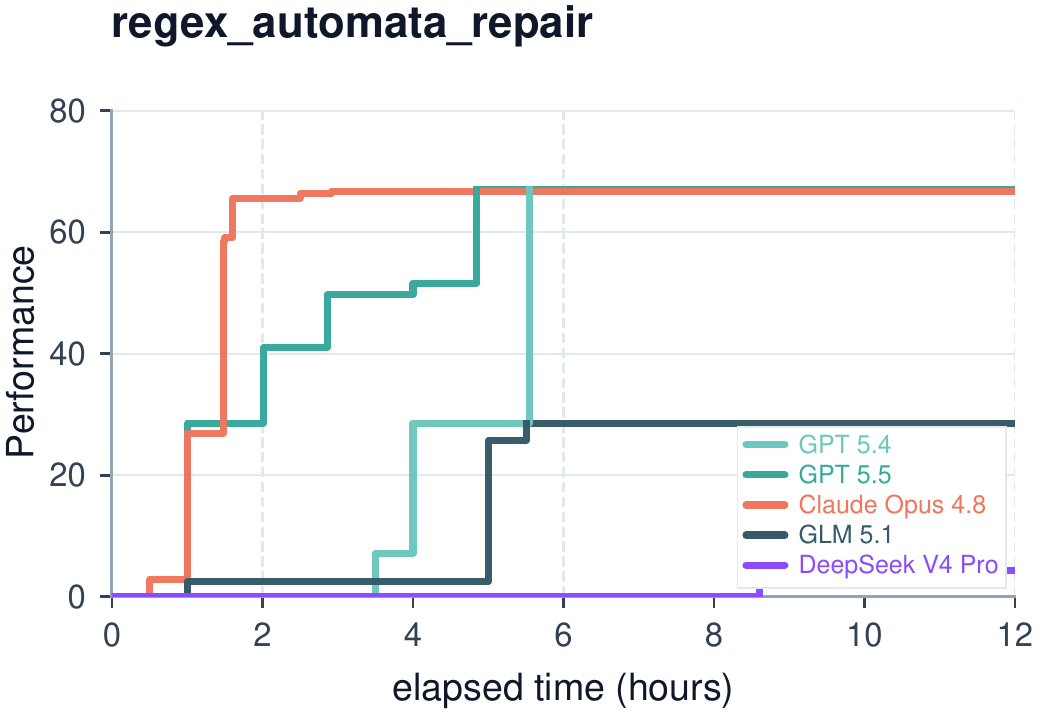}
\end{subfigure}
\hfill
\begin{subfigure}[b]{0.48\linewidth}
\includegraphics[width=\linewidth]{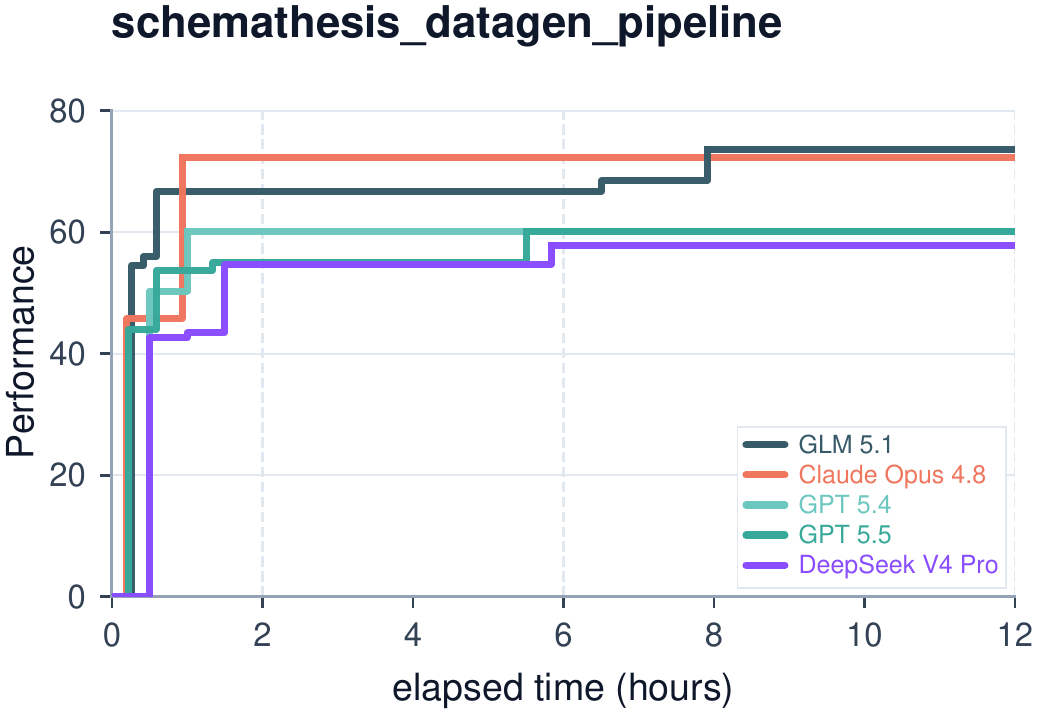}
\end{subfigure}
\vspace{0.5em}
\begin{subfigure}[b]{0.48\linewidth}
\includegraphics[width=\linewidth]{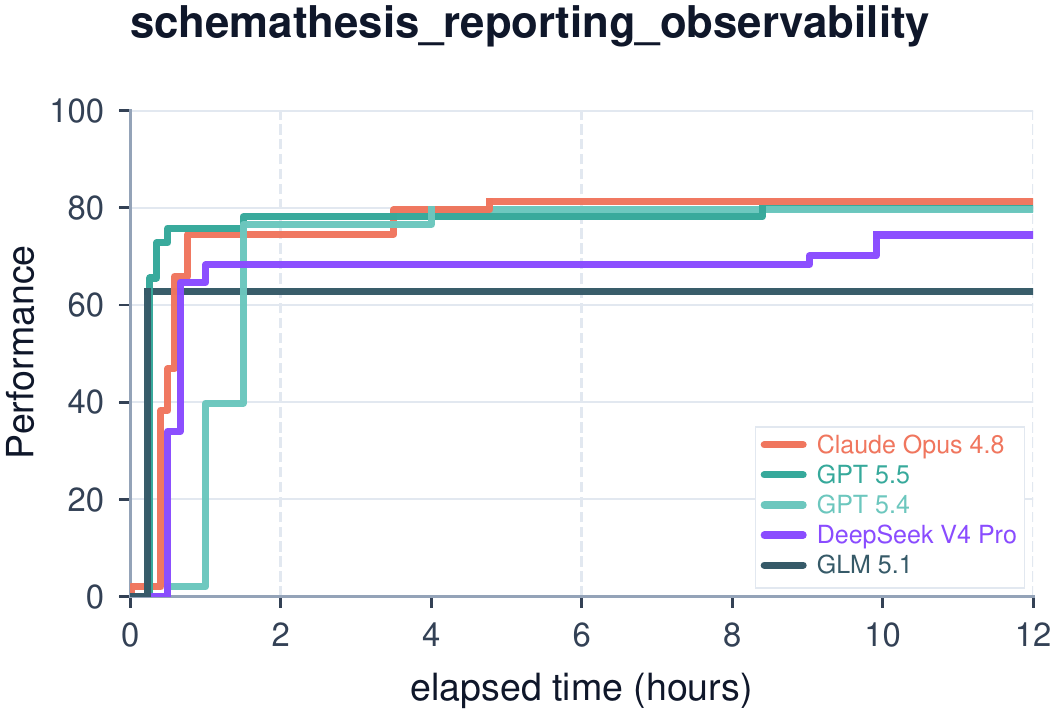}
\end{subfigure}
\hfill
\begin{subfigure}[b]{0.48\linewidth}
\includegraphics[width=\linewidth]{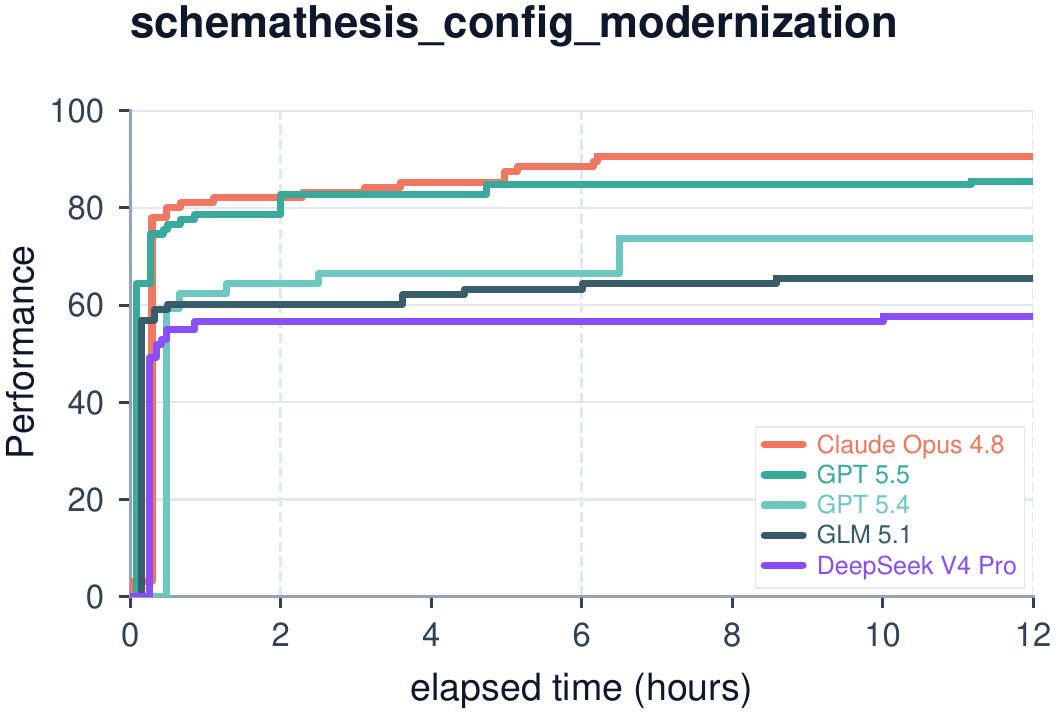}
\end{subfigure}
\caption{Per-task learning curves: Systems \& Software Engineering cont. (10/21).}
\label{fig:curves-all-10}
\end{figure}

\begin{figure}[p]
\centering
\begin{subfigure}[b]{0.48\linewidth}
\includegraphics[width=\linewidth]{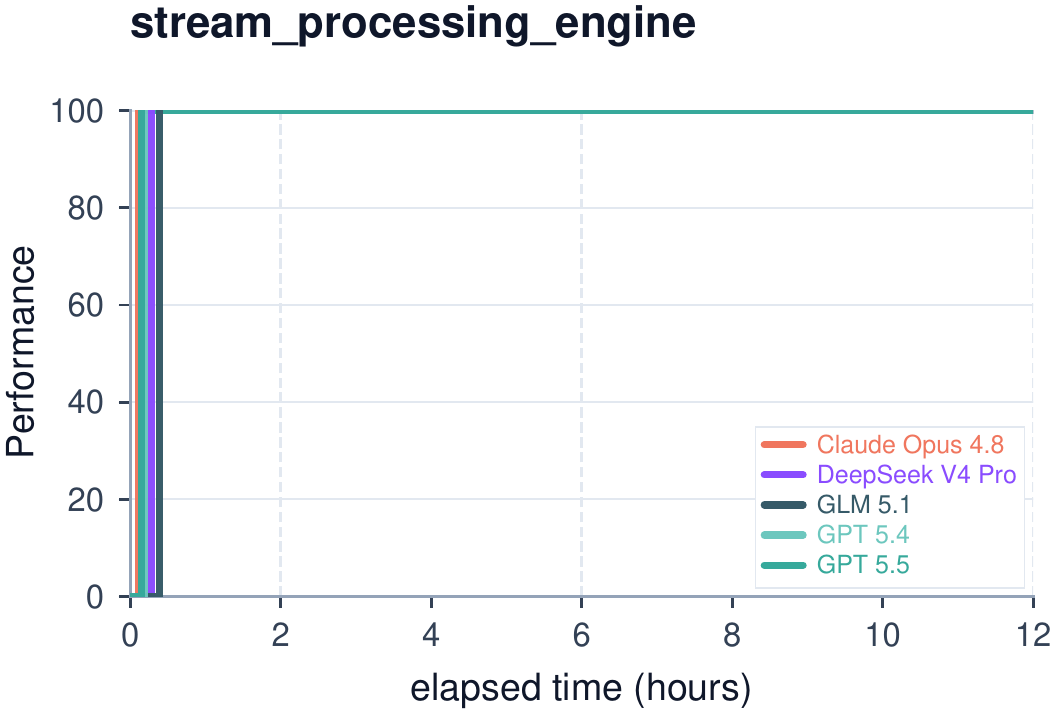}
\end{subfigure}
\hfill
\begin{subfigure}[b]{0.48\linewidth}
\includegraphics[width=\linewidth]{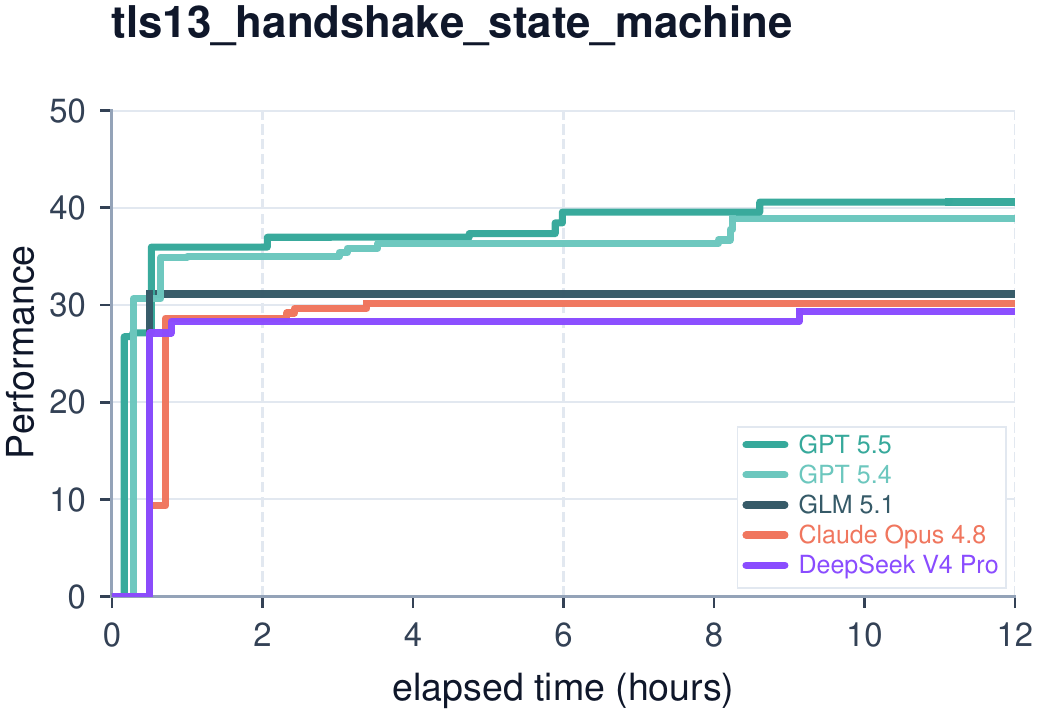}
\end{subfigure}
\vspace{0.5em}
\begin{subfigure}[b]{0.48\linewidth}
\includegraphics[width=\linewidth]{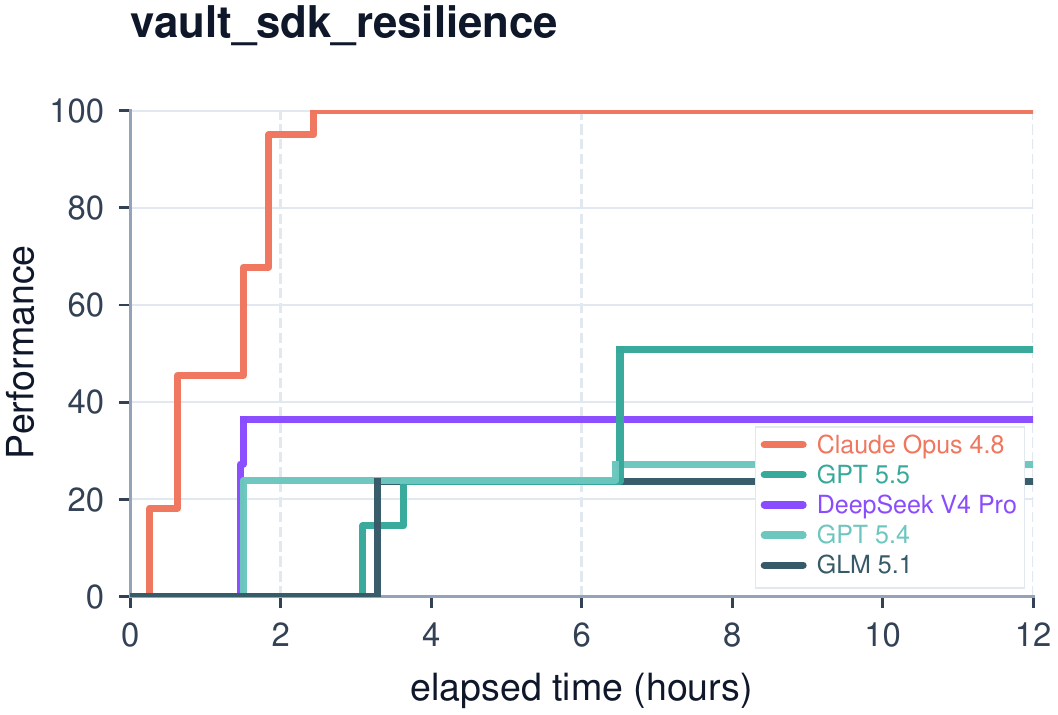}
\end{subfigure}
\hfill
\begin{subfigure}[b]{0.48\linewidth}
\includegraphics[width=\linewidth]{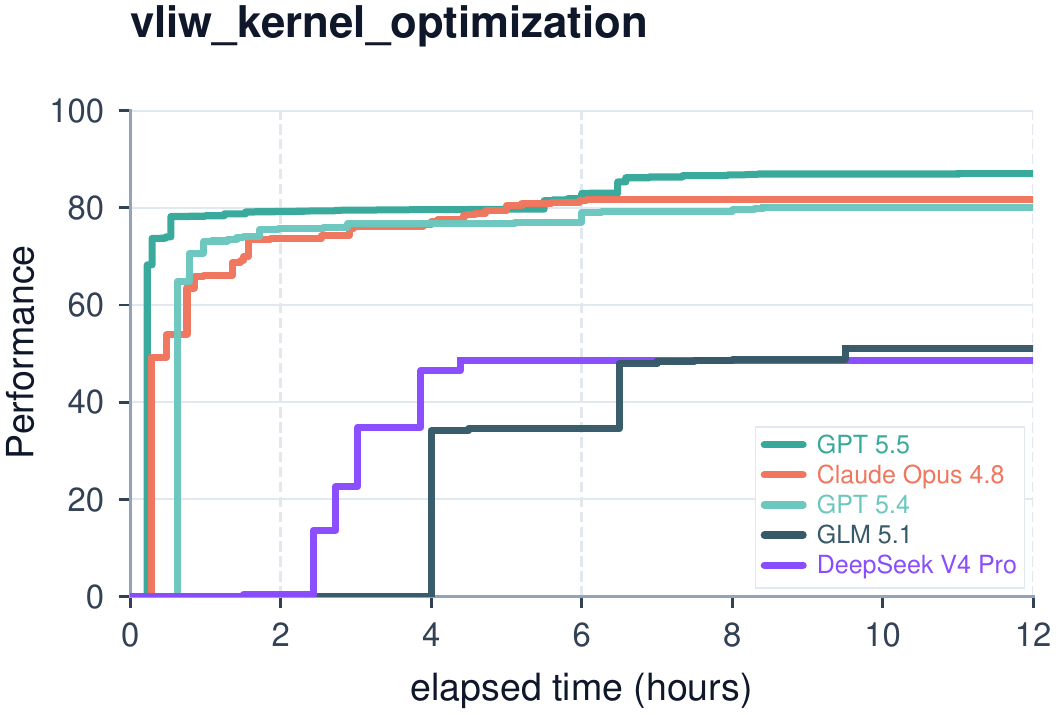}
\end{subfigure}
\vspace{0.5em}
\begin{subfigure}[b]{0.48\linewidth}
\includegraphics[width=\linewidth]{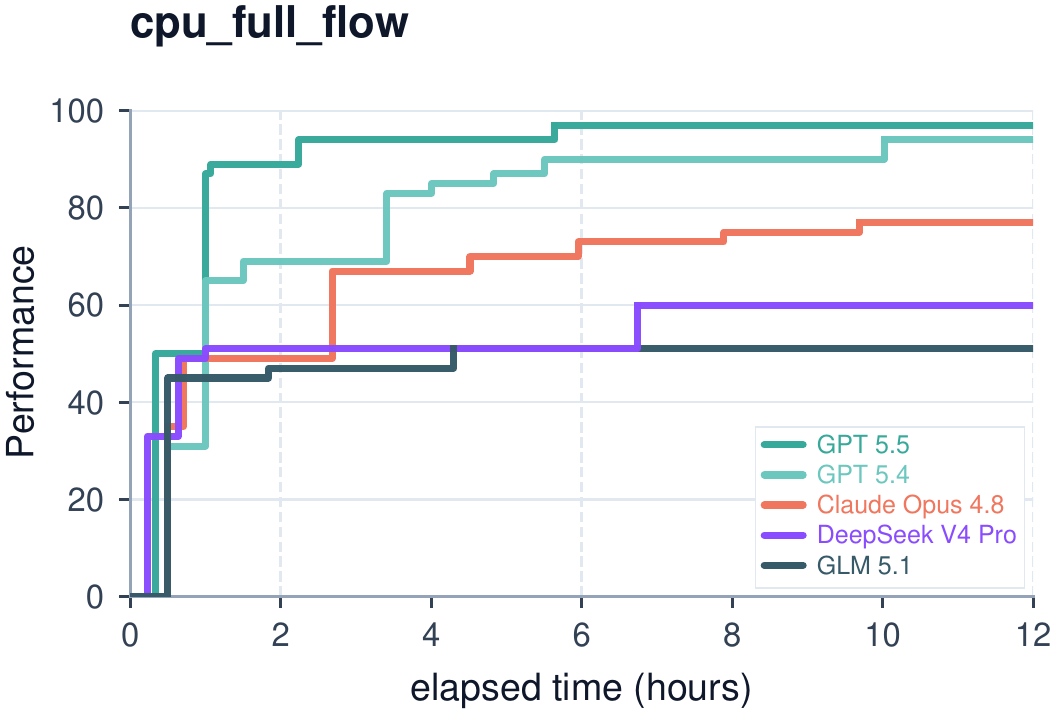}
\end{subfigure}
\hfill
\begin{subfigure}[b]{0.48\linewidth}
\includegraphics[width=\linewidth]{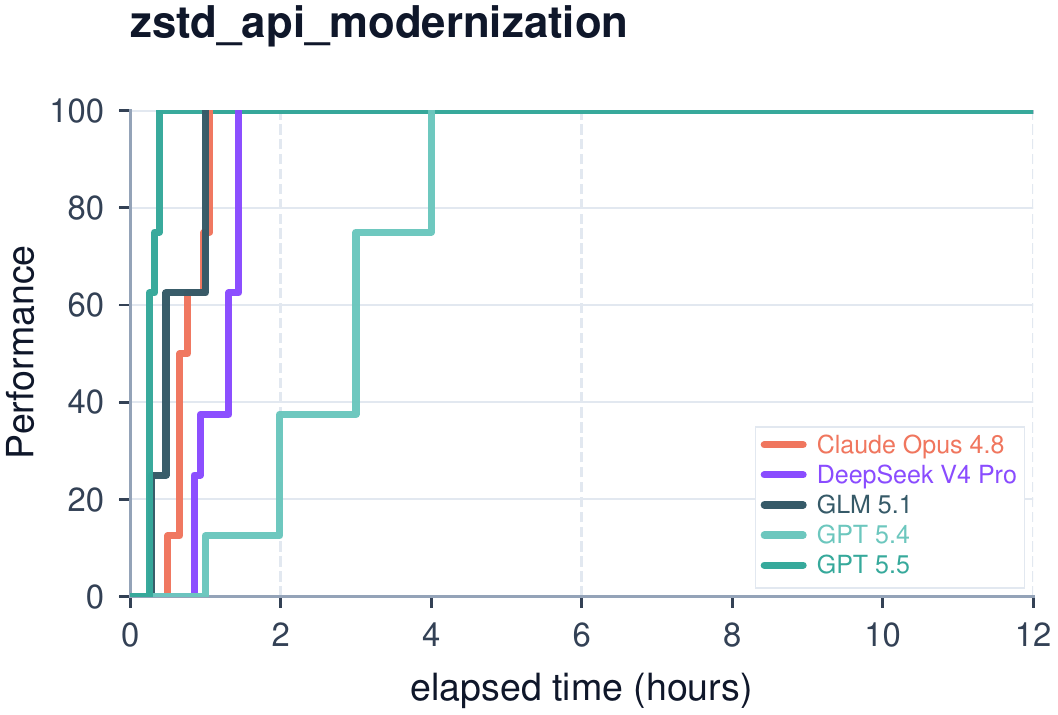}
\end{subfigure}
\caption{Per-task learning curves: Systems \& Software Engineering cont. (11/21).}
\label{fig:curves-all-11}
\end{figure}

\begin{figure}[p]
\centering
\begin{subfigure}[b]{0.48\linewidth}
\includegraphics[width=\linewidth]{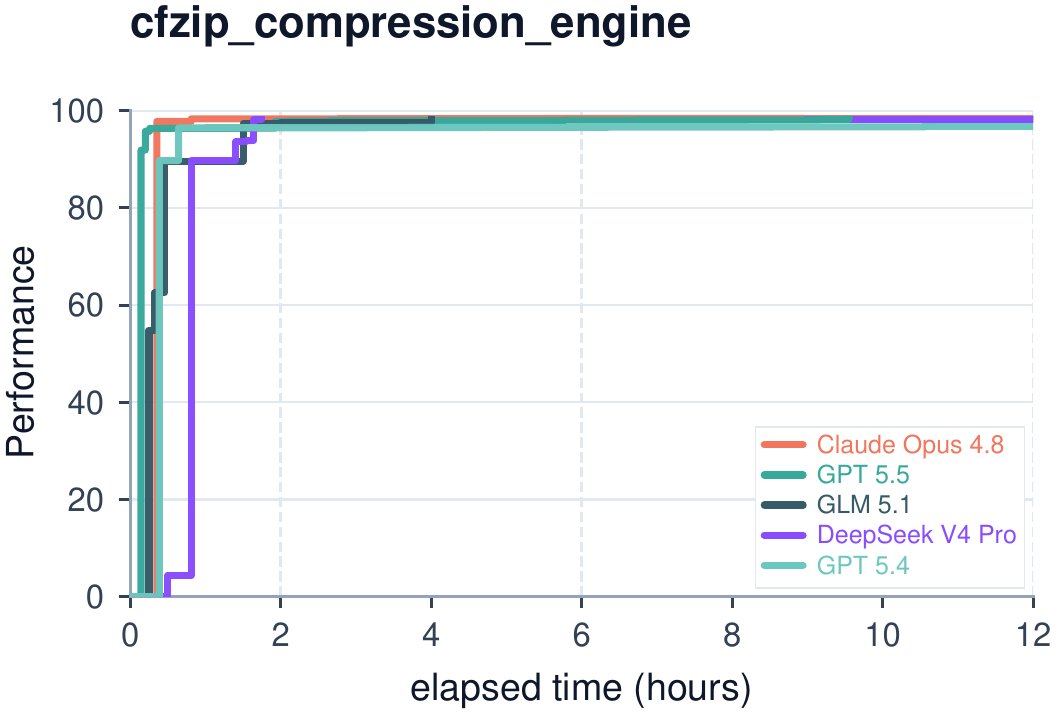}
\end{subfigure}
\hfill
\begin{subfigure}[b]{0.48\linewidth}
\includegraphics[width=\linewidth]{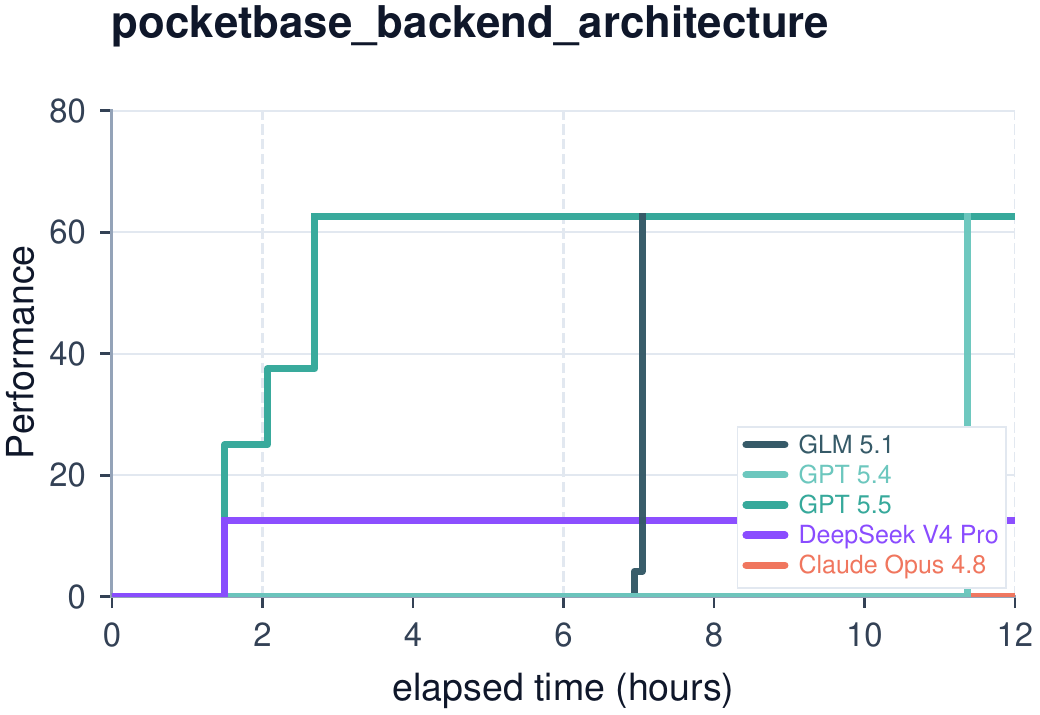}
\end{subfigure}
\vspace{0.5em}
\begin{subfigure}[b]{0.48\linewidth}
\includegraphics[width=\linewidth]{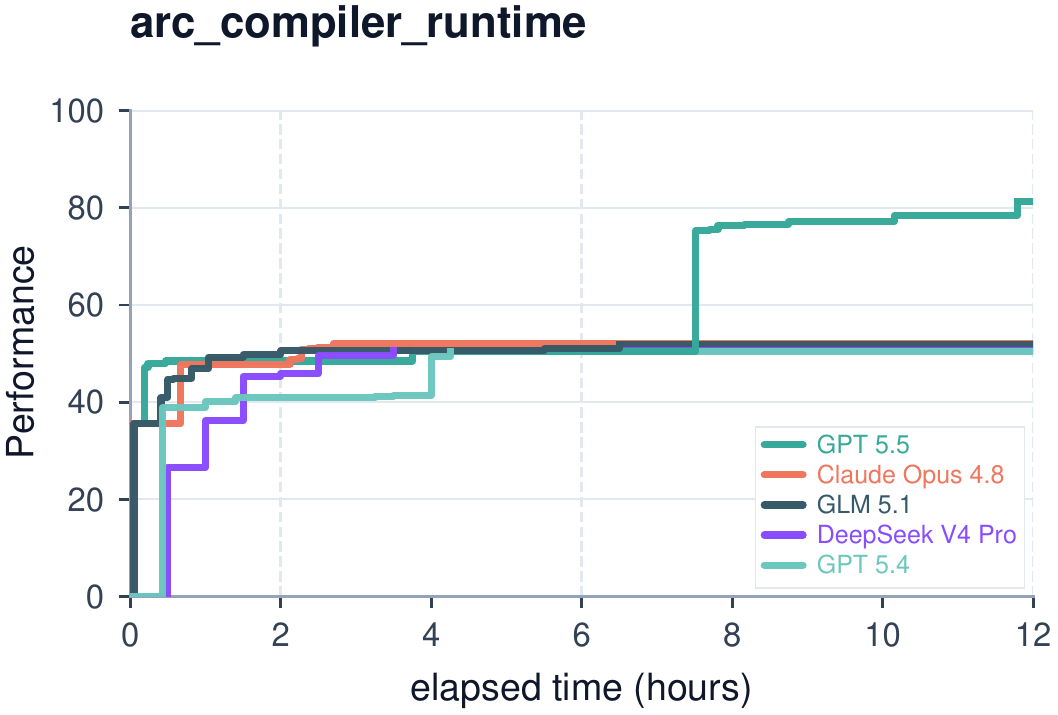}
\end{subfigure}
\caption{Per-task learning curves: Systems \& Software Engineering cont. (12/21).}
\label{fig:curves-all-12}
\end{figure}

\begin{figure}[p]
\centering
\begin{subfigure}[b]{0.48\linewidth}
\includegraphics[width=\linewidth]{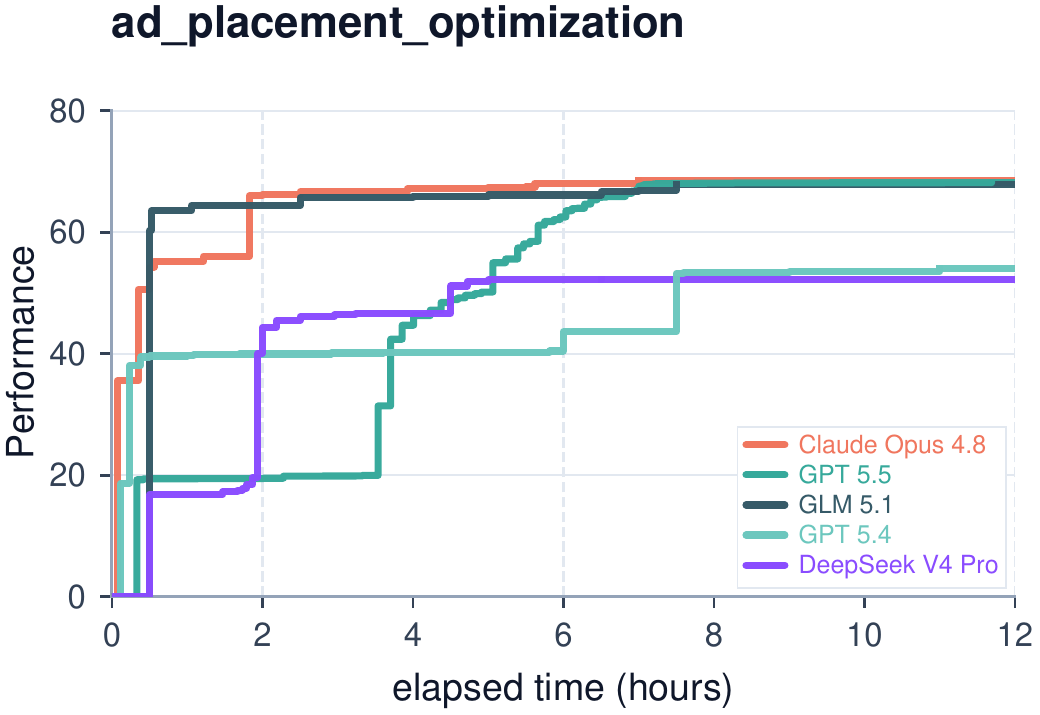}
\end{subfigure}
\hfill
\begin{subfigure}[b]{0.48\linewidth}
\includegraphics[width=\linewidth]{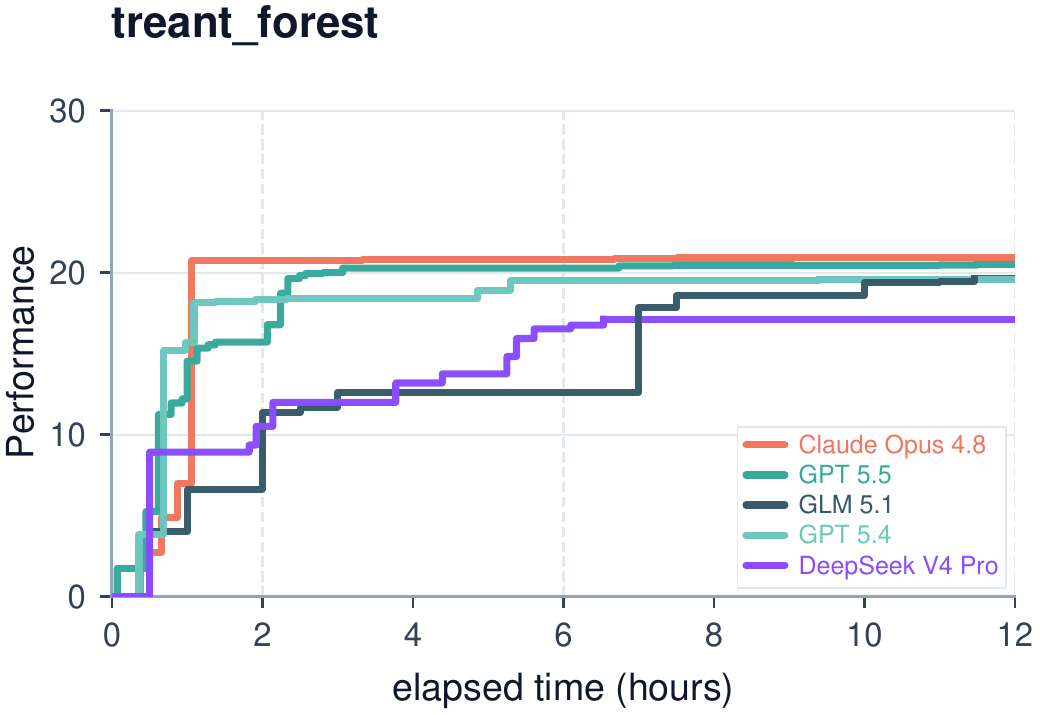}
\end{subfigure}
\vspace{0.5em}
\begin{subfigure}[b]{0.48\linewidth}
\includegraphics[width=\linewidth]{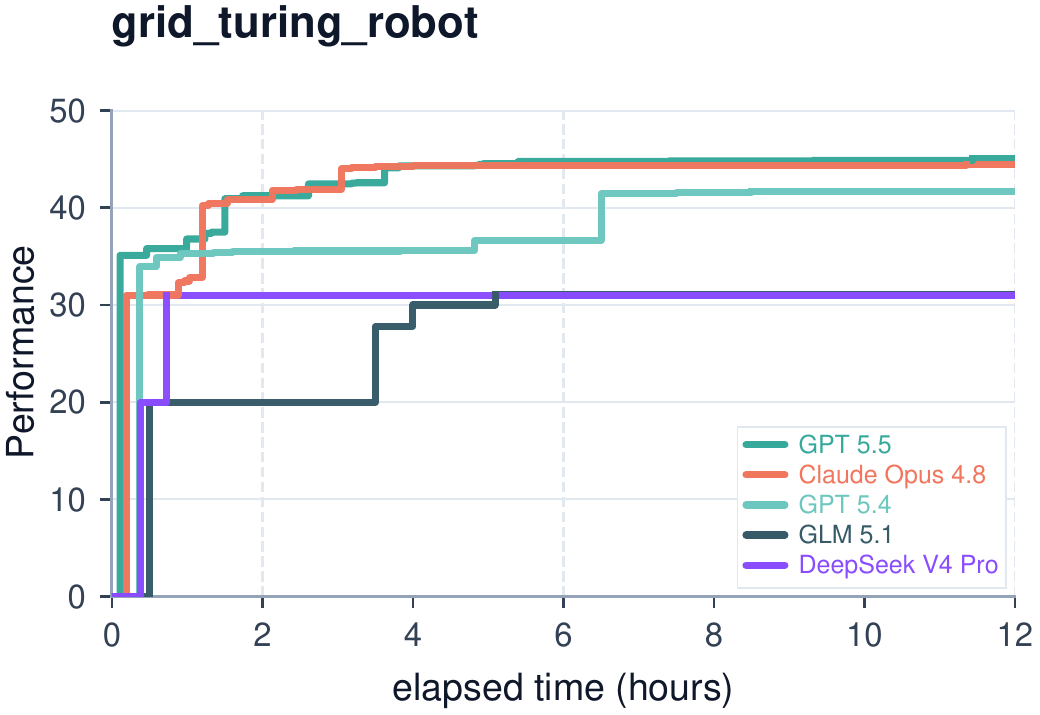}
\end{subfigure}
\hfill
\begin{subfigure}[b]{0.48\linewidth}
\includegraphics[width=\linewidth]{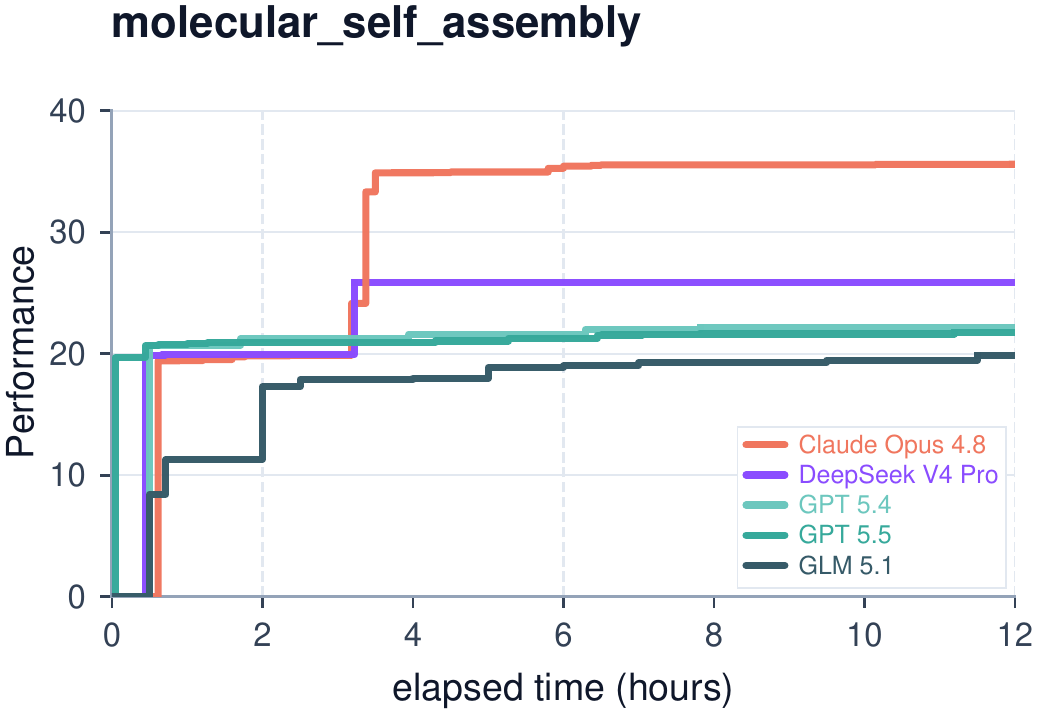}
\end{subfigure}
\vspace{0.5em}
\begin{subfigure}[b]{0.48\linewidth}
\includegraphics[width=\linewidth]{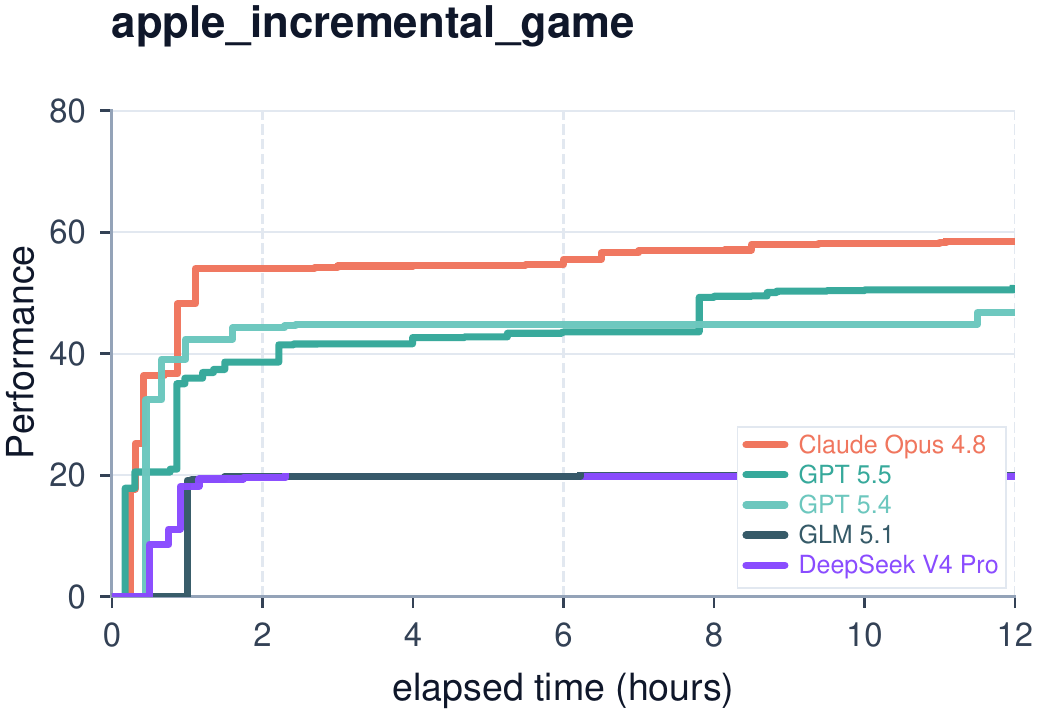}
\end{subfigure}
\hfill
\begin{subfigure}[b]{0.48\linewidth}
\includegraphics[width=\linewidth]{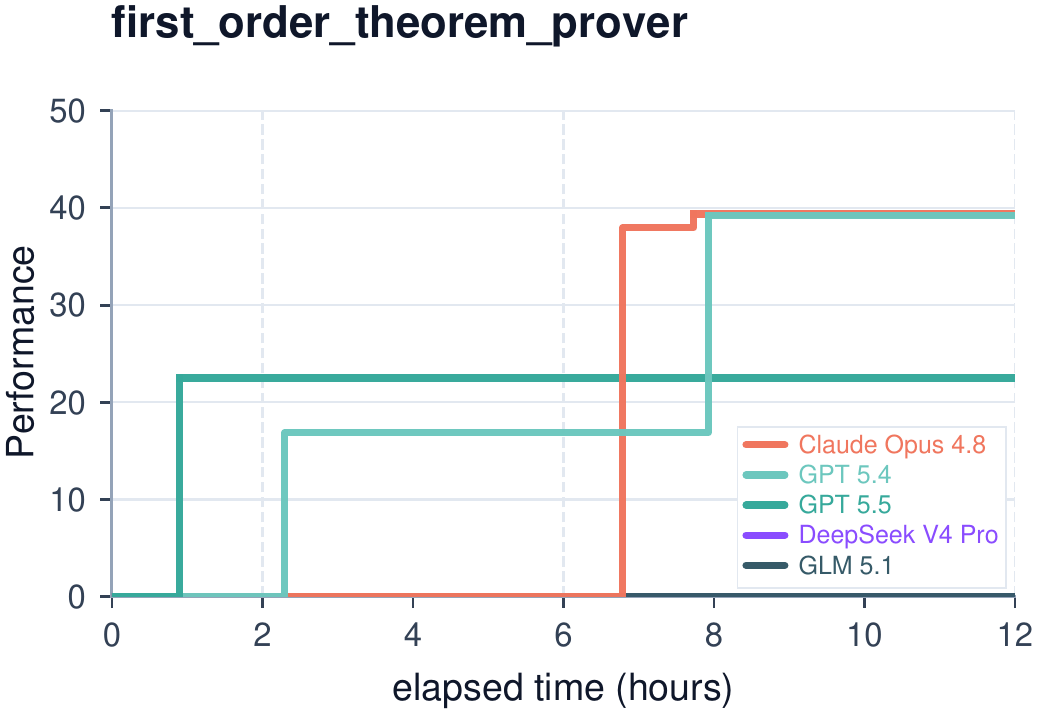}
\end{subfigure}
\caption{Per-task learning curves: Combinatorial Optimization \& Planning (13/21).}
\label{fig:curves-all-13}
\end{figure}

\begin{figure}[p]
\centering
\begin{subfigure}[b]{0.48\linewidth}
\includegraphics[width=\linewidth]{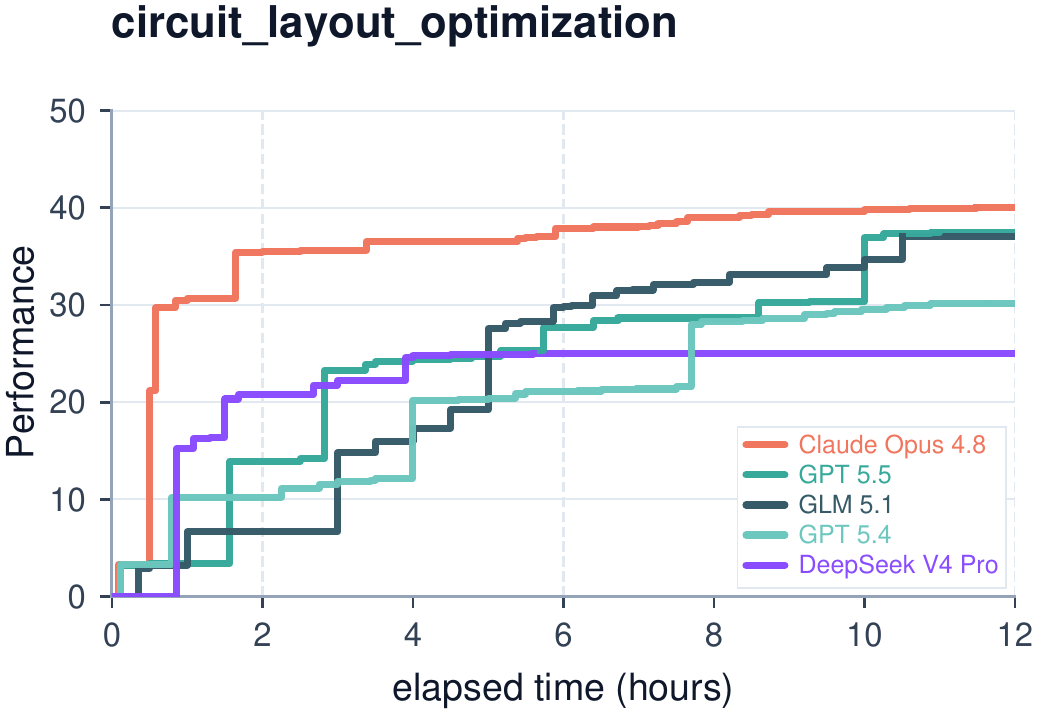}
\end{subfigure}
\hfill
\begin{subfigure}[b]{0.48\linewidth}
\includegraphics[width=\linewidth]{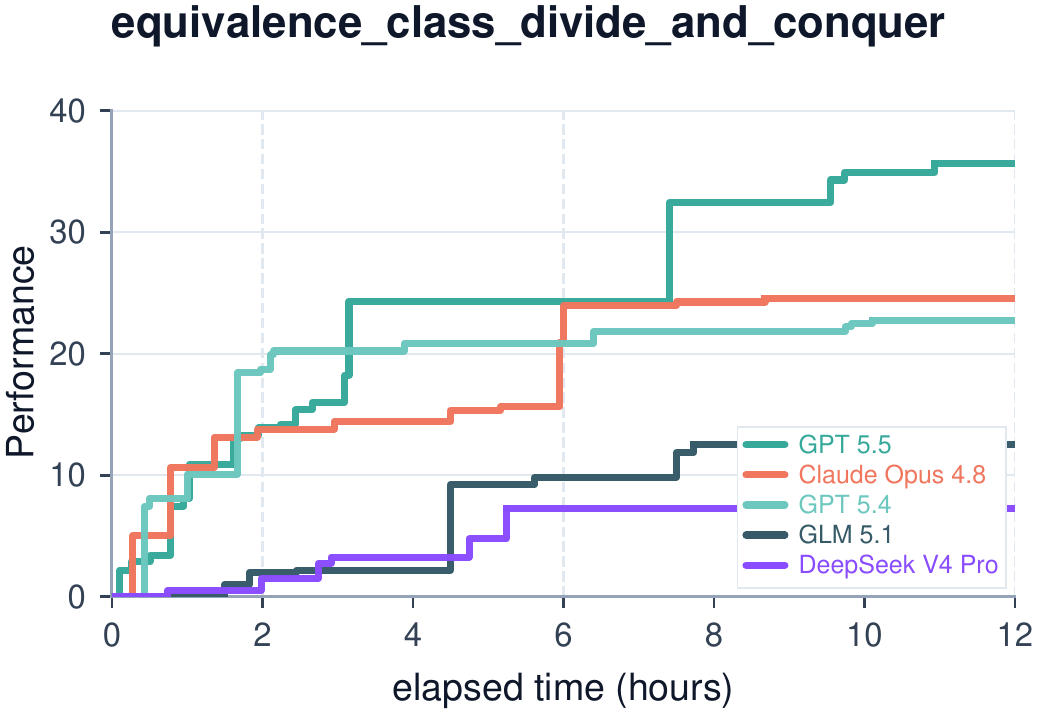}
\end{subfigure}
\vspace{0.5em}
\begin{subfigure}[b]{0.48\linewidth}
\includegraphics[width=\linewidth]{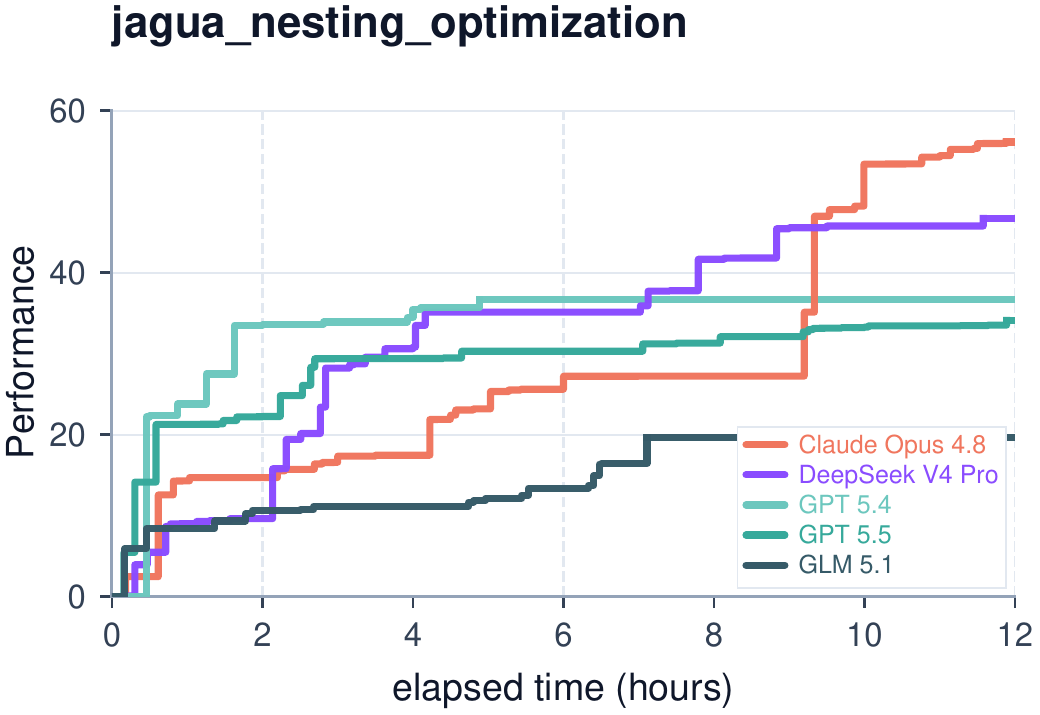}
\end{subfigure}
\hfill
\begin{subfigure}[b]{0.48\linewidth}
\includegraphics[width=\linewidth]{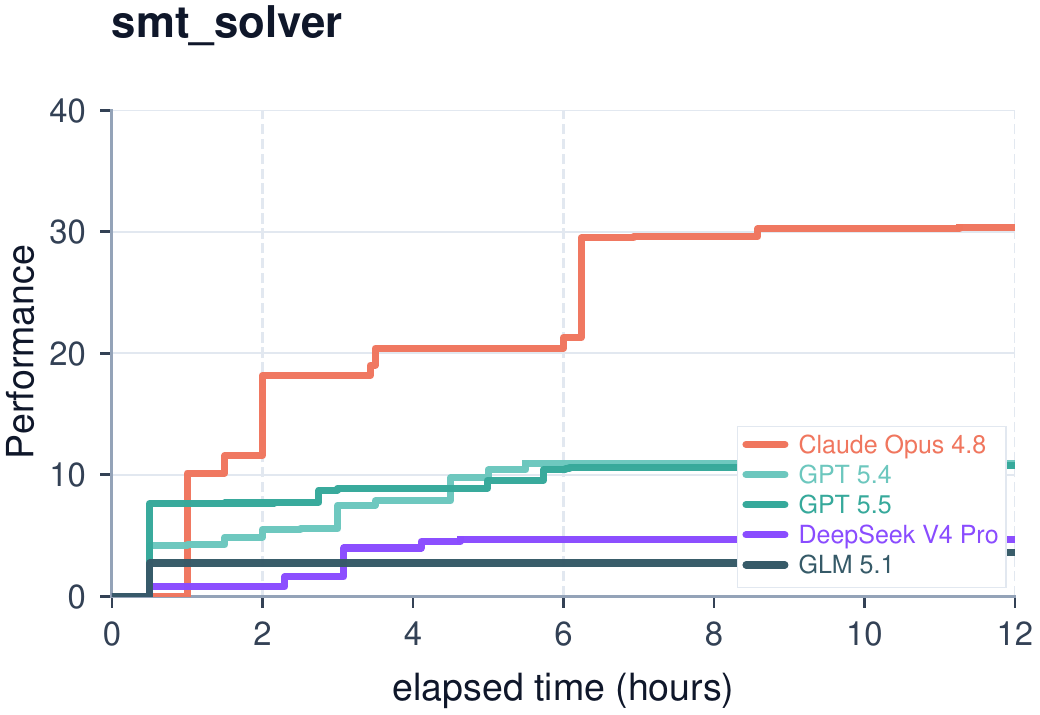}
\end{subfigure}
\vspace{0.5em}
\begin{subfigure}[b]{0.48\linewidth}
\includegraphics[width=\linewidth]{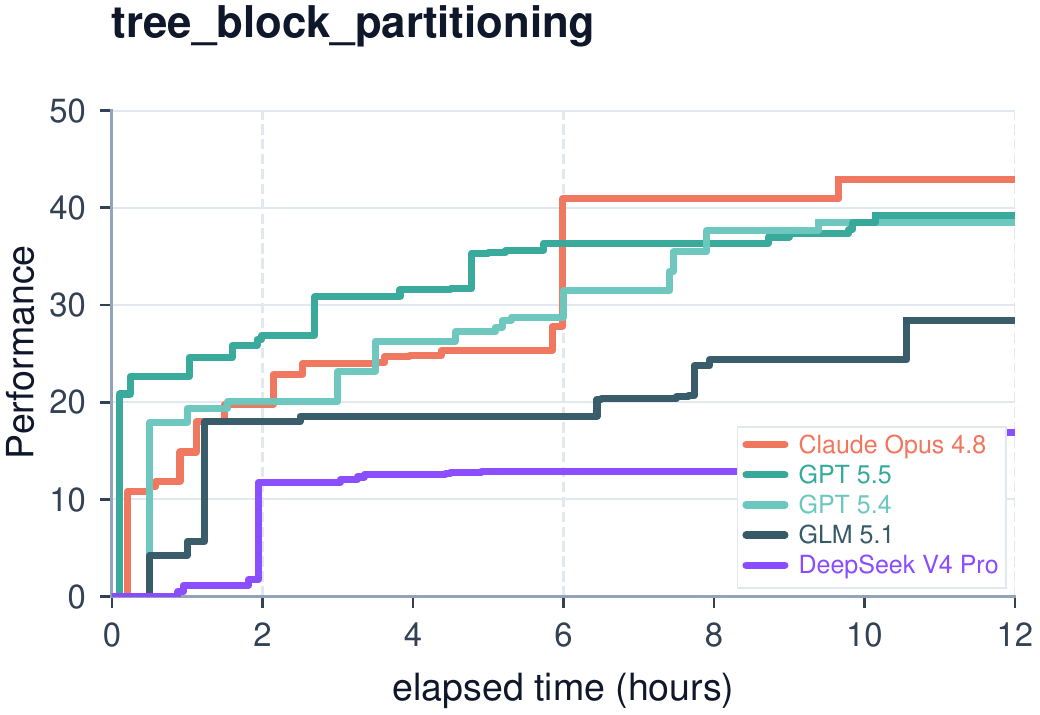}
\end{subfigure}
\hfill
\begin{subfigure}[b]{0.48\linewidth}
\includegraphics[width=\linewidth]{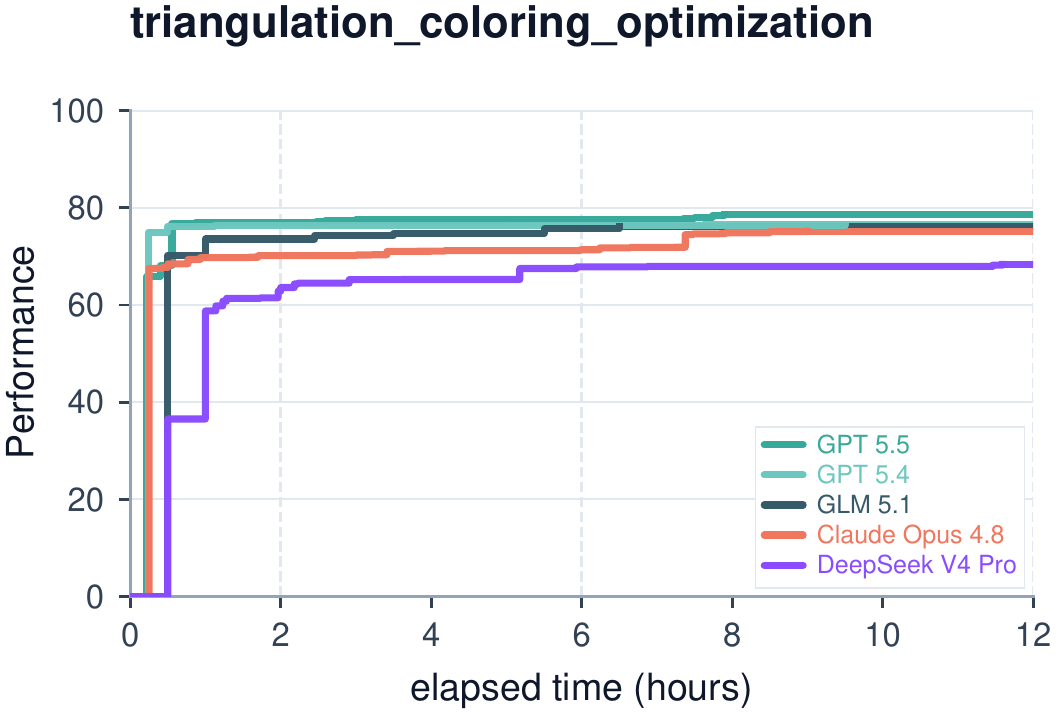}
\end{subfigure}
\caption{Per-task learning curves: Combinatorial Optimization \& Planning cont. (14/21).}
\label{fig:curves-all-14}
\end{figure}

\begin{figure}[p]
\centering
\begin{subfigure}[b]{0.48\linewidth}
\includegraphics[width=\linewidth]{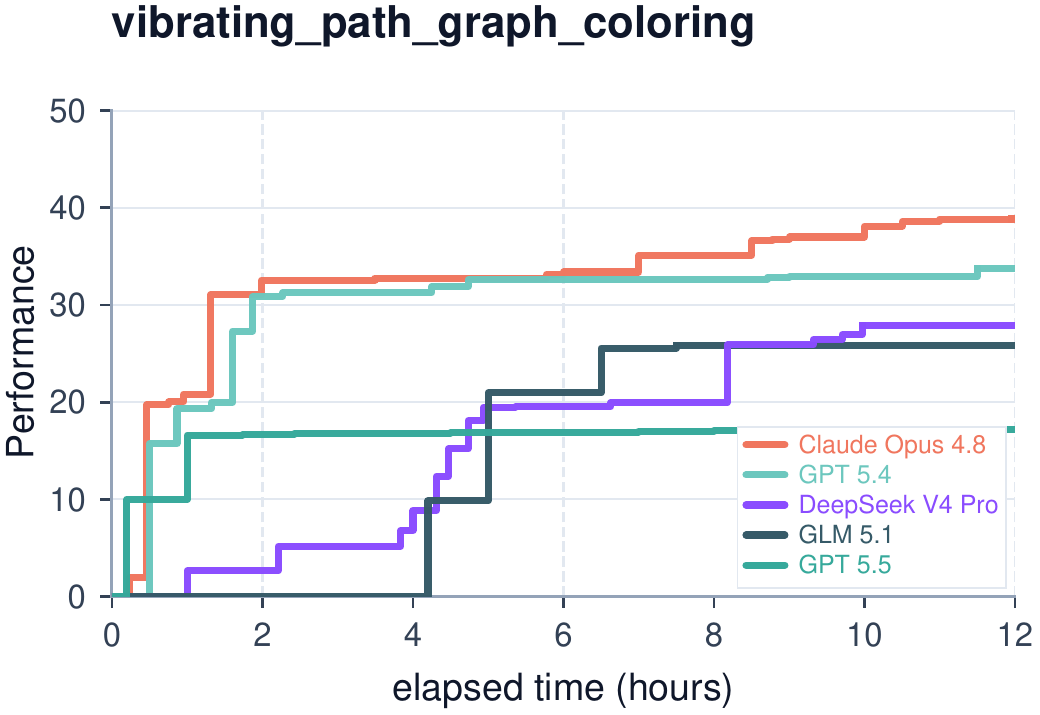}
\end{subfigure}
\hfill
\begin{subfigure}[b]{0.48\linewidth}
\includegraphics[width=\linewidth]{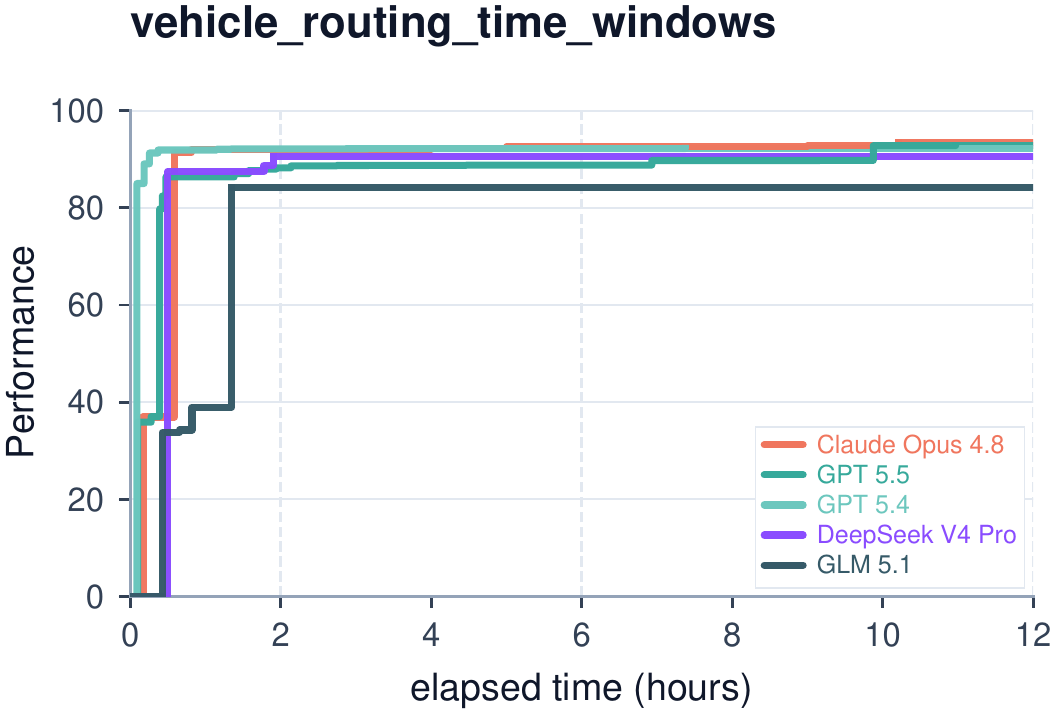}
\end{subfigure}
\vspace{0.5em}
\begin{subfigure}[b]{0.48\linewidth}
\includegraphics[width=\linewidth]{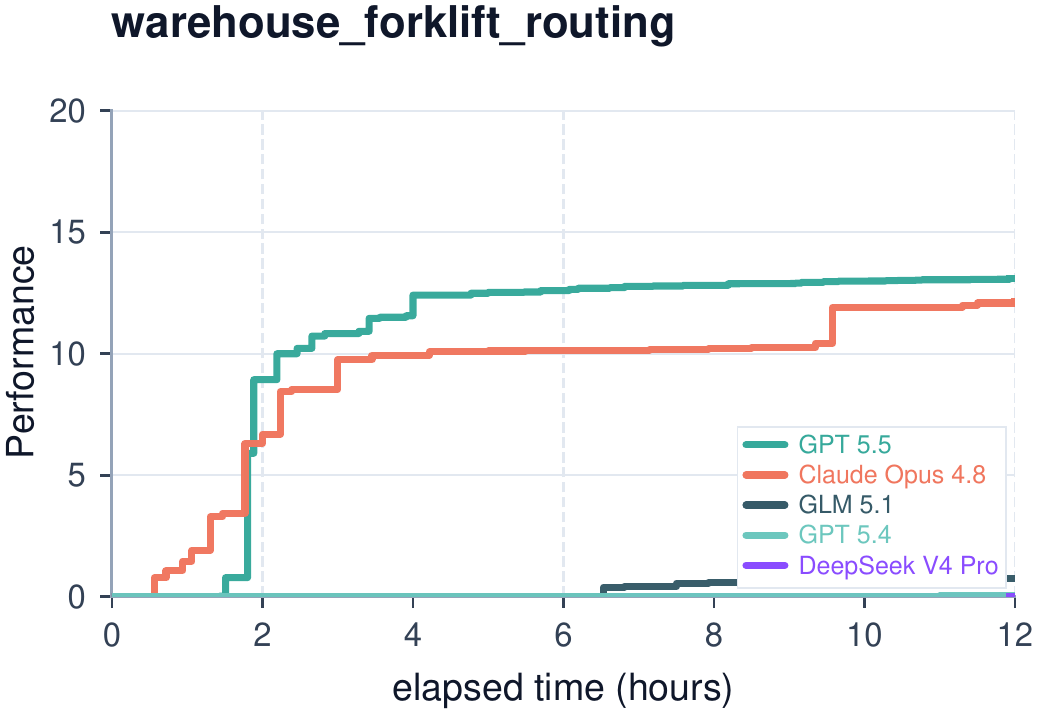}
\end{subfigure}
\hfill
\begin{subfigure}[b]{0.48\linewidth}
\includegraphics[width=\linewidth]{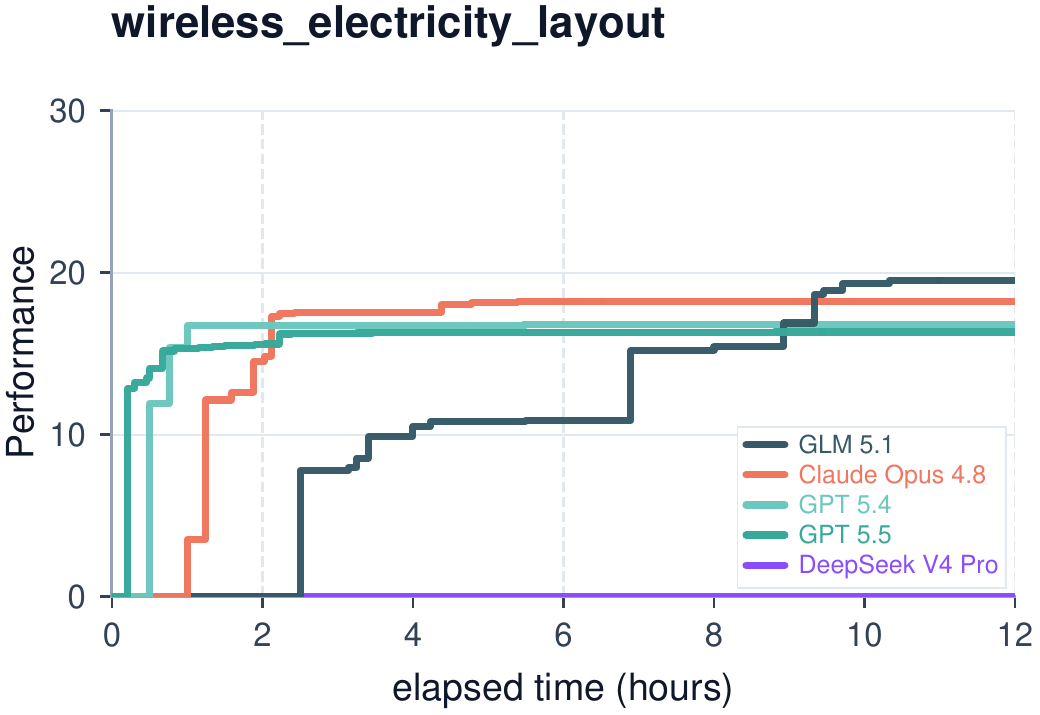}
\end{subfigure}
\caption{Per-task learning curves: Combinatorial Optimization \& Planning cont. (15/21).}
\label{fig:curves-all-15}
\end{figure}

\begin{figure}[p]
\centering
\begin{subfigure}[b]{0.48\linewidth}
\includegraphics[width=\linewidth]{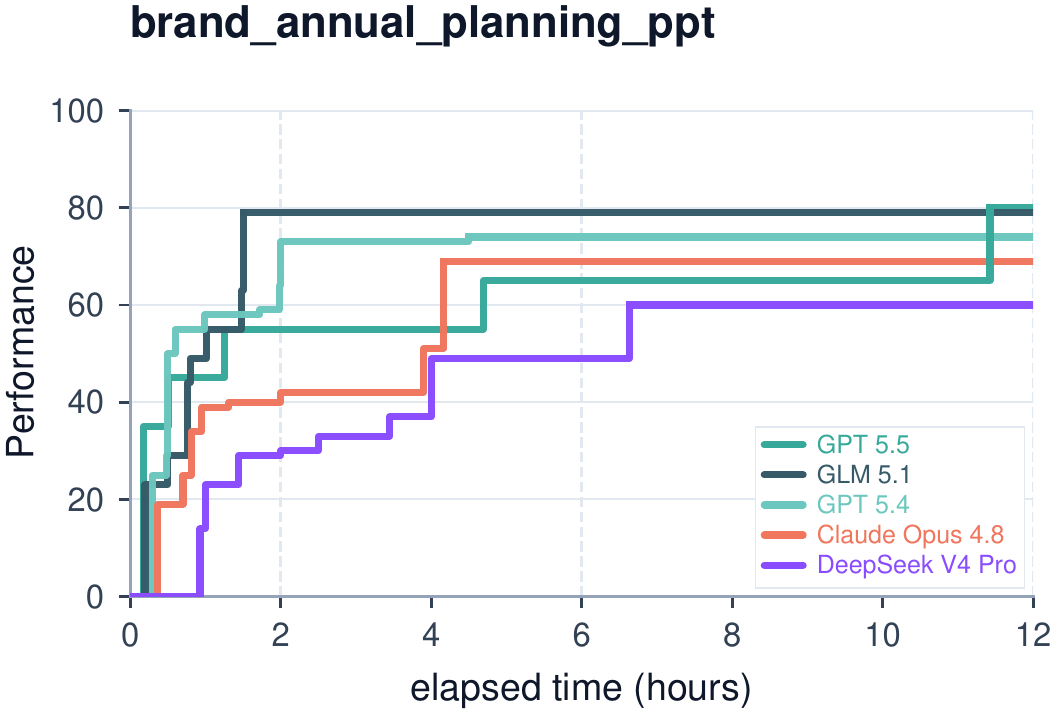}
\end{subfigure}
\hfill
\begin{subfigure}[b]{0.48\linewidth}
\includegraphics[width=\linewidth]{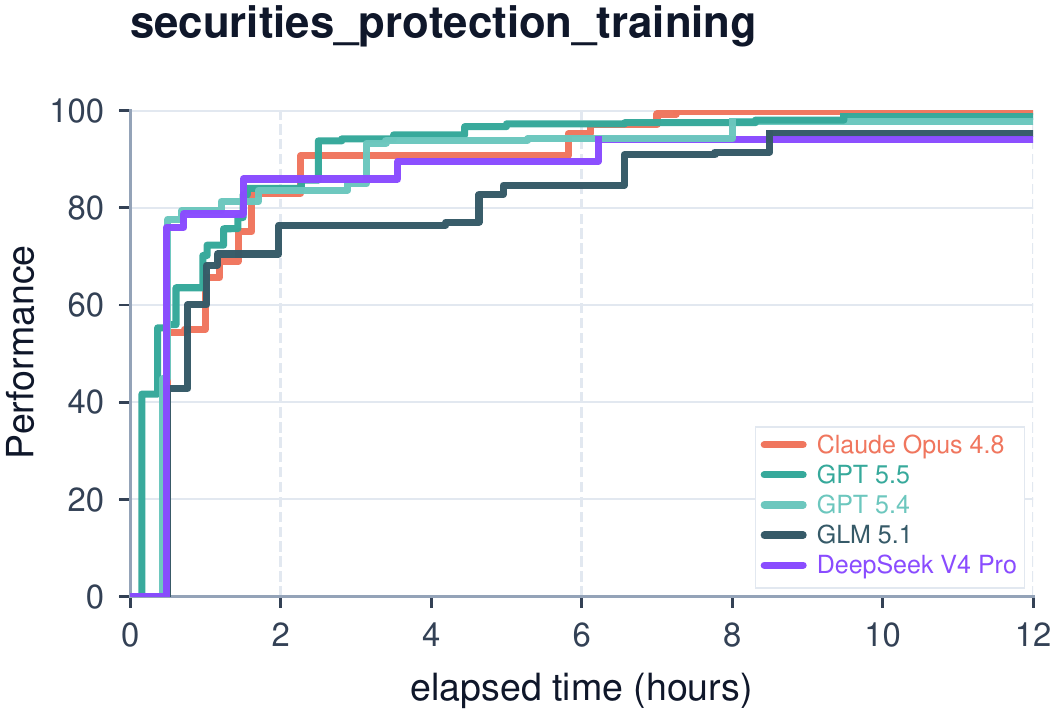}
\end{subfigure}
\vspace{0.5em}
\begin{subfigure}[b]{0.48\linewidth}
\includegraphics[width=\linewidth]{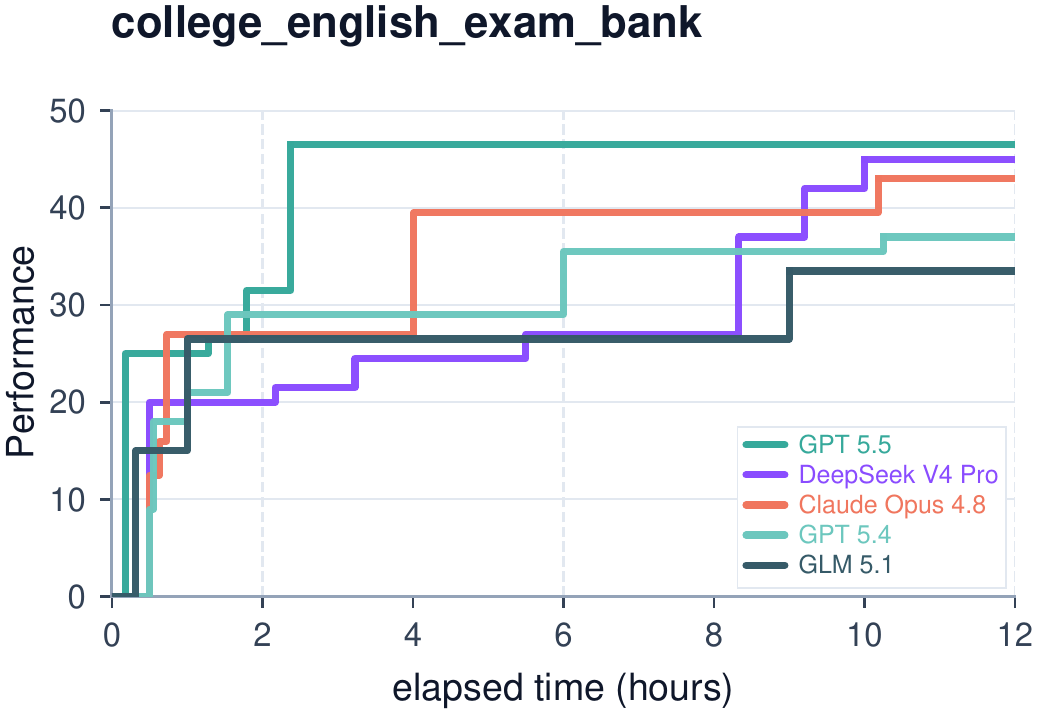}
\end{subfigure}
\hfill
\begin{subfigure}[b]{0.48\linewidth}
\includegraphics[width=\linewidth]{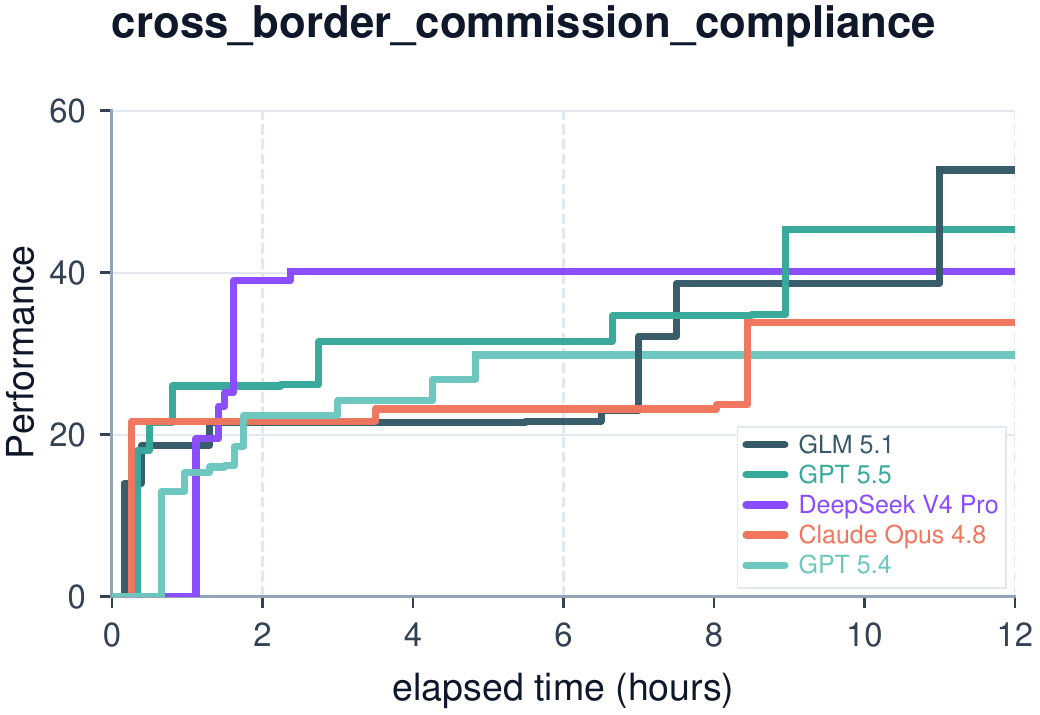}
\end{subfigure}
\vspace{0.5em}
\begin{subfigure}[b]{0.48\linewidth}
\includegraphics[width=\linewidth]{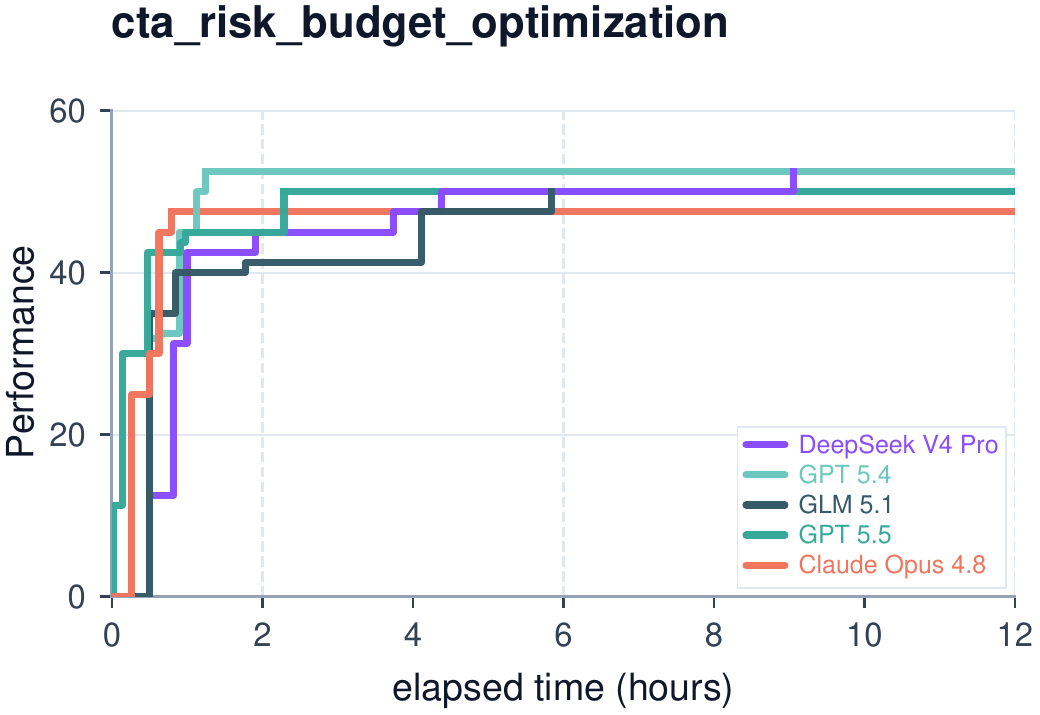}
\end{subfigure}
\hfill
\begin{subfigure}[b]{0.48\linewidth}
\includegraphics[width=\linewidth]{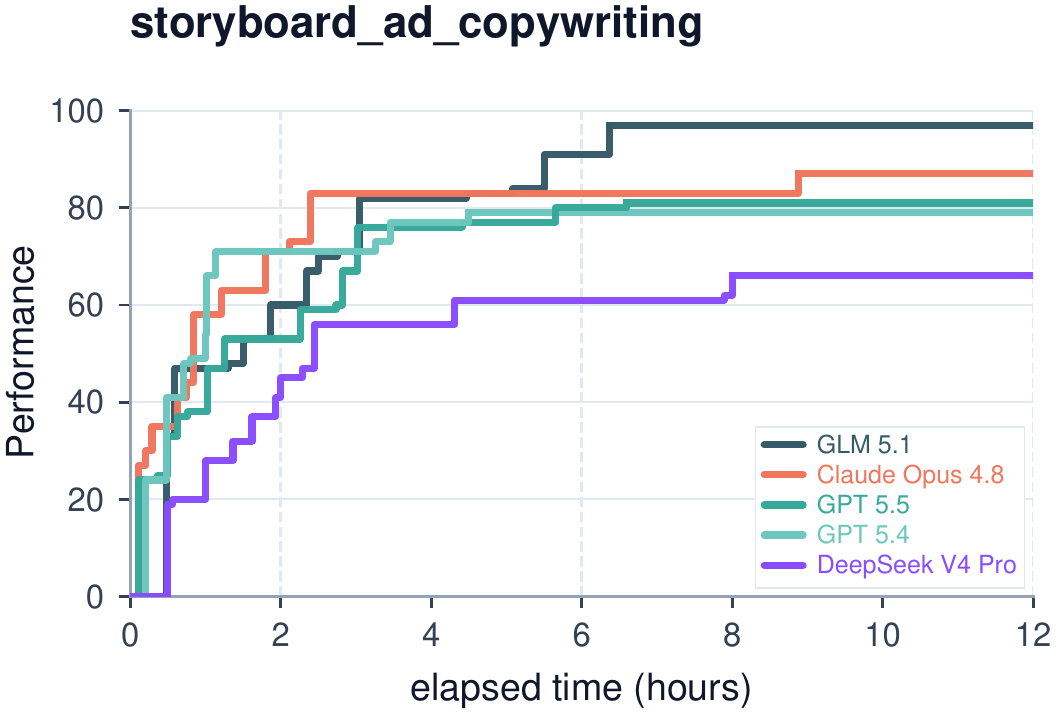}
\end{subfigure}
\caption{Per-task learning curves: Professional Knowledge Work (16/21).}
\label{fig:curves-all-16}
\end{figure}

\begin{figure}[p]
\centering
\begin{subfigure}[b]{0.48\linewidth}
\includegraphics[width=\linewidth]{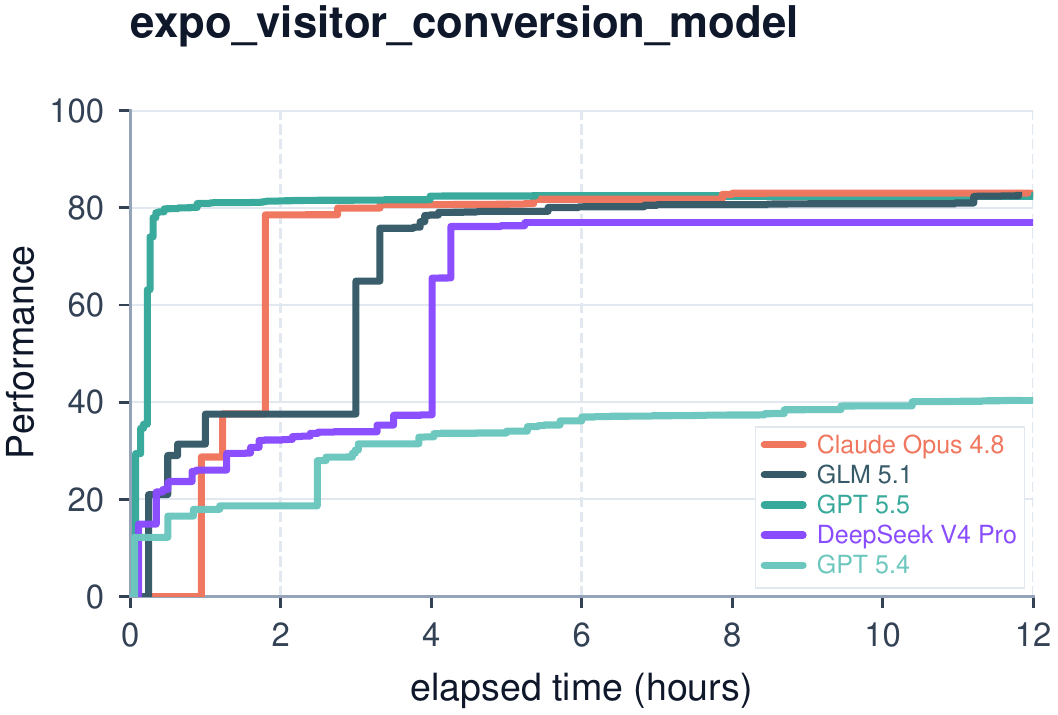}
\end{subfigure}
\hfill
\begin{subfigure}[b]{0.48\linewidth}
\includegraphics[width=\linewidth]{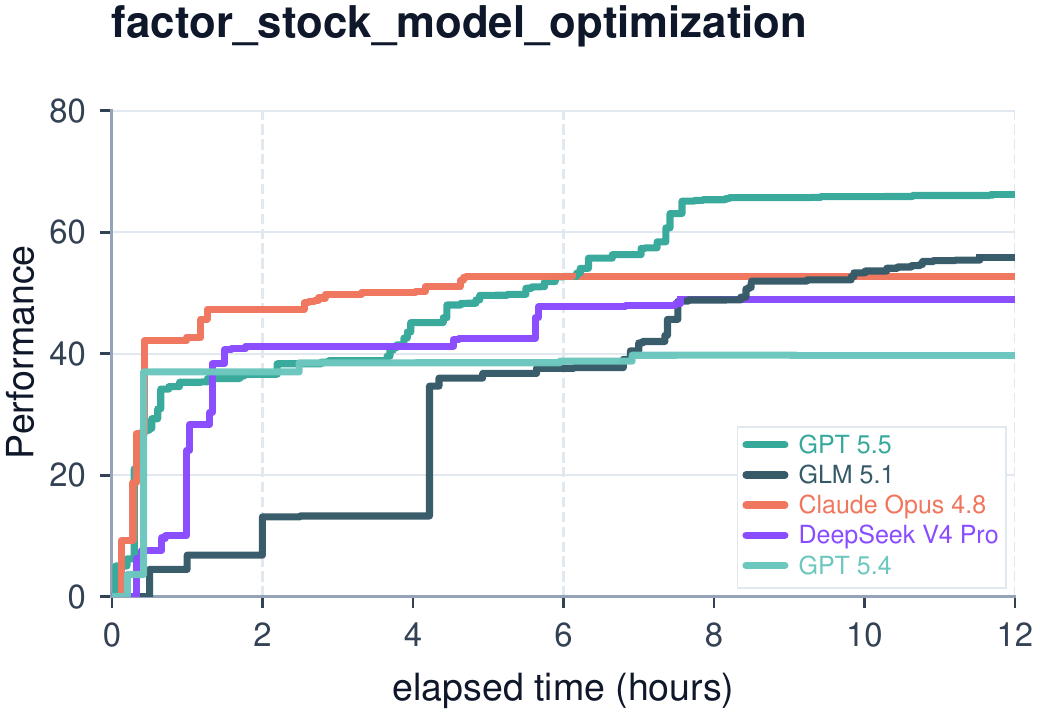}
\end{subfigure}
\vspace{0.5em}
\begin{subfigure}[b]{0.48\linewidth}
\includegraphics[width=\linewidth]{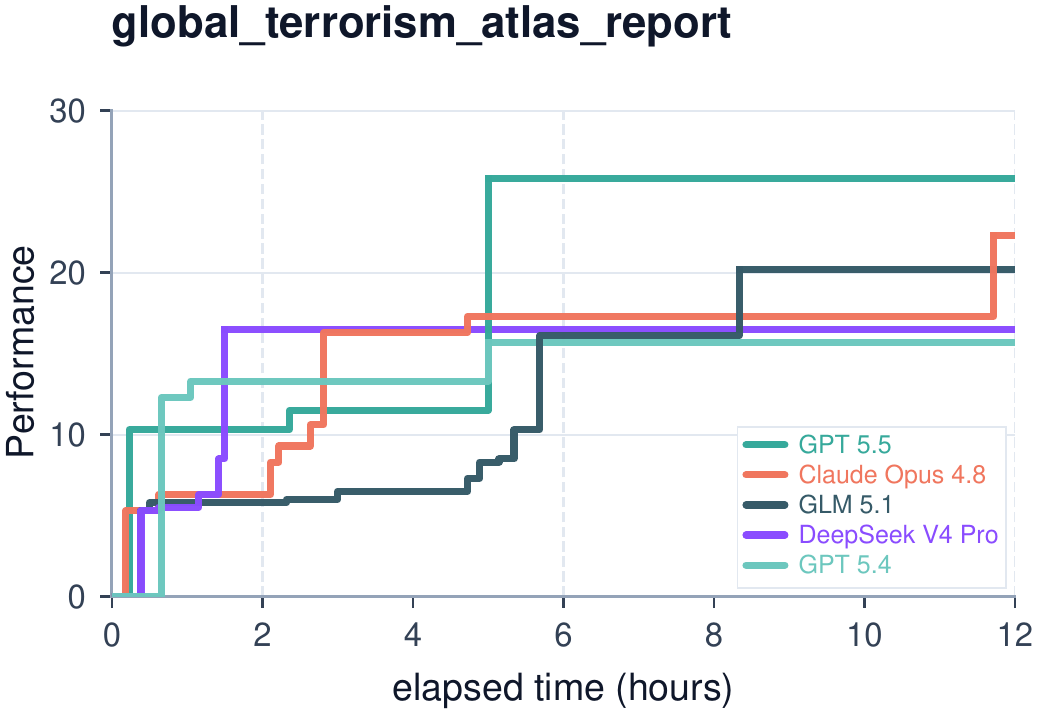}
\end{subfigure}
\hfill
\begin{subfigure}[b]{0.48\linewidth}
\includegraphics[width=\linewidth]{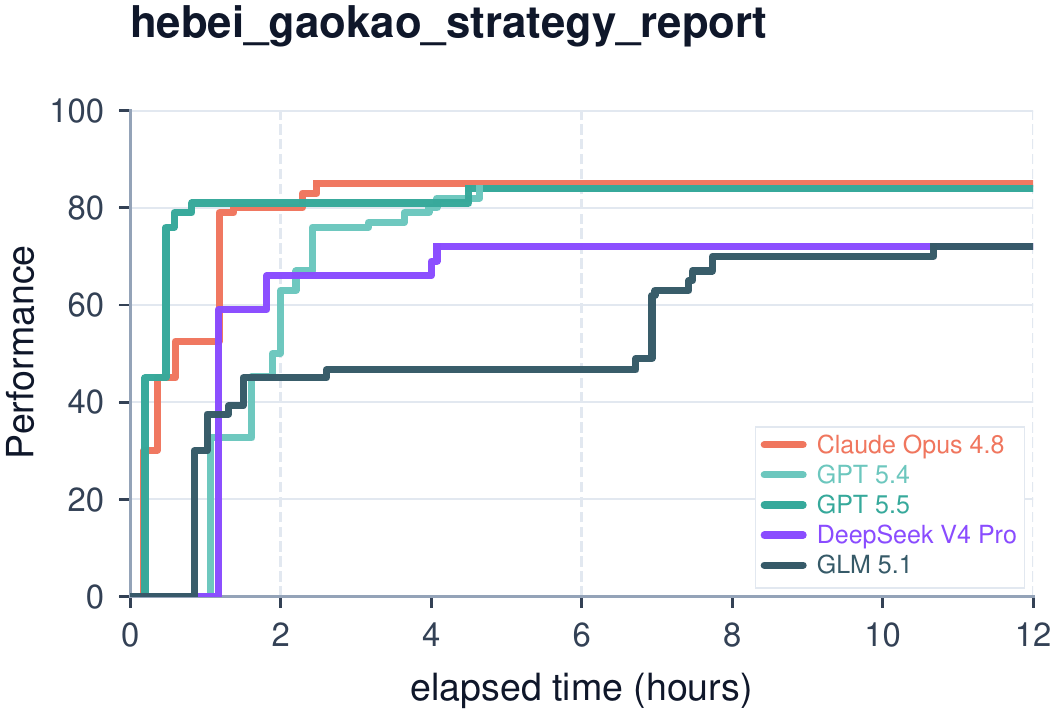}
\end{subfigure}
\vspace{0.5em}
\begin{subfigure}[b]{0.48\linewidth}
\includegraphics[width=\linewidth]{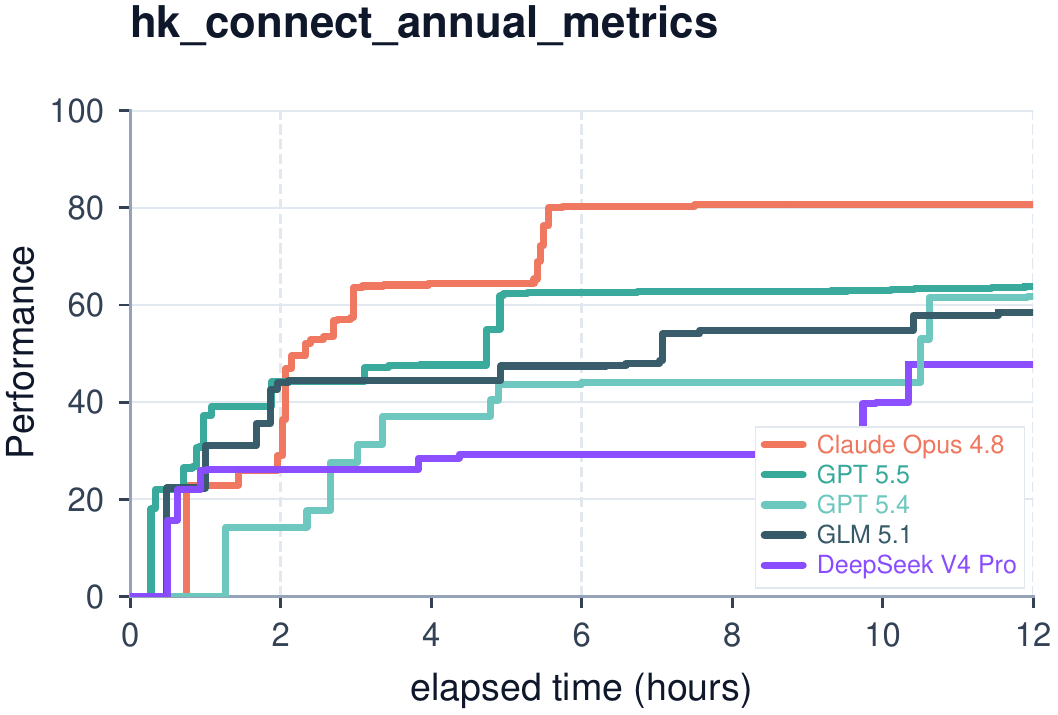}
\end{subfigure}
\hfill
\begin{subfigure}[b]{0.48\linewidth}
\includegraphics[width=\linewidth]{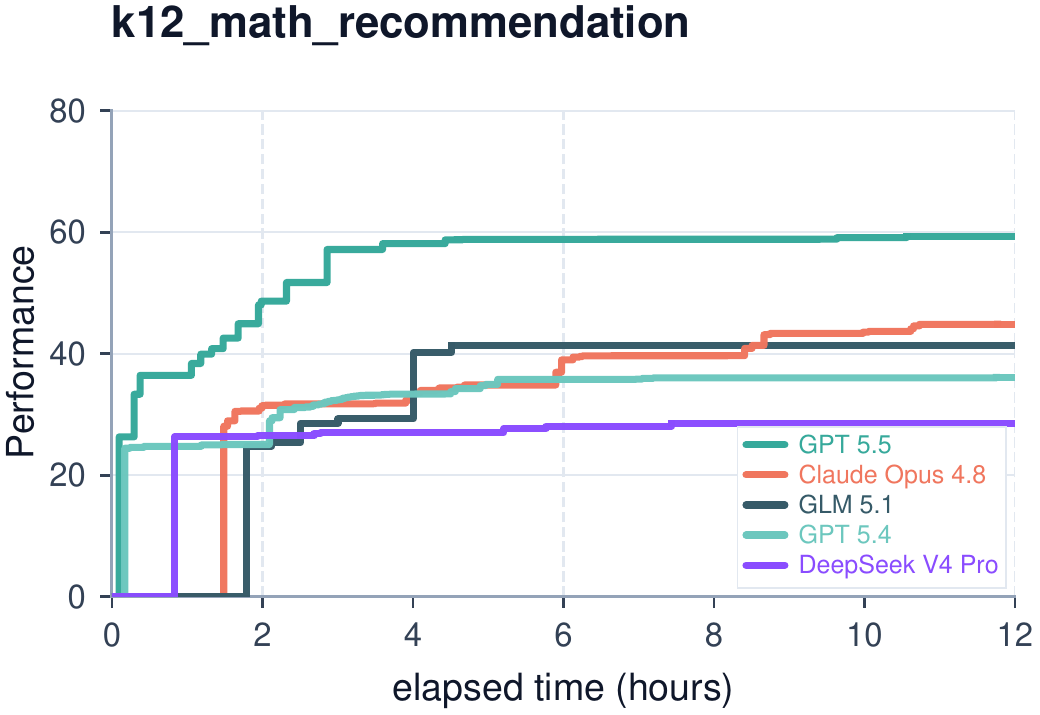}
\end{subfigure}
\caption{Per-task learning curves: Professional Knowledge Work cont. (17/21).}
\label{fig:curves-all-17}
\end{figure}

\begin{figure}[p]
\centering
\begin{subfigure}[b]{0.48\linewidth}
\includegraphics[width=\linewidth]{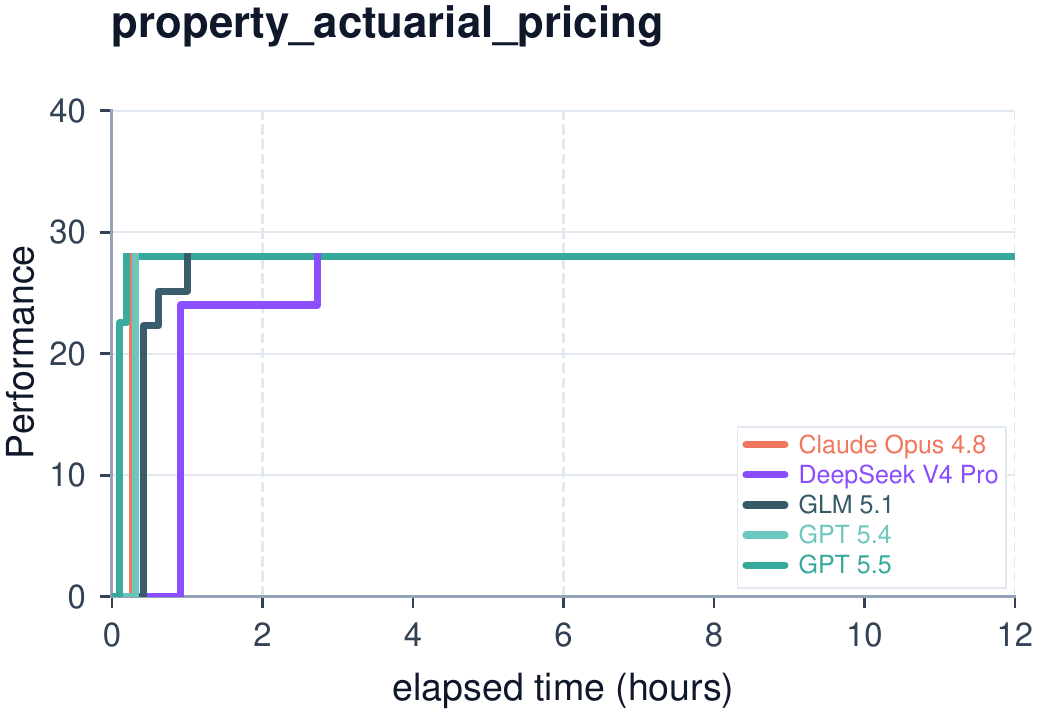}
\end{subfigure}
\hfill
\begin{subfigure}[b]{0.48\linewidth}
\includegraphics[width=\linewidth]{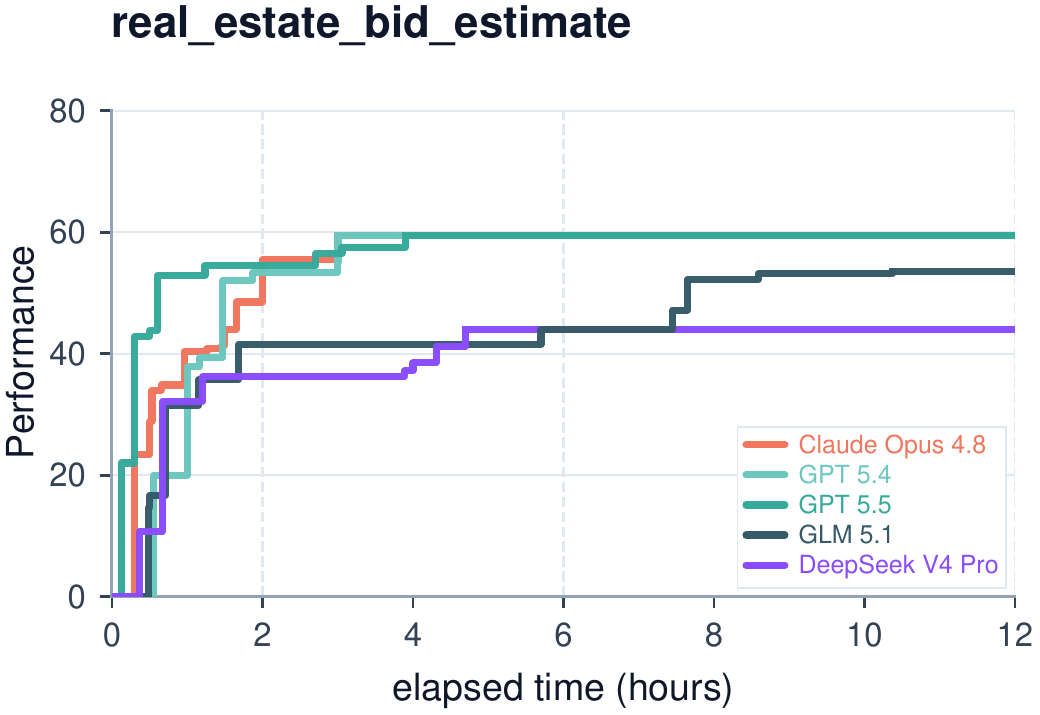}
\end{subfigure}
\vspace{0.5em}
\begin{subfigure}[b]{0.48\linewidth}
\includegraphics[width=\linewidth]{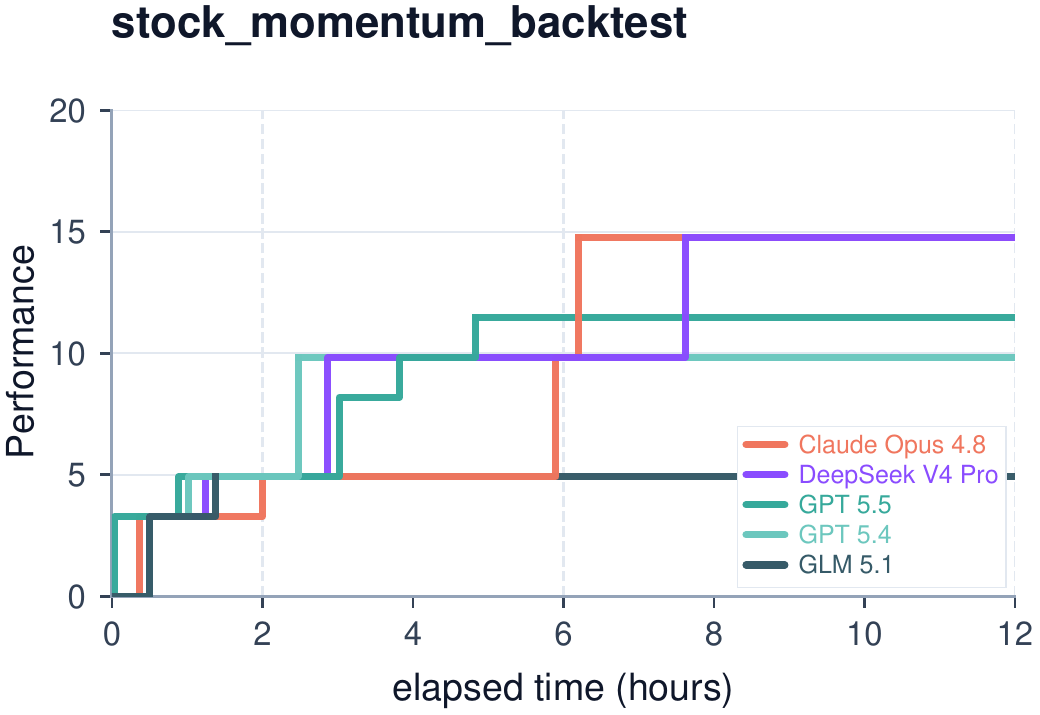}
\end{subfigure}
\hfill
\begin{subfigure}[b]{0.48\linewidth}
\includegraphics[width=\linewidth]{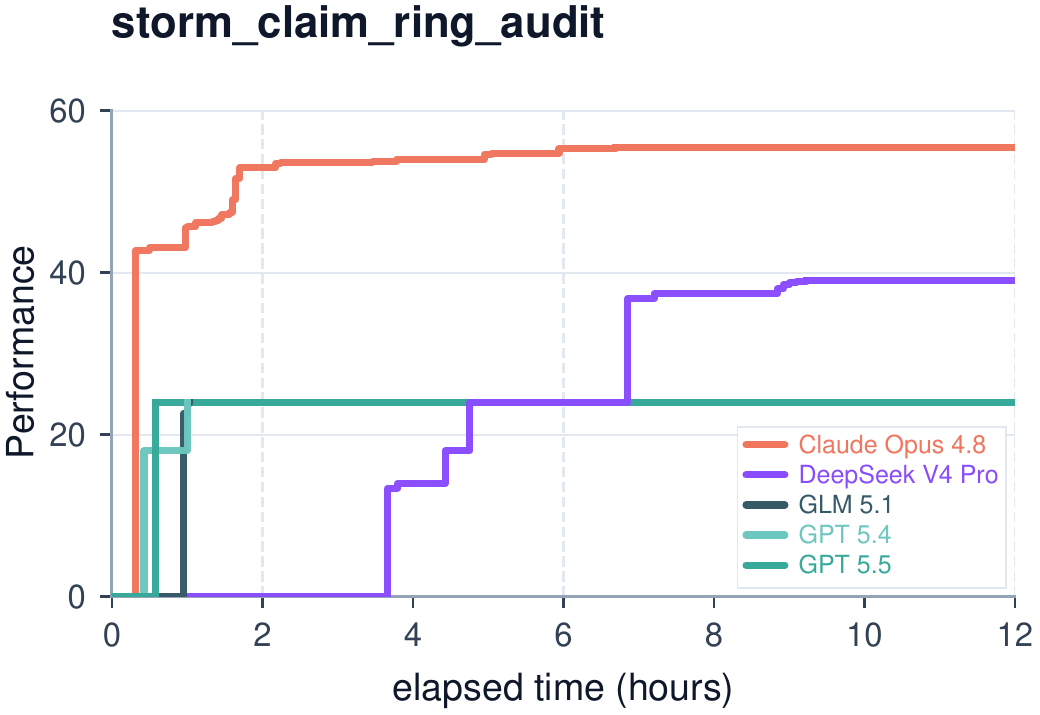}
\end{subfigure}
\caption{Per-task learning curves: Professional Knowledge Work cont. (18/21).}
\label{fig:curves-all-18}
\end{figure}

\begin{figure}[p]
\centering
\begin{subfigure}[b]{0.48\linewidth}
\includegraphics[width=\linewidth]{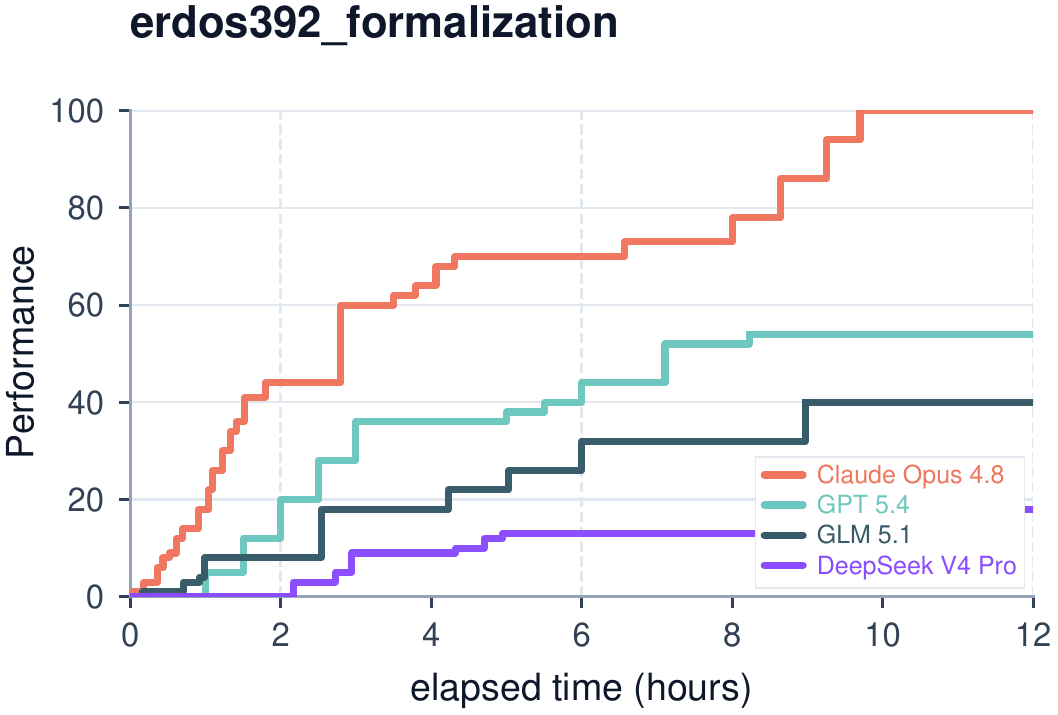}
\end{subfigure}
\hfill
\begin{subfigure}[b]{0.48\linewidth}
\includegraphics[width=\linewidth]{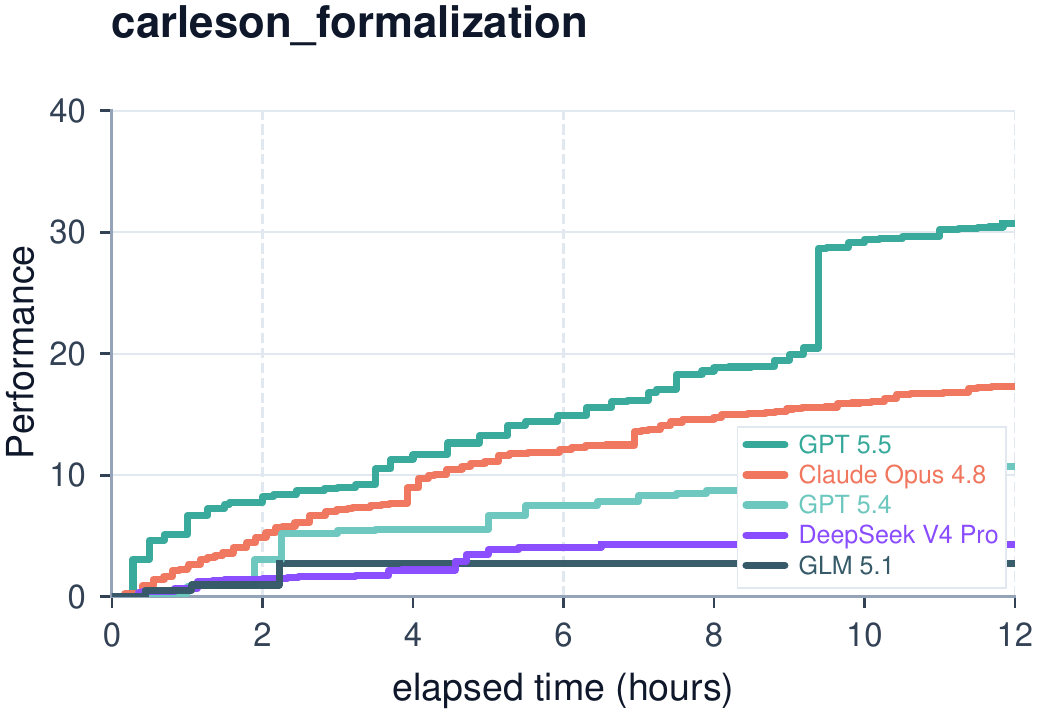}
\end{subfigure}
\vspace{0.5em}
\begin{subfigure}[b]{0.48\linewidth}
\includegraphics[width=\linewidth]{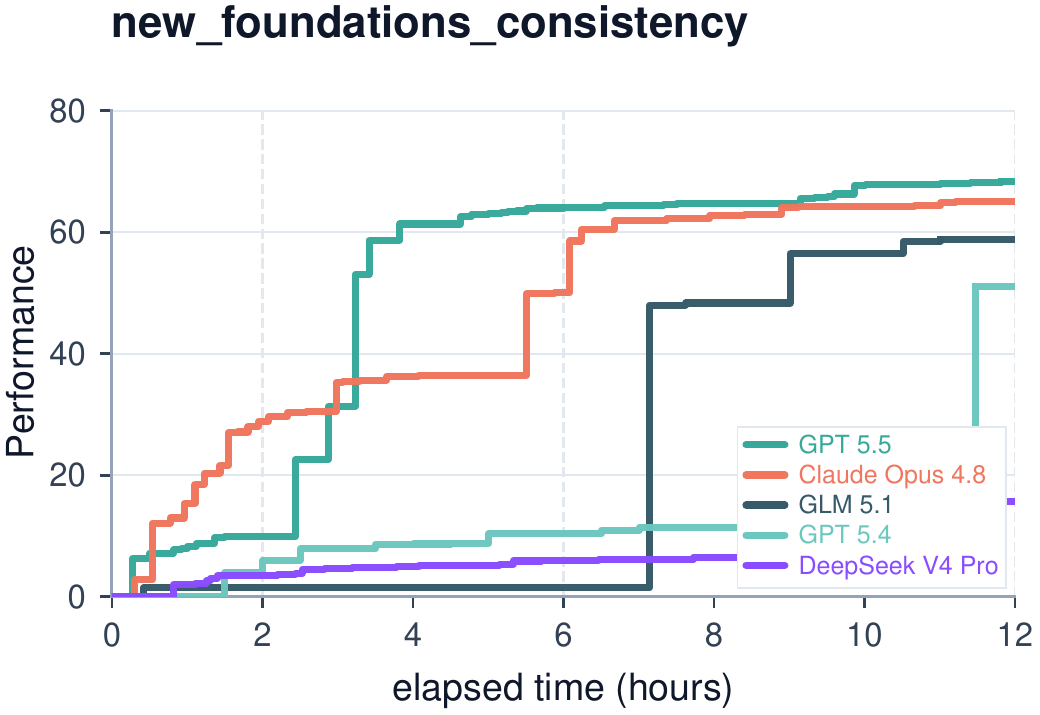}
\end{subfigure}
\hfill
\begin{subfigure}[b]{0.48\linewidth}
\includegraphics[width=\linewidth]{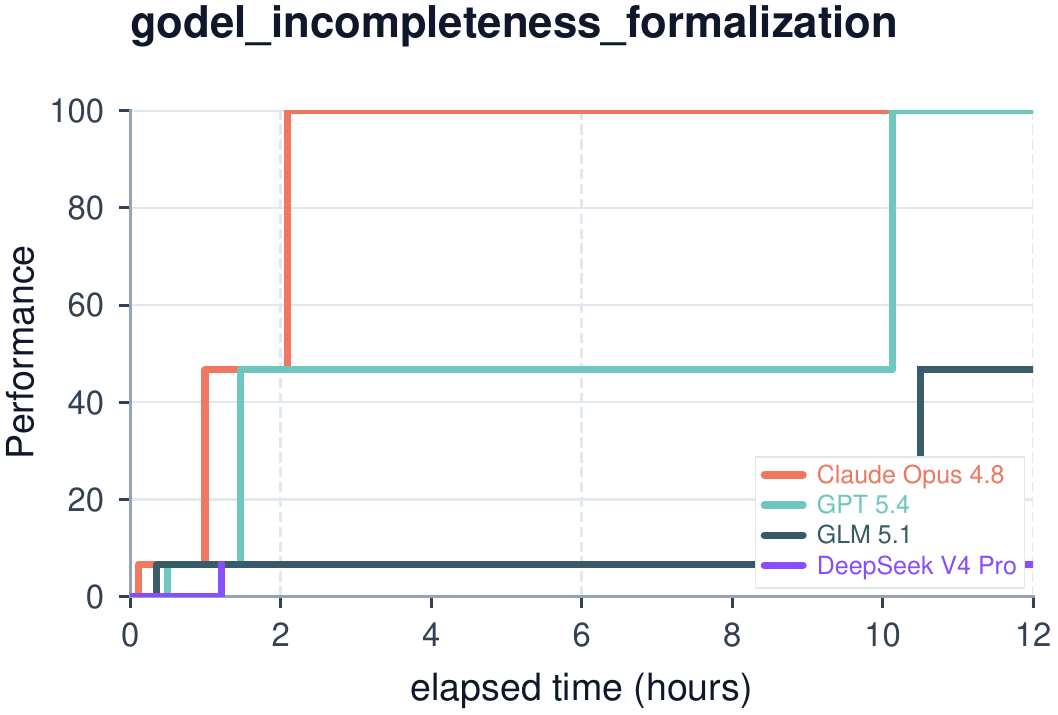}
\end{subfigure}
\vspace{0.5em}
\begin{subfigure}[b]{0.48\linewidth}
\includegraphics[width=\linewidth]{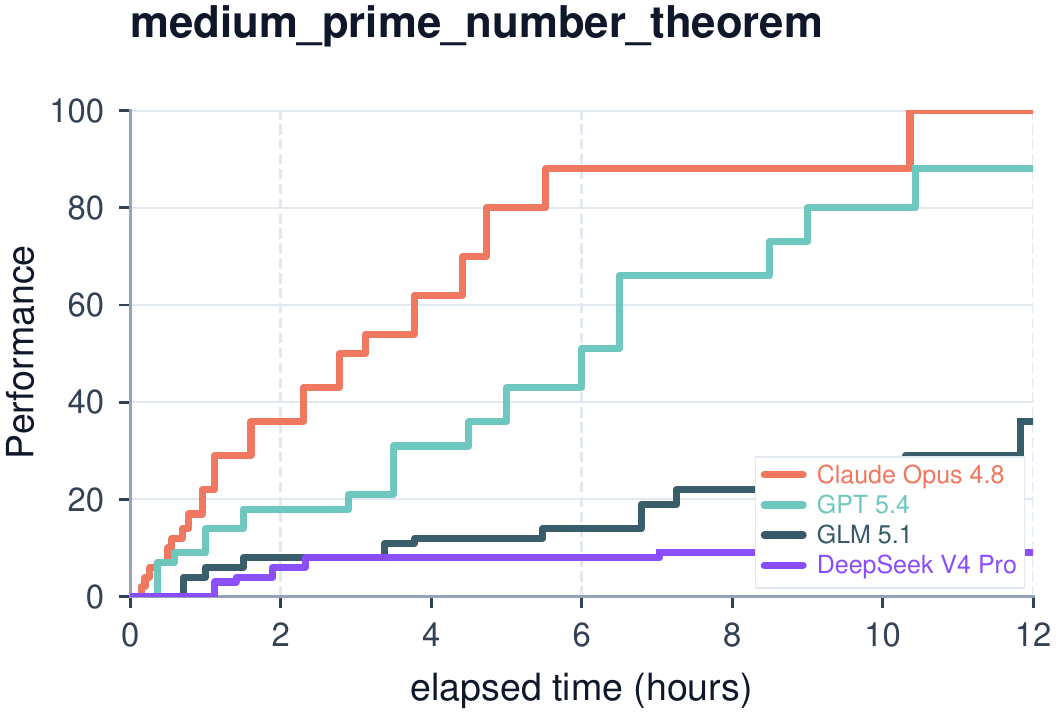}
\end{subfigure}
\hfill
\begin{subfigure}[b]{0.48\linewidth}
\includegraphics[width=\linewidth]{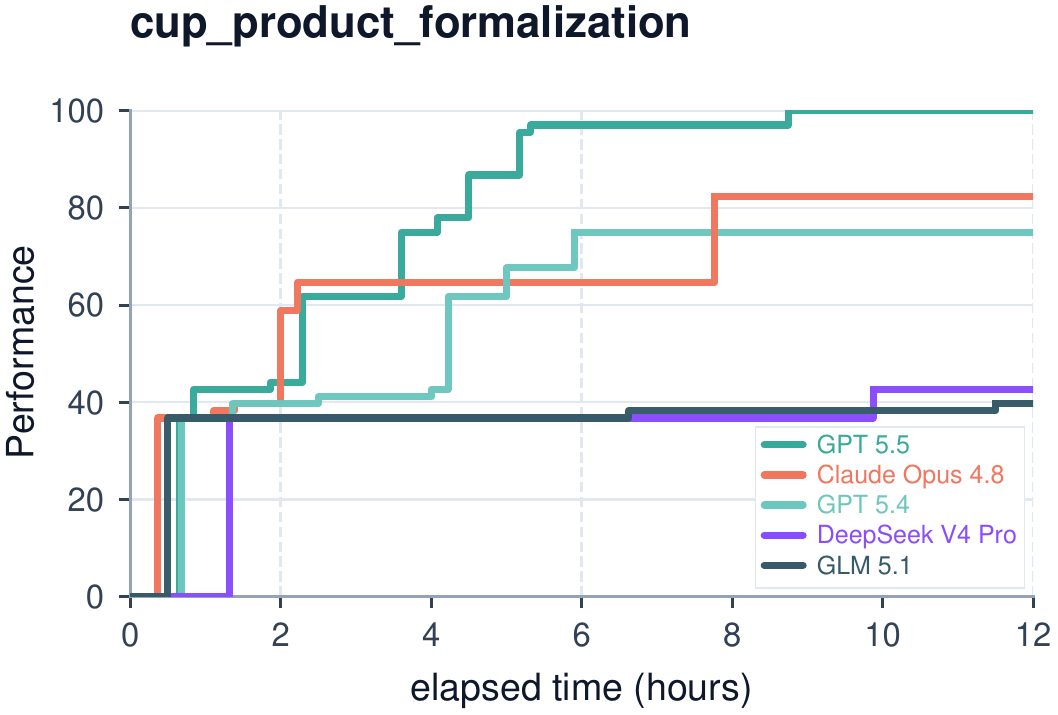}
\end{subfigure}
\caption{Per-task learning curves: Formal Math \& Theorem Proving (19/21).}
\label{fig:curves-all-19}
\end{figure}

\begin{figure}[p]
\centering
\begin{subfigure}[b]{0.48\linewidth}
\includegraphics[width=\linewidth]{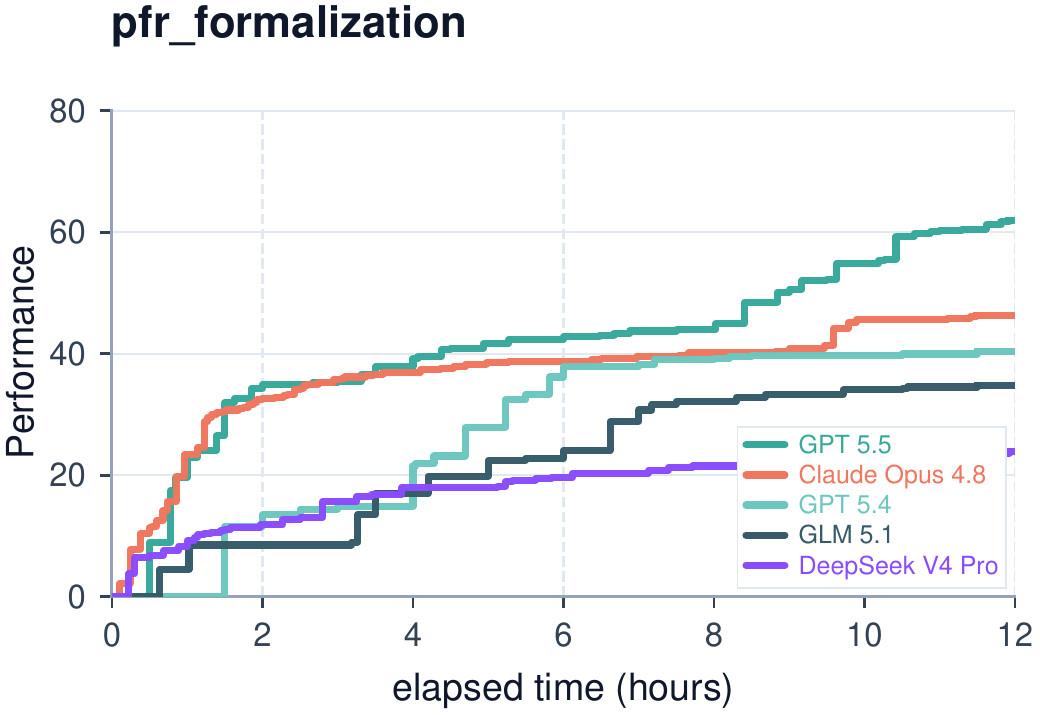}
\end{subfigure}
\hfill
\begin{subfigure}[b]{0.48\linewidth}
\includegraphics[width=\linewidth]{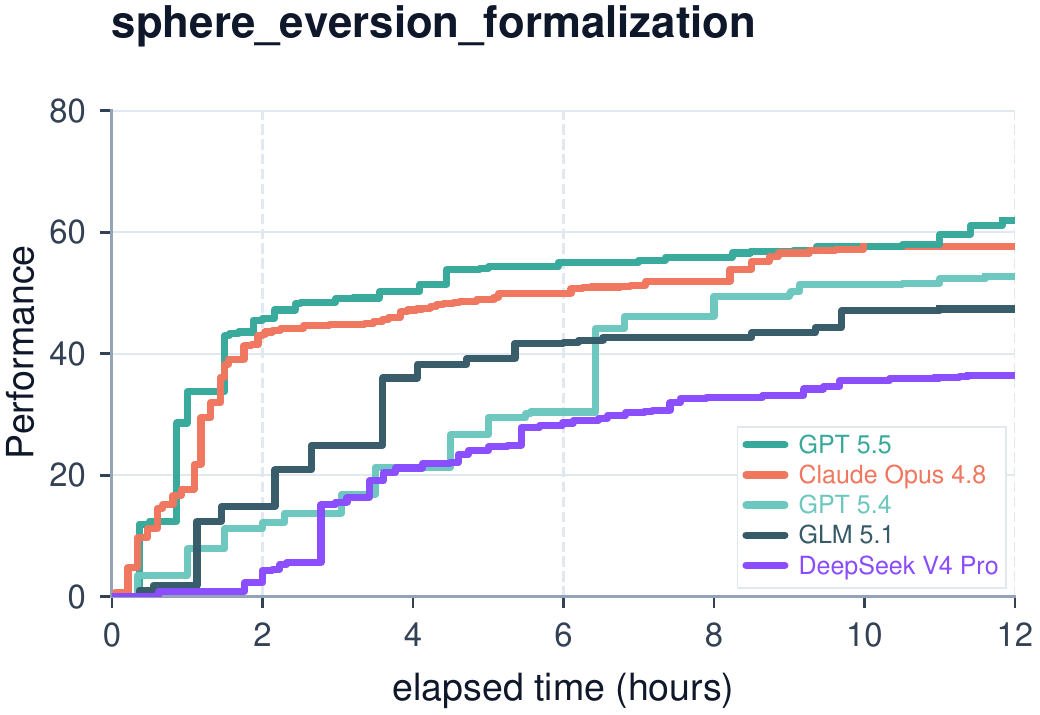}
\end{subfigure}
\vspace{0.5em}
\begin{subfigure}[b]{0.48\linewidth}
\includegraphics[width=\linewidth]{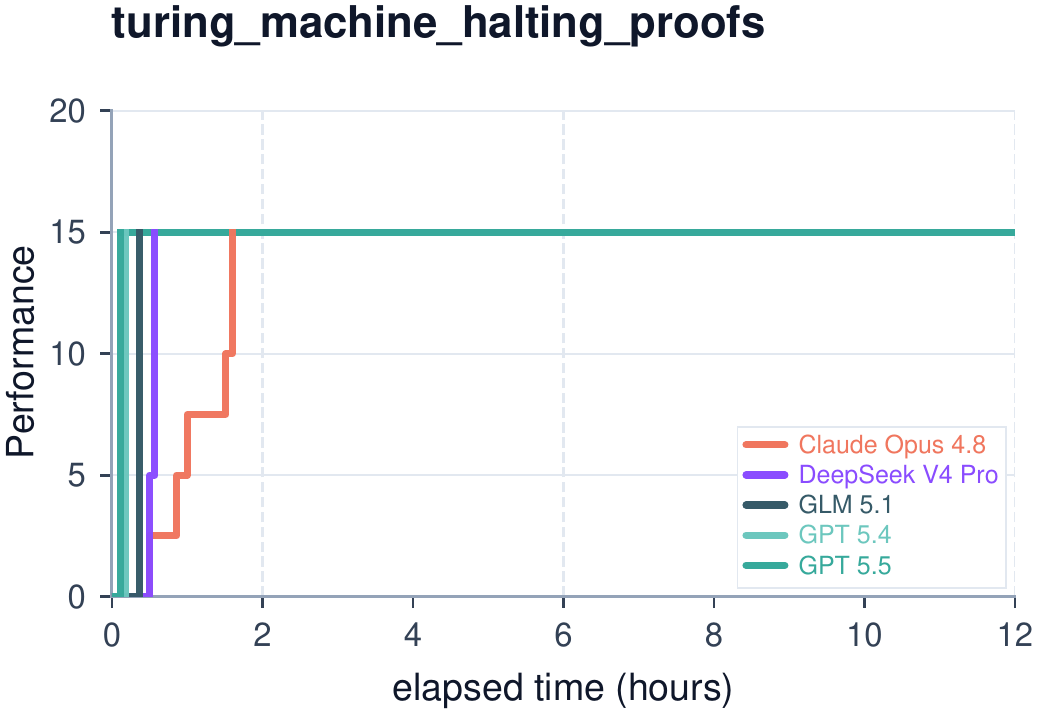}
\end{subfigure}
\hfill
\begin{subfigure}[b]{0.48\linewidth}
\includegraphics[width=\linewidth]{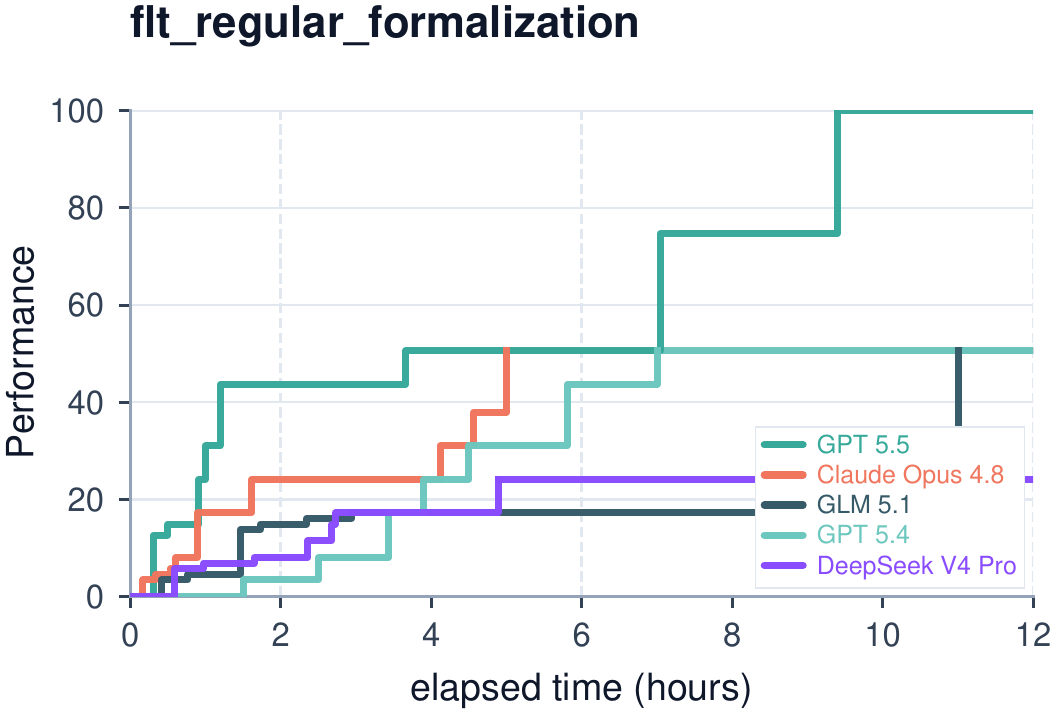}
\end{subfigure}
\caption{Per-task learning curves: Formal Math \& Theorem Proving cont. (20/21).}
\label{fig:curves-all-20}
\end{figure}

\begin{figure}[p]
\centering
\begin{subfigure}[b]{0.48\linewidth}
\includegraphics[width=\linewidth]{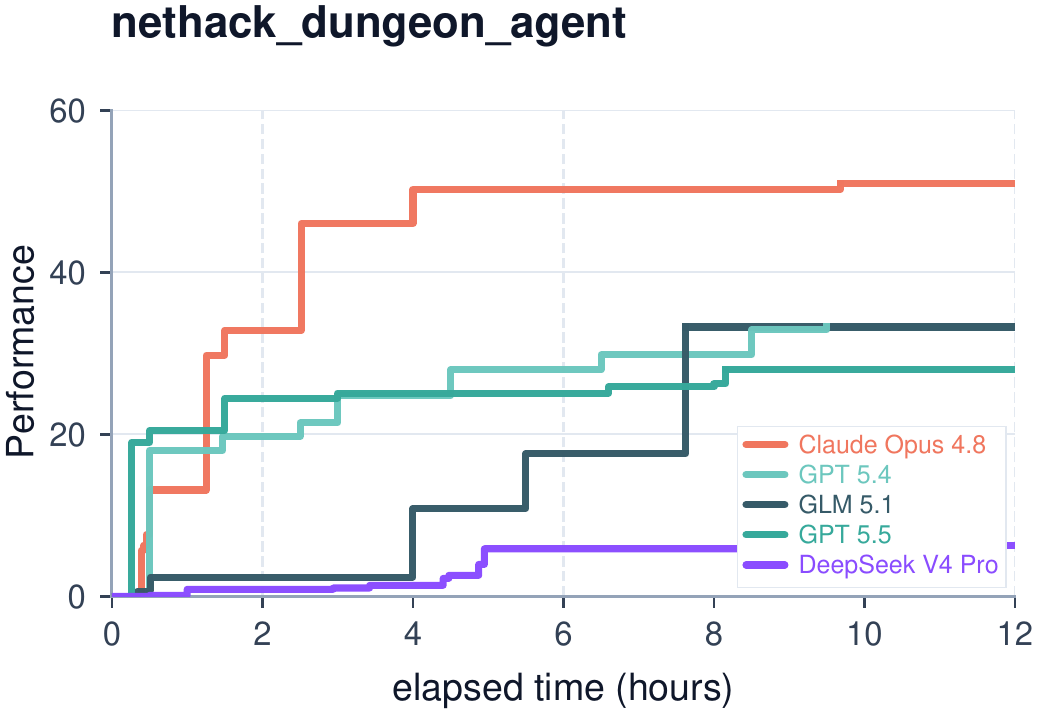}
\end{subfigure}
\hfill
\begin{subfigure}[b]{0.48\linewidth}
\includegraphics[width=\linewidth]{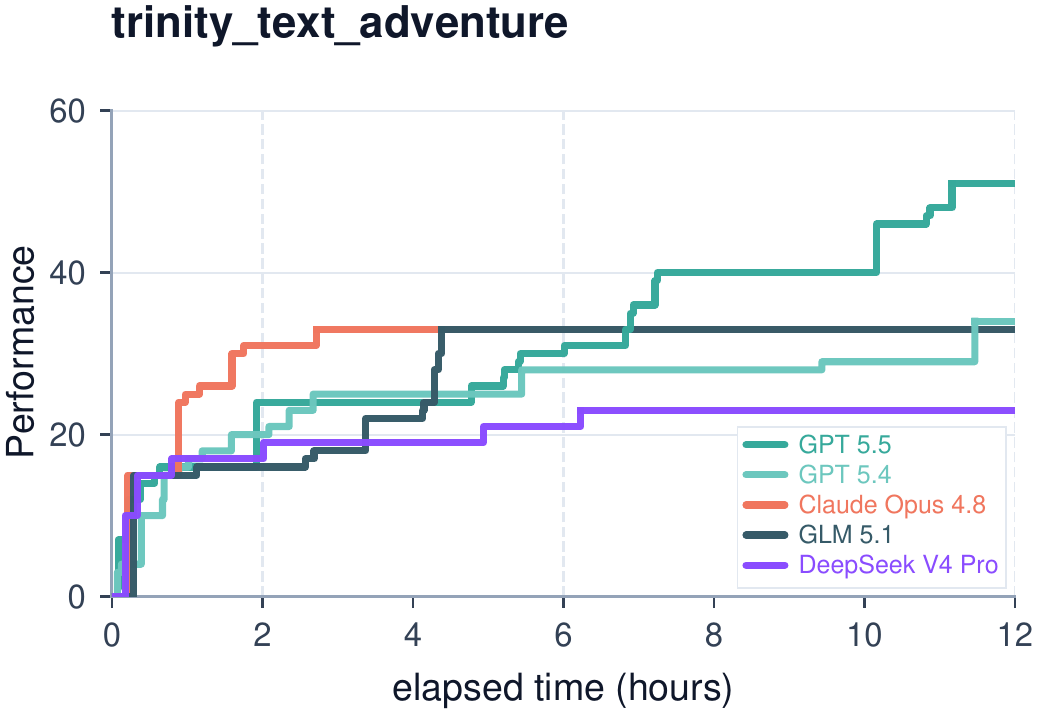}
\end{subfigure}
\vspace{0.5em}
\begin{subfigure}[b]{0.48\linewidth}
\includegraphics[width=\linewidth]{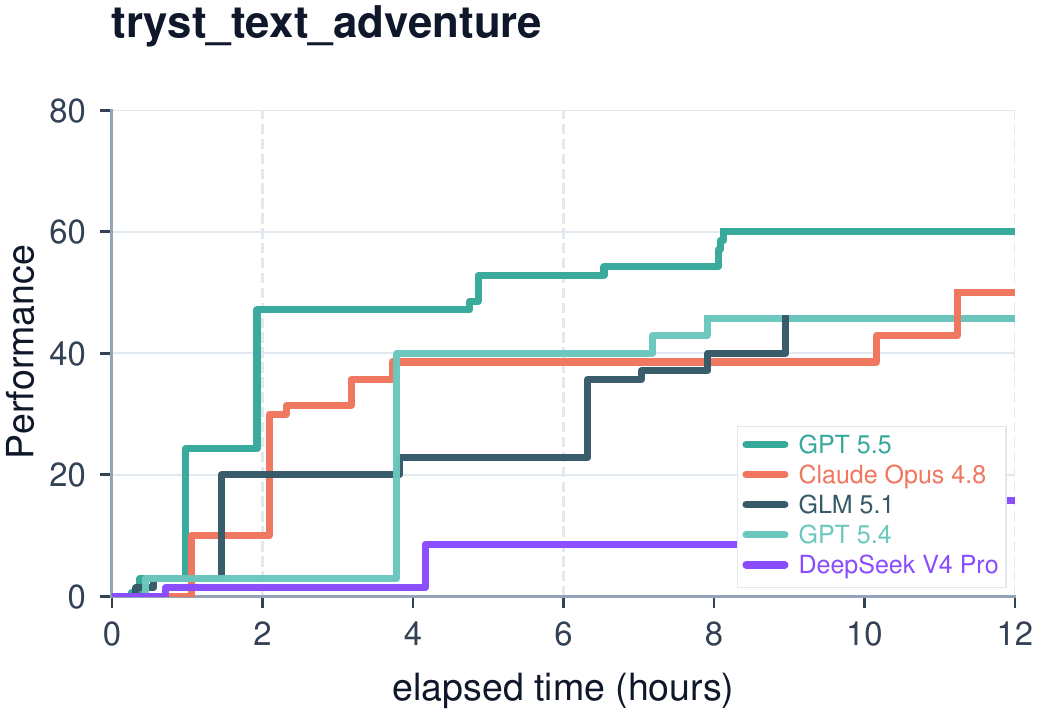}
\end{subfigure}
\hfill
\begin{subfigure}[b]{0.48\linewidth}
\includegraphics[width=\linewidth]{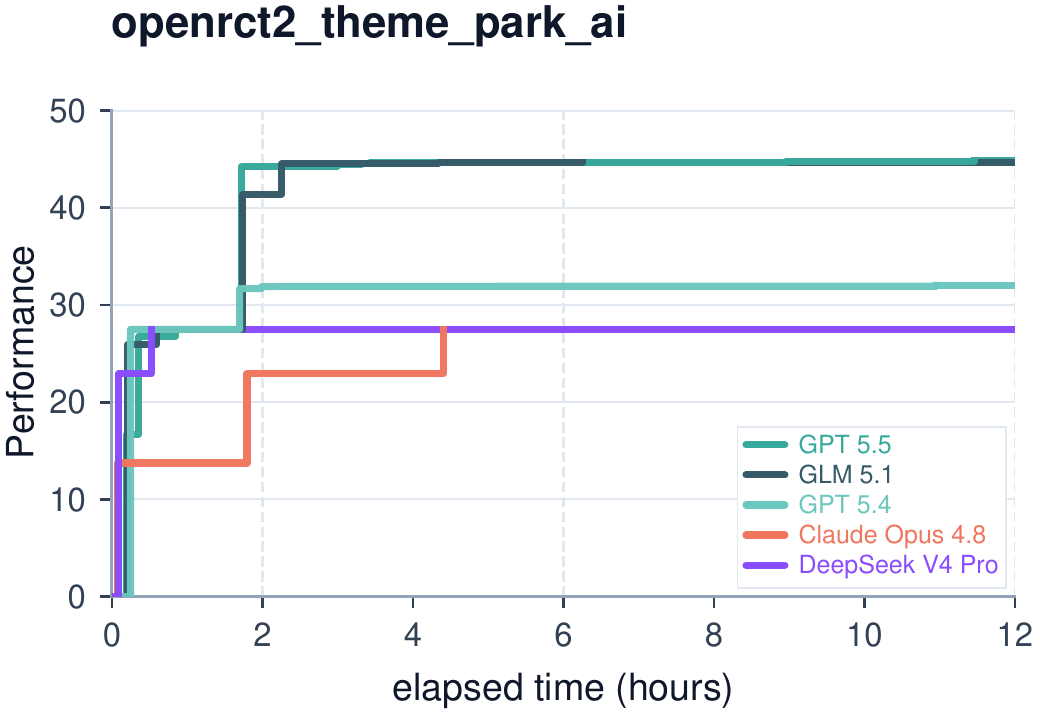}
\end{subfigure}
\vspace{0.5em}
\begin{subfigure}[b]{0.48\linewidth}
\includegraphics[width=\linewidth]{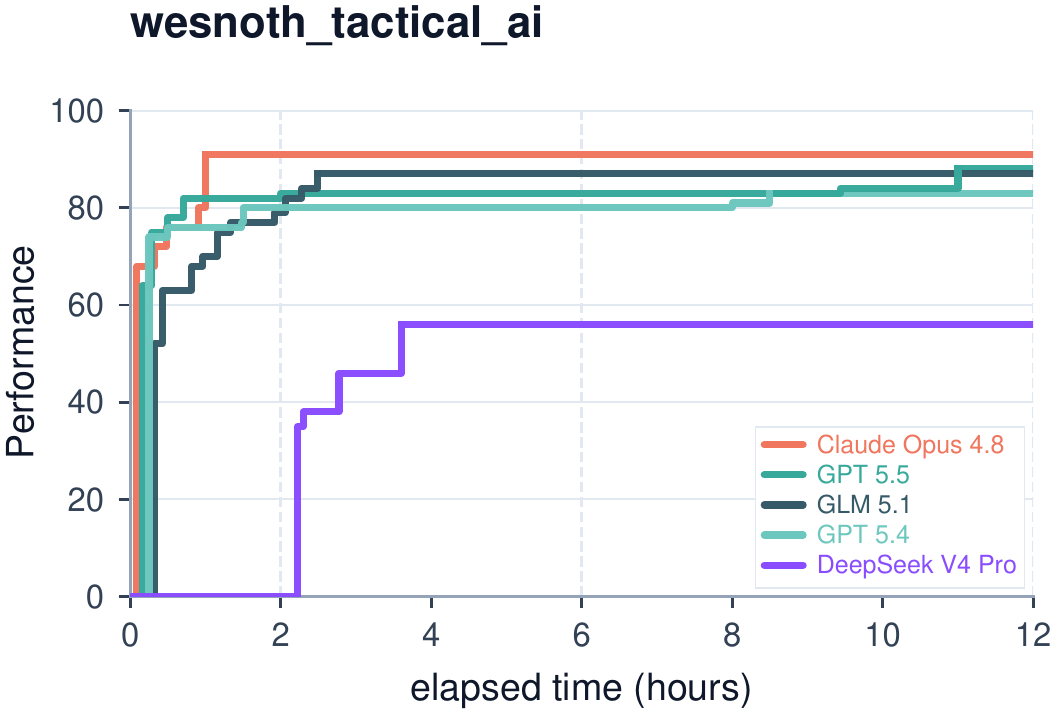}
\end{subfigure}
\caption{Per-task learning curves: Interactive Games \& Simulators (21/21).}
\label{fig:curves-all-21}
\end{figure}

\clearpage

\subsection{Per-Task Score Tables}
\label{sec:appendix-category-scores}

For every task--model configuration, we scheduled three independent long-horizon runs. In practice, because these 12-hour trajectories are sensitive to network instability and serving-side reliability limits, we carried out multiple rolling evaluation rounds to recover failed or incomplete runs and produced the final score tables below. A small number of reported task--model cells nevertheless still have fewer than three valid runs; those cells are marked with \textsuperscript{*}. Each run score is first rescaled to the 0--100 task scale; the adjacent $\pm s$ term reports the sample standard deviation across these rescaled run scores when at least two valid runs are available. This standard deviation is not clipped to the 0--100 range, so $\bar{x}\pm s$ should be read as run-to-run variation rather than as a bounded score interval.

\input{tables/category_score_tables}

%% file: sections/gw_case_study_details.tex
\subsection{Gravitational-Wave Case Study Details}
\label{sec:appendix-gw-case-study-details}

\textbf{The selected milestones compress the run into interpretable phase transitions.} Table~\ref{tab:gw-case-study-stages} summarizes the main phases behind the selected milestones rather than every submission. The first selected milestone already passes the protocol and entrypoint checks, generates all five required CSV files on the canonical grids, and obtains a score of 42.8. A subsequent key update raises the score to 47.1, mainly by improving the source dynamics. Around hour 1, another milestone reaches roughly 50 as the agent improves the H1/L1 spectrograms and early H1 alignment in the time domain. These gains reflect standard signal processing: filtering, normalization, windowing in the time-frequency plane, interpolation to the canonical grid, and cropping around the event.

\textbf{The largest score gain comes from the high-weight source-dynamics component.} The largest jump occurs around hours 4--5: the overall score rises from about 52.3 to 59.7, driven almost entirely by the source dynamics subscore (64.2 to 89.0). Because this subtask carries the largest weight, a compact physical model plus calibration produces a large total-score gain. After that point, the agent shifts to H1 waveform tuning, where the H1 time-series subscore climbs from roughly 47 to 95 by the end.

\textbf{The final solution is strong on H1 waveform and source dynamics, but it still leaves room to improve toward a human-standard LIGO-style analysis.} Table~\ref{tab:gw-case-study-subscores} breaks down the final score. GPT-5.5 scores high on H1 waveform reconstruction and the source dynamics but remains weak on the spectrograms and L1 reconstruction. Empirical calibration and parameter search produce a substantial score, but they do not replace a coherent end-to-end pipeline for strain preprocessing, whitening, time-frequency analysis, and waveform alignment across detectors.

\begin{table}[!htbp]
\centering
\footnotesize
\renewcommand{\arraystretch}{1.12}
\setlength{\tabcolsep}{4pt}
\begin{tabular}{lclp{0.46\linewidth}}
\BenchTopRule
Stage & Time & Best score & Main improvement \\
\BenchMidRule
Initial milestones & 0h & 42.8--47.1 & Built a reconstruction pipeline that can be evaluated; replacing a noisy frequency estimate improved the inferred source motion from 51 to 64 (\textbf{+13 pp.}) \\
Early signal processing & 1--4h & 52.3 & Recentered the time-frequency window on the merger; the Hanford detector waveform and both detector spectrograms improved together, raising the best score from 47.1 to 52.3 (\textbf{+5.2 pp.}) \\
Main breakthrough & 4--5h & 59.7 & Calibrated the binary-source model; the orbital velocity and separation score jumped from 64 to 89 (\textbf{+25 pp.}) \\
Late waveform tuning & 5--11.5h & 66.9 & Found the remaining mismatch in the Hanford detector waveform; time-shift alignment and error fitting raised Hanford from 47 to 95 (\textbf{+48 pp.}) \\
Final refinement & 12h & 67.0 & Consolidated final corrections for the Hanford and Livingston detectors; a small Livingston waveform gain raised the aggregate score from 66.9 to 67.0 (\textbf{+0.1 pp.}) \\
\BenchBottomRule
\end{tabular}
\caption{Main phases in the gravitational-wave evolving trajectory. The trajectory shows how the agent refines signal processing, source dynamics, and per-detector waveform reconstruction over the 12-hour run.}
\label{tab:gw-case-study-stages}
\end{table}

\begin{table}[!htbp]
\centering
\footnotesize
\renewcommand{\arraystretch}{1.12}
\setlength{\tabcolsep}{4pt}
\begin{tabular}{lr}
\BenchTopRule
Component & Final subscore \\
\BenchMidRule
H1 time series & 95.0 \\
L1 time series & 57.1 \\
H1 spectrogram & 42.7 \\
L1 spectrogram & 44.7 \\
Velocity/separation & 89.0 \\
\BenchMidRule
Aggregate score & 67.0 \\
\BenchBottomRule
\end{tabular}
\caption{Final subscore composition for the GPT-5.5 gravitational-wave 12h run.}
\label{tab:gw-case-study-subscores}
\end{table}

%% file: sections/harness_level_continuation_ablations.tex
\subsection{Harness-Level Continuation Ablations}
\label{sec:ablation-experiment-setting}
\label{sec:appendix-ablation-setting}

Long-running agent evaluation depends not only on model capability, but also on how the harness keeps the run alive and carries useful state across many hours. In a 12-hour task, an agent may stop its running unexpectedly due to idleness and need to resume. These continuation mechanisms are therefore part of the practical measurement setup: a weak scaffold can make a capable model stop working on the task, while a stronger scaffold may help the same model keep active and improving. As a supplementary harness-level ablation, we evaluate two such mechanisms, /goal mode and the Ralph loop, under a fixed 12-hour budget with GPT-5.5 and GPT-5.4.

Base uses the standard harness: one continuing agent session, a stop hook to prevent voluntary early exit, and auto-resume for abnormal exits. In /goal mode~\cite{openai2026codexgoals}, the harness prompts the agent to create a task-level goal at the beginning, keep it active during the run, and mark it complete only when the task is validated. The Ralph loop follows the file-backed fresh-context pattern~\cite{huntley2026ralphloop}: each loop starts a new agent invocation on the same workspace, asks it to read and update progress.md, then appends judge feedback to that file before the next loop. It uses up to 100 loops, a 7200-second per-loop cap, and the same 12-hour overall budget. Each cell schedules three runs and reports the mean score over valid runs.

\input{tables/ablations}

As shown in Table~\ref{tab:feature_ablation_goal_ralph}, the goal mode and the Ralph loop often outperform the base setting, suggesting that long-horizon agents benefit from harness support that preserves and updates task state across extended runs. In the displayed-task average, GPT-5.5 improves from 42.6 in Base to 43.1 with Goal and 43.4 with Ralph, while GPT-5.4 improves from 26.1 in Base to 31.8 with Goal and 27.6 with Ralph. The gains are not uniform across all tasks and models, so we treat this as an appendix-level harness diagnostic rather than a main result.

Most cells use all three scheduled runs. A few GPT-5.4 cells have fewer valid runs because of intermittent API or network instability: one-run cells are w/ Goal for \texttt{storyboard\_\allowbreak{}ad\_\allowbreak{}copywriting} and \texttt{flt\_\allowbreak{}regular\_\allowbreak{}formalization}, plus w/ Ralph for \texttt{symbolic\_integration\_engine}; two-run cells are \texttt{battery\_\allowbreak{}soh\_\allowbreak{}rul\_\allowbreak{}anomaly} (w/ Goal and w/ Ralph), \texttt{capecod\_\allowbreak{}plume\_\allowbreak{}reconstruction} (w/ Goal), \texttt{combinatorial\_\allowbreak{}games\_\allowbreak{}formalization} (w/ Ralph), \texttt{ffmpeg\_\allowbreak{}swscale\_\allowbreak{}reimplementation} (w/ Goal), \texttt{flt\_\allowbreak{}regular\_\allowbreak{}formalization} (w/ Ralph), \texttt{git\_\allowbreak{}rewrite\_\allowbreak{}in\_\allowbreak{}zig} (w/ Goal), and \texttt{symbolic\_integration\_engine} (w/ Goal). All reported GPT-5.5 cells use the full three runs.

%% file: tables/ablations.tex
\providecommand{\NA}{\textemdash}
\providecommand{\best}[1]{}
\renewcommand{\best}[1]{\begingroup\bfseries\boldmath #1\endgroup}
\providecommand{\second}[1]{}
\renewcommand{\second}[1]{\underline{#1}}
\providecommand{\sci}[2]{\ensuremath{#1\times 10^{#2}}}
\begin{table*}[t!]
\centering
\footnotesize
\renewcommand{\arraystretch}{1.08}
\setlength{\tabcolsep}{2.5pt}
\resizebox{\textwidth}{!}{%
\begin{tabular}{@{}>{\raggedright\arraybackslash}p{0.34\linewidth}*{6}{>{\centering\arraybackslash}p{0.10\linewidth}}@{}}
\BenchTopRule
\textbf{Task} & \multicolumn{3}{c}{\textbf{GPT-5.5}} & \multicolumn{3}{c}{\textbf{GPT-5.4}} \\
\BenchCMidRule{2-4} \BenchCMidRule{5-7}
 & \textbf{Base} & \textbf{w/ Goal} & \textbf{w/ Ralph} & \textbf{Base} & \textbf{w/ Goal} & \textbf{w/ Ralph} \\
\BenchMidRule
\texttt{portfolio\_\allowbreak{}risk\_\allowbreak{}calibration} & \ensuremath{\mathrm{25.0}} & \second{\ensuremath{\mathrm{27.5}}} & \best{\ensuremath{\mathrm{34.3}}} & \ensuremath{\mathrm{10.7}} & \best{\ensuremath{\mathrm{19.8}}} & \second{\ensuremath{\mathrm{12.2}}} \\
\texttt{storyboard\_\allowbreak{}ad\_\allowbreak{}copywriting} & \ensuremath{\mathrm{77.0}} & \second{\ensuremath{\mathrm{88.3}}} & \best{\ensuremath{\mathrm{97.3}}} & \ensuremath{\mathrm{65.7}} & \second{\ensuremath{\mathrm{78.0}}} & \best{\ensuremath{\mathrm{83.3}}} \\
\texttt{arc\_\allowbreak{}compiler\_\allowbreak{}runtime} & \best{\ensuremath{\mathrm{72.4}}} & \second{\ensuremath{\mathrm{70.6}}} & \ensuremath{\mathrm{52.6}} & \second{\ensuremath{\mathrm{50.0}}} & \ensuremath{\mathrm{47.2}} & \best{\ensuremath{\mathrm{51.2}}} \\
\texttt{battery\_\allowbreak{}soh\_\allowbreak{}rul\_\allowbreak{}anomaly} & \second{\ensuremath{\mathrm{30.2}}} & \ensuremath{\mathrm{23.8}} & \best{\ensuremath{\mathrm{48.8}}} & \best{\ensuremath{\mathrm{14.7}}} & \second{\ensuremath{\mathrm{14.6}}} & \ensuremath{\mathrm{13.4}} \\
\texttt{borden\_\allowbreak{}source\_\allowbreak{}inversion} & \second{\ensuremath{\mathrm{38.5}}} & \ensuremath{\mathrm{32.8}} & \best{\ensuremath{\mathrm{62.2}}} & \ensuremath{\mathrm{8.0}} & \second{\ensuremath{\mathrm{17.3}}} & \best{\ensuremath{\mathrm{23.2}}} \\
\texttt{capecod\_\allowbreak{}plume\_\allowbreak{}reconstruction} & \second{\ensuremath{\mathrm{16.4}}} & \ensuremath{\mathrm{16.0}} & \best{\ensuremath{\mathrm{17.3}}} & \ensuremath{\mathrm{12.6}} & \second{\ensuremath{\mathrm{13.3}}} & \best{\ensuremath{\mathrm{13.7}}} \\
\texttt{combinatorial\_\allowbreak{}games\_\allowbreak{}formalization} & \ensuremath{\mathrm{38.2}} & \best{\ensuremath{\mathrm{45.3}}} & \second{\ensuremath{\mathrm{39.8}}} & \second{\ensuremath{\mathrm{17.8}}} & \ensuremath{\mathrm{13.0}} & \best{\ensuremath{\mathrm{20.2}}} \\
\texttt{ffmpeg\_\allowbreak{}swscale\_\allowbreak{}reimplementation} & \ensuremath{\mathrm{15.3}} & \second{\ensuremath{\mathrm{16.4}}} & \best{\ensuremath{\mathrm{25.2}}} & \ensuremath{\mathrm{13.9}} & \best{\ensuremath{\mathrm{28.3}}} & \second{\ensuremath{\mathrm{21.8}}} \\
\texttt{flt\_\allowbreak{}regular\_\allowbreak{}formalization} & \best{\ensuremath{\mathrm{75.1}}} & \second{\ensuremath{\mathrm{66.7}}} & \ensuremath{\mathrm{58.6}} & \best{\ensuremath{\mathrm{48.3}}} & \second{\ensuremath{\mathrm{43.7}}} & \second{\ensuremath{\mathrm{43.7}}} \\
\texttt{git\_\allowbreak{}rewrite\_\allowbreak{}in\_\allowbreak{}zig} & \ensuremath{\mathrm{18.4}} & \second{\ensuremath{\mathrm{20.5}}} & \best{\ensuremath{\mathrm{22.0}}} & \ensuremath{\mathrm{15.4}} & \best{\ensuremath{\mathrm{18.4}}} & \second{\ensuremath{\mathrm{15.8}}} \\
\texttt{tryst\_\allowbreak{}text\_\allowbreak{}adventure} & \second{\ensuremath{\mathrm{55.7}}} & \best{\ensuremath{\mathrm{56.2}}} & \ensuremath{\mathrm{40.2}} & \best{\ensuremath{\mathrm{44.3}}} & \ensuremath{\mathrm{32.9}} & \second{\ensuremath{\mathrm{42.4}}} \\
\texttt{openttd\_\allowbreak{}transport\_\allowbreak{}ai} & \ensuremath{\mathrm{28.1}} & \best{\ensuremath{\mathrm{39.8}}} & \second{\ensuremath{\mathrm{30.6}}} & \best{\ensuremath{\mathrm{11.9}}} & \ensuremath{\mathrm{1.2}} & \second{\ensuremath{\mathrm{8.1}}} \\
\texttt{pocketbase\_\allowbreak{}backend\_\allowbreak{}architecture} & \best{\ensuremath{\mathrm{62.5}}} & \best{\ensuremath{\mathrm{62.5}}} & \second{\ensuremath{\mathrm{41.7}}} & \second{\ensuremath{\mathrm{20.8}}} & \best{\ensuremath{\mathrm{54.2}}} & \ensuremath{\mathrm{0.0}} \\
\texttt{symbolic\_\allowbreak{}integration\_\allowbreak{}engine} & \best{\ensuremath{\mathrm{44.0}}} & \ensuremath{\mathrm{36.6}} & \second{\ensuremath{\mathrm{37.7}}} & \ensuremath{\mathrm{30.9}} & \best{\ensuremath{\mathrm{63.7}}} & \second{\ensuremath{\mathrm{37.2}}} \\
\BenchMidRule
\textbf{Avg.} & \ensuremath{\mathrm{42.6}} & \second{\ensuremath{\mathrm{43.1}}} & \best{\ensuremath{\mathrm{43.4}}} & \ensuremath{\mathrm{26.1}} & \best{\ensuremath{\mathrm{31.8}}} & \second{\ensuremath{\mathrm{27.6}}} \\
\BenchBottomRule
\end{tabular}%
}
\caption{Harness-level continuation ablation of /goal mode and the Ralph loop. GPT-5.5 and GPT-5.4 are evaluated with Base, Goal, and Ralph settings under the same 12-hour budget; each value is the mean score over valid runs, and the Avg. row averages the displayed task rows. Incomplete cells are detailed in Appendix~\ref{sec:appendix-ablation-setting}. Bold marks the best setting and underlining marks the second-best setting.}
\label{tab:feature_ablation_goal_ralph}
\end{table*}

%% file: sections/task_by_task_specifications.tex
\subsection{Per-Task Design Notes}
\label{sec:task-by-task-design-notes}

This appendix gives per-task design notes for the 134-task \benchmark{} suite.

\begingroup
\scriptsize
\setlength{\tabcolsep}{2pt}
\renewcommand{\arraystretch}{1.0}
\setlength{\LTpre}{4pt}
\setlength{\LTpost}{4pt}
\newcommand{\tasknoterule}{\specialrule{0.12pt}{0.6pt}{0.6pt}}
\begin{longtable}{@{}>{\raggedright\arraybackslash}p{0.20\linewidth}>{\raggedright\arraybackslash}p{0.77\linewidth}@{}}
\BenchTopRule
Task & Design notes \\
\BenchMidRule
\endfirsthead
\BenchTopRule
Task & Design notes \\
\BenchMidRule
\endhead
\BenchMidRule
\multicolumn{2}{r}{\emph{Continued on next page}} \\
\endfoot
\BenchBottomRule
\noalign{\vskip 0.75em}
\caption{Per-task design notes for all 134 \benchmark{} tasks.}\label{tab:task-notes-all}\\
\endlastfoot
\multicolumn{2}{@{}l}{\rule{0pt}{1.2em}\textsc{\scriptsize Systems \& Software Engineering}} \\
\BenchMidRule
\texttt{ann\_\allowbreak{}vector\_\allowbreak{}search\_\allowbreak{}qps} & Replace a brute-force NumPy nearest-neighbor baseline with a high-performance approximate nearest-neighbor implementation under a hard recall constraint. Scored by queries per second. \\
\tasknoterule
\texttt{arc\_\allowbreak{}compiler\_\allowbreak{}runtime} & Implement a complete TypeScript compiler pipeline (lexer, parser, type checker, code generator) for a novel programming language defined by specification documents. \\
\tasknoterule
\texttt{rust\_\allowbreak{}multicrate\_\allowbreak{}reconstruction} & Reconstruct missing Rust implementations across a multi-crate content-addressable storage workspace, given only type signatures and public API contracts. \\
\tasknoterule
\texttt{codeflash\_\allowbreak{}repair\_\allowbreak{}performance} & Diagnose and repair intentionally degraded modules in a Python code-optimization tool spanning CST manipulation, profiling infrastructure, and test harness integration. Scored on both functional correctness and runtime performance. \\
\tasknoterule
\texttt{copier\_\allowbreak{}modular\_\allowbreak{}refactor} & Implement six architectural targets in the Copier Python scaffolding library: structured exceptions, template management, version compatibility, rendering modes, a worker class, and a user-data layer. \\
\tasknoterule
\texttt{dependent\_\allowbreak{}type\_\allowbreak{}checker} & Build a complete dependent type checker in Rust for a subset of Martin-L\"{o}f Type Theory, supporting cumulative universes, Pi/Sigma types, general inductive types with positivity checking, and universe polymorphism. Scored on both correctness and normalization throughput. \\
\tasknoterule
\texttt{entt\_\allowbreak{}graph\_\allowbreak{}module} & Implement seven feature targets in the EnTT C++ entity-component-system framework, including a graph module with adjacency matrix, a task-graph builder with transitive reduction, DOT export, and several core utility additions. \\
\tasknoterule
\texttt{exchange\_\allowbreak{}core\_\allowbreak{}throughput} & Maximize peak throughput of a Java financial matching engine built on the LMAX Disruptor by tuning thread topology, wait strategies, ring-buffer sizing, order-book implementation, and JVM configuration. \\
\tasknoterule
\texttt{ffmpeg\_\allowbreak{}swscale\_\allowbreak{}reimplementation} & Reimplement FFmpeg's libswscale pixel-format conversion and scaling library in Rust, handling multiple pixel formats and scaling algorithms. A correctness-passing scaffold is provided; the agent must optimize for speed via SIMD. \\
\tasknoterule
\texttt{git\_\allowbreak{}rewrite\_\allowbreak{}in\_\allowbreak{}zig} & Reimplement git as a drop-in Zig binary producing identical CLI output, exit codes, and repository state as the C reference implementation. The C source is available for reading but cannot be compiled. \\
\tasknoterule
\texttt{high\_\allowbreak{}performance\_\allowbreak{}object\_\allowbreak{}mapper} & Implement a .NET object-to-object mapping library using expression-tree compilation and IL emission, handling flat/nested mapping, collections, custom type converters, nullables, and inheritance hierarchies. \\
\tasknoterule
\texttt{integer\_\allowbreak{}compression\_\allowbreak{}codec} & Improve a C++ integer compression codec for better compression ratio and decode throughput on uint32 datasets via techniques such as delta encoding, bit-packing, and SIMD vectorization. Exact round-trip correctness is mandatory. \\
\tasknoterule
\texttt{juliet\_\allowbreak{}vulnerability\_\allowbreak{}analyzer} & Implement a deterministic static analyzer that processes structured program facts to detect vulnerabilities across six CWE categories (stack/heap overflow, integer overflow, null dereference, use-after-free, command injection). \\
\tasknoterule
\texttt{libexpat\_\allowbreak{}x86\_\allowbreak{}assembly} & Reimplement the libexpat XML parser entirely in x86-64 assembly, producing an ABI-compatible shared library. No C compiler is available---only assembler, linker, and libc. \\
\tasknoterule
\texttt{odata\_\allowbreak{}query\_\allowbreak{}service} & Complete a .NET OData query processing library: query-string parsing, LINQ expression-tree generation for filtering, multi-field sorting, pagination, field projection, and ASP.NET Core middleware integration. \\
\tasknoterule
\texttt{litestar\_\allowbreak{}infra\_\allowbreak{}refactor} & Implement five async infrastructure subsystems in the Litestar Python web framework: key-value stores with expiry, an event bus, WebSocket listener abstractions, a DTO framework, and a channels pub/sub system. \\
\tasknoterule
\texttt{lua\_\allowbreak{}native\_\allowbreak{}compiler} & Build a Lua 5.4 ahead-of-time native compiler that reads Lua source and emits standalone ELF executables with real x86-64 machine code---not C API call sequences or bytecode dispatch loops. Output must match the reference interpreter byte-for-byte. \\
\tasknoterule
\texttt{mimesis\_\allowbreak{}modular\_\allowbreak{}refactor} & Implement seven refactoring targets in the Mimesis Python fake-data library: a declarative schema class, an enum system with metaclass-driven random selection, and five data providers spanning payments, cryptography, science, internet, and unit systems. \\
\tasknoterule
\texttt{nlohmann\_\allowbreak{}json\_\allowbreak{}modularization} & Implement five feature targets in the nlohmann/json C++ library: UBJSON binary serialization, JSON Merge Patch (RFC 7386), Grisu2 shortest-representation float formatting, a range-based iteration interface, and library metadata macros. \\
\tasknoterule
\texttt{notebook\_\allowbreak{}lossless\_\allowbreak{}compression} & Build a lossless compression pipeline for Jupyter notebooks with a training phase (dictionary learning from a visible corpus) and per-file compression/decompression. Scored by overall compression ratio; byte-exact reconstruction is mandatory. \\
\tasknoterule
\texttt{packer\_\allowbreak{}plugin\_\allowbreak{}datasources} & Implement six targets in HashiCorp Packer's Go plugin ecosystem: address parsing with DNS validation, filesystem-based plugin discovery with checksum verification, HCL2 data-block integration, sensitive value handling, and new template functions. \\
\tasknoterule
\texttt{pocketbase\_\allowbreak{}backend\_\allowbreak{}architecture} & Implement four architectural targets in PocketBase (Go): a chain-based hook/event system, an HTTP router with middleware stacking, a dynamic OAuth2 provider registry, and a regex-based random string generator. \\
\tasknoterule
\texttt{pocketbase\_\allowbreak{}tools\_\allowbreak{}extensions} & Implement four Go utility packages for PocketBase: a REST JSON serializer with nested field picking, a zip archive utility, a cron scheduler with timezone support, and a SQL index parser/builder. \\
\tasknoterule
\texttt{postgres\_\allowbreak{}wire\_\allowbreak{}on\_\allowbreak{}sqlite} & Implement a server that speaks the PostgreSQL wire protocol (v3) using SQLite as the storage backend, handling authentication, simple and extended query protocols, type mapping, transactions, and system catalog queries. Scored by pass rate on PostgreSQL's own test suites. \\
\tasknoterule
\texttt{quic\_\allowbreak{}transport\_\allowbreak{}stack} & Implement a subset of the QUIC transport protocol (RFC 9000): connection establishment, TLS 1.3 key derivation, AEAD encryption, packet number encoding, header protection, and core frame types. \\
\tasknoterule
\texttt{regex\_\allowbreak{}automata\_\allowbreak{}repair} & Repair broken implementations in Rust's regex-automata crate spanning NFA compilation, DFA transition construction, Unicode handling, capture groups, and the hybrid matching engine. \\
\tasknoterule
\texttt{schemathesis\_\allowbreak{}datagen\_\allowbreak{}pipeline} & Implement eight feature targets in the Schemathesis Python API testing framework, including structured HTTP header generation strategies, coverage-phase hooks, discriminator-aware validation and data generation, and schema-driven code generation fixes. \\
\tasknoterule
\texttt{schemathesis\_\allowbreak{}reporting\_\allowbreak{}observability} & Implement five targets in Schemathesis: a post-validation hook system, multi-format test report writers (VCR, HAR, JUnit, NDJSON), pytest plugin integration, and schema-branch-aware example generation. \\
\tasknoterule
\texttt{schemathesis\_\allowbreak{}config\_\allowbreak{}modernization} & Implement six modernization targets in Schemathesis: a TOML-based configuration system with auto-discovery, API namespace reorganization, a metrics framework, transport and response abstractions, and a redesigned check registration system. \\
\tasknoterule
\texttt{stream\_\allowbreak{}processing\_\allowbreak{}engine} & Implement a Rust-based stream processing engine supporting windowed aggregations, filtering, projection, and stateful operators over JSON event streams. Scored on correctness, robustness to malformed input, and throughput. \\
\tasknoterule
\texttt{tls13\_\allowbreak{}handshake\_\allowbreak{}state\_\allowbreak{}machine} & Complete a Python TLS 1.3 protocol state machine that processes handshake message traces, implementing state transitions, key schedule computation, and message validation using a provided crypto API. \\
\tasknoterule
\texttt{vault\_\allowbreak{}sdk\_\allowbreak{}resilience} & Implement five targets in HashiCorp Vault's Go SDK: a string template engine, a fair-share job scheduler, persistent cache storage for agent tokens, certificate utility enhancements, and API client request/response hooks. \\
\tasknoterule
\texttt{vliw\_\allowbreak{}kernel\_\allowbreak{}optimization} & Optimize a VLIW/SIMD kernel generator for correctness and minimum cycle count on a custom architecture simulator. Hard-coded answers for specific inputs are forbidden. \\
\tasknoterule
\texttt{cpu\_\allowbreak{}full\_\allowbreak{}flow} & Work through a full RISC-V CPU design curriculum: ISA emulator implementation, hardware abstraction layer, Verilog processor design, and SoC integration with Verilator simulation. \\
\tasknoterule
\texttt{zstd\_\allowbreak{}api\_\allowbreak{}modernization} & Implement five API evolution targets in Zstandard's C library: stable struct alignment and query functions, a generic parameter-driven compression interface, static allocation support, dictionary enhancements, and memory estimation utilities. \\
\tasknoterule
\texttt{cfzip\_\allowbreak{}compression\_\allowbreak{}engine} & Implement a complete compression engine from scratch in C++17 without external libraries: custom archive format, CLI with streaming mode, dictionary training and use, and integrity verification. Scored on compression ratio, speed, memory usage, and correctness. \\
\BenchMidRule
\multicolumn{2}{@{}l}{\rule{0pt}{1.2em}\textsc{\scriptsize Scientific Problems \& ML}} \\
\BenchMidRule
\texttt{battery\_\allowbreak{}soh\_\allowbreak{}rul\_\allowbreak{}anomaly} & Predict battery state-of-health, remaining useful life, and anomaly type/severity per cycle for unseen cells, given multi-cell degradation training data. Evaluated under distributional shift with regime and calibration changes not present in training. \\
\tasknoterule
\texttt{borden\_\allowbreak{}pump\_\allowbreak{}treat\_\allowbreak{}dispatch} & Build a physics-based groundwater flow model and solve a constrained multi-objective optimization for pump-and-treat remediation on the Borden aquifer, selecting wells, treatment types, and multi-phase schedules under budget and capacity constraints. \\
\tasknoterule
\texttt{borden\_\allowbreak{}source\_\allowbreak{}inversion} & Infer a finite-duration rectangular contaminant source from sparse, noisy monitoring-well observations in a 3D hydrogeological scene. The agent must implement its own forward model and inversion optimizer from scratch. \\
\tasknoterule
\texttt{borden\_\allowbreak{}sensor\_\allowbreak{}fault\_\allowbreak{}diagnosis} & Classify sensor faults (spikes, drift, stuck-at-zero, unit errors, etc.) versus true plume arrivals in groundwater monitoring records using physics-informed checks such as travel-time ordering and neighbor-well consistency. \\
\tasknoterule
\texttt{bridge\_\allowbreak{}gnss\_\allowbreak{}state\_\allowbreak{}forecast} & Process dirty bridge GNSS displacement time series (timestamp jitter, duplicates, spikes, drift) and produce cleaned reconstruction, state estimates, and short-term displacement forecasts. \\
\tasknoterule
\texttt{capecod\_\allowbreak{}plume\_\allowbreak{}reconstruction} & Reconstruct a multi-analyte groundwater plume from sparse monitoring wells: predict concentrations at withheld locations and times, compute plume metrics, and propose an optimal monitoring network under budget constraints. \\
\tasknoterule
\texttt{vsg\_\allowbreak{}stability\_\allowbreak{}parameter\_\allowbreak{}optimization} & Integrate current-limited Virtual Synchronous Generators into the IEEE 39-bus power system, generate transient stability data, build a physics-informed neural network, and train a reinforcement learning agent to optimize VSG parameters. \\
\tasknoterule
\texttt{cylinder\_\allowbreak{}wake\_\allowbreak{}prediction} & Implement a CPU-only 2D cylinder wake solver for the incompressible Navier-Stokes equations. Evaluated on unseen Reynolds numbers and domain configurations; scored on velocity-field accuracy, pressure-field accuracy, and flow-regime prediction. \\
\tasknoterule
\texttt{dabic\_\allowbreak{}gravity\_\allowbreak{}inversion} & Implement the D-ABIC regularization method for 3D gravity inversion within the SimPEG framework, run on both synthetic and real Vinton salt dome data under L0 and L1 sparse norms, and compare against a Hamiltonian Monte Carlo baseline. \\
\tasknoterule
\texttt{noisy\_\allowbreak{}product\_\allowbreak{}matching\_\allowbreak{}pipeline} & Determine whether pairs of product listings from different sources refer to the same real-world item, given noisy and incomplete attributes. Evaluated on hidden variants with different noise characteristics and data scales; no ground-truth labels are available during development. \\
\tasknoterule
\texttt{neural\_\allowbreak{}net\_\allowbreak{}weight\_\allowbreak{}recovery} & Reconstruct the correct layer ordering of a neural network from 97 shuffled weight files and input-output historical data, using dimensional constraints and reconstruction-error analysis. \\
\tasknoterule
\texttt{nanophotonic\_\allowbreak{}simulation\_\allowbreak{}reproduction} & Reproduce published nanophotonic simulation results (multi-source electromagnetic field distributions) from a research paper, implementing the solver from scratch using only NumPy. \\
\tasknoterule
\texttt{ftir\_\allowbreak{}polymer\_\allowbreak{}identification} & Identify polyimide monomers from FTIR spectra by correlating IR absorption peaks with functional groups, optionally aided by quantum chemistry simulations. Submission attempts are rate-limited to prevent brute-force enumeration. \\
\tasknoterule
\texttt{molecular\_\allowbreak{}property\_\allowbreak{}regression} & Predict molecular properties from graph representations without graph neural network libraries. Evaluated on multi-domain data with perturbations that penalize solutions overfitting to the development distribution. \\
\tasknoterule
\texttt{graph\_\allowbreak{}node\_\allowbreak{}classification} & Implement graph neural networks from scratch using only base PyTorch for semi-supervised node classification on an unseen graph under CPU-only constraints. \\
\tasknoterule
\texttt{gravitational\_\allowbreak{}wave\_\allowbreak{}signal\_\allowbreak{}detection} & Reproduce the LIGO GW150914 gravitational-wave detection pipeline: whitening, bandpass filtering, matched filtering against a numerical-relativity template, time-frequency analysis, and residual computation. \\
\tasknoterule
\texttt{herbal\_\allowbreak{}depression\_\allowbreak{}target\_\allowbreak{}screening} & Screen active components and protein targets for four traditional Chinese medicine herbs used in depression treatment, producing component lists, target mappings, disease-target intersections, and a PPI network visualization. \\
\tasknoterule
\texttt{polyimide\_\allowbreak{}homo\_\allowbreak{}lumo\_\allowbreak{}prediction} & Compute HOMO/LUMO energies of polyimide repeating units using DFT calculations, analyze substituent and conjugation effects on electronic properties, and identify the monomer pair with the widest band gap. \\
\tasknoterule
\texttt{industrial\_\allowbreak{}anomaly\_\allowbreak{}detection} & Detect anomalies in multivariate industrial sensor time series using only classical ML libraries. Evaluated on hidden variants including time-series crops, reversals, and normal-only segments to prevent hard-coding. \\
\tasknoterule
\texttt{bipedalwalker\_\allowbreak{}locomotion\_\allowbreak{}rl} & Train a CPU-only locomotion policy for BipedalWalker and its Hardcore variant. The judge evaluates only the submitted policy checkpoint, not the training process. Pre-trained policies and external RL libraries are prohibited. \\
\tasknoterule
\texttt{molecular\_\allowbreak{}solubility\_\allowbreak{}prediction} & Predict aqueous log-solubility from SMILES strings and molecular descriptors, improving upon a provided random forest baseline. Scored by prediction error on hidden test molecules. \\
\tasknoterule
\texttt{motor\_\allowbreak{}clutch\_\allowbreak{}model\_\allowbreak{}reproduction} & Implement a Gillespie/KMC stochastic simulation of the motor-clutch mechanotransduction model from sparse reference traces. Curve-fitting formulas and lookup tables are prohibited; the simulation must exhibit correct stochastic dynamics across unseen parameter combinations. \\
\tasknoterule
\texttt{streaming\_\allowbreak{}multilabel\_\allowbreak{}classification} & Implement streaming multi-label classification from scratch using only NumPy. Evaluated on subset accuracy, Hamming loss, and F1 metrics under CPU-only constraints. \\
\tasknoterule
\texttt{barnes\_\allowbreak{}hut\_\allowbreak{}nbody\_\allowbreak{}acceleration} & Implement a Barnes-Hut $N$-body gravitational simulation in C++17. Scored on both force accuracy relative to direct summation and speedup over a naive baseline across varying particle counts. \\
\tasknoterule
\texttt{blackbox\_\allowbreak{}numerical\_\allowbreak{}integration} & Integrate hidden black-box functions over the 10-dimensional unit hypercube via an oracle interface with bounded query budgets. Scored on accuracy relative to true integral values. \\
\tasknoterule
\texttt{pancreatic\_\allowbreak{}radiotherapy\_\allowbreak{}meta\_\allowbreak{}analysis} & Automate a two-stage systematic review pipeline: screen candidate PDFs and extract evidence, then perform inverse-variance meta-analysis with model selection and subgroup analysis. Evaluated on unseen publications. \\
\tasknoterule
\texttt{monge\_\allowbreak{}ampere\_\allowbreak{}pde\_\allowbreak{}solver} & Implement a numerical solver for the fully nonlinear Monge-Amp\`ere equation $\det(D^2 u) = f(x,y)$. Evaluated on unseen right-hand sides and boundary conditions; scored on solution accuracy and computational efficiency. \\
\tasknoterule
\texttt{pdf\_\allowbreak{}structured\_\allowbreak{}extraction} & Extract structured page blocks (text, tables, formulas, figures) with bounding boxes, reading order, and content from PDFs using only classical computer vision tools---no deep learning models. Evaluated on diverse enterprise document layouts. \\
\tasknoterule
\texttt{ocean\_\allowbreak{}mt\_\allowbreak{}lab\_\allowbreak{}inversion} & Implement 1D marine magnetotelluric forward modeling and Bayesian inversion with lateral coupling across ocean-bottom stations. Independent per-station or point-estimate-only solutions are penalized. \\
\tasknoterule
\texttt{pv\_\allowbreak{}power\_\allowbreak{}forecasting} & Forecast multi-site photovoltaic power generation from historical output and weather features. Evaluated on unseen data across multiple domain-specific metrics including ramp handling, peak accuracy, and multi-horizon performance. \\
\tasknoterule
\texttt{quantum\_\allowbreak{}architecture\_\allowbreak{}search} & Implement noise-adaptive quantum architecture search balancing classification accuracy and molecular ground-state energy estimation against hardware-realistic circuit depth and gate count constraints. \\
\tasknoterule
\texttt{collaborative\_\allowbreak{}filtering\_\allowbreak{}recommender} & Implement a top-K recommender system that jointly optimizes recommendation quality (NDCG) and runtime efficiency, including cold-start users with no training interactions. \\
\tasknoterule
\texttt{touchstone\_\allowbreak{}vna\_\allowbreak{}diagnostics} & Parse Touchstone S-parameter files in various representations (real-imaginary, magnitude-angle, dB-angle), compute derived RF metrics (return loss, VSWR, group delay, impedance), and produce structured diagnostic reports. \\
\tasknoterule
\texttt{ecg\_\allowbreak{}signal\_\allowbreak{}processing\_\allowbreak{}pipeline} & Implement a three-stage ECG processing pipeline (denoising, QRS complex detection, beat classification) from scratch using only NumPy. Evaluated on unseen recordings with different noise profiles and arrhythmia distributions. \\
\tasknoterule
\texttt{sketch\_\allowbreak{}solve\_\allowbreak{}least\_\allowbreak{}squares} & Solve large-scale overdetermined least-squares problems by choosing among direct solvers, iterative methods, and randomized sketching based on matrix properties. Scored on solution accuracy and speed across varied problem structures. \\
\tasknoterule
\texttt{substrate\_\allowbreak{}interface\_\allowbreak{}simulation} & Implement a coupled interface response simulation with correct parameter dependence. Hard-coded outputs are prohibited; evaluated on physical consistency, energy conservation, and statistical properties of stochastic trajectories. \\
\tasknoterule
\texttt{thermo\_\allowbreak{}fluid\_\allowbreak{}field\_\allowbreak{}prediction} & Predict 2D velocity and temperature fields for thermo-fluid coupling problems under varying boundary conditions and dimensionless parameters, using only NumPy on CPU. \\
\tasknoterule
\texttt{csi\_\allowbreak{}time\_\allowbreak{}series\_\allowbreak{}forecasting} & Forecast future wireless channel state information tensors from historical observations. Evaluated on unseen channel conditions and mobility scenarios under CPU-only constraints. \\
\tasknoterule
\texttt{roof\_\allowbreak{}damage\_\allowbreak{}active\_\allowbreak{}learning} & Design an active learning pipeline for satellite-imagery roof damage detection: start from a small labeled seed set, strategically query an oracle for additional labels under a fixed budget, and train an object detector for evaluation on unlabeled images. \\
\BenchMidRule
\multicolumn{2}{@{}l}{\rule{0pt}{1.2em}\textsc{\scriptsize Combinatorial Optimization}} \\
\BenchMidRule
\texttt{ad\_\allowbreak{}placement\_\allowbreak{}optimization} & Partition a large integer grid into non-overlapping rectangles, each containing a designated anchor point, maximizing total satisfaction from how closely each rectangle's area matches its target. \\
\tasknoterule
\texttt{treant\_\allowbreak{}forest} & Strategically place obstacles in a grid maze to maximize the shortest-path length between start and goal, or block the path entirely. \\
\tasknoterule
\texttt{grid\_\allowbreak{}turing\_\allowbreak{}robot} & Design transition rules and initial coloring for a Turing-like robot on a colored grid to maximize the number of distinct cells visited while minimizing the rule set size. \\
\tasknoterule
\texttt{molecular\_\allowbreak{}self\_\allowbreak{}assembly} & Schedule bonding operations over discrete time steps to assemble atoms into a specified number of connected molecules, respecting spatial proximity and temporal ordering constraints. \\
\tasknoterule
\texttt{apple\_\allowbreak{}incremental\_\allowbreak{}game} & Decide each turn whether to invest in machines or collect output in an incremental production game, balancing short-term gains against long-horizon compounding. \\
\tasknoterule
\texttt{first\_\allowbreak{}order\_\allowbreak{}theorem\_\allowbreak{}prover} & Build a first-order automated theorem prover from scratch (parsing, clausification, unification, saturation) that produces verified proof or model witnesses. External provers and benchmark fingerprinting are forbidden. \\
\tasknoterule
\texttt{circuit\_\allowbreak{}layout\_\allowbreak{}optimization} & Implement a VLSI standard-cell placement solver minimizing half-perimeter wire length on industry-standard benchmarks. Scored on wire-length quality and runtime. \\
\tasknoterule
\texttt{order\_\allowbreak{}addition\_\allowbreak{}permutation\_\allowbreak{}optimization} & Find a permutation of 1,000 elements that minimizes a black-box cost function, using metaheuristic search (simulated annealing, genetic algorithms, local search) without access to the cost function's internals. \\
\tasknoterule
\texttt{equivalence\_\allowbreak{}class\_\allowbreak{}divide\_\allowbreak{}and\_\allowbreak{}conquer} & Solve six progressive competitive-programming problems centered on equivalence classes and divide-and-conquer, where techniques from earlier problems inform solutions to harder ones. \\
\tasknoterule
\texttt{jagua\_\allowbreak{}nesting\_\allowbreak{}optimization} & Improve a Rust-based 2D irregular bin packing optimizer for non-convex polygonal pieces. Solution geometry is independently verified; improvements below a minimum threshold receive no credit. \\
\tasknoterule
\texttt{sat\_\allowbreak{}solver} & Build a SAT solver from scratch implementing conflict-driven clause learning, watched literals, and restart strategies. External solvers and benchmark-aware heuristics are forbidden; scoring is balanced across difficulty tiers. \\
\tasknoterule
\texttt{smt\_\allowbreak{}solver} & Build an SMT solver from scratch for four quantifier-free theories (uninterpreted functions, linear real and integer arithmetic, and their combination). External SMT solvers are forbidden; model witnesses are independently validated. \\
\tasknoterule
\texttt{symbolic\_\allowbreak{}integration\_\allowbreak{}engine} & Extend a starter symbolic integration engine (with its own parser, simplifier, and differentiator) to handle a broader class of integrands. All computer algebra systems and numerical libraries are forbidden; correctness is verified by differentiating the returned antiderivative. \\
\tasknoterule
\texttt{tree\_\allowbreak{}block\_\allowbreak{}partitioning} & Solve six progressive problems on tree decomposition and block partitioning, where algorithmic ideas discovered in simpler variants transfer to harder ones. \\
\tasknoterule
\texttt{triangulation\_\allowbreak{}coloring\_\allowbreak{}optimization} & Minimize a cost function over a triangulation by jointly recoloring vertices and flipping edges, where the dominant term is a quadratic penalty on monochromatic (``ugly'') triangles. \\
\tasknoterule
\texttt{vibrating\_\allowbreak{}path\_\allowbreak{}graph\_\allowbreak{}coloring} & Color graph vertices and selectively remove edges to minimize a cost that penalizes both removed edges and monochromatic surviving edges. \\
\tasknoterule
\texttt{vehicle\_\allowbreak{}routing\_\allowbreak{}time\_\allowbreak{}windows} & Implement a capacitated vehicle routing solver with time windows for Solomon-style benchmarks. Scored against best-known solutions on vehicle count and total travel distance. \\
\tasknoterule
\texttt{warehouse\_\allowbreak{}forklift\_\allowbreak{}routing} & Route a forklift in a grid warehouse to receive goods arriving in random order, store them, and dispatch them in sequential order, minimizing total movement. \\
\tasknoterule
\texttt{wireless\_\allowbreak{}electricity\_\allowbreak{}layout} & Position wire segments on a 2D plane to deliver wireless electricity from two fixed source plates to thousands of cities, minimizing a quadratic cost over city-to-wire distances and wire displacements while avoiding short circuits. \\
\BenchMidRule
\multicolumn{2}{@{}l}{\rule{0pt}{1.2em}\textsc{\scriptsize Formal Math \& Theorem Proving}} \\
\BenchMidRule
\texttt{lean\_\allowbreak{}analysis\_\allowbreak{}proofs} & Complete proof obligations across a multi-file Lean 4 project formalizing results in real and functional analysis. Proofs are checked transitively: a theorem counts only if its entire dependency chain is fully proved. \\
\tasknoterule
\texttt{carleson\_\allowbreak{}formalization} & Fill proof obligations in the Lean 4 formalization of Carleson's theorem on pointwise convergence of $L^2$ Fourier series. Transitive axiom checking ensures no dependence on unproved prerequisites. \\
\tasknoterule
\texttt{combinatorial\_\allowbreak{}games\_\allowbreak{}formalization} & Resolve proof obligations in a Lean 4 formalization of combinatorial game theory, covering surreal numbers, game arithmetic, and the Sprague-Grundy theorem. \\
\tasknoterule
\texttt{new\_\allowbreak{}foundations\_\allowbreak{}consistency} & Complete proof obligations in the ConNF Lean 4 project formalizing the consistency of Quine's New Foundations, involving permutation models and tangled type theory. \\
\tasknoterule
\texttt{cup\_\allowbreak{}product\_\allowbreak{}formalization} & Fill proof obligations in a Lean 4 formalization of the cup product in singular cohomology: cochain-level multiplication, the Leibniz rule, and the induced ring structure on cohomology groups. \\
\tasknoterule
\texttt{erdos392\_\allowbreak{}formalization} & Complete proof obligations for Erd\H{o}s Problem 392 (asymptotic prime distribution) within a Lean 4 analytic number theory project. Weighted scoring reflects relative difficulty of each proof. \\
\tasknoterule
\texttt{flt\_\allowbreak{}regular\_\allowbreak{}formalization} & Resolve proof obligations in a Lean 4 formalization of Fermat's Last Theorem for regular primes via Kummer's cyclotomic theory. Top-level results earn no credit unless foundational prerequisites are also fully proved. \\
\tasknoterule
\texttt{godel\_\allowbreak{}incompleteness\_\allowbreak{}formalization} & Complete proof obligations in a Lean 4 formalization of G\"{o}del's First Incompleteness Theorem, spanning G\"{o}del numbering, the fixed-point lemma, and the self-referential undecidable sentence. \\
\tasknoterule
\texttt{medium\_\allowbreak{}prime\_\allowbreak{}number\_\allowbreak{}theorem} & Complete proof obligations for the Prime Number Theorem with an explicit error term, requiring complex-analytic techniques including contour integration and zero-free regions of the Riemann zeta function. \\
\tasknoterule
\texttt{ordinal\_\allowbreak{}notation\_\allowbreak{}well\_\allowbreak{}foundedness} & Construct well-foundedness proofs for ordinal notation systems in Coq, involving Cantor Normal Form and ordinal arithmetic. \\
\tasknoterule
\texttt{pfr\_\allowbreak{}formalization} & Resolve proof obligations in the Lean 4 formalization of the Polynomial Freiman--Ruzsa conjecture (Gowers--Green--Manners--Tao 2023), involving Shannon entropy, Ruzsa distance, and subgroup covering arguments. \\
\tasknoterule
\texttt{sphere\_\allowbreak{}eversion\_\allowbreak{}formalization} & Complete proof obligations in a Lean 4 formalization of sphere eversion, spanning smooth immersions, jet bundles, ample differential relations, and convex integration. \\
\tasknoterule
\texttt{turing\_\allowbreak{}machine\_\allowbreak{}halting\_\allowbreak{}proofs} & Prove halting behavior for specific 6-state, 2-symbol Turing machines in Coq, contributing to the Busy Beaver frontier. Proofs must be mechanically verified with no admitted axioms. \\
\BenchMidRule
\multicolumn{2}{@{}l}{\rule{0pt}{1.2em}\textsc{\scriptsize Professional Knowledge Work}} \\
\BenchMidRule
\texttt{portfolio\_\allowbreak{}risk\_\allowbreak{}calibration} & Implement a multi-module portfolio management system (risk calibration, constrained optimization, execution cost modeling, dynamic rebalancing) for a cross-asset ETF portfolio. Evaluated out-of-sample on risk-adjusted return metrics. \\
\tasknoterule
\texttt{storyboard\_\allowbreak{}ad\_\allowbreak{}copywriting} & Produce a promotional video script with dual variants and a shot-by-shot storyboard for a state-owned enterprise exhibition appearance, adhering to strict political communication standards and advertising compliance. \\
\tasknoterule
\texttt{brand\_\allowbreak{}annual\_\allowbreak{}planning\_\allowbreak{}ppt} & Produce a comprehensive annual brand management plan as a presentation deck and companion data tables for a mid-size IPTV company, covering strategy, monthly action roadmaps, budgets with market rate benchmarks, and competitive analysis. \\
\tasknoterule
\texttt{securities\_\allowbreak{}protection\_\allowbreak{}training} & Produce a complete set of training deliverables for a securities investor protection seminar: legal framework overview, core systems analysis, case studies, comparative study, policy analysis, presentation deck, lecture script, and bibliography. \\
\tasknoterule
\texttt{college\_\allowbreak{}english\_\allowbreak{}exam\_\allowbreak{}bank} & Produce five parallel examination papers with answer keys for a college English course, plus a blueprint table and an overlap self-check matrix ensuring cross-paper diversity meets pedagogical thresholds. \\
\tasknoterule
\texttt{cross\_\allowbreak{}border\_\allowbreak{}commission\_\allowbreak{}compliance} & Produce a multi-module legal compliance report for a cross-border commission payment, covering multi-jurisdictional anti-corruption risk (FCPA, UK Bribery Act, Chinese Criminal Law), export control, transfer pricing, and evidence preservation strategy. \\
\tasknoterule
\texttt{cross\_\allowbreak{}border\_\allowbreak{}investment\_\allowbreak{}ppt} & Produce a presentation covering 15 months of global cross-border investment policy developments with chronological policy inventory, original data charts, multi-sector industry analysis with deal case studies, and forward-looking risk assessment. \\
\tasknoterule
\texttt{cta\_\allowbreak{}risk\_\allowbreak{}budget\_\allowbreak{}optimization} & Build a complete CTA multi-strategy futures trading system: multiple signal classes, dynamic risk budgeting, a multi-currency backtest engine with transaction costs, drawdown control, and performance attribution. \\
\tasknoterule
\texttt{equity\_\allowbreak{}objection\_\allowbreak{}report} & Produce a court-submission-ready legal research report analyzing whether beneficial owners under equity proxy-holding arrangements can defeat forced execution in Chinese civil enforcement law, citing specific judicial guidance and case precedents. \\
\tasknoterule
\texttt{expo\_\allowbreak{}visitor\_\allowbreak{}conversion\_\allowbreak{}model} & Clean tens of thousands of messy visitor registration records (entity resolution, address parsing, company-name merging), tag professional visitors, and build a calibrated exhibitor conversion scoring model. \\
\tasknoterule
\texttt{factor\_\allowbreak{}stock\_\allowbreak{}model\_\allowbreak{}optimization} & Implement an adaptive multi-factor stock selection model with IC-based factor selection, dynamic weighting, industry/market-cap neutralization, and constrained portfolio construction. Evaluated out-of-sample on information ratio, excess return, drawdown, and turnover. \\
\tasknoterule
\texttt{global\_\allowbreak{}terrorism\_\allowbreak{}atlas\_\allowbreak{}report} & Transform a 200K+ record terrorism database into a single-PDF infographic atlas with maps, ranking charts, shaped word clouds, a sunburst diagram, and country-level fatality trends. \\
\tasknoterule
\texttt{hebei\_\allowbreak{}gaokao\_\allowbreak{}strategy\_\allowbreak{}report} & Produce a multi-tier college admission preference plan with alternative strategies, given complex personal constraints (banned fields, health disqualifications, career priorities) and three years of historical admission data. \\
\tasknoterule
\texttt{hk\_\allowbreak{}connect\_\allowbreak{}annual\_\allowbreak{}metrics} & Programmatically retrieve and populate financial metrics for 255 Hong Kong Stock Connect-eligible securities from annual reports and market data providers. Scored on per-cell accuracy. \\
\tasknoterule
\texttt{k12\_\allowbreak{}math\_\allowbreak{}recommendation} & Build a knowledge-tracing and question-recommendation system from hundreds of thousands of student interaction records, evaluated on prediction accuracy, mastery calibration, learning gain, and pedagogical constraint satisfaction. \\
\tasknoterule
\texttt{property\_\allowbreak{}actuarial\_\allowbreak{}pricing} & Build an actuarial pricing model for SME property insurance: loss modeling, risk grading, reinsurance cost allocation, and premium smoothing with renewal constraints. Evaluated on predictive accuracy and premium adequacy. \\
\tasknoterule
\texttt{real\_\allowbreak{}estate\_\allowbreak{}bid\_\allowbreak{}estimate} & Conduct feasibility analysis for a supertall office land tender: market research, quarterly cash flow models under two bidding structures, sensitivity analysis, and a reasoned investment recommendation. \\
\tasknoterule
\texttt{stock\_\allowbreak{}momentum\_\allowbreak{}backtest} & Fetch live A-share market data, compute risk-adjusted momentum scores, apply multi-factor screening filters, and calculate a market-cap-weighted portfolio return for a specific holding period. Scored on numerical accuracy of each computation step. \\
\tasknoterule
\texttt{storm\_\allowbreak{}claim\_\allowbreak{}ring\_\allowbreak{}audit} & Build a fraud detection pipeline for post-disaster insurance claims: produce hold/release/escalate decisions and fraud-ring cluster assignments from interconnected claim, payment, survey, and media datasets, distinguishing genuine disaster patterns from coordinated fraud. \\
\BenchMidRule
\multicolumn{2}{@{}l}{\rule{0pt}{1.2em}\textsc{\scriptsize Interactive Games \& Simulators}} \\
\BenchMidRule
\texttt{dcss\_\allowbreak{}dungeon\_\allowbreak{}ai} & Write a Lua bot for Dungeon Crawl Stone Soup that autonomously explores, fights, and descends dungeon levels as a Minotaur Berserker under a wall-clock time budget. Scored by mean in-game score across multiple runs. \\
\tasknoterule
\texttt{anchorhead\_\allowbreak{}text\_\allowbreak{}adventure} & Play the Lovecraftian interactive fiction game \emph{Anchorhead} via an HTTP API, sending text commands and receiving prose observations. Scored by peak in-game score, reflecting progression through the multi-day narrative and puzzle chain. \\
\tasknoterule
\texttt{trinity\_\allowbreak{}text\_\allowbreak{}adventure} & Play Infocom's \emph{Trinity} via an HTTP game API. The game requires precise object manipulation and understanding of symbolic and temporal clues across interconnected zones. \\
\tasknoterule
\texttt{tryst\_\allowbreak{}text\_\allowbreak{}adventure} & Play \emph{Tryst of Fate} via an HTTP game API. The branching narrative with irreversible choices requires strategic exploration to reach high-scoring endings. \\
\tasknoterule
\texttt{nethack\_\allowbreak{}dungeon\_\allowbreak{}agent} & Implement a decision policy for NetHack via the NLE harness, parsing ASCII map observations and stat vectors to navigate, fight, and survive across multiple procedurally generated runs. \\
\tasknoterule
\texttt{openrct2\_\allowbreak{}theme\_\allowbreak{}park\_\allowbreak{}ai} & Write a JavaScript plugin for OpenRCT2 that autonomously builds rides, hires staff, sets pricing, and grows park company value across multiple scenarios of increasing complexity. \\
\tasknoterule
\texttt{openttd\_\allowbreak{}transport\_\allowbreak{}ai} & Write an AI script for OpenTTD that builds road, rail, and air transport networks to connect towns and industries and grow company value across diverse procedurally generated maps. \\
\tasknoterule
\texttt{wesnoth\_\allowbreak{}tactical\_\allowbreak{}ai} & Write tactical AI logic for Battle for Wesnoth that defeats the built-in AI through custom recruitment, focus-fire targeting, terrain exploitation, and village-capture timing across multiple maps. \\
\end{longtable}
\endgroup

%% file: tables/category_score_tables.tex
\providecommand{\NA}{\textemdash}
\providecommand{\best}[1]{}
\renewcommand{\best}[1]{\begingroup\bfseries\boldmath #1\endgroup}
\providecommand{\second}[1]{}
\renewcommand{\second}[1]{\underline{#1}}
\providecommand{\sci}[2]{\ensuremath{#1\times 10^{#2}}}
\providecommand{\partialrun}{\textsuperscript{*}}
\providecommand{\nostd}[1]{\multicolumn{2}{c}{#1}}
\begin{table}[p]
\centering
\footnotesize
\renewcommand{\arraystretch}{1.08}
\setlength{\tabcolsep}{2.5pt}
\resizebox{\ifdim\width>\linewidth\linewidth\else\width\fi}{!}{%
\begin{tabular}{@{}>{\raggedright\arraybackslash}p{0.30\linewidth}*{5}{r@{\,}>{\scriptsize}l}@{}}
\BenchTopRule
\textbf{Systems \& Software Engineering Tasks} & \multicolumn{2}{c}{\textbf{Opus 4.8}} & \multicolumn{2}{c}{\textbf{GPT-5.5}} & \multicolumn{2}{c}{\textbf{GPT-5.4}} & \multicolumn{2}{c}{\textbf{GLM-5.1}} & \multicolumn{2}{c}{\textbf{DS-V4-Pro}} \\
\BenchMidRule
\texttt{codeflash\_\allowbreak{}repair\_\allowbreak{}performance} & \multicolumn{2}{c}{\second{\ensuremath{\mathrm{57.2}}\,{\scriptsize \ensuremath{\pm 21.9}}}} & \multicolumn{2}{c}{\best{\ensuremath{\mathrm{100.0}}\partialrun\,{\scriptsize \ensuremath{\pm 0.0}}}} & \nostd{\NA} & \nostd{\NA} & \ensuremath{\mathrm{30.5}} & \ensuremath{\pm 17.9} \\
\texttt{ffmpeg\_\allowbreak{}swscale\_\allowbreak{}reimplementation} & \multicolumn{2}{c}{\best{\ensuremath{\mathrm{21.1}}\,{\scriptsize \ensuremath{\pm 7.9}}}} & \multicolumn{2}{c}{\second{\ensuremath{\mathrm{15.3}}\,{\scriptsize \ensuremath{\pm 2.8}}}} & \ensuremath{\mathrm{13.9}} & \ensuremath{\pm 3.1} & \ensuremath{\mathrm{2.2}} & \ensuremath{\pm 3.9} & \ensuremath{\mathrm{3.8}} & \ensuremath{\pm 5.9} \\
\texttt{git\_\allowbreak{}rewrite\_\allowbreak{}in\_\allowbreak{}zig} & \multicolumn{2}{c}{\second{\ensuremath{\mathrm{23.1}}\,{\scriptsize \ensuremath{\pm 2.2}}}} & \ensuremath{\mathrm{18.4}} & \ensuremath{\pm 0.4} & \ensuremath{\mathrm{15.4}} & \ensuremath{\pm 2.4} & \multicolumn{2}{c}{\best{\ensuremath{\mathrm{23.5}}\,{\scriptsize \ensuremath{\pm 1.9}}}} & \ensuremath{\mathrm{17.9}} & \ensuremath{\pm 2.1} \\
\texttt{arc\_\allowbreak{}compiler\_\allowbreak{}runtime} & \multicolumn{2}{c}{\second{\ensuremath{\mathrm{52.0}}\,{\scriptsize \ensuremath{\pm 0.1}}}} & \multicolumn{2}{c}{\best{\ensuremath{\mathrm{72.4}}\,{\scriptsize \ensuremath{\pm 15.1}}}} & \ensuremath{\mathrm{50.0}} & \ensuremath{\pm 0.7} & \ensuremath{\mathrm{48.7}} & \ensuremath{\pm 3.5} & \ensuremath{\mathrm{44.2}} & \ensuremath{\pm 4.7} \\
\texttt{pdf\_\allowbreak{}structured\_\allowbreak{}extraction} & \multicolumn{2}{c}{\second{\ensuremath{\mathrm{36.5}}\,{\scriptsize \ensuremath{\pm 4.5}}}} & \ensuremath{\mathrm{29.8}} & \ensuremath{\pm 2.1} & \multicolumn{2}{c}{\best{\ensuremath{\mathrm{36.9}}\partialrun\,{\scriptsize \ensuremath{\pm 1.7}}}} & \ensuremath{\mathrm{26.3}} & \ensuremath{\pm 2.9} & \ensuremath{\mathrm{9.5}} & \ensuremath{\pm 2.3} \\
\texttt{ann\_\allowbreak{}vector\_\allowbreak{}search\_\allowbreak{}qps} & \multicolumn{2}{c}{\best{\ensuremath{\mathrm{59.7}}\,{\scriptsize \ensuremath{\pm 2.1}}}} & \ensuremath{\mathrm{40.7}} & \ensuremath{\pm 18.8} & \multicolumn{2}{c}{\second{\ensuremath{\mathrm{50.2}}\,{\scriptsize \ensuremath{\pm 17.4}}}} & \ensuremath{\mathrm{38.3}} & \ensuremath{\pm 17.5} & \ensuremath{\mathrm{23.8}} & \ensuremath{\pm 3.0} \\
\texttt{rust\_\allowbreak{}multicrate\_\allowbreak{}reconstruction} & \nostd{\NA} & \multicolumn{2}{c}{\best{\ensuremath{\mathrm{57.8}}\,{\scriptsize \ensuremath{\pm 16.8}}}} & \ensuremath{\mathrm{21.4}} & \ensuremath{\pm 3.0} & \multicolumn{2}{c}{\second{\ensuremath{\mathrm{38.5}}\,{\scriptsize \ensuremath{\pm 21.4}}}} & \ensuremath{\mathrm{23.6}} & \ensuremath{\pm 2.4} \\
\texttt{copier\_\allowbreak{}modular\_\allowbreak{}refactor} & \multicolumn{2}{c}{\best{\ensuremath{\mathrm{98.9}}\,{\scriptsize \ensuremath{\pm 0.0}}}} & \ensuremath{\mathrm{97.8}} & \ensuremath{\pm 1.2} & \multicolumn{2}{c}{\second{\ensuremath{\mathrm{98.1}}\,{\scriptsize \ensuremath{\pm 0.6}}}} & \ensuremath{\mathrm{98.0}} & \ensuremath{\pm 0.9} & \ensuremath{\mathrm{87.2}} & \ensuremath{\pm 9.4} \\
\texttt{dependent\_\allowbreak{}type\_\allowbreak{}checker} & \multicolumn{2}{c}{\best{\ensuremath{\mathrm{44.7}}\,{\scriptsize \ensuremath{\pm 3.7}}}} & \multicolumn{2}{c}{\second{\ensuremath{\mathrm{24.7}}\,{\scriptsize \ensuremath{\pm 21.4}}}} & \ensuremath{\mathrm{3.8}} & \ensuremath{\pm 6.7} & \ensuremath{\mathrm{1.9}} & \ensuremath{\pm 3.2} & \ensuremath{\mathrm{0.0}} & \ensuremath{\pm 0.0} \\
\texttt{entt\_\allowbreak{}graph\_\allowbreak{}module} & \multicolumn{2}{c}{\best{\ensuremath{\mathrm{100.0}}\,{\scriptsize \ensuremath{\pm 0.0}}}} & \multicolumn{2}{c}{\second{\ensuremath{\mathrm{94.3}}\,{\scriptsize \ensuremath{\pm 3.0}}}} & \ensuremath{\mathrm{78.4}} & \ensuremath{\pm 25.7} & \multicolumn{2}{c}{\best{\ensuremath{\mathrm{100.0}}\,{\scriptsize \ensuremath{\pm 0.0}}}} & \ensuremath{\mathrm{92.0}} & \ensuremath{\pm 10.5} \\
\texttt{exchange\_\allowbreak{}core\_\allowbreak{}throughput} & \multicolumn{2}{c}{\best{\ensuremath{\mathrm{59.7}}\,{\scriptsize \ensuremath{\pm 2.7}}}} & \multicolumn{2}{c}{\second{\ensuremath{\mathrm{53.2}}\,{\scriptsize \ensuremath{\pm 6.8}}}} & \ensuremath{\mathrm{47.3}} & \ensuremath{\pm 10.2} & \ensuremath{\mathrm{52.6}} & \ensuremath{\pm 12.9} & \ensuremath{\mathrm{48.6}} & \ensuremath{\pm 17.9} \\
\texttt{integer\_\allowbreak{}compression\_\allowbreak{}codec} & \multicolumn{2}{c}{\best{\ensuremath{\mathrm{75.3}}\,{\scriptsize \ensuremath{\pm 0.3}}}} & \multicolumn{2}{c}{\second{\ensuremath{\mathrm{74.4}}\,{\scriptsize \ensuremath{\pm 0.5}}}} & \ensuremath{\mathrm{42.3}} & \ensuremath{\pm 18.4} & \ensuremath{\mathrm{28.9}} & \ensuremath{\pm 4.3} & \ensuremath{\mathrm{16.2}} & \ensuremath{\pm 8.1} \\
\texttt{juliet\_\allowbreak{}vulnerability\_\allowbreak{}analyzer} & \ensuremath{\mathrm{75.6}} & \ensuremath{\pm 6.7} & \multicolumn{2}{c}{\best{\ensuremath{\mathrm{89.8}}\,{\scriptsize \ensuremath{\pm 1.9}}}} & \multicolumn{2}{c}{\second{\ensuremath{\mathrm{77.2}}\,{\scriptsize \ensuremath{\pm 5.5}}}} & \ensuremath{\mathrm{63.5}} & \ensuremath{\pm 4.6} & \ensuremath{\mathrm{66.2}} & \ensuremath{\pm 8.4} \\
\texttt{libexpat\_\allowbreak{}x86\_\allowbreak{}assembly} & \multicolumn{2}{c}{\second{\ensuremath{\mathrm{15.8}}\,{\scriptsize \ensuremath{\pm 14.0}}}} & \ensuremath{\mathrm{13.5}} & \ensuremath{\pm 11.8} & \ensuremath{\mathrm{11.9}} & \ensuremath{\pm 8.4} & \multicolumn{2}{c}{\best{\ensuremath{\mathrm{18.1}}\,{\scriptsize \ensuremath{\pm 24.3}}}} & \ensuremath{\mathrm{11.2}} & \ensuremath{\pm 15.7} \\
\texttt{litestar\_\allowbreak{}infra\_\allowbreak{}refactor} & \nostd{\best{\ensuremath{\mathrm{92.5}}\partialrun}} & \multicolumn{2}{c}{\second{\ensuremath{\mathrm{66.4}}\,{\scriptsize \ensuremath{\pm 15.9}}}} & \ensuremath{\mathrm{60.8}} & \ensuremath{\pm 7.7} & \ensuremath{\mathrm{46.0}} & \ensuremath{\pm 3.7} & \ensuremath{\mathrm{43.6}} & \ensuremath{\pm 3.3} \\
\texttt{lua\_\allowbreak{}native\_\allowbreak{}compiler} & \multicolumn{2}{c}{\best{\ensuremath{\mathrm{98.9}}\partialrun\,{\scriptsize \ensuremath{\pm 0.8}}}} & \ensuremath{\mathrm{78.2}} & \ensuremath{\pm 27.4} & \multicolumn{2}{c}{\second{\ensuremath{\mathrm{90.7}}\,{\scriptsize \ensuremath{\pm 8.6}}}} & \ensuremath{\mathrm{40.9}}\partialrun & \ensuremath{\pm 57.9} & \ensuremath{\mathrm{41.2}}\partialrun & \ensuremath{\pm 7.8} \\
\texttt{mimesis\_\allowbreak{}modular\_\allowbreak{}refactor} & \multicolumn{2}{c}{\best{\ensuremath{\mathrm{100.0}}\,{\scriptsize \ensuremath{\pm 0.0}}}} & \multicolumn{2}{c}{\second{\ensuremath{\mathrm{91.0}}\,{\scriptsize \ensuremath{\pm 2.7}}}} & \ensuremath{\mathrm{87.8}} & \ensuremath{\pm 1.9} & \ensuremath{\mathrm{82.0}} & \ensuremath{\pm 6.9} & \ensuremath{\mathrm{65.4}} & \ensuremath{\pm 12.1} \\
\texttt{nlohmann\_\allowbreak{}json\_\allowbreak{}modularization} & \multicolumn{2}{c}{\best{\ensuremath{\mathrm{100.0}}\,{\scriptsize \ensuremath{\pm 0.0}}}} & \multicolumn{2}{c}{\second{\ensuremath{\mathrm{98.9}}\,{\scriptsize \ensuremath{\pm 0.5}}}} & \ensuremath{\mathrm{87.3}} & \ensuremath{\pm 11.0} & \ensuremath{\mathrm{77.8}} & \ensuremath{\pm 0.0} & \ensuremath{\mathrm{73.3}} & \ensuremath{\pm 0.9} \\
\texttt{notebook\_\allowbreak{}lossless\_\allowbreak{}compression} & \multicolumn{2}{c}{\best{\ensuremath{\mathrm{53.0}}\,{\scriptsize \ensuremath{\pm 2.3}}}} & \ensuremath{\mathrm{19.3}} & \ensuremath{\pm 27.2} & \ensuremath{\mathrm{7.3}} & \ensuremath{\pm 2.0} & \multicolumn{2}{c}{\second{\ensuremath{\mathrm{35.1}}\,{\scriptsize \ensuremath{\pm 24.3}}}} & \ensuremath{\mathrm{16.0}} & \ensuremath{\pm 17.2} \\
\texttt{packer\_\allowbreak{}plugin\_\allowbreak{}datasources} & \ensuremath{\mathrm{90.0}} & \ensuremath{\pm 0.0} & \multicolumn{2}{c}{\second{\ensuremath{\mathrm{90.8}}\partialrun\,{\scriptsize \ensuremath{\pm 1.2}}}} & \ensuremath{\mathrm{33.3}} & \ensuremath{\pm 14.4} & \multicolumn{2}{c}{\best{\ensuremath{\mathrm{91.1}}\,{\scriptsize \ensuremath{\pm 1.0}}}} & \ensuremath{\mathrm{90.6}} & \ensuremath{\pm 1.0} \\
\texttt{pocketbase\_\allowbreak{}tools\_\allowbreak{}extensions} & \multicolumn{2}{c}{\best{\ensuremath{\mathrm{100.0}}\,{\scriptsize \ensuremath{\pm 0.0}}}} & \multicolumn{2}{c}{\second{\ensuremath{\mathrm{94.8}}\,{\scriptsize \ensuremath{\pm 2.6}}}} & \ensuremath{\mathrm{66.5}} & \ensuremath{\pm 15.2} & \ensuremath{\mathrm{57.4}} & \ensuremath{\pm 13.0} & \ensuremath{\mathrm{60.0}} & \ensuremath{\pm 17.3} \\
\texttt{postgres\_\allowbreak{}wire\_\allowbreak{}on\_\allowbreak{}sqlite} & \multicolumn{2}{c}{\best{\ensuremath{\mathrm{8.2}}\,{\scriptsize \ensuremath{\pm 0.3}}}} & \ensuremath{\mathrm{7.7}} & \ensuremath{\pm 0.6} & \ensuremath{\mathrm{7.8}} & \ensuremath{\pm 0.3} & \multicolumn{2}{c}{\second{\ensuremath{\mathrm{8.1}}\partialrun\,{\scriptsize \ensuremath{\pm 0.3}}}} & \ensuremath{\mathrm{7.9}}\partialrun & \ensuremath{\pm 0.2} \\
\texttt{quic\_\allowbreak{}transport\_\allowbreak{}stack} & \ensuremath{\mathrm{48.3}} & \ensuremath{\pm 14.3} & \multicolumn{2}{c}{\best{\ensuremath{\mathrm{63.6}}\,{\scriptsize \ensuremath{\pm 8.9}}}} & \ensuremath{\mathrm{47.1}} & \ensuremath{\pm 16.1} & \multicolumn{2}{c}{\second{\ensuremath{\mathrm{52.5}}\partialrun\,{\scriptsize \ensuremath{\pm 22.4}}}} & \ensuremath{\mathrm{33.4}} & \ensuremath{\pm 10.0} \\
\texttt{regex\_\allowbreak{}automata\_\allowbreak{}repair} & \multicolumn{2}{c}{\second{\ensuremath{\mathrm{66.7}}\partialrun\,{\scriptsize \ensuremath{\pm 0.1}}}} & \multicolumn{2}{c}{\best{\ensuremath{\mathrm{67.0}}\,{\scriptsize \ensuremath{\pm 0.0}}}} & \ensuremath{\mathrm{61.0}} & \ensuremath{\pm 10.3} & \nostd{\ensuremath{\mathrm{28.6}}\partialrun} & \ensuremath{\mathrm{2.3}} & \ensuremath{\pm 2.2} \\
\texttt{schemathesis\_\allowbreak{}datagen\_\allowbreak{}pipeline} & \multicolumn{2}{c}{\best{\ensuremath{\mathrm{70.2}}\,{\scriptsize \ensuremath{\pm 2.7}}}} & \ensuremath{\mathrm{56.7}} & \ensuremath{\pm 3.0} & \ensuremath{\mathrm{56.6}} & \ensuremath{\pm 3.3} & \multicolumn{2}{c}{\second{\ensuremath{\mathrm{67.0}}\,{\scriptsize \ensuremath{\pm 7.0}}}} & \ensuremath{\mathrm{52.3}} & \ensuremath{\pm 5.3} \\
\texttt{schemathesis\_\allowbreak{}reporting\_\allowbreak{}observability} & \multicolumn{2}{c}{\second{\ensuremath{\mathrm{76.2}}\,{\scriptsize \ensuremath{\pm 4.7}}}} & \multicolumn{2}{c}{\best{\ensuremath{\mathrm{77.1}}\,{\scriptsize \ensuremath{\pm 3.5}}}} & \multicolumn{2}{c}{\second{\ensuremath{\mathrm{76.2}}\,{\scriptsize \ensuremath{\pm 2.9}}}} & \ensuremath{\mathrm{61.9}} & \ensuremath{\pm 1.5} & \ensuremath{\mathrm{65.0}} & \ensuremath{\pm 11.7} \\
\texttt{schemathesis\_\allowbreak{}config\_\allowbreak{}modernization} & \multicolumn{2}{c}{\best{\ensuremath{\mathrm{87.7}}\,{\scriptsize \ensuremath{\pm 2.6}}}} & \multicolumn{2}{c}{\second{\ensuremath{\mathrm{84.0}}\,{\scriptsize \ensuremath{\pm 1.5}}}} & \ensuremath{\mathrm{71.9}} & \ensuremath{\pm 1.8} & \ensuremath{\mathrm{61.7}} & \ensuremath{\pm 4.2} & \ensuremath{\mathrm{55.6}} & \ensuremath{\pm 2.9} \\
\texttt{stream\_\allowbreak{}processing\_\allowbreak{}engine} & \multicolumn{2}{c}{\best{\ensuremath{\mathrm{100.0}}\,{\scriptsize \ensuremath{\pm 0.0}}}} & \multicolumn{2}{c}{\best{\ensuremath{\mathrm{100.0}}\,{\scriptsize \ensuremath{\pm 0.0}}}} & \multicolumn{2}{c}{\best{\ensuremath{\mathrm{100.0}}\,{\scriptsize \ensuremath{\pm 0.0}}}} & \multicolumn{2}{c}{\best{\ensuremath{\mathrm{100.0}}\,{\scriptsize \ensuremath{\pm 0.0}}}} & \multicolumn{2}{c}{\best{\ensuremath{\mathrm{100.0}}\,{\scriptsize \ensuremath{\pm 0.0}}}} \\
\texttt{tls13\_\allowbreak{}handshake\_\allowbreak{}state\_\allowbreak{}machine} & \ensuremath{\mathrm{29.4}} & \ensuremath{\pm 0.9} & \multicolumn{2}{c}{\best{\ensuremath{\mathrm{39.3}}\,{\scriptsize \ensuremath{\pm 1.1}}}} & \multicolumn{2}{c}{\second{\ensuremath{\mathrm{37.9}}\,{\scriptsize \ensuremath{\pm 0.9}}}} & \ensuremath{\mathrm{29.2}} & \ensuremath{\pm 1.8} & \ensuremath{\mathrm{29.0}} & \ensuremath{\pm 0.6} \\
\texttt{vault\_\allowbreak{}sdk\_\allowbreak{}resilience} & \multicolumn{2}{c}{\best{\ensuremath{\mathrm{100.0}}\,{\scriptsize \ensuremath{\pm 0.0}}}} & \multicolumn{2}{c}{\second{\ensuremath{\mathrm{29.1}}\,{\scriptsize \ensuremath{\pm 21.0}}}} & \ensuremath{\mathrm{18.2}} & \ensuremath{\pm 15.7} & \ensuremath{\mathrm{13.9}} & \ensuremath{\pm 8.4} & \ensuremath{\mathrm{15.2}} & \ensuremath{\pm 18.9} \\
\texttt{vliw\_\allowbreak{}kernel\_\allowbreak{}optimization} & \multicolumn{2}{c}{\second{\ensuremath{\mathrm{80.9}}\,{\scriptsize \ensuremath{\pm 0.9}}}} & \multicolumn{2}{c}{\best{\ensuremath{\mathrm{85.6}}\,{\scriptsize \ensuremath{\pm 1.9}}}} & \ensuremath{\mathrm{79.1}} & \ensuremath{\pm 1.5} & \ensuremath{\mathrm{35.9}} & \ensuremath{\pm 25.1} & \ensuremath{\mathrm{34.1}} & \ensuremath{\pm 19.1} \\
\texttt{cpu\_\allowbreak{}full\_\allowbreak{}flow} & \ensuremath{\mathrm{72.0}}\partialrun & \ensuremath{\pm 7.1} & \multicolumn{2}{c}{\best{\ensuremath{\mathrm{88.5}}\partialrun\,{\scriptsize \ensuremath{\pm 12.0}}}} & \multicolumn{2}{c}{\second{\ensuremath{\mathrm{85.3}}\,{\scriptsize \ensuremath{\pm 11.7}}}} & \nostd{\ensuremath{\mathrm{51.0}}\partialrun} & \ensuremath{\mathrm{56.3}} & \ensuremath{\pm 4.7} \\
\texttt{zstd\_\allowbreak{}api\_\allowbreak{}modernization} & \multicolumn{2}{c}{\best{\ensuremath{\mathrm{100.0}}\,{\scriptsize \ensuremath{\pm 0.0}}}} & \multicolumn{2}{c}{\second{\ensuremath{\mathrm{95.8}}\,{\scriptsize \ensuremath{\pm 7.2}}}} & \multicolumn{2}{c}{\best{\ensuremath{\mathrm{100.0}}\,{\scriptsize \ensuremath{\pm 0.0}}}} & \multicolumn{2}{c}{\best{\ensuremath{\mathrm{100.0}}\,{\scriptsize \ensuremath{\pm 0.0}}}} & \ensuremath{\mathrm{91.7}} & \ensuremath{\pm 14.4} \\
\texttt{cfzip\_\allowbreak{}compression\_\allowbreak{}engine} & \multicolumn{2}{c}{\best{\ensuremath{\mathrm{98.4}}\,{\scriptsize \ensuremath{\pm 0.0}}}} & \multicolumn{2}{c}{\second{\ensuremath{\mathrm{97.4}}\,{\scriptsize \ensuremath{\pm 1.3}}}} & \ensuremath{\mathrm{96.2}}\partialrun & \ensuremath{\pm 0.7} & \ensuremath{\mathrm{87.4}} & \ensuremath{\pm 12.0} & \ensuremath{\mathrm{92.5}} & \ensuremath{\pm 4.8} \\
\texttt{pocketbase\_\allowbreak{}backend\_\allowbreak{}architecture} & \ensuremath{\mathrm{0.0}} & \ensuremath{\pm 0.0} & \multicolumn{2}{c}{\best{\ensuremath{\mathrm{62.5}}\,{\scriptsize \ensuremath{\pm 0.0}}}} & \ensuremath{\mathrm{20.8}} & \ensuremath{\pm 36.1} & \multicolumn{2}{c}{\second{\ensuremath{\mathrm{61.1}}\,{\scriptsize \ensuremath{\pm 2.4}}}} & \ensuremath{\mathrm{4.2}} & \ensuremath{\pm 7.2} \\
\BenchBottomRule
\end{tabular}%
}
\caption{Model performance on Systems \& Software Engineering tasks. Values are mean scores over up to three valid runs; adjacent entries show $\pm s$ when at least two valid runs are available. Bold marks the best model for each task, underlining marks the second-best model, \NA{} indicates no valid result, and \textsuperscript{*} marks fewer than three valid runs.}
\label{tab:appendix-software_systems}
\end{table}
\begin{table}[p]
\centering
\footnotesize
\renewcommand{\arraystretch}{1.08}
\setlength{\tabcolsep}{2.5pt}
\resizebox{\ifdim\width>\linewidth\linewidth\else\width\fi}{!}{%
\begin{tabular}{@{}>{\raggedright\arraybackslash}p{0.30\linewidth}*{5}{r@{\,}>{\scriptsize}l}@{}}
\BenchTopRule
\textbf{Scientific Problems \& ML Tasks} & \multicolumn{2}{c}{\textbf{Opus 4.8}} & \multicolumn{2}{c}{\textbf{GPT-5.5}} & \multicolumn{2}{c}{\textbf{GPT-5.4}} & \multicolumn{2}{c}{\textbf{GLM-5.1}} & \multicolumn{2}{c}{\textbf{DS-V4-Pro}} \\
\BenchMidRule
\texttt{capecod\_\allowbreak{}plume\_\allowbreak{}reconstruction} & \multicolumn{2}{c}{\best{\ensuremath{\mathrm{19.9}}\,{\scriptsize \ensuremath{\pm 11.1}}}} & \multicolumn{2}{c}{\second{\ensuremath{\mathrm{16.4}}\,{\scriptsize \ensuremath{\pm 5.5}}}} & \ensuremath{\mathrm{12.6}} & \ensuremath{\pm 4.6} & \ensuremath{\mathrm{10.9}} & \ensuremath{\pm 2.5} & \ensuremath{\mathrm{8.8}} & \ensuremath{\pm 0.4} \\
\texttt{battery\_\allowbreak{}soh\_\allowbreak{}rul\_\allowbreak{}anomaly} & \multicolumn{2}{c}{\best{\ensuremath{\mathrm{37.2}}\,{\scriptsize \ensuremath{\pm 10.7}}}} & \multicolumn{2}{c}{\second{\ensuremath{\mathrm{30.2}}\,{\scriptsize \ensuremath{\pm 14.8}}}} & \ensuremath{\mathrm{14.7}} & \ensuremath{\pm 1.8} & \ensuremath{\mathrm{18.2}} & \ensuremath{\pm 4.3} & \ensuremath{\mathrm{14.0}} & \ensuremath{\pm 0.6} \\
\texttt{borden\_\allowbreak{}source\_\allowbreak{}inversion} & \multicolumn{2}{c}{\best{\ensuremath{\mathrm{48.4}}\,{\scriptsize \ensuremath{\pm 7.3}}}} & \multicolumn{2}{c}{\second{\ensuremath{\mathrm{38.5}}\,{\scriptsize \ensuremath{\pm 14.3}}}} & \ensuremath{\mathrm{8.0}} & \ensuremath{\pm 1.4} & \ensuremath{\mathrm{15.1}} & \ensuremath{\pm 13.7} & \ensuremath{\mathrm{38.2}} & \ensuremath{\pm 3.6} \\
\texttt{borden\_\allowbreak{}pump\_\allowbreak{}treat\_\allowbreak{}dispatch} & \multicolumn{2}{c}{\second{\ensuremath{\mathrm{14.9}}\,{\scriptsize \ensuremath{\pm 2.5}}}} & \multicolumn{2}{c}{\best{\ensuremath{\mathrm{16.0}}\,{\scriptsize \ensuremath{\pm 1.6}}}} & \ensuremath{\mathrm{10.7}} & \ensuremath{\pm 0.6} & \ensuremath{\mathrm{14.5}} & \ensuremath{\pm 1.7} & \ensuremath{\mathrm{13.6}} & \ensuremath{\pm 2.1} \\
\texttt{borden\_\allowbreak{}sensor\_\allowbreak{}fault\_\allowbreak{}diagnosis} & \multicolumn{2}{c}{\second{\ensuremath{\mathrm{5.4}}\,{\scriptsize \ensuremath{\pm 2.6}}}} & \multicolumn{2}{c}{\best{\ensuremath{\mathrm{12.2}}\,{\scriptsize \ensuremath{\pm 8.0}}}} & \ensuremath{\mathrm{3.4}} & \ensuremath{\pm 0.2} & \ensuremath{\mathrm{3.4}} & \ensuremath{\pm 0.3} & \ensuremath{\mathrm{3.0}} & \ensuremath{\pm 0.0} \\
\texttt{bridge\_\allowbreak{}gnss\_\allowbreak{}state\_\allowbreak{}forecast} & \multicolumn{2}{c}{\second{\ensuremath{\mathrm{22.0}}\,{\scriptsize \ensuremath{\pm 1.4}}}} & \ensuremath{\mathrm{21.8}} & \ensuremath{\pm 2.3} & \ensuremath{\mathrm{21.0}} & \ensuremath{\pm 1.1} & \multicolumn{2}{c}{\best{\ensuremath{\mathrm{23.7}}\,{\scriptsize \ensuremath{\pm 1.8}}}} & \ensuremath{\mathrm{21.3}} & \ensuremath{\pm 0.5} \\
\texttt{vsg\_\allowbreak{}stability\_\allowbreak{}parameter\_\allowbreak{}optimization} & \multicolumn{2}{c}{\second{\ensuremath{\mathrm{27.9}}\,{\scriptsize \ensuremath{\pm 16.6}}}} & \multicolumn{2}{c}{\best{\ensuremath{\mathrm{47.0}}\,{\scriptsize \ensuremath{\pm 34.0}}}} & \ensuremath{\mathrm{5.8}} & \ensuremath{\pm 2.9} & \ensuremath{\mathrm{4.5}}\partialrun & \ensuremath{\pm 0.8} & \ensuremath{\mathrm{4.4}} & \ensuremath{\pm 0.5} \\
\texttt{cylinder\_\allowbreak{}wake\_\allowbreak{}prediction} & \multicolumn{2}{c}{\second{\ensuremath{\mathrm{66.6}}\,{\scriptsize \ensuremath{\pm 4.3}}}} & \multicolumn{2}{c}{\best{\ensuremath{\mathrm{69.9}}\,{\scriptsize \ensuremath{\pm 4.6}}}} & \ensuremath{\mathrm{39.8}} & \ensuremath{\pm 16.6} & \ensuremath{\mathrm{36.9}}\partialrun & \ensuremath{\pm 36.0} & \ensuremath{\mathrm{24.2}} & \ensuremath{\pm 14.6} \\
\texttt{dabic\_\allowbreak{}gravity\_\allowbreak{}inversion} & \multicolumn{2}{c}{\best{\ensuremath{\mathrm{17.5}}\,{\scriptsize \ensuremath{\pm 3.0}}}} & \multicolumn{2}{c}{\second{\ensuremath{\mathrm{17.3}}\,{\scriptsize \ensuremath{\pm 0.7}}}} & \ensuremath{\mathrm{15.0}}\partialrun & \ensuremath{\pm 0.7} & \ensuremath{\mathrm{17.1}} & \ensuremath{\pm 0.7} & \ensuremath{\mathrm{13.8}}\partialrun & \ensuremath{\pm 2.9} \\
\texttt{noisy\_\allowbreak{}product\_\allowbreak{}matching\_\allowbreak{}pipeline} & \multicolumn{2}{c}{\best{\ensuremath{\mathrm{68.1}}\,{\scriptsize \ensuremath{\pm 6.5}}}} & \multicolumn{2}{c}{\second{\ensuremath{\mathrm{64.3}}\,{\scriptsize \ensuremath{\pm 6.8}}}} & \ensuremath{\mathrm{27.1}} & \ensuremath{\pm 24.7} & \ensuremath{\mathrm{44.7}} & \ensuremath{\pm 3.4} & \ensuremath{\mathrm{54.2}} & \ensuremath{\pm 9.1} \\
\texttt{neural\_\allowbreak{}net\_\allowbreak{}weight\_\allowbreak{}recovery} & \multicolumn{2}{c}{\second{\ensuremath{\mathrm{94.9}}\,{\scriptsize \ensuremath{\pm 8.9}}}} & \nostd{\best{\ensuremath{\mathrm{100.0}}\partialrun}} & \multicolumn{2}{c}{\best{\ensuremath{\mathrm{100.0}}\partialrun\,{\scriptsize \ensuremath{\pm 0.0}}}} & \ensuremath{\mathrm{69.2}} & \ensuremath{\pm 0.0} & \ensuremath{\mathrm{65.4}}\partialrun & \ensuremath{\pm 5.4} \\
\texttt{nanophotonic\_\allowbreak{}simulation\_\allowbreak{}reproduction} & \multicolumn{2}{c}{\best{\ensuremath{\mathrm{64.8}}\,{\scriptsize \ensuremath{\pm 1.0}}}} & \ensuremath{\mathrm{38.2}}\partialrun & \ensuremath{\pm 1.8} & \nostd{\NA} & \multicolumn{2}{c}{\second{\ensuremath{\mathrm{43.1}}\,{\scriptsize \ensuremath{\pm 9.5}}}} & \ensuremath{\mathrm{42.8}} & \ensuremath{\pm 3.5} \\
\texttt{ftir\_\allowbreak{}polymer\_\allowbreak{}identification} & \multicolumn{2}{c}{\best{\ensuremath{\mathrm{45.0}}\,{\scriptsize \ensuremath{\pm 14.0}}}} & \multicolumn{2}{c}{\second{\ensuremath{\mathrm{32.0}}\partialrun\,{\scriptsize \ensuremath{\pm 5.7}}}} & \ensuremath{\mathrm{19.3}} & \ensuremath{\pm 16.4} & \ensuremath{\mathrm{12.3}} & \ensuremath{\pm 4.0} & \ensuremath{\mathrm{12.3}} & \ensuremath{\pm 7.1} \\
\texttt{molecular\_\allowbreak{}property\_\allowbreak{}regression} & \multicolumn{2}{c}{\best{\ensuremath{\mathrm{49.5}}\,{\scriptsize \ensuremath{\pm 16.9}}}} & \multicolumn{2}{c}{\second{\ensuremath{\mathrm{43.2}}\,{\scriptsize \ensuremath{\pm 5.9}}}} & \ensuremath{\mathrm{23.3}} & \ensuremath{\pm 0.8} & \ensuremath{\mathrm{24.2}} & \ensuremath{\pm 3.7} & \ensuremath{\mathrm{27.1}} & \ensuremath{\pm 7.2} \\
\texttt{graph\_\allowbreak{}node\_\allowbreak{}classification} & \multicolumn{2}{c}{\best{\ensuremath{\mathrm{66.6}}\,{\scriptsize \ensuremath{\pm 3.0}}}} & \ensuremath{\mathrm{56.0}} & \ensuremath{\pm 15.8} & \multicolumn{2}{c}{\second{\ensuremath{\mathrm{57.6}}\,{\scriptsize \ensuremath{\pm 2.0}}}} & \ensuremath{\mathrm{52.3}} & \ensuremath{\pm 8.7} & \ensuremath{\mathrm{51.8}} & \ensuremath{\pm 8.1} \\
\texttt{gravitational\_\allowbreak{}wave\_\allowbreak{}signal\_\allowbreak{}detection} & \multicolumn{2}{c}{\second{\ensuremath{\mathrm{61.5}}\,{\scriptsize \ensuremath{\pm 8.1}}}} & \multicolumn{2}{c}{\best{\ensuremath{\mathrm{64.5}}\,{\scriptsize \ensuremath{\pm 2.2}}}} & \ensuremath{\mathrm{50.0}} & \ensuremath{\pm 3.7} & \ensuremath{\mathrm{44.8}} & \ensuremath{\pm 15.7} & \ensuremath{\mathrm{58.8}} & \ensuremath{\pm 2.9} \\
\texttt{polyimide\_\allowbreak{}homo\_\allowbreak{}lumo\_\allowbreak{}prediction} & \multicolumn{2}{c}{\second{\ensuremath{\mathrm{86.7}}\,{\scriptsize \ensuremath{\pm 23.1}}}} & \multicolumn{2}{c}{\best{\ensuremath{\mathrm{100.0}}\partialrun\,{\scriptsize \ensuremath{\pm 0.0}}}} & \multicolumn{2}{c}{\best{\ensuremath{\mathrm{100.0}}\partialrun\,{\scriptsize \ensuremath{\pm 0.0}}}} & \ensuremath{\mathrm{60.0}}\partialrun & \ensuremath{\pm 0.0} & \ensuremath{\mathrm{80.0}}\partialrun & \ensuremath{\pm 28.3} \\
\texttt{industrial\_\allowbreak{}anomaly\_\allowbreak{}detection} & \multicolumn{2}{c}{\best{\ensuremath{\mathrm{49.3}}\,{\scriptsize \ensuremath{\pm 5.2}}}} & \multicolumn{2}{c}{\second{\ensuremath{\mathrm{40.8}}\,{\scriptsize \ensuremath{\pm 1.7}}}} & \ensuremath{\mathrm{20.4}} & \ensuremath{\pm 19.6} & \ensuremath{\mathrm{34.4}} & \ensuremath{\pm 4.6} & \ensuremath{\mathrm{35.1}} & \ensuremath{\pm 5.9} \\
\texttt{bipedalwalker\_\allowbreak{}locomotion\_\allowbreak{}rl} & \multicolumn{2}{c}{\best{\ensuremath{\mathrm{23.3}}\,{\scriptsize \ensuremath{\pm 3.8}}}} & \ensuremath{\mathrm{21.0}} & \ensuremath{\pm 8.5} & \ensuremath{\mathrm{17.5}} & \ensuremath{\pm 1.2} & \multicolumn{2}{c}{\second{\ensuremath{\mathrm{22.5}}\,{\scriptsize \ensuremath{\pm 2.1}}}} & \ensuremath{\mathrm{20.6}} & \ensuremath{\pm 4.3} \\
\texttt{molecular\_\allowbreak{}solubility\_\allowbreak{}prediction} & \multicolumn{2}{c}{\second{\ensuremath{\mathrm{37.3}}\,{\scriptsize \ensuremath{\pm 1.9}}}} & \multicolumn{2}{c}{\best{\ensuremath{\mathrm{51.7}}\,{\scriptsize \ensuremath{\pm 9.9}}}} & \ensuremath{\mathrm{35.6}} & \ensuremath{\pm 5.0} & \ensuremath{\mathrm{36.8}} & \ensuremath{\pm 2.3} & \ensuremath{\mathrm{33.0}} & \ensuremath{\pm 4.0} \\
\texttt{motor\_\allowbreak{}clutch\_\allowbreak{}model\_\allowbreak{}reproduction} & \multicolumn{2}{c}{\best{\ensuremath{\mathrm{100.0}}\,{\scriptsize \ensuremath{\pm 0.0}}}} & \multicolumn{2}{c}{\second{\ensuremath{\mathrm{21.7}}\,{\scriptsize \ensuremath{\pm 37.5}}}} & \nostd{\NA} & \ensuremath{\mathrm{20.0}}\partialrun & \ensuremath{\pm 28.3} & \ensuremath{\mathrm{20.0}} & \ensuremath{\pm 34.6} \\
\texttt{streaming\_\allowbreak{}multilabel\_\allowbreak{}classification} & \multicolumn{2}{c}{\second{\ensuremath{\mathrm{67.2}}\,{\scriptsize \ensuremath{\pm 2.0}}}} & \ensuremath{\mathrm{63.1}} & \ensuremath{\pm 3.1} & \multicolumn{2}{c}{\best{\ensuremath{\mathrm{68.5}}\,{\scriptsize \ensuremath{\pm 5.2}}}} & \ensuremath{\mathrm{52.9}} & \ensuremath{\pm 6.2} & \ensuremath{\mathrm{60.4}} & \ensuremath{\pm 6.5} \\
\texttt{barnes\_\allowbreak{}hut\_\allowbreak{}nbody\_\allowbreak{}acceleration} & \ensuremath{\mathrm{78.4}} & \ensuremath{\pm 6.2} & \multicolumn{2}{c}{\best{\ensuremath{\mathrm{91.6}}\,{\scriptsize \ensuremath{\pm 14.5}}}} & \multicolumn{2}{c}{\second{\ensuremath{\mathrm{87.4}}\,{\scriptsize \ensuremath{\pm 15.4}}}} & \ensuremath{\mathrm{64.1}} & \ensuremath{\pm 8.5} & \ensuremath{\mathrm{67.1}} & \ensuremath{\pm 2.2} \\
\texttt{blackbox\_\allowbreak{}numerical\_\allowbreak{}integration} & \multicolumn{2}{c}{\best{\ensuremath{\mathrm{45.1}}\,{\scriptsize \ensuremath{\pm 9.1}}}} & \multicolumn{2}{c}{\second{\ensuremath{\mathrm{44.3}}\,{\scriptsize \ensuremath{\pm 12.9}}}} & \ensuremath{\mathrm{33.4}} & \ensuremath{\pm 5.4} & \ensuremath{\mathrm{40.2}} & \ensuremath{\pm 5.6} & \ensuremath{\mathrm{40.2}} & \ensuremath{\pm 3.4} \\
\texttt{monge\_\allowbreak{}ampere\_\allowbreak{}pde\_\allowbreak{}solver} & \multicolumn{2}{c}{\best{\ensuremath{\mathrm{63.3}}\,{\scriptsize \ensuremath{\pm 5.3}}}} & \ensuremath{\mathrm{37.1}} & \ensuremath{\pm 8.8} & \ensuremath{\mathrm{9.6}} & \ensuremath{\pm 16.5} & \multicolumn{2}{c}{\second{\ensuremath{\mathrm{48.1}}\,{\scriptsize \ensuremath{\pm 11.2}}}} & \ensuremath{\mathrm{32.9}} & \ensuremath{\pm 30.2} \\
\texttt{ocean\_\allowbreak{}mt\_\allowbreak{}lab\_\allowbreak{}inversion} & \multicolumn{2}{c}{\second{\ensuremath{\mathrm{41.9}}\,{\scriptsize \ensuremath{\pm 47.4}}}} & \ensuremath{\mathrm{39.7}} & \ensuremath{\pm 43.7} & \ensuremath{\mathrm{15.2}} & \ensuremath{\pm 3.3} & \multicolumn{2}{c}{\best{\ensuremath{\mathrm{60.8}}\,{\scriptsize \ensuremath{\pm 40.2}}}} & \ensuremath{\mathrm{14.5}} & \ensuremath{\pm 0.0} \\
\texttt{pv\_\allowbreak{}power\_\allowbreak{}forecasting} & \ensuremath{\mathrm{15.3}} & \ensuremath{\pm 1.0} & \multicolumn{2}{c}{\best{\ensuremath{\mathrm{17.2}}\,{\scriptsize \ensuremath{\pm 1.0}}}} & \ensuremath{\mathrm{12.6}} & \ensuremath{\pm 0.5} & \multicolumn{2}{c}{\second{\ensuremath{\mathrm{16.6}}\,{\scriptsize \ensuremath{\pm 0.5}}}} & \ensuremath{\mathrm{14.1}} & \ensuremath{\pm 2.4} \\
\texttt{collaborative\_\allowbreak{}filtering\_\allowbreak{}recommender} & \multicolumn{2}{c}{\best{\ensuremath{\mathrm{55.0}}\,{\scriptsize \ensuremath{\pm 2.9}}}} & \multicolumn{2}{c}{\second{\ensuremath{\mathrm{46.4}}\,{\scriptsize \ensuremath{\pm 9.3}}}} & \ensuremath{\mathrm{14.7}} & \ensuremath{\pm 12.2} & \ensuremath{\mathrm{39.9}} & \ensuremath{\pm 20.5} & \ensuremath{\mathrm{8.5}} & \ensuremath{\pm 1.8} \\
\texttt{ecg\_\allowbreak{}signal\_\allowbreak{}processing\_\allowbreak{}pipeline} & \multicolumn{2}{c}{\best{\ensuremath{\mathrm{58.7}}\,{\scriptsize \ensuremath{\pm 7.3}}}} & \multicolumn{2}{c}{\second{\ensuremath{\mathrm{44.7}}\,{\scriptsize \ensuremath{\pm 8.7}}}} & \ensuremath{\mathrm{39.2}} & \ensuremath{\pm 7.8} & \ensuremath{\mathrm{31.8}} & \ensuremath{\pm 1.9} & \ensuremath{\mathrm{14.3}} & \ensuremath{\pm 8.6} \\
\texttt{sketch\_\allowbreak{}solve\_\allowbreak{}least\_\allowbreak{}squares} & \ensuremath{\mathrm{59.4}} & \ensuremath{\pm 0.8} & \multicolumn{2}{c}{\second{\ensuremath{\mathrm{61.1}}\,{\scriptsize \ensuremath{\pm 2.3}}}} & \ensuremath{\mathrm{60.2}} & \ensuremath{\pm 1.3} & \multicolumn{2}{c}{\second{\ensuremath{\mathrm{61.1}}\,{\scriptsize \ensuremath{\pm 0.8}}}} & \multicolumn{2}{c}{\best{\ensuremath{\mathrm{61.3}}\,{\scriptsize \ensuremath{\pm 1.1}}}} \\
\texttt{substrate\_\allowbreak{}interface\_\allowbreak{}simulation} & \multicolumn{2}{c}{\best{\ensuremath{\mathrm{0.0}}\partialrun\,{\scriptsize \ensuremath{\pm 0.0}}}} & \multicolumn{2}{c}{\best{\ensuremath{\mathrm{0.0}}\,{\scriptsize \ensuremath{\pm 0.0}}}} & \multicolumn{2}{c}{\best{\ensuremath{\mathrm{0.0}}\,{\scriptsize \ensuremath{\pm 0.0}}}} & \multicolumn{2}{c}{\best{\ensuremath{\mathrm{0.0}}\,{\scriptsize \ensuremath{\pm 0.0}}}} & \multicolumn{2}{c}{\best{\ensuremath{\mathrm{0.0}}\,{\scriptsize \ensuremath{\pm 0.0}}}} \\
\texttt{thermo\_\allowbreak{}fluid\_\allowbreak{}field\_\allowbreak{}prediction} & \multicolumn{2}{c}{\second{\ensuremath{\mathrm{69.6}}\,{\scriptsize \ensuremath{\pm 20.8}}}} & \ensuremath{\mathrm{57.8}} & \ensuremath{\pm 15.1} & \ensuremath{\mathrm{39.1}} & \ensuremath{\pm 5.3} & \multicolumn{2}{c}{\best{\ensuremath{\mathrm{76.8}}\,{\scriptsize \ensuremath{\pm 17.8}}}} & \ensuremath{\mathrm{22.5}} & \ensuremath{\pm 15.7} \\
\texttt{csi\_\allowbreak{}time\_\allowbreak{}series\_\allowbreak{}forecasting} & \multicolumn{2}{c}{\best{\ensuremath{\mathrm{83.4}}\,{\scriptsize \ensuremath{\pm 10.4}}}} & \ensuremath{\mathrm{76.3}} & \ensuremath{\pm 2.5} & \multicolumn{2}{c}{\second{\ensuremath{\mathrm{77.3}}\,{\scriptsize \ensuremath{\pm 0.2}}}} & \ensuremath{\mathrm{44.6}} & \ensuremath{\pm 3.1} & \ensuremath{\mathrm{40.4}} & \ensuremath{\pm 12.3} \\
\texttt{roof\_\allowbreak{}damage\_\allowbreak{}active\_\allowbreak{}learning} & \multicolumn{2}{c}{\second{\ensuremath{\mathrm{4.7}}\,{\scriptsize \ensuremath{\pm 8.2}}}} & \multicolumn{2}{c}{\best{\ensuremath{\mathrm{22.8}}\partialrun\,{\scriptsize \ensuremath{\pm 7.1}}}} & \nostd{\NA} & \ensuremath{\mathrm{4.1}} & \ensuremath{\pm 7.0} & \ensuremath{\mathrm{0.0}} & \ensuremath{\pm 0.0} \\
\BenchBottomRule
\end{tabular}%
}
\caption{Model performance on Scientific Problems \& ML tasks. Values are mean scores over up to three valid runs; adjacent entries show $\pm s$ when at least two valid runs are available. Bold marks the best model for each task, underlining marks the second-best model, \NA{} indicates no valid result, and \textsuperscript{*} marks fewer than three valid runs.}
\label{tab:appendix-scientific_computing_ml}
\end{table}
\begin{table}[p]
\centering
\footnotesize
\renewcommand{\arraystretch}{1.08}
\setlength{\tabcolsep}{2.5pt}
\resizebox{\ifdim\width>\linewidth\linewidth\else\width\fi}{!}{%
\begin{tabular}{@{}>{\raggedright\arraybackslash}p{0.30\linewidth}*{5}{r@{\,}>{\scriptsize}l}@{}}
\BenchTopRule
\textbf{Combinatorial Optimization Tasks} & \multicolumn{2}{c}{\textbf{Opus 4.8}} & \multicolumn{2}{c}{\textbf{GPT-5.5}} & \multicolumn{2}{c}{\textbf{GPT-5.4}} & \multicolumn{2}{c}{\textbf{GLM-5.1}} & \multicolumn{2}{c}{\textbf{DS-V4-Pro}} \\
\BenchMidRule
\texttt{symbolic\_\allowbreak{}integration\_\allowbreak{}engine} & \multicolumn{2}{c}{\best{\ensuremath{\mathrm{57.7}}\,{\scriptsize \ensuremath{\pm 1.1}}}} & \multicolumn{2}{c}{\second{\ensuremath{\mathrm{44.0}}\,{\scriptsize \ensuremath{\pm 4.2}}}} & \ensuremath{\mathrm{30.9}} & \ensuremath{\pm 1.7} & \ensuremath{\mathrm{26.7}} & \ensuremath{\pm 8.9} & \ensuremath{\mathrm{18.4}} & \ensuremath{\pm 2.9} \\
\texttt{order\_\allowbreak{}addition\_\allowbreak{}permutation\_\allowbreak{}optimization} & \multicolumn{2}{c}{\best{\ensuremath{\mathrm{36.4}}\,{\scriptsize \ensuremath{\pm 5.7}}}} & \ensuremath{\mathrm{23.3}} & \ensuremath{\pm 1.4} & \ensuremath{\mathrm{14.3}} & \ensuremath{\pm 12.4} & \multicolumn{2}{c}{\second{\ensuremath{\mathrm{33.2}}\,{\scriptsize \ensuremath{\pm 9.8}}}} & \ensuremath{\mathrm{30.8}} & \ensuremath{\pm 11.3} \\
\texttt{sat\_\allowbreak{}solver} & \multicolumn{2}{c}{\best{\ensuremath{\mathrm{14.4}}\,{\scriptsize \ensuremath{\pm 7.6}}}} & \ensuremath{\mathrm{8.9}} & \ensuremath{\pm 0.6} & \multicolumn{2}{c}{\second{\ensuremath{\mathrm{13.8}}\,{\scriptsize \ensuremath{\pm 4.7}}}} & \ensuremath{\mathrm{13.6}} & \ensuremath{\pm 5.7} & \ensuremath{\mathrm{8.0}} & \ensuremath{\pm 6.4} \\
\texttt{quantum\_\allowbreak{}architecture\_\allowbreak{}search} & \ensuremath{\mathrm{12.7}} & \ensuremath{\pm 1.5} & \multicolumn{2}{c}{\best{\ensuremath{\mathrm{68.3}}\,{\scriptsize \ensuremath{\pm 22.2}}}} & \ensuremath{\mathrm{12.5}}\partialrun & \ensuremath{\pm 2.1} & \multicolumn{2}{c}{\second{\ensuremath{\mathrm{25.3}}\,{\scriptsize \ensuremath{\pm 23.1}}}} & \ensuremath{\mathrm{12.0}} & \ensuremath{\pm 1.7} \\
\texttt{ad\_\allowbreak{}placement\_\allowbreak{}optimization} & \multicolumn{2}{c}{\best{\ensuremath{\mathrm{67.7}}\,{\scriptsize \ensuremath{\pm 1.0}}}} & \multicolumn{2}{c}{\second{\ensuremath{\mathrm{62.9}}\,{\scriptsize \ensuremath{\pm 6.2}}}} & \ensuremath{\mathrm{48.1}} & \ensuremath{\pm 6.2} & \ensuremath{\mathrm{58.8}} & \ensuremath{\pm 14.3} & \ensuremath{\mathrm{36.2}} & \ensuremath{\pm 16.3} \\
\texttt{first\_\allowbreak{}order\_\allowbreak{}theorem\_\allowbreak{}prover} & \multicolumn{2}{c}{\best{\ensuremath{\mathrm{31.9}}\,{\scriptsize \ensuremath{\pm 13.0}}}} & \ensuremath{\mathrm{11.2}} & \ensuremath{\pm 11.2} & \multicolumn{2}{c}{\second{\ensuremath{\mathrm{13.1}}\,{\scriptsize \ensuremath{\pm 22.7}}}} & \nostd{\ensuremath{\mathrm{0.0}}\partialrun} & \ensuremath{\mathrm{0.0}} & \ensuremath{\pm 0.0} \\
\texttt{circuit\_\allowbreak{}layout\_\allowbreak{}optimization} & \multicolumn{2}{c}{\best{\ensuremath{\mathrm{37.3}}\,{\scriptsize \ensuremath{\pm 2.4}}}} & \multicolumn{2}{c}{\second{\ensuremath{\mathrm{33.0}}\,{\scriptsize \ensuremath{\pm 3.9}}}} & \ensuremath{\mathrm{26.0}} & \ensuremath{\pm 3.9} & \ensuremath{\mathrm{31.3}} & \ensuremath{\pm 5.6} & \ensuremath{\mathrm{22.5}} & \ensuremath{\pm 3.5} \\
\texttt{equivalence\_\allowbreak{}class\_\allowbreak{}divide\_\allowbreak{}and\_\allowbreak{}conquer} & \multicolumn{2}{c}{\second{\ensuremath{\mathrm{21.3}}\,{\scriptsize \ensuremath{\pm 4.9}}}} & \multicolumn{2}{c}{\best{\ensuremath{\mathrm{22.4}}\,{\scriptsize \ensuremath{\pm 12.2}}}} & \ensuremath{\mathrm{20.3}} & \ensuremath{\pm 2.4} & \ensuremath{\mathrm{10.6}} & \ensuremath{\pm 2.0} & \ensuremath{\mathrm{3.4}} & \ensuremath{\pm 3.5} \\
\texttt{jagua\_\allowbreak{}nesting\_\allowbreak{}optimization} & \multicolumn{2}{c}{\best{\ensuremath{\mathrm{44.2}}\,{\scriptsize \ensuremath{\pm 16.7}}}} & \ensuremath{\mathrm{21.6}} & \ensuremath{\pm 10.9} & \ensuremath{\mathrm{24.1}} & \ensuremath{\pm 11.0} & \ensuremath{\mathrm{12.4}} & \ensuremath{\pm 8.2} & \multicolumn{2}{c}{\second{\ensuremath{\mathrm{28.4}}\,{\scriptsize \ensuremath{\pm 16.7}}}} \\
\texttt{smt\_\allowbreak{}solver} & \multicolumn{2}{c}{\best{\ensuremath{\mathrm{23.9}}\,{\scriptsize \ensuremath{\pm 7.0}}}} & \ensuremath{\mathrm{8.6}} & \ensuremath{\pm 3.1} & \multicolumn{2}{c}{\second{\ensuremath{\mathrm{9.2}}\,{\scriptsize \ensuremath{\pm 2.8}}}} & \nostd{\ensuremath{\mathrm{3.6}}\partialrun} & \ensuremath{\mathrm{3.3}} & \ensuremath{\pm 1.7} \\
\texttt{tree\_\allowbreak{}block\_\allowbreak{}partitioning} & \multicolumn{2}{c}{\best{\ensuremath{\mathrm{37.7}}\,{\scriptsize \ensuremath{\pm 6.6}}}} & \multicolumn{2}{c}{\second{\ensuremath{\mathrm{36.4}}\,{\scriptsize \ensuremath{\pm 2.8}}}} & \ensuremath{\mathrm{34.3}} & \ensuremath{\pm 4.7} & \ensuremath{\mathrm{23.4}} & \ensuremath{\pm 4.5} & \ensuremath{\mathrm{16.1}} & \ensuremath{\pm 0.9} \\
\texttt{triangulation\_\allowbreak{}coloring\_\allowbreak{}optimization} & \ensuremath{\mathrm{73.4}} & \ensuremath{\pm 2.4} & \multicolumn{2}{c}{\best{\ensuremath{\mathrm{75.2}}\,{\scriptsize \ensuremath{\pm 3.0}}}} & \multicolumn{2}{c}{\second{\ensuremath{\mathrm{74.3}}\,{\scriptsize \ensuremath{\pm 3.3}}}} & \ensuremath{\mathrm{73.0}} & \ensuremath{\pm 1.5} & \ensuremath{\mathrm{59.3}} & \ensuremath{\pm 10.3} \\
\texttt{vibrating\_\allowbreak{}path\_\allowbreak{}graph\_\allowbreak{}coloring} & \multicolumn{2}{c}{\best{\ensuremath{\mathrm{25.3}}\,{\scriptsize \ensuremath{\pm 11.7}}}} & \ensuremath{\mathrm{11.4}} & \ensuremath{\pm 5.1} & \multicolumn{2}{c}{\second{\ensuremath{\mathrm{24.1}}\,{\scriptsize \ensuremath{\pm 8.5}}}} & \ensuremath{\mathrm{22.9}} & \ensuremath{\pm 3.2} & \ensuremath{\mathrm{22.1}} & \ensuremath{\pm 5.1} \\
\texttt{vehicle\_\allowbreak{}routing\_\allowbreak{}time\_\allowbreak{}windows} & \ensuremath{\mathrm{74.0}} & \ensuremath{\pm 16.9} & \multicolumn{2}{c}{\best{\ensuremath{\mathrm{90.8}}\,{\scriptsize \ensuremath{\pm 2.3}}}} & \multicolumn{2}{c}{\second{\ensuremath{\mathrm{89.6}}\,{\scriptsize \ensuremath{\pm 2.3}}}} & \ensuremath{\mathrm{77.9}} & \ensuremath{\pm 8.1} & \ensuremath{\mathrm{83.1}} & \ensuremath{\pm 6.6} \\
\texttt{warehouse\_\allowbreak{}forklift\_\allowbreak{}routing} & \multicolumn{2}{c}{\second{\ensuremath{\mathrm{11.2}}\,{\scriptsize \ensuremath{\pm 1.1}}}} & \multicolumn{2}{c}{\best{\ensuremath{\mathrm{12.6}}\,{\scriptsize \ensuremath{\pm 0.6}}}} & \ensuremath{\mathrm{0.0}} & \ensuremath{\pm 0.0} & \ensuremath{\mathrm{0.5}}\partialrun & \ensuremath{\pm 0.3} & \ensuremath{\mathrm{0.0}}\partialrun & \ensuremath{\pm 0.0} \\
\texttt{wireless\_\allowbreak{}electricity\_\allowbreak{}layout} & \multicolumn{2}{c}{\best{\ensuremath{\mathrm{14.5}}\,{\scriptsize \ensuremath{\pm 6.1}}}} & \ensuremath{\mathrm{7.2}} & \ensuremath{\pm 7.9} & \multicolumn{2}{c}{\second{\ensuremath{\mathrm{11.1}}\,{\scriptsize \ensuremath{\pm 9.6}}}} & \ensuremath{\mathrm{9.5}} & \ensuremath{\pm 9.8} & \ensuremath{\mathrm{0.0}} & \ensuremath{\pm 0.0} \\
\BenchBottomRule
\end{tabular}%
}
\caption{Model performance on Combinatorial Optimization tasks. Values are mean scores over up to three valid runs; adjacent entries show $\pm s$ when at least two valid runs are available. Bold marks the best model for each task, underlining marks the second-best model, \NA{} indicates no valid result, and \textsuperscript{*} marks fewer than three valid runs.}
\label{tab:appendix-optimization_search_or}
\end{table}
\begin{table}[p]
\centering
\footnotesize
\renewcommand{\arraystretch}{1.08}
\setlength{\tabcolsep}{2.5pt}
\resizebox{\ifdim\width>\linewidth\linewidth\else\width\fi}{!}{%
\begin{tabular}{@{}>{\raggedright\arraybackslash}p{0.30\linewidth}*{5}{r@{\,}>{\scriptsize}l}@{}}
\BenchTopRule
\textbf{Formal Math \& Theorem Proving Tasks} & \multicolumn{2}{c}{\textbf{Opus 4.8}} & \multicolumn{2}{c}{\textbf{GPT-5.5}} & \multicolumn{2}{c}{\textbf{GPT-5.4}} & \multicolumn{2}{c}{\textbf{GLM-5.1}} & \multicolumn{2}{c}{\textbf{DS-V4-Pro}} \\
\BenchMidRule
\texttt{combinatorial\_\allowbreak{}games\_\allowbreak{}formalization} & \multicolumn{2}{c}{\second{\ensuremath{\mathrm{35.5}}\,{\scriptsize \ensuremath{\pm 3.9}}}} & \multicolumn{2}{c}{\best{\ensuremath{\mathrm{38.2}}\,{\scriptsize \ensuremath{\pm 10.1}}}} & \ensuremath{\mathrm{17.8}} & \ensuremath{\pm 6.8} & \ensuremath{\mathrm{16.2}}\partialrun & \ensuremath{\pm 3.1} & \ensuremath{\mathrm{7.8}} & \ensuremath{\pm 0.9} \\
\texttt{erdos392\_\allowbreak{}formalization} & \multicolumn{2}{c}{\best{\ensuremath{\mathrm{98.0}}\,{\scriptsize \ensuremath{\pm 3.5}}}} & \nostd{\NA} & \multicolumn{2}{c}{\second{\ensuremath{\mathrm{48.0}}\,{\scriptsize \ensuremath{\pm 5.3}}}} & \ensuremath{\mathrm{32.7}} & \ensuremath{\pm 11.0} & \ensuremath{\mathrm{10.7}} & \ensuremath{\pm 6.7} \\
\texttt{cup\_\allowbreak{}product\_\allowbreak{}formalization} & \multicolumn{2}{c}{\second{\ensuremath{\mathrm{74.0}}\,{\scriptsize \ensuremath{\pm 8.9}}}} & \multicolumn{2}{c}{\best{\ensuremath{\mathrm{93.1}}\,{\scriptsize \ensuremath{\pm 6.1}}}} & \multicolumn{2}{c}{\second{\ensuremath{\mathrm{74.0}}\,{\scriptsize \ensuremath{\pm 1.7}}}} & \ensuremath{\mathrm{38.7}} & \ensuremath{\pm 0.8} & \ensuremath{\mathrm{40.7}} & \ensuremath{\pm 3.4} \\
\texttt{lean\_\allowbreak{}analysis\_\allowbreak{}proofs} & \nostd{\second{\ensuremath{\mathrm{33.0}}\partialrun}} & \multicolumn{2}{c}{\best{\ensuremath{\mathrm{42.5}}\,{\scriptsize \ensuremath{\pm 5.1}}}} & \ensuremath{\mathrm{16.4}} & \ensuremath{\pm 4.3} & \nostd{\ensuremath{\mathrm{5.9}}\partialrun} & \ensuremath{\mathrm{9.5}} & \ensuremath{\pm 2.7} \\
\texttt{carleson\_\allowbreak{}formalization} & \multicolumn{2}{c}{\second{\ensuremath{\mathrm{16.8}}\,{\scriptsize \ensuremath{\pm 0.5}}}} & \multicolumn{2}{c}{\best{\ensuremath{\mathrm{26.5}}\,{\scriptsize \ensuremath{\pm 6.5}}}} & \ensuremath{\mathrm{7.1}} & \ensuremath{\pm 3.2} & \ensuremath{\mathrm{2.2}} & \ensuremath{\pm 0.7} & \ensuremath{\mathrm{2.5}} & \ensuremath{\pm 1.5} \\
\texttt{new\_\allowbreak{}foundations\_\allowbreak{}consistency} & \nostd{\second{\ensuremath{\mathrm{65.1}}\partialrun}} & \multicolumn{2}{c}{\best{\ensuremath{\mathrm{66.5}}\,{\scriptsize \ensuremath{\pm 2.0}}}} & \ensuremath{\mathrm{39.8}} & \ensuremath{\pm 12.2} & \ensuremath{\mathrm{27.0}} & \ensuremath{\pm 27.5} & \ensuremath{\mathrm{11.4}} & \ensuremath{\pm 4.9} \\
\texttt{godel\_\allowbreak{}incompleteness\_\allowbreak{}formalization} & \multicolumn{2}{c}{\best{\ensuremath{\mathrm{100.0}}\,{\scriptsize \ensuremath{\pm 0.0}}}} & \nostd{\NA} & \multicolumn{2}{c}{\second{\ensuremath{\mathrm{64.4}}\,{\scriptsize \ensuremath{\pm 30.8}}}} & \nostd{\ensuremath{\mathrm{46.7}}\partialrun} & \ensuremath{\mathrm{6.7}} & \ensuremath{\pm 0.0} \\
\texttt{medium\_\allowbreak{}prime\_\allowbreak{}number\_\allowbreak{}theorem} & \multicolumn{2}{c}{\best{\ensuremath{\mathrm{100.0}}\partialrun\,{\scriptsize \ensuremath{\pm 0.0}}}} & \nostd{\NA} & \multicolumn{2}{c}{\second{\ensuremath{\mathrm{88.0}}\,{\scriptsize \ensuremath{\pm 0.0}}}} & \ensuremath{\mathrm{26.7}} & \ensuremath{\pm 8.1} & \ensuremath{\mathrm{9.0}} & \ensuremath{\pm 0.0} \\
\texttt{ordinal\_\allowbreak{}notation\_\allowbreak{}well\_\allowbreak{}foundedness} & \multicolumn{2}{c}{\best{\ensuremath{\mathrm{24.7}}\,{\scriptsize \ensuremath{\pm 0.0}}}} & \multicolumn{2}{c}{\best{\ensuremath{\mathrm{24.7}}\,{\scriptsize \ensuremath{\pm 0.0}}}} & \multicolumn{2}{c}{\second{\ensuremath{\mathrm{21.6}}\,{\scriptsize \ensuremath{\pm 5.4}}}} & \ensuremath{\mathrm{5.9}} & \ensuremath{\pm 0.0} & \ensuremath{\mathrm{4.7}}\partialrun & \ensuremath{\pm 1.7} \\
\texttt{pfr\_\allowbreak{}formalization} & \nostd{\second{\ensuremath{\mathrm{46.3}}\partialrun}} & \multicolumn{2}{c}{\best{\ensuremath{\mathrm{60.0}}\,{\scriptsize \ensuremath{\pm 2.7}}}} & \ensuremath{\mathrm{38.9}} & \ensuremath{\pm 1.3} & \ensuremath{\mathrm{33.5}} & \ensuremath{\pm 1.4} & \ensuremath{\mathrm{19.1}} & \ensuremath{\pm 5.4} \\
\texttt{sphere\_\allowbreak{}eversion\_\allowbreak{}formalization} & \multicolumn{2}{c}{\second{\ensuremath{\mathrm{55.4}}\,{\scriptsize \ensuremath{\pm 3.7}}}} & \multicolumn{2}{c}{\best{\ensuremath{\mathrm{58.5}}\,{\scriptsize \ensuremath{\pm 2.9}}}} & \ensuremath{\mathrm{51.4}} & \ensuremath{\pm 1.7} & \ensuremath{\mathrm{30.2}} & \ensuremath{\pm 21.3} & \ensuremath{\mathrm{29.3}} & \ensuremath{\pm 7.7} \\
\texttt{turing\_\allowbreak{}machine\_\allowbreak{}halting\_\allowbreak{}proofs} & \multicolumn{2}{c}{\best{\ensuremath{\mathrm{15.0}}\,{\scriptsize \ensuremath{\pm 0.0}}}} & \multicolumn{2}{c}{\best{\ensuremath{\mathrm{15.0}}\,{\scriptsize \ensuremath{\pm 0.0}}}} & \multicolumn{2}{c}{\best{\ensuremath{\mathrm{15.0}}\,{\scriptsize \ensuremath{\pm 0.0}}}} & \multicolumn{2}{c}{\best{\ensuremath{\mathrm{15.0}}\,{\scriptsize \ensuremath{\pm 0.0}}}} & \multicolumn{2}{c}{\best{\ensuremath{\mathrm{15.0}}\,{\scriptsize \ensuremath{\pm 0.0}}}} \\
\texttt{flt\_\allowbreak{}regular\_\allowbreak{}formalization} & \multicolumn{2}{c}{\second{\ensuremath{\mathrm{50.6}}\,{\scriptsize \ensuremath{\pm 0.0}}}} & \multicolumn{2}{c}{\best{\ensuremath{\mathrm{75.1}}\,{\scriptsize \ensuremath{\pm 24.7}}}} & \ensuremath{\mathrm{48.3}} & \ensuremath{\pm 4.0} & \ensuremath{\mathrm{38.7}} & \ensuremath{\pm 13.4} & \ensuremath{\mathrm{17.6}} & \ensuremath{\pm 11.3} \\
\BenchBottomRule
\end{tabular}%
}
\caption{Model performance on Formal Math \& Theorem Proving tasks. Values are mean scores over up to three valid runs; adjacent entries show $\pm s$ when at least two valid runs are available. Bold marks the best model for each task, underlining marks the second-best model, \NA{} indicates no valid result, and \textsuperscript{*} marks fewer than three valid runs.}
\label{tab:appendix-formal_math}
\end{table}
\begin{table}[p]
\centering
\footnotesize
\renewcommand{\arraystretch}{1.08}
\setlength{\tabcolsep}{2.5pt}
\resizebox{\ifdim\width>\linewidth\linewidth\else\width\fi}{!}{%
\begin{tabular}{@{}>{\raggedright\arraybackslash}p{0.30\linewidth}*{5}{r@{\,}>{\scriptsize}l}@{}}
\BenchTopRule
\textbf{Professional Knowledge Work Tasks} & \multicolumn{2}{c}{\textbf{Opus 4.8}} & \multicolumn{2}{c}{\textbf{GPT-5.5}} & \multicolumn{2}{c}{\textbf{GPT-5.4}} & \multicolumn{2}{c}{\textbf{GLM-5.1}} & \multicolumn{2}{c}{\textbf{DS-V4-Pro}} \\
\BenchMidRule
\texttt{portfolio\_\allowbreak{}risk\_\allowbreak{}calibration} & \multicolumn{2}{c}{\second{\ensuremath{\mathrm{24.5}}\,{\scriptsize \ensuremath{\pm 7.5}}}} & \multicolumn{2}{c}{\best{\ensuremath{\mathrm{25.0}}\,{\scriptsize \ensuremath{\pm 6.5}}}} & \ensuremath{\mathrm{10.7}} & \ensuremath{\pm 3.9} & \ensuremath{\mathrm{9.4}} & \ensuremath{\pm 14.2} & \ensuremath{\mathrm{23.7}} & \ensuremath{\pm 6.9} \\
\texttt{storyboard\_\allowbreak{}ad\_\allowbreak{}copywriting} & \multicolumn{2}{c}{\second{\ensuremath{\mathrm{79.7}}\,{\scriptsize \ensuremath{\pm 8.1}}}} & \ensuremath{\mathrm{77.0}} & \ensuremath{\pm 5.3} & \ensuremath{\mathrm{65.7}} & \ensuremath{\pm 14.6} & \multicolumn{2}{c}{\best{\ensuremath{\mathrm{92.0}}\,{\scriptsize \ensuremath{\pm 5.0}}}} & \ensuremath{\mathrm{56.5}}\partialrun & \ensuremath{\pm 7.8} \\
\texttt{cross\_\allowbreak{}border\_\allowbreak{}investment\_\allowbreak{}ppt} & \multicolumn{2}{c}{\second{\ensuremath{\mathrm{38.3}}\,{\scriptsize \ensuremath{\pm 4.7}}}} & \ensuremath{\mathrm{34.0}}\partialrun & \ensuremath{\pm 2.2} & \ensuremath{\mathrm{35.7}} & \ensuremath{\pm 6.5} & \multicolumn{2}{c}{\best{\ensuremath{\mathrm{45.9}}\,{\scriptsize \ensuremath{\pm 7.0}}}} & \ensuremath{\mathrm{32.3}} & \ensuremath{\pm 5.9} \\
\texttt{herbal\_\allowbreak{}depression\_\allowbreak{}target\_\allowbreak{}screening} & \multicolumn{2}{c}{\best{\ensuremath{\mathrm{70.3}}\,{\scriptsize \ensuremath{\pm 5.9}}}} & \multicolumn{2}{c}{\second{\ensuremath{\mathrm{68.1}}\,{\scriptsize \ensuremath{\pm 3.1}}}} & \ensuremath{\mathrm{58.9}} & \ensuremath{\pm 7.9} & \ensuremath{\mathrm{57.0}} & \ensuremath{\pm 12.8} & \ensuremath{\mathrm{60.1}} & \ensuremath{\pm 3.3} \\
\texttt{pancreatic\_\allowbreak{}radiotherapy\_\allowbreak{}meta\_\allowbreak{}analysis} & \multicolumn{2}{c}{\best{\ensuremath{\mathrm{40.2}}\,{\scriptsize \ensuremath{\pm 5.7}}}} & \multicolumn{2}{c}{\second{\ensuremath{\mathrm{39.8}}\,{\scriptsize \ensuremath{\pm 3.2}}}} & \ensuremath{\mathrm{30.6}} & \ensuremath{\pm 4.1} & \ensuremath{\mathrm{36.5}} & \ensuremath{\pm 5.2} & \ensuremath{\mathrm{34.5}} & \ensuremath{\pm 7.2} \\
\texttt{touchstone\_\allowbreak{}vna\_\allowbreak{}diagnostics} & \multicolumn{2}{c}{\best{\ensuremath{\mathrm{39.9}}\,{\scriptsize \ensuremath{\pm 2.8}}}} & \multicolumn{2}{c}{\second{\ensuremath{\mathrm{11.5}}\,{\scriptsize \ensuremath{\pm 4.9}}}} & \ensuremath{\mathrm{8.4}} & \ensuremath{\pm 1.1} & \ensuremath{\mathrm{4.7}} & \ensuremath{\pm 1.8} & \ensuremath{\mathrm{4.5}} & \ensuremath{\pm 2.4} \\
\texttt{high\_\allowbreak{}performance\_\allowbreak{}object\_\allowbreak{}mapper} & \nostd{\NA} & \multicolumn{2}{c}{\second{\ensuremath{\mathrm{62.2}}\,{\scriptsize \ensuremath{\pm 2.9}}}} & \ensuremath{\mathrm{45.6}} & \ensuremath{\pm 8.9} & \multicolumn{2}{c}{\best{\ensuremath{\mathrm{64.0}}\,{\scriptsize \ensuremath{\pm 1.3}}}} & \ensuremath{\mathrm{51.0}} & \ensuremath{\pm 7.2} \\
\texttt{odata\_\allowbreak{}query\_\allowbreak{}service} & \multicolumn{2}{c}{\second{\ensuremath{\mathrm{30.9}}\,{\scriptsize \ensuremath{\pm 1.1}}}} & \ensuremath{\mathrm{27.8}} & \ensuremath{\pm 3.3} & \multicolumn{2}{c}{\best{\ensuremath{\mathrm{31.1}}\,{\scriptsize \ensuremath{\pm 4.9}}}} & \ensuremath{\mathrm{26.1}} & \ensuremath{\pm 3.2} & \ensuremath{\mathrm{20.8}} & \ensuremath{\pm 11.0} \\
\texttt{brand\_\allowbreak{}annual\_\allowbreak{}planning\_\allowbreak{}ppt} & \ensuremath{\mathrm{51.3}} & \ensuremath{\pm 19.8} & \ensuremath{\mathrm{69.0}} & \ensuremath{\pm 11.0} & \multicolumn{2}{c}{\second{\ensuremath{\mathrm{72.3}}\,{\scriptsize \ensuremath{\pm 2.1}}}} & \multicolumn{2}{c}{\best{\ensuremath{\mathrm{73.7}}\,{\scriptsize \ensuremath{\pm 9.2}}}} & \ensuremath{\mathrm{50.0}} & \ensuremath{\pm 10.0} \\
\texttt{securities\_\allowbreak{}protection\_\allowbreak{}training} & \multicolumn{2}{c}{\best{\ensuremath{\mathrm{97.2}}\,{\scriptsize \ensuremath{\pm 4.3}}}} & \ensuremath{\mathrm{89.4}} & \ensuremath{\pm 8.2} & \multicolumn{2}{c}{\second{\ensuremath{\mathrm{94.2}}\,{\scriptsize \ensuremath{\pm 5.2}}}} & \ensuremath{\mathrm{93.8}} & \ensuremath{\pm 1.5} & \ensuremath{\mathrm{85.0}} & \ensuremath{\pm 9.5} \\
\texttt{college\_\allowbreak{}english\_\allowbreak{}exam\_\allowbreak{}bank} & \multicolumn{2}{c}{\best{\ensuremath{\mathrm{39.8}}\,{\scriptsize \ensuremath{\pm 4.3}}}} & \multicolumn{2}{c}{\second{\ensuremath{\mathrm{37.8}}\,{\scriptsize \ensuremath{\pm 7.8}}}} & \ensuremath{\mathrm{34.5}} & \ensuremath{\pm 2.3} & \ensuremath{\mathrm{32.5}} & \ensuremath{\pm 1.3} & \ensuremath{\mathrm{34.7}} & \ensuremath{\pm 10.3} \\
\texttt{cross\_\allowbreak{}border\_\allowbreak{}commission\_\allowbreak{}compliance} & \ensuremath{\mathrm{32.1}} & \ensuremath{\pm 2.4} & \multicolumn{2}{c}{\best{\ensuremath{\mathrm{43.8}}\partialrun\,{\scriptsize \ensuremath{\pm 2.1}}}} & \nostd{\ensuremath{\mathrm{29.8}}\partialrun} & \multicolumn{2}{c}{\second{\ensuremath{\mathrm{39.2}}\,{\scriptsize \ensuremath{\pm 12.3}}}} & \ensuremath{\mathrm{35.2}} & \ensuremath{\pm 5.4} \\
\texttt{cta\_\allowbreak{}risk\_\allowbreak{}budget\_\allowbreak{}optimization} & \ensuremath{\mathrm{46.1}} & \ensuremath{\pm 1.4} & \ensuremath{\mathrm{46.7}} & \ensuremath{\pm 3.8} & \multicolumn{2}{c}{\best{\ensuremath{\mathrm{49.8}}\,{\scriptsize \ensuremath{\pm 2.8}}}} & \multicolumn{2}{c}{\second{\ensuremath{\mathrm{49.6}}\,{\scriptsize \ensuremath{\pm 0.7}}}} & \ensuremath{\mathrm{48.1}} & \ensuremath{\pm 4.1} \\
\texttt{equity\_\allowbreak{}objection\_\allowbreak{}report} & \multicolumn{2}{c}{\second{\ensuremath{\mathrm{15.5}}\,{\scriptsize \ensuremath{\pm 4.3}}}} & \multicolumn{2}{c}{\best{\ensuremath{\mathrm{22.5}}\,{\scriptsize \ensuremath{\pm 2.3}}}} & \ensuremath{\mathrm{15.3}} & \ensuremath{\pm 6.5} & \ensuremath{\mathrm{14.7}} & \ensuremath{\pm 3.8} & \ensuremath{\mathrm{14.8}} & \ensuremath{\pm 11.2} \\
\texttt{expo\_\allowbreak{}visitor\_\allowbreak{}conversion\_\allowbreak{}model} & \multicolumn{2}{c}{\best{\ensuremath{\mathrm{81.9}}\,{\scriptsize \ensuremath{\pm 1.6}}}} & \ensuremath{\mathrm{54.9}} & \ensuremath{\pm 23.9} & \ensuremath{\mathrm{29.1}} & \ensuremath{\pm 10.0} & \multicolumn{2}{c}{\second{\ensuremath{\mathrm{79.0}}\,{\scriptsize \ensuremath{\pm 3.8}}}} & \ensuremath{\mathrm{46.2}} & \ensuremath{\pm 26.7} \\
\texttt{factor\_\allowbreak{}stock\_\allowbreak{}model\_\allowbreak{}optimization} & \ensuremath{\mathrm{51.0}} & \ensuremath{\pm 1.5} & \multicolumn{2}{c}{\best{\ensuremath{\mathrm{58.8}}\,{\scriptsize \ensuremath{\pm 6.9}}}} & \ensuremath{\mathrm{36.3}} & \ensuremath{\pm 3.2} & \multicolumn{2}{c}{\second{\ensuremath{\mathrm{54.3}}\,{\scriptsize \ensuremath{\pm 1.5}}}} & \ensuremath{\mathrm{35.6}} & \ensuremath{\pm 11.9} \\
\texttt{global\_\allowbreak{}terrorism\_\allowbreak{}atlas\_\allowbreak{}report} & \multicolumn{2}{c}{\second{\ensuremath{\mathrm{19.4}}\,{\scriptsize \ensuremath{\pm 2.6}}}} & \multicolumn{2}{c}{\best{\ensuremath{\mathrm{21.5}}\,{\scriptsize \ensuremath{\pm 5.5}}}} & \nostd{\ensuremath{\mathrm{15.7}}\partialrun} & \ensuremath{\mathrm{17.3}} & \ensuremath{\pm 4.4} & \ensuremath{\mathrm{15.2}} & \ensuremath{\pm 1.7} \\
\texttt{hebei\_\allowbreak{}gaokao\_\allowbreak{}strategy\_\allowbreak{}report} & \ensuremath{\mathrm{69.9}} & \ensuremath{\pm 16.3} & \multicolumn{2}{c}{\best{\ensuremath{\mathrm{82.0}}\partialrun\,{\scriptsize \ensuremath{\pm 2.8}}}} & \multicolumn{2}{c}{\second{\ensuremath{\mathrm{78.7}}\,{\scriptsize \ensuremath{\pm 4.7}}}} & \ensuremath{\mathrm{71.0}} & \ensuremath{\pm 1.7} & \ensuremath{\mathrm{62.5}}\partialrun & \ensuremath{\pm 13.5} \\
\texttt{hk\_\allowbreak{}connect\_\allowbreak{}annual\_\allowbreak{}metrics} & \multicolumn{2}{c}{\best{\ensuremath{\mathrm{73.4}}\,{\scriptsize \ensuremath{\pm 11.5}}}} & \multicolumn{2}{c}{\second{\ensuremath{\mathrm{61.8}}\,{\scriptsize \ensuremath{\pm 3.2}}}} & \ensuremath{\mathrm{45.2}} & \ensuremath{\pm 14.4} & \ensuremath{\mathrm{44.8}} & \ensuremath{\pm 18.4} & \ensuremath{\mathrm{41.5}} & \ensuremath{\pm 10.9} \\
\texttt{k12\_\allowbreak{}math\_\allowbreak{}recommendation} & \multicolumn{2}{c}{\best{\ensuremath{\mathrm{44.3}}\,{\scriptsize \ensuremath{\pm 0.5}}}} & \multicolumn{2}{c}{\second{\ensuremath{\mathrm{44.0}}\,{\scriptsize \ensuremath{\pm 13.3}}}} & \ensuremath{\mathrm{31.4}} & \ensuremath{\pm 4.2} & \ensuremath{\mathrm{32.7}} & \ensuremath{\pm 8.2} & \ensuremath{\mathrm{26.3}} & \ensuremath{\pm 2.2} \\
\texttt{property\_\allowbreak{}actuarial\_\allowbreak{}pricing} & \multicolumn{2}{c}{\best{\ensuremath{\mathrm{28.0}}\partialrun\,{\scriptsize \ensuremath{\pm 0.0}}}} & \multicolumn{2}{c}{\best{\ensuremath{\mathrm{28.0}}\partialrun\,{\scriptsize \ensuremath{\pm 0.0}}}} & \multicolumn{2}{c}{\best{\ensuremath{\mathrm{28.0}}\,{\scriptsize \ensuremath{\pm 0.0}}}} & \multicolumn{2}{c}{\best{\ensuremath{\mathrm{28.0}}\,{\scriptsize \ensuremath{\pm 0.0}}}} & \multicolumn{2}{c}{\best{\ensuremath{\mathrm{28.0}}\,{\scriptsize \ensuremath{\pm 0.0}}}} \\
\texttt{real\_\allowbreak{}estate\_\allowbreak{}bid\_\allowbreak{}estimate} & \multicolumn{2}{c}{\second{\ensuremath{\mathrm{52.5}}\,{\scriptsize \ensuremath{\pm 6.1}}}} & \multicolumn{2}{c}{\best{\ensuremath{\mathrm{57.3}}\,{\scriptsize \ensuremath{\pm 1.9}}}} & \ensuremath{\mathrm{51.5}} & \ensuremath{\pm 7.0} & \ensuremath{\mathrm{49.6}} & \ensuremath{\pm 3.9} & \ensuremath{\mathrm{40.6}} & \ensuremath{\pm 3.0} \\
\texttt{stock\_\allowbreak{}momentum\_\allowbreak{}backtest} & \ensuremath{\mathrm{9.8}} & \ensuremath{\pm 4.9} & \multicolumn{2}{c}{\second{\ensuremath{\mathrm{10.7}}\partialrun\,{\scriptsize \ensuremath{\pm 1.2}}}} & \ensuremath{\mathrm{8.2}} & \ensuremath{\pm 2.8} & \ensuremath{\mathrm{4.1}}\partialrun & \ensuremath{\pm 1.2} & \multicolumn{2}{c}{\best{\ensuremath{\mathrm{11.5}}\,{\scriptsize \ensuremath{\pm 2.8}}}} \\
\texttt{storm\_\allowbreak{}claim\_\allowbreak{}ring\_\allowbreak{}audit} & \multicolumn{2}{c}{\best{\ensuremath{\mathrm{43.9}}\,{\scriptsize \ensuremath{\pm 14.3}}}} & \ensuremath{\mathrm{24.0}} & \ensuremath{\pm 0.0} & \ensuremath{\mathrm{24.0}} & \ensuremath{\pm 0.0} & \ensuremath{\mathrm{24.0}} & \ensuremath{\pm 0.0} & \multicolumn{2}{c}{\second{\ensuremath{\mathrm{30.3}}\,{\scriptsize \ensuremath{\pm 7.8}}}} \\
\BenchBottomRule
\end{tabular}%
}
\caption{Model performance on Professional Knowledge Work tasks. Values are mean scores over up to three valid runs; adjacent entries show $\pm s$ when at least two valid runs are available. Bold marks the best model for each task, underlining marks the second-best model, \NA{} indicates no valid result, and \textsuperscript{*} marks fewer than three valid runs.}
\label{tab:appendix-domain_work}
\end{table}
\begin{table}[p]
\centering
\footnotesize
\renewcommand{\arraystretch}{1.08}
\setlength{\tabcolsep}{2.5pt}
\resizebox{\ifdim\width>\linewidth\linewidth\else\width\fi}{!}{%
\begin{tabular}{@{}>{\raggedright\arraybackslash}p{0.30\linewidth}*{5}{r@{\,}>{\scriptsize}l}@{}}
\BenchTopRule
\textbf{Interactive Games \& Simulators Tasks} & \multicolumn{2}{c}{\textbf{Opus 4.8}} & \multicolumn{2}{c}{\textbf{GPT-5.5}} & \multicolumn{2}{c}{\textbf{GPT-5.4}} & \multicolumn{2}{c}{\textbf{GLM-5.1}} & \multicolumn{2}{c}{\textbf{DS-V4-Pro}} \\
\BenchMidRule
\texttt{openttd\_\allowbreak{}transport\_\allowbreak{}ai} & \multicolumn{2}{c}{\best{\ensuremath{\mathrm{52.0}}\,{\scriptsize \ensuremath{\pm 8.5}}}} & \multicolumn{2}{c}{\second{\ensuremath{\mathrm{28.1}}\,{\scriptsize \ensuremath{\pm 24.4}}}} & \ensuremath{\mathrm{11.9}} & \ensuremath{\pm 20.5} & \ensuremath{\mathrm{0.0}} & \ensuremath{\pm 0.0} & \ensuremath{\mathrm{15.2}} & \ensuremath{\pm 26.4} \\
\texttt{nethack\_\allowbreak{}dungeon\_\allowbreak{}agent} & \multicolumn{2}{c}{\best{\ensuremath{\mathrm{41.9}}\,{\scriptsize \ensuremath{\pm 9.1}}}} & \multicolumn{2}{c}{\second{\ensuremath{\mathrm{22.5}}\,{\scriptsize \ensuremath{\pm 4.8}}}} & \ensuremath{\mathrm{20.4}} & \ensuremath{\pm 11.2} & \ensuremath{\mathrm{21.6}} & \ensuremath{\pm 16.2} & \ensuremath{\mathrm{3.3}} & \ensuremath{\pm 2.3} \\
\texttt{treant\_\allowbreak{}forest} & \multicolumn{2}{c}{\best{\ensuremath{\mathrm{18.0}}\,{\scriptsize \ensuremath{\pm 4.9}}}} & \ensuremath{\mathrm{15.6}} & \ensuremath{\pm 6.7} & \ensuremath{\mathrm{13.3}} & \ensuremath{\pm 7.9} & \multicolumn{2}{c}{\second{\ensuremath{\mathrm{16.9}}\,{\scriptsize \ensuremath{\pm 2.4}}}} & \ensuremath{\mathrm{13.5}} & \ensuremath{\pm 4.8} \\
\texttt{grid\_\allowbreak{}turing\_\allowbreak{}robot} & \multicolumn{2}{c}{\second{\ensuremath{\mathrm{40.3}}\,{\scriptsize \ensuremath{\pm 3.6}}}} & \multicolumn{2}{c}{\best{\ensuremath{\mathrm{42.2}}\,{\scriptsize \ensuremath{\pm 2.7}}}} & \ensuremath{\mathrm{28.9}} & \ensuremath{\pm 11.3} & \ensuremath{\mathrm{25.7}} & \ensuremath{\pm 5.9} & \ensuremath{\mathrm{24.2}} & \ensuremath{\pm 6.0} \\
\texttt{molecular\_\allowbreak{}self\_\allowbreak{}assembly} & \multicolumn{2}{c}{\best{\ensuremath{\mathrm{34.7}}\,{\scriptsize \ensuremath{\pm 0.9}}}} & \ensuremath{\mathrm{20.7}} & \ensuremath{\pm 1.0} & \ensuremath{\mathrm{21.6}} & \ensuremath{\pm 0.6} & \ensuremath{\mathrm{13.2}} & \ensuremath{\pm 5.9} & \multicolumn{2}{c}{\second{\ensuremath{\mathrm{21.9}}\,{\scriptsize \ensuremath{\pm 3.5}}}} \\
\texttt{apple\_\allowbreak{}incremental\_\allowbreak{}game} & \multicolumn{2}{c}{\best{\ensuremath{\mathrm{50.6}}\,{\scriptsize \ensuremath{\pm 8.1}}}} & \ensuremath{\mathrm{33.6}} & \ensuremath{\pm 15.9} & \multicolumn{2}{c}{\second{\ensuremath{\mathrm{34.9}}\,{\scriptsize \ensuremath{\pm 14.7}}}} & \ensuremath{\mathrm{19.1}} & \ensuremath{\pm 1.1} & \ensuremath{\mathrm{19.7}} & \ensuremath{\pm 0.1} \\
\texttt{dcss\_\allowbreak{}dungeon\_\allowbreak{}ai} & \multicolumn{2}{c}{\second{\ensuremath{\mathrm{8.3}}\,{\scriptsize \ensuremath{\pm 3.5}}}} & \multicolumn{2}{c}{\best{\ensuremath{\mathrm{13.4}}\,{\scriptsize \ensuremath{\pm 0.5}}}} & \ensuremath{\mathrm{6.1}} & \ensuremath{\pm 4.7} & \ensuremath{\mathrm{7.6}} & \ensuremath{\pm 4.3} & \ensuremath{\mathrm{5.7}} & \ensuremath{\pm 3.7} \\
\texttt{anchorhead\_\allowbreak{}text\_\allowbreak{}adventure} & \multicolumn{2}{c}{\second{\ensuremath{\mathrm{22.3}}\,{\scriptsize \ensuremath{\pm 4.5}}}} & \multicolumn{2}{c}{\best{\ensuremath{\mathrm{36.3}}\,{\scriptsize \ensuremath{\pm 6.4}}}} & \ensuremath{\mathrm{17.7}} & \ensuremath{\pm 2.3} & \ensuremath{\mathrm{20.3}} & \ensuremath{\pm 0.6} & \ensuremath{\mathrm{14.7}} & \ensuremath{\pm 8.1} \\
\texttt{trinity\_\allowbreak{}text\_\allowbreak{}adventure} & \multicolumn{2}{c}{\second{\ensuremath{\mathrm{30.0}}\,{\scriptsize \ensuremath{\pm 2.6}}}} & \multicolumn{2}{c}{\best{\ensuremath{\mathrm{40.0}}\,{\scriptsize \ensuremath{\pm 10.5}}}} & \ensuremath{\mathrm{27.0}} & \ensuremath{\pm 7.0} & \ensuremath{\mathrm{26.7}} & \ensuremath{\pm 5.7} & \ensuremath{\mathrm{20.3}} & \ensuremath{\pm 3.8} \\
\texttt{tryst\_\allowbreak{}text\_\allowbreak{}adventure} & \multicolumn{2}{c}{\second{\ensuremath{\mathrm{44.3}}\,{\scriptsize \ensuremath{\pm 8.7}}}} & \multicolumn{2}{c}{\best{\ensuremath{\mathrm{55.7}}\,{\scriptsize \ensuremath{\pm 3.8}}}} & \multicolumn{2}{c}{\second{\ensuremath{\mathrm{44.3}}\,{\scriptsize \ensuremath{\pm 7.6}}}} & \ensuremath{\mathrm{43.3}} & \ensuremath{\pm 4.4} & \ensuremath{\mathrm{13.8}} & \ensuremath{\pm 3.3} \\
\texttt{openrct2\_\allowbreak{}theme\_\allowbreak{}park\_\allowbreak{}ai} & \ensuremath{\mathrm{27.5}} & \ensuremath{\pm 0.0} & \multicolumn{2}{c}{\best{\ensuremath{\mathrm{37.6}}\,{\scriptsize \ensuremath{\pm 9.0}}}} & \ensuremath{\mathrm{23.1}} & \ensuremath{\pm 11.7} & \multicolumn{2}{c}{\second{\ensuremath{\mathrm{36.2}}\,{\scriptsize \ensuremath{\pm 9.5}}}} & \ensuremath{\mathrm{26.0}} & \ensuremath{\pm 2.6} \\
\texttt{wesnoth\_\allowbreak{}tactical\_\allowbreak{}ai} & \multicolumn{2}{c}{\best{\ensuremath{\mathrm{88.0}}\,{\scriptsize \ensuremath{\pm 2.6}}}} & \ensuremath{\mathrm{79.3}} & \ensuremath{\pm 6.4} & \multicolumn{2}{c}{\second{\ensuremath{\mathrm{81.3}}\,{\scriptsize \ensuremath{\pm 1.5}}}} & \ensuremath{\mathrm{78.3}} & \ensuremath{\pm 11.0} & \ensuremath{\mathrm{36.3}} & \ensuremath{\pm 31.5} \\
\BenchBottomRule
\end{tabular}%
}
\caption{Model performance on Interactive Games \& Simulators tasks. Values are mean scores over up to three valid runs; adjacent entries show $\pm s$ when at least two valid runs are available. Bold marks the best model for each task, underlining marks the second-best model, \NA{} indicates no valid result, and \textsuperscript{*} marks fewer than three valid runs.}
\label{tab:appendix-games_simulators}
\end{table}

%% file: sections/acknowledgements.tex
\section{Acknowledgements}
\label{sec:acknowledgements}

We thank UniPat for collaborating with us on the development of 13 tasks in the Systems \& Software Engineering portion of \benchmark{}.
We also thank Xiaoxing Wu, Xiang Gao, Gao Liu, Yue Yang, Wen Heng, Weinan Zhao, Mailun Gao, Zongbao Zhang, and Yuchen Wu for helpful supporting contributions during the project.
We thank Xinkai Zhou and Qi Zhao for meaningful discussions during the project.
We thank Tenglong Ao, Ao Zhang, Shengjie Luo, Zeyi Zhang, Guhao Feng, Tianle Cai, Xinrong Zhang, Yizhong Wang, and Ruinian Chang for meaningful discussions and collaborations that took place prior to the EdgeBench project.